\def\notes{1}
\newif\ifSTOC
\newcommand*{\algrule}[1][\algorithmicindent]{%
  \makebox[#1][l]{%
    \hspace*{.2em}
  }
}
\def\ALG@printindent{%
    \ifnum \theALG@nested>0
    \ifx\ALG@text\ALG@x@notext
    \else 
    \unskip
    \ALG@printindent@tempcnta=1
    \loop
    \algrule[\csname ALG@ind@\the\ALG@printindent@tempcnta\endcsname]%
    \advance \ALG@printindent@tempcnta 1
    \ifnum \ALG@printindent@tempcnta<\numexpr\theALG@nested+1\relax
    \repeat
    \fi
    \fi
}
\patchcmd{\ALG@doentity}{\noindent\hskip\ALG@tlm}{\ALG@printindent}{}{\errmessage{failed to patch}}
\patchcmd{\ALG@doentity}{\item[]\nointerlineskip}{}{}{} 
\theoremstyle{plain}
\newtheorem{theorem}{Theorem}[section]
\newtheorem{proposition}[theorem]{Proposition}
\newtheorem{conjecture}{Conjecture}[section]
\newtheorem{corollary}[theorem]{Corollary}
\newtheorem{lemma}[theorem]{Lemma}
\theoremstyle{definition}
\newtheorem{definition}{Definition}[section]
\newtheorem{example}{Example}[section]
\theoremstyle{remark}
\newtheorem{remark}{Remark}
\DeclareMathOperator*{\E}{\mathbb{E}}
\newcommand{\mc}{\mathcal}
\newcommand{\alg}{A}
\newcommand{\Aopt}{\alg_{OPT}}
\newcommand{\Singletons}{Singletons}
\newcommand{\Nsing}{K}
\newcommand{\nsing}{k}
\newcommand{\Xone}{X_S}
\newcommand{\npops}{N}
\newcommand{\SCnoise}{\delta}
\newcommand{\qHC}{q_{HC}}
\newcommand{\inputspace}{\mc{Z}}
\newcommand{\learn}[1]{\ensuremath{\mathrm{Learn}(#1)}}
\newcommand{\learnlong}[1]{\ensuremath{\mathrm{Learn}(n,\npops,#1,\pi)}}
\newcommand{\sing}[2]{\ensuremath{\mathrm{Singletons}(#1,#2)}}
\newcommand{\bit}[1]{{{\{0,1\}}^{#1}}}
\newcommand{\metadist}{\ensuremath{q}}
\newcommand{\eps}{\epsilon}
\newcommand{\defeq}{\stackrel{{\mbox{\tiny def}}}{=}}
\newcommand{\mynote}[3]{\marginpar{\tiny \sf \color{#1} {#2}: {#3}}}
\newcommand{\mynote}[3]{}
\newcommand{\DropForCOLT}[1]{}
\newcommand{\terminalbox}{\hfill $\blacksquare$}
    \title{When is Memorization of Irrelevant Training Data \\
    Necessary for High-Accuracy  Learning?
     }
    \author{Gavin Brown\thanks{Computer Science Department, Boston University. \texttt{\{grbrown,mbun,ads22\}@bu.edu}. GB and AS are supported in part by NSF award CCF-1763786 as well as a Sloan Foundation research award. MB is supported by NSF award CCF-1947889.}
    \and 
    Mark Bun\footnotemark[1]
    \and
    Vitaly Feldman\thanks{Apple.}
    \and
    Adam Smith\footnotemark[1]
    \and 
    Kunal Talwar\footnotemark[2]
    }
    \date{\today}
\begin{document}

\maketitle

\begin{abstract}
Modern machine learning models are complex and frequently encode surprising amounts of information about individual inputs. 
In extreme cases, complex models appear to memorize entire input examples, including seemingly irrelevant information (social security numbers from text, for example). In this paper, we aim to understand whether this sort of memorization is necessary for accurate learning. We describe natural prediction problems in which every sufficiently accurate training algorithm must encode, in the prediction model,  essentially all the information about a large subset of its training examples. This remains true even when the examples are high-dimensional and have entropy much higher than the sample size,
and even when most of that information is ultimately irrelevant to the task at hand. Further, our results 
do not depend on the training algorithm or the class of models used for learning. 
    
Our problems are simple and fairly natural variants of the next-symbol prediction and the cluster labeling tasks. These tasks can be seen as abstractions of text- and image-related prediction problems. To establish our results, we reduce from a family of one-way communication problems for which we prove new information complexity lower bounds. 
Additionally, we present synthetic-data experiments demonstrating successful attacks on logistic regression and neural network classifiers.
\end{abstract}

\newpage 
{\footnotesize
  \tableofcontents
}

\newpage 
\pagestyle{plain}
    \pagenumbering{arabic} 

\section{Introduction}
\label{sec:introduction}

Algorithms for supervised machine learning take in training data, attempt to extract the relevant information, and produce a prediction algorithm, also called a \emph{model} or \emph{hypothesis}. The model is used to predict a particular feature on future examples, ideally drawn from the same distribution as the training data. 
Such algorithms operate on a huge range of prediction tasks, from image classification to language translation, often involving highly sensitive data.
To succeed,  models must of course contain information about the data they were trained on. In fact, many well-known machine learning algorithms create models that explicitly encode their training data: the ``model'' for the $k$-Nearest Neighbor classification algorithm is a description of the dataset, and Support Vector Machines include points from the dataset as the ``support vectors.''
Clearly, these models can be said to memorize at least part of their training data.

Sometimes, however, memorization is an implicit, unintended side effect. In a striking recent work, Carlini et al.~\cite{carlini2020extracting} demonstrate that modern models for next-word prediction memorize large chunks of text from the training data verbatim, including personally identifiable and sensitive information such as phone numbers and addresses. Memorization of training data points by deep neural networks has also been observed in synthetic problems \citep{radhakrishnan2019overparameterized,zhang2019identity}.
The causes of this behavior are of interest to the foundations of both machine learning and privacy. For example,  a model accidentally  memorizing Social Security numbers from a text data set presents a glaring opportunity for identity theft. 

In this paper, we aim to understand when this sort of memorization is unavoidable. We give natural prediction problems in which \textit{every reasonably accurate training algorithm must encode, in the prediction model, nearly all the information about a large subset of its training examples}. Importantly, this holds even when most of that information is ultimately irrelevant to the task at hand. We show this for two types of tasks: a  next-symbol prediction task (intended to abstract language modeling tasks
) and a multiclass classification problem in which each class distribution is a simple product distribution in $\bit{d}$ (intended to abstract a range of tasks like image labeling).  
Our results hold for any algorithm, regardless of its structure. 
We prove our statements by deriving new lower bounds on the information complexity of learning, building on the formalism of \citet*{bassily2018learners}.

We note that the word ``memorization'' is commonly used in the literature to refer to the phenomenon of \emph{label memorization}, in which a learning algorithm fits arbitrarily chosen (or noisy) labels of training data points. Such memorization is a well-documented property of modern deep learning and is related to interpolation (or perfect fitting of all the training labels) \citep{zhang2016understanding,arpit2017closer,ma2018power,yun2019small}.
Feldman~\cite{feldman2020does} recently showed that, for some problems, label memorization is \textit{necessary} for achieving near-optimal accuracy on test data.
Further, Feldman and Zhang~\cite{feldman2020neural} empirically demonstrate the importance of label memorization for deep learning algorithms on standard image classification datasets. 
In contrast, we study settings in which most of the information about entire high-dimensional (and high-entropy) training examples must be encoded by near-optimal learning algorithms.

\paragraph{Problem setting}
We define a \emph{problem instance} $p$ as a distribution over labeled examples: $p\in \Delta(\mc{X})$, where $\mc{X} = \inputspace\times \mc {Y}$ is a space of examples (in $\inputspace$) paired with labels (in $\mc{Y}$).
A dataset $X\in \mc{X}^n$ is generated by sampling i.i.d.\ from such a distribution.
We use $d$ to denote the dimension of the data, so $X$ can be described in $\Theta(nd)$ bits.
In contrast to the well-known PAC model of learning, we do not explicitly consider a concept class of functions. Rather, the instance $p$ is itself drawn from a metadistribution $\metadist$, dubbed the \textit{learning task}. The learning task $\metadist$ is assumed to be known to the learner, but the specific problem instance is a priori unknown. 
We write $P$ to denote a random instance (so $P$ is a random variable, distributed according to $\metadist$) and $p$ to denote a particular realization. See Figure~\ref{fig:setting}.

\newcommand{\err}[3]{{\text{\sf err}}_{{#1},{#2}}({#3})}

The learning algorithm $\alg$ receives a sample $X\sim P^{\otimes n}$ and produces a model $M=A(X)$ that can be interpreted as a (possibly randomized) map $M:\inputspace\to \mc{Y}$. The model errs on a test data point $(z,y)$ if $M(z)\neq y$ (for simplicity, we only consider misclassification error).
The learner $\alg$'s overall error on task $q$ with sample size $n$, denoted $\err{q}{n}{A}$, is its expected error over $P$ drawn from $q$, $X$ drawn from $P^{\otimes n}$, and test point $(Z,Y)$ drawn from $P$. 
That is,
\begin{equation}
    \err{q}{n}{A} \defeq 
    \Pr_{\substack{P\sim q, \\ X\sim P^{\otimes n},  (Z,Y)\sim P, 
    \\ \text{coins of }A,M}}
    (M(Z) \neq  Y \ \text{where} \ M=A(X))
\end{equation}
For probability calculations, we often use the shorthand ``$\alg$ errs'' to denote a misclassification by $A(X)$ (the event above), so that $\Pr(\alg \text{ errs}) = \err{q}{n}{A} $.

\begin{figure*}
    \includegraphics[width=0.99\textwidth]{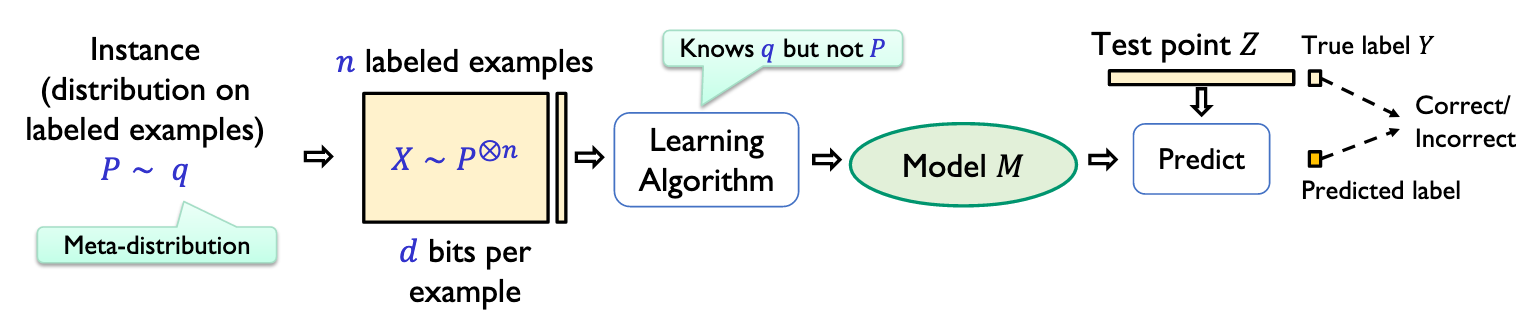}  
    \caption{Problem setting. We aim to understand the information about the data $X$ that is encoded in the model description $M$. }
    \label{fig:setting}
\end{figure*}

\begin{example} \label{ex:clustering_intro} 
    Consider the task of labeling the components in a mixture of $N$ product distributions on the hypercube. An interesting special case is a uniform  mixture of \textit{uniform distributions over subcubes}.
    Here each component $j\in [N]$ of the mixture is specified by a sparse set of fixed indices $\mc{I}_j\subseteq [d]$ with values $\{b_j(i)\}_{i\in\mc{I}_j}$. 
    Each labeled example is generated by picking at random a component $j\in [N]$ (which also serves as the label), for each $i\in \mc{I}_j$ setting $z(i)=b_j(i)$, and picking the other entries uniformly at random to obtain a feature vector $z\in \bit{d}$.  
    The labeled example is then $(z,j)$. (See Figure~\ref{fig:generating_data_GHP}.)
    A natural meta-distribution $q$ generates each set $\mc{I}_j$ by adding indices $i$ to $\mc{I}_j$ independently with some probability $\rho$, and fixes the values $b_j(i)$ at those indices uniformly at random.
    
    Given a set of $n$ labeled examples and a test point $z'$ (drawn from the same distribution, but missing its label), the learner's job is to infer the label of the mixture component which generated $z'$.
    \terminalbox
\end{example}

Given $q$, a particular meta-distribution, and $n$, the number of samples in the data set,
there exists a learner $\Aopt$ (called \textit{Bayes-optimal}) that minimizes the overall error on the task $q$. 
For any given task, this minimal error will be our reference; for $\eps\geq 0$, we call a learner $\eps$-\textit{suboptimal} (for $q$ and $n$) if its error is within $\eps$ of that of $\Aopt$ on samples of size $n$, that is, $\err{q}{n}{A} \leq \err{q}{n}{\Aopt} +\eps$.  
We have $\mc{I}_j\subseteq \hat{\mc{I}}_j$, but with few samples from cluster $j$,  $\hat{\mc{I}}_j$ will contain many irrelevant indices. 
The optimal learner will balance the Hamming distance on $\hat{\mc{I}}_j$ against the probability of achieving a set of fixed indices of that size, accounting for the fact that we expect half of the non-fixed indices to match.

\subsection{Our Contributions}
\label{sec:contribs}

We present  natural prediction problems $q$ where any algorithm with near-optimal accuracy on $q$ must memorize $\Omega(nd)$ bits about the sample. Furthermore, this memorized information is really about the specific sample $X$ and not about the data distribution $P$. 
\begin{theorem}[Informal; see Corollaries~\ref{cor:NSP_corollary_unif} and
\ref{cor:HC_corollary_unif}] 
    For all $n$ and $d$,
    there exist natural tasks $q$ for which any algorithm $\alg$ that satisfies, for small constant $\epsilon$,
    \[
    \err{q}{n}{\alg} \leq \err{q}{n}{\Aopt} + \eps 
    \]
    also satisfies
\[        I(X;M\mid P) = \Omega(nd) \, ,
    \]
    where   $P\sim q$ is the distribution on labeled examples, $X\sim P^{\otimes n}$ is a sample of size $n$ from $P$, $M=\alg(X)$ is the model, and examples lie in $\bit{d}$ (so $H(X) \leq nd$). The asymptotic expression holds for any sequence of $n,d$ pairs; the constant depends only on $\eps$. 
\end{theorem}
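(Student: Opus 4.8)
The plan is to instantiate the two tasks --- a next-symbol prediction task and the cluster-labeling task of Example~\ref{ex:clustering_intro} --- with parameters (roughly $\npops=\Theta(n)$) tuned so that, with constant probability, a constant fraction of the $\npops$ subpopulations appear exactly once among the $n$ training examples; call these the \emph{singletons}. Conditioned on the instance $P$, the training example $z_j$ belonging to a singleton subpopulation $j$ still has $\Omega(d)$ bits of entropy (it is fixed on $\mc{I}_j$ and uniform on the remaining $\approx(1-\rho)d$ coordinates), and the heart of the argument is that any near-optimal learner must store essentially all of them. Concretely the proof has three ingredients: (i) a reduction showing that an $\eps$-suboptimal learner must, on a constant fraction of the singleton subpopulations, recover the hidden label with advantage bounded away from what is achievable from $P$ alone; (ii) a new information-complexity lower bound for the resulting \emph{one-instance one-way communication problem}; and (iii) an additivity (direct-sum) step aggregating the per-instance bounds into $I(X;M\mid P)=\Omega(nd)$. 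All of this is carried out inside the information-complexity framework of \citet{bassily2018learners}.

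\textbf{Step 1: reduction to one-way communication.} Fix a subpopulation $j$ and condition on the event that it is a singleton with training example $z_j$. A fresh test point $Z'$ from subpopulation $j$ agrees with $z_j$ on $\mc{I}_j$ and on roughly half of the remaining coordinates, so it is noticeably closer to $z_j$ in Hamming distance than a string from an unrelated subpopulation; as the excerpt notes, $\Aopt$ exploits exactly this closeness, balanced against the prior on $|\mc{I}_j|$. I would first describe $\Aopt$ explicitly enough to show that (a) its error restricted to singleton test points is a constant below the ``$P$-only'' baseline and (b) singleton test points carry an $\Omega(1)$ fraction of the total test mass. Then, since the excess error of any $\alg$ is a nonnegative average over test points, a Markov argument shows that an $\eps$-suboptimal learner matches $\Aopt$'s behaviour up to $O(\sqrt{\eps})$ on a $1-O(\sqrt{\eps})$ fraction of the singletons. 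For each such $j$, the model $M=\alg(X)$ --- playing the role of Alice's message --- together with the test feature lets a decoder beat the baseline by a constant, i.e.\ $M$ solves a planted-versus-uniform detection problem about $z_j$.

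\textbf{Step 2: the single-instance information lower bound.} This is the technical core and the source of the ``new information complexity lower bounds''. I would prove that any (randomized, adaptive) encoding $M$ of a single example $z$ drawn from a random subcube, from which a fresh partial observation of the same subcube lets a decoder beat the baseline by a constant, must satisfy $I(z;M\mid \mc{I}_j,b_j)=\Omega(d)$. The argument is a strong-data-processing / Fano-type estimate matched to the subcube noise rate $\rho$: if this mutual information were $o(d)$, then conditioned on $M$ the string $z$ remains close to uniform on a constant fraction of its free coordinates, so the Hamming statistic that separates ``same subcube'' from ``random'' is swamped by noise and no decoder can achieve constant advantage --- contradicting Step 1. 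Getting the constants to line up with $\rho$, and ruling out encodings that preserve just enough of the Hamming geometry while revealing little, is the delicate part.

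\textbf{Step 3: additivity and conclusion.} Conditioned on $P$, the free randomness of distinct subpopulations is independent, so by the chain rule and the superadditivity of conditional mutual information over independent blocks, $I(X;M\mid P)\ \ge\ \sum_{j\ \mathrm{singleton}} I(z_j;M\mid P)\ =\ \Omega(n)\cdot\Omega(d)\ =\ \Omega(nd)$, where a small amount of care is needed because the singleton indicator events and the test point's label are themselves random (one handles this by conditioning on the singleton pattern, which holds with constant probability and leaves $\Omega(n)$ singletons in expectation). The conditioning on $P$ does double duty: it makes the blocks independent so the direct sum is clean, and it certifies that the $\Omega(nd)$ memorized bits are genuinely about the sample $X$ and not about the data distribution. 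The main obstacle I anticipate is Step~2 --- the tight $\Omega(d)$ single-instance bound against arbitrary low-information encodings --- with a secondary obstacle in Step~1, namely pinning down $\Aopt$ precisely enough to convert the global accuracy slack $\eps$ into a per-instance decoding advantage without losing constant factors. The next-symbol prediction task is handled analogously, with ``documents'' playing the role of subpopulations and the next-symbol constraint forcing the same detection problem.
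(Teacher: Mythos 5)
Your overall framing (singletons, conditioning on $P$, reduction to a one-way communication game) matches the paper's architecture, and your Step~1 is a reasonable, if vague, sketch of what the paper does more carefully in its Central Reduction (Lemma~\ref{lemma:central_reduction}, which avoids any $\sqrt{\eps}$ loss by conditioning on the singleton count $K$ and the event $E_1$). But there is a genuine gap in Steps~2 and~3 for the hypercube cluster-labeling task: the per-instance lower bound you propose, $I(z_j;M\mid \mc{I}_j,b_j)=\Omega(d)$ for a single subcube with constant decoding advantage, is simply \emph{false}, and the paper explicitly warns about this. With $\rho\approx\sqrt{2\ln k/d}$, a single ``is $Z$ a noisy copy of $z_j$?'' detection problem with constant error probability can be solved from a message of size $O(d/\log k)=o(d)$; this is precisely why the paper says the proof ``does not proceed by separately bounding the information complexity of solving each of the $k$ subproblems.'' A direct-sum over such bounds could give at most $O(kd/\log k)$, not $\Omega(kd)$. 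What saves the argument is that Bob must select from among $k$ candidates, forcing per-cluster detection error $\approx 1/k$ rather than a constant, and the paper captures this jointly: it lower-bounds $I(J;M,Z)\ge(1-c_a-\eps_k-o(1))\log k$ by Fano applied once to the $k$-way classification, upper-bounds the same quantity by $\frac{\rho^2}{k}I(M;X)$ via the SDPI, and the $\log k$ in the Fano bound cancels the $\ln k$ inside $\rho^2$ to produce $\Omega(kd)$. Your instinct to use SDPI is right, but it must be applied to the global $k$-way problem, not instance-by-instance.

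A secondary issue is your closing remark that next-symbol prediction is ``handled analogously'' via the ``same detection problem.'' It isn't: for NSP the paper does prove a genuine single-instance bound (Lemma~\ref{lemma:SC_one_shot_second}) and takes a clean direct sum, but the underlying problem is Augmented-Indexing-like --- Bob is handed a random-length prefix and must output the next bit --- not a planted-versus-uniform geometry. The Fano argument there is applied bitwise over prefix lengths, and the $\Omega(d)$ comes from the $(d+1)/2$ average prefix length, not from a Hamming-distance gap. So the two tasks really do require two different proof strategies, and the proposal conflates them.
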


To interpret this result, recall that conditional mutual information is defined via two conditional entropy terms:
\(    I(X;M\mid P) = H(X\mid P) - H(X\mid M, P).
\)
Consider an informed observer who knows the full data distribution $P$ (but not $X$). The term $I(X;M |P)$ captures how the observer's uncertainty about $X$ is reduced  after observing the model $M$. 
Since $P$ is a full description of the problem instance, the term $H(X\mid P)$ captures the uncertainty about what is ``unique to the data set,'' such as noise or irrelevant features.
So $I(X;M\mid P) = \Omega(nd)$ means that \textit{not only must the learning algorithm encode a constant fraction of the information it receives, but also that a constant fraction of what it encodes is irrelevant to the task}. For one of the problems we consider, we even get that $I(X;M\mid P) = (1-o(1))H(X'\mid P)$, where $X'$ is a subset of $X$ of expected size $\Omega(n)$ and entropy $H(X'\mid P)=\Omega(nd)$ (see Theorem~\ref{thm:full-mem-informal}). That is, a subset of examples is encoded nearly completely in the model.

The meta-distribution $q$ captures the learner's initial uncertainty about the problem, and is essential to the result: if the exact distribution $P$ were known to the learner $\alg$, it could simply ignore $X$ and write down an optimal classifier for $P$ as its model $M$. In that case, we would have $I(X;M\mid P)=0$. That said, since conditional information is an expectation over realizations of $P$, our result also means that for every learner, there is a particular worst-case $p$ (in the support of $q$) such that $I(M;X)$ is large.
\ifSTOC
\else
We discuss this point further in Appendix~\ref{sec:minimax}. 
\fi
Such worst-case bounds were considered in a series of related papers \citep{bassily2018learners,nachum2018direct,nachum2019average}, with which we compare below.

Our results  lower bound mutual information. The statements do not directly shed light on whether a computationally efficient attacker, given access to the classifier, could recover some or all of the training data. 
Our proofs do suggest limited forms of recovery for some adversaries and Section~\ref{sec:experiments} demonstrates successful black-box attacks against simple neural networks,
but we leave the full investigation of efficient recovery, and attacks against specific learning algorithms, as areas for future research.

We study two classes of learning tasks. The first is a next-symbol prediction problem (intended to abstract language modeling tasks). The second is the cluster labeling 
\ifSTOC
    problem,
    partially introduced in Example~\ref{ex:clustering_intro},
    where individual classes are mixtures of product distributions over the Boolean hypercube.
\else
    problem (a generalization of Example~\ref{ex:clustering_intro} that allows more natural mixture weights).
\fi
The exact problems are defined in Section~\ref{sec:techniques}. 

In all the tasks we consider, data are drawn from a mixture of subpopulations. We consider settings where there are likely to be $\Omega(n)$ components of the mixture distribution from which the data set contains exactly one example. Leveraging new communication complexity bounds, we show that $\Omega(d)$ bits about most of these ``singleton'' examples must be encoded in $M$ for the algorithm to perform well on average.

Returning to the cluster labeling problem in Example~\ref{ex:clustering_intro}, recall that the learner receives an $nd$-bit data set, which has entropy $\Theta(nd)$, even conditioned on $P$  (when $\rho$, the probability of fixing an index, is bounded away from 1). This ``remaining uncertainty'' $H(X\mid P)$ is, ignoring lower-order terms, exactly the uncertainty about the values of the irrelevant features. Showing $I(X;M\mid P) = \Omega(nd)$, then, establishes not only that the model must contain a large amount of information about $X$, but also that it must encode a large amount of information about the unfixed features, information completely irrelevant to the classification task at hand.

On a technical level, our results are related to those of 
\citet*{bassily2018learners}, \cite{nachum2018direct} and \citet{nachum2019average}, who study lower bounds on the mutual information $I(X; M)$ achievable by a PAC learner for a given class of Boolean functions $\mc{H}$.
Specifically, for the class $\mc{H}_{thresh}$ of threshold functions on $[2^d]$, they give a learning task%
\footnote{The results of \citet{bassily2018learners,nachum2018direct,nachum2019average} are formulated in terms of worst-case information leakage over a class of problems. They imply the existence of a single hard meta-distribution $q$ by a minimax argument.} 
for which  every \textit{proper and consistent} learning algorithm (i.e. one that is limited to outputting a function in $\mc{H}_{thresh}$ that labels the training data perfectly) satisfies
$I(X; M\mid P) =\Omega\left(\log d\right)$ \citep{bassily2018learners,nachum2018direct}. Furthermore, \citet{nachum2018direct} extend this result to provide
a hypothesis class $\mc{H}$ with VC dimension $n$ over the input space  $[n]\times \bit{d}$  such that learners receiving $\Omega(n)$ samples must leak at least $ I(X; M\mid P) = \Omega\left(n \cdot \log (d - \log n) \right)$ bits about the input via their model. 
The direct sum construction in \cite{nachum2018direct} is similar to our construction: they build a learning problem out of a product of simpler problems 
and relate the difficulty of the overall problem to that of the components. 

Even more closely related is concurrent work of \citet[Theorem 2, setting $m=2$]{livni2020limitation}, which gives settings in which  the PAC-Bayes framework cannot yield good generalization bounds. Their result implies  that sufficiently accurate algorithms for learning thresholds over $[2^d]$ must leak $\Omega(\log \log d)$ bits of information.
(This can be extended to a lower bound of $\Omega(n\log\log d)$ for learning products of thresholds from samples of size $n$.) 
It is unclear if those techniques can yield bounds that scale linearly with $d$.

As we show in Appendix~\ref{sec:additional_models}, 
our results on next-symbol prediction can be cast in terms of learning threshold functions. As such, our results provide an  alternative  to those of \cite{bassily2014private}, \cite{nachum2018direct}, and \cite{livni2020limitation}. First, they are quantitatively stronger: we give a lower bound of 
$(1-o(1)) nd$
rather than $\Omega (n \log d)$ or $\Omega(n \log\log d)$. Second, our bounds and those of \cite{livni2020limitation} apply to all sufficiently accurate learners, whereas those of \cite{bassily2014private} and \cite{nachum2018direct} require the learner to be  proper and consistent (an incomparable assumption, in the regimes of interest).

\paragraph{Implications} 
While the problems we describe are intentionally simplified to allow for clean theoretical analysis, 
they rely on properties found in natural learning problems such as clustering of examples, noise, and a fine-grained subpopulation structure~\citep{zhu2014capturing}.
Our results thus suggest that memorization of irrelevant information, observed in practice, is a fundamental property of learning and not an artifact of particular deep learning techniques. 

Our proofs rely on an assumption of independence between subpopulations.
While this is a natural assumption for mixture models broadly, it is a significant simplification for a model of natural language or images.
We believe that one could prove weaker but still meaningful statements about memorization under relaxed versions of the independence assumption.
The crucial ingredients are that (i) samples contain useful information about their subpopulation alongside irrelevant information and (ii) the learning algorithm is unable to discern which is which.
Independent subpopulations make for easier proofs and cleaner statements,
but do not seem to be a requirement for memorization.

Our results have implications for learning algorithms that (implicitly) limit memorization. One class of such algorithms aims to compress models (for example to reduce memory usage), since description length upper bounds the mutual information. Differentially private algorithms~\citep{DworkMNS06} form another such class. It is known that differential privacy implies a bound on the mutual information  $I(X;M\mid P)$ \citep{McGregorMPRT10,dwork2015generalization,rogers2016max,bun2016concentrated}. Our results imply that such algorithms might not be able to achieve the same accuracy as unrestricted algorithms. In itself, that is nothing new: there is a long line of work on differentially private learning  \citep[for example][]{BlumDMN05, kasiviswanathan2011can}, including a number of striking separations from nonprivate algorithms \citep{bun2018fingerprinting,bassily2014private,AlonLMM19}. There are also well-established attacks on statistical and learning algorithms for high-dimensional problems (starting with \citep{dinur2003revealing}; see \citep{dwork2017exposed} for a survey of the theory, and a recent line of work on membership inference attacks \citep{shokri2017membership} for empirical results).
However, our results show a novel aspect of the limits of private learning: in the settings we consider, \textit{successful learners must memorize exactly those parts of the data that are most likely to be sensitive}—unique samples from small subpopulations, including their peculiar details (modeled here as noisy or irrelevant features).

\paragraph{Variations on the main result}
Different learning tasks exhibit variations and refinements of this central result.
The mutual information lower bound implies that the model itself must be large, occupying at least $\Omega(nd)$ bits. 
But for some tasks we present, there exist $\eps$-suboptimal models needing only $O(n \log (n/\eps) \log d)$ bits to write down (in the parameter regime we consider, where the problem scales with $n$). 
That is, with $n$ samples the learning algorithm must output a model exponentially larger than what is required with sufficient data (or exact knowledge of $P$).
In particular, for a given target accuracy level, there is a gradual drop in the size of the model, and the information specific to $X$, that is necessary (starting at $\Theta(n_0 d)$ where $n_0$ is the minimal sample size needed for that accuracy, and tending to $O(n_0 \log n_0 \log d)$ as the sample size $n$ grows). 
For task-specific discussion, see Section~\ref{sec:GHP} Remark \ref{rem:HC_small_representation}, and Appendix \ref{sec:2_NSP}.

Another variation of our results gives a qualitatively stronger lower bound.  For some tasks, we are able to demonstrate that \textit{entire} samples must be memorized, in the following sense: 
\begin{theorem}[Informal; see Corollary~\ref{cor:NSP_corollary_unif} and Conjecture~\ref{conj:HC_singletons_bound}] \label{thm:full-mem-informal}
There exist natural tasks $q$ for which every data set $X$ has a subset of records $X_S$ such that 
\begin{compactitem}
    \item $\E[|X_S|]=\Omega(n)$, $H(X_S\mid P) = \Omega(nd)$, and
    \item any algorithm $\alg$ that satisfies \(
    \err{q}{n}{\alg} \leq \err{q}{n}{\Aopt} + \eps 
    \)
    also satisfies
\[        I(X_S;M\mid P) = (1-o(1)) H(X_S\mid P)  \, .
    \]
\end{compactitem}
\end{theorem}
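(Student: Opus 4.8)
\noindent The plan is to prove the statement for the next-symbol prediction task, instantiated with a meta-distribution $\metadist$ that draws $\npops=\Theta(n)$ independent subpopulations each contributing to the sample with probability $\Theta(1/n)$, and to let $\Xone\subseteq X$ be the sub-sample consisting of every training example that is the \emph{unique} draw from its subpopulation. The first ingredient is a balls-into-bins estimate: with $\Theta(n)$ subpopulations and $n$ i.i.d.\ samples, the expected number of subpopulations hit exactly once is $\Theta(n)$, so $\E[|\Xone|]=\Omega(n)$. The second ingredient is an entropy computation. Conditioned on $P$, a singleton example still carries $\Omega(d)$ bits of entropy: these are the coordinates that are resampled afresh on every draw from the subpopulation and are therefore not pinned down by knowing the instance. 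Since the set $S$ of singleton indices is a deterministic function of the subpopulation-assignments of the $n$ samples, and that assignment is independent of $P$ and of these resampled coordinates, the two estimates compose after conditioning on $S$ and give $H(\Xone\mid P)=\Omega(nd)$.

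The heart of the argument is a reduction to a one-way communication problem together with a sharp information-complexity lower bound for it, in the spirit of \citet{bassily2018learners} but quantitatively much stronger. The base communication problem is: Alice holds a string $x\in\bit d$, sends a single message $m$ to Bob, and Bob --- who holds a query modeling a next-symbol prediction on a fresh draw from the subpopulation whose defining data is $x$ --- must answer; the new lower bound asserts that if Bob errs with probability at most $\gamma$ then $I(x;m)\ge (1-f(\gamma))\,d$ for some $f$ with $f(\gamma)\to 0$ as $\gamma\to 0$. The mechanism behind this bound is that the learner cannot tell which coordinates of $x$ are ``signal'' (shared across the subpopulation) and which are ``private,'' so to reproduce the signal coordinates on demand it is forced to reproduce essentially all of $x$, and each coordinate it fails to reconstruct costs it in prediction error; a coordinate-wise Fano / strong-data-processing estimate then converts ``small prediction error'' into ``$H(x\mid m)=o(d)$.'' To lift this to the learning task, fix $P$ and $S$; regard $\alg$ as a protocol in which the non-singleton samples and $\alg$'s coins serve as public randomness, the $j$-th singleton example plays the role of Alice's input, and $M=\alg(X)$ plays the role of her message; and observe that a test point lands in a singleton subpopulation with constant probability, so near-optimality of $\alg$ forces near-Bayes next-symbol accuracy on all but a small fraction of the singletons. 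Independence of the subpopulations makes this a legitimate one-way protocol and lets us run the lower bound in direct-sum form: by subadditivity of entropy and the per-singleton bound,
\[
    H(\Xone\mid M,P)\ \le\ \sum_{j\in S} H(X_j\mid M,P)\ =\ o\bigl(H(\Xone\mid P)\bigr),
\]
whence $I(\Xone;M\mid P)=H(\Xone\mid P)-H(\Xone\mid M,P)=(1-o(1))\,H(\Xone\mid P)$.

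The main obstacle is the quantitative strength of the communication lower bound: the textbook bound for an ``index''-type one-way problem with constant error yields only $\Omega(d)$, and obtaining the $(1-o(1))$ factor requires calibrating the next-symbol task so that an $\eps$-suboptimal learner must leave only an $o(1)$ fraction of each singleton's coordinates unreconstructed (the residual slack being driven to $0$ as $\eps\to0$), and so that the $f(\gamma)d$ slack in the per-copy bound does not accumulate to $\Omega(nd)$ across the $\Theta(n)$ singletons. Two further points require care: extracting the per-singleton prediction-quality statement from the ``$\eps$-suboptimal on average over $\metadist$'' hypothesis --- using that $\Aopt$ is locally optimal on each independent subpopulation, plus a Markov argument --- and checking that conditioning on the random index set $S$ and on the non-singleton samples does not disturb the independence structure underlying the reduction. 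For the cluster-labeling task the analogous accuracy accounting is more delicate, which is why that case appears as Conjecture~\ref{conj:HC_singletons_bound} rather than a theorem.
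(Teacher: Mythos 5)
Your proposal follows essentially the same route as the paper: isolate the singleton sub-sample, compute its expected size and conditional entropy (the $\Omega(nd)$ requires the noise parameter $\delta>0$ fixed, as you implicitly use when appealing to "resampled coordinates"), reduce per singleton to a one-way communication game in which non-singleton samples and internal coins serve as public randomness, prove a sharp one-shot information-complexity bound, and tensorize. The one place where you deviate technically is the averaging step: the paper pushes the mean error inside the binary entropy function via a Jensen-type inequality (Lemma A.7), giving $\E_j[h(\gamma_j)]\le h(\E_j[\gamma_j])$ directly, whereas you propose a Markov-threshold argument; both yield the $o(1)$ residual, but Jensen is cleaner and is what lets the paper write the final bound as an explicit binary-entropy expression rather than a two-parameter optimization. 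Two small points of calibration: (i) the one-shot bound in the paper (Lemma~\ref{lemma:SC_one_shot_second}) is proved by the Augmented-Indexing / Fano technique of \citet{bar2004information,feldman2014sample}, not by a strong data-processing inequality---SDPI is used only for the hypercube task---so "Fano/SDPI" is a mild conflation; and (ii) the leading constant in the one-shot bound is $\tfrac{d+1}{2}$ (Alice's expected input length), not $d$, which is why the final comparison is against $H(X_S\mid P)\approx\mu_1 n\tfrac{d+1}{2}h(\delta/2)$ rather than $nd$. Neither changes the conclusion; the argument goes through as you sketched.
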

This statement implies $I(X;M\mid P) = \Omega(nd)$,
 but is a qualitatively different statement: as the learning algorithm's accuracy approaches optimal, it must reveal \emph{everything} about these examples in $\Xone$, at least information-theoretically.
For these tasks, there is no costless compression the learning algorithm can apply: any reduction will increase the achievable error.
    We conjecture that this ``whole-sample memorization'' type of lower bound applies to all the tasks we study, and in fact give a simple conjecture on one-way information complexity that would imply such strong memorization (see below and Section~\ref{sec:GH_conjecture}). 

In addition to the memorization of noise or irrelevant features, in some settings we show how near-optimal models may be forced to memorize examples that are themselves entirely ``useless,'' i.e. could be ignored with only a negligible loss in accuracy. That is, not only must irrelevant details of useful examples be memorized, but one must also memorize examples that come from very low-probability parts of the distribution. 
Unlike our main results, which hold for uniform mixtures over the subpopulations, this behavior relies on a particular type of mixture structure and the long-tailed distribution setup of \cite{feldman2020does}. 
We explain the concept and the statement more carefully in 
    Section~\ref{sec:long_tail}.

Finally, complementing our information-theoretic lower bounds, experiments in Section~\ref{sec:experiments} demonstrate efficient data reconstruction from concrete learning algorithms.
Generating synthetic data sets from the cluster labeling problem described in Example~\ref{ex:clustering_intro}, we train multiclass logistic regression and neural network classifiers to high accuracy.
An adversary with black-box access to the trained classifer then tries to recover singleton examples, using simple attacks such as approximately maximizing the target class probability.
These attacks are extremely successful in our experiments, on average recovering over 97\% of the singletons' bits.

\subsection{Techniques: Subpopulations, Singletons, and Information Complexity}
\label{sec:techniques}

The learning tasks $q$ we consider share a basic structure: each distribution $P$ consists of a mixture, with coefficients $D\in \Delta([N])$, over subpopulations $j\in [N]$, each with a different ``component distribution'' $C_j$ over labeled examples. The mixture coefficients $D$ may be deterministic (e.g.\ uniform) or random; 
    for now, the reader may keep in mind the uniform mixture setting,
with $N=n$ (so the number of subpopulations is the same as the sample size). The $C_j$'s are themselves sampled i.i.d.\ from a meta-distribution $q_c$. 

As in \citep{feldman2020does}, we look at how the learning algorithm behaves on the subset of examples that are \textit{singletons}, that is, sole representatives (in $X$) of their subpopulation. For any data set $X$, let $X_S\subseteq X$ denote the subset of singletons. We consider mixture weights $D$ where $X_S$ has size $\Omega(n)$ with high probability. We show that for our tasks, a successful learner must roughly satisfy $I(X_S;M\mid P)=\Omega(d|X_S|)$, where $d$ is the dimension of the data.
Our results rely on the learning algorithm doing almost as well as possible with the size-$n$ sample they are given. 
That requires us to adapt the distribution to $n$.
For any fixed distribution we consider, if the sample is large enough, our proofs will yield weaker guarantees.
For instance, if instead of $n=N$ samples from the uniform mixture over subpopulations, we draw $n=2N$, then we will get fewer singletons, although we still expect $|X_S|=\Omega(n)$.
If we increase the sample size to $n=\Omega(N \log N)$, with high probability the data set will contain no singletons.

\paragraph{One-Way Information Complexity of Singletons}
We show that a good learner implies a good strategy for a related one-way communication game, 
    dubbed \Singletons$(k,q_c)$. 
In this game, nature generates $k$ distributions $C_1,\dots,C_k$ i.i.d.\ from the meta-distribution on clusters $q_c$, along with a uniformly random index $j^* \in [k]$. One player, Alice, receives a list $(x_1,\dots,x_k)$ of labeled examples, where $x_j\sim C_j$. A second player Bob, receives only the feature vector $z$ from a fresh draw $(z,y)\sim C_{j^*}$. Alice sends a single message $M$ to Bob, who predicts a label $\hat y$.  Alice and Bob win if $\hat y = y$. 

\begin{example}[Nearest neighbor, Figure~\ref{fig:generating_data_GHP}]\label{ex:nearest_neighbor_revisted}
For the hypercube task corresponding to Example~\ref{ex:clustering_intro}, let $q_{HC}$ be the distribution from which the $\{C_j\}$ are sampled.
    In $\sing{k}{q_{HC}}$, 
there are $k$ sets of fixed indices $\{(\mc{I}_j,b_j)\}_{j\in[k]}$.
Alice gets a list $X'=(x_1,\dots,x_k)\in (\bit{d})^k$, where for every example $j$, we have: $\forall i\in\mc{I}_j, x_j(i)=b_j(i)$ and $\forall i\notin\mc{I}_j$, $x_j(i)=\mathrm{Bernoulli}(1/2)$. The label, $j$, is implicit in the ordered list. 
Bob receives $z$ for a random index $j^*$ and must predict $j^*$.

Equivalently, we may view the game as a version of the nearest neighbor problem, treating Alice's input list $(x_1,\dots,x_k)$ as uniformly random in $(\bit{d})^k$ and Bob's input as a corrupted version of the one of the $x_j$'s.
If each $\mc{I}_j$ is built by adding every index independently with probability $\rho$,
one can quickly check that generating $z$  from the same distribution as $x_j$ is equivalent to setting $z = BSC_{\frac{1-\rho}{2}}(x_{j^*})$,  where $BSC_{\frac{1-\rho}{2}}$ is the binary symmetric channel that flips each bit of $x_{j^*}$ independently with probability $\frac{1-\rho}{2}$.
If Bob were to see Alice's entire input, his best strategy would be to guess index of the point in $(x_1,...,x_k)$ that is nearest to $z$. One can show that he succeeds with high probability as long as $\rho \approx \sqrt{\frac{\ln k}{d}}$.  

This straightforward strategy requires Alice to send $nd$ bits. We ask: can Bob still succeed with high probability when Alice sends $o(nd)$ bits?
\terminalbox
\end{example}

\begin{figure}
    \centering
    \begin{subfigure}[b]{0.4\textwidth}
        \includegraphics[width=\textwidth, page=3, clip=true, trim= 1.5in 4in 7.9in 1in]{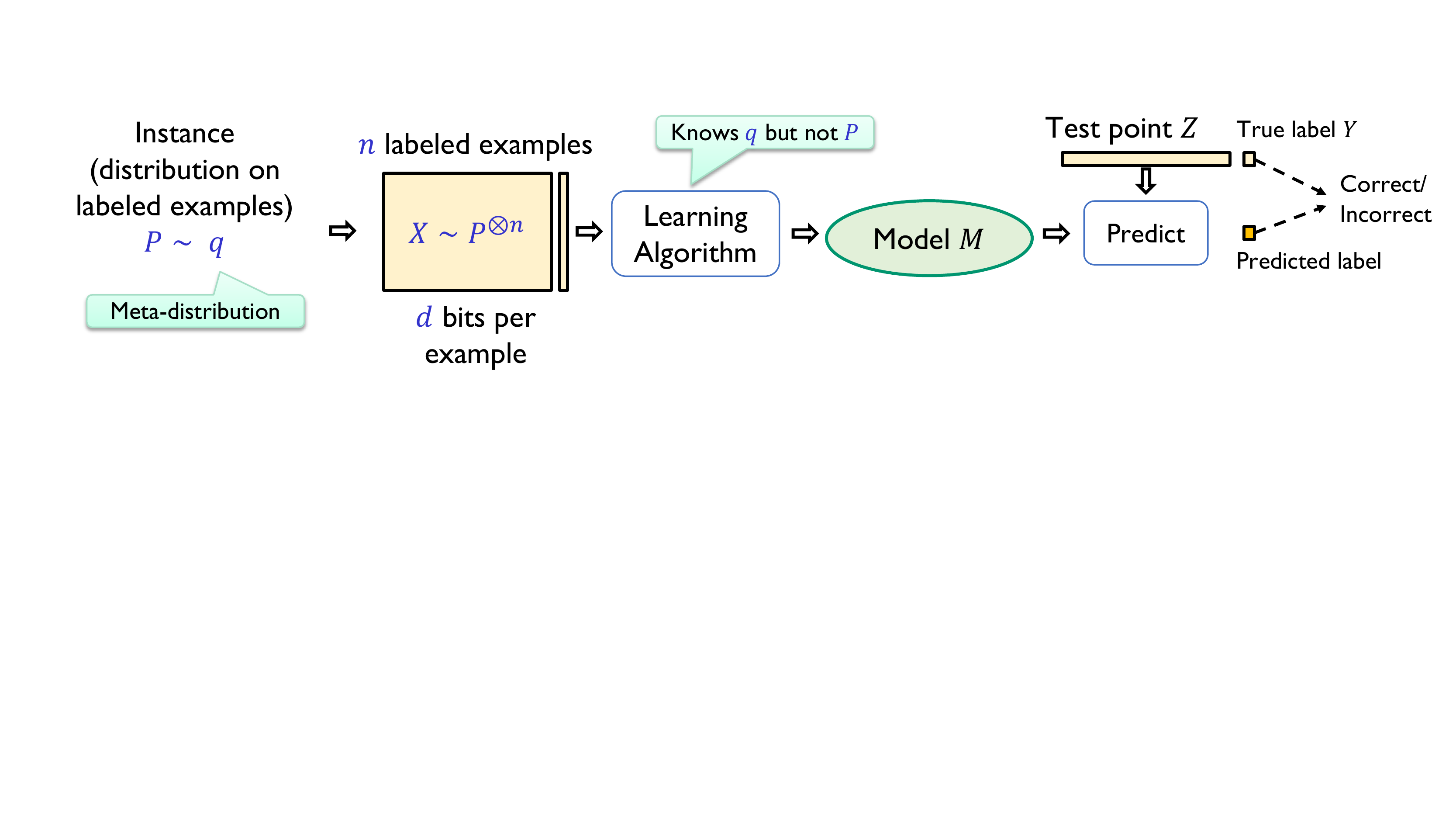}
        \caption{Hypercube Cluster Labeling}
        \label{fig:generating_data_GHP}
    \end{subfigure}
    ~
    \begin{subfigure}[b]{0.4\textwidth}
        \includegraphics[width=\textwidth, page=3, clip=true, trim= 7in 3.9in 2.5in 1in]{figs/memo-diagrams.pdf}
        \caption{Next-Symbol Prediction}
        \label{fig:generating_data_NSP}
    \end{subfigure}
    \caption{(a) In hypercube cluster labeling, each subpopulation $j$ is associated with a sparse set of fixed indices. The example is generated by filling in the other indices randomly. The label is $j$.
    (b) In next-symbol prediction, each subpopulation $j$ is associated with a reference string. Examples contain $j$ paired with a noisy prefix of random length. The label is the next bit, which may also be corrupted.}
\end{figure}

One novel technical result bounds the information complexity of this nearest neighbor problem.

\ifSTOC
    \begin{lemma}[Informal; see Lemma~\ref{lemma:HC_communication_bound_body}]
        For all $k,d\in \mathbb{N}$,  $c>\sqrt{2}$, $\rho = c\sqrt{\frac{\ln k}{d}}$, and $\epsilon_k$ sufficiently small, the one-way information complexity of $\epsilon_k$-suboptimal protocols for the $k$-sample singletons-only hypercube labeling task (Example~\ref{ex:nearest_neighbor_revisted}) is 
        $I(X' ;M) \ge \frac{1 - 2h(\epsilon_k)}{c^2\ln 2}\cdot kd$,
        where $h$ is the binary entropy function.
    \end{lemma}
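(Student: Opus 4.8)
The plan is to lower-bound $I(X';M)$ for an arbitrary $\epsilon_k$-suboptimal protocol by combining Fano's inequality, a reduction from ``identify one of $k$ clusters'' to a sum of per-cluster mutual informations, and the strong data processing inequality (SDPI) for the binary symmetric channel. I will work in the nearest-neighbor formulation of Example~\ref{ex:nearest_neighbor_revisted}: $X'=(x_1,\dots,x_k)$ is uniform on $(\bit d)^k$ (so the $x_j$ are i.i.d.\ uniform and $H(X')=kd$), the index $J^*\in[k]$ is uniform and independent of $X'$, Bob receives $Z=BSC_{(1-\rho)/2}(x_{J^*})$, and he outputs a guess $\widehat J$ of $J^*$. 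Fixing Bob's private coins to their best value, $\widehat J$ is a deterministic function of $(M,Z)$; write $\epsilon'$ for its error probability, and recall $\epsilon'\le\epsilon_k+\delta_0$, where $\delta_0=\delta_0(k,d,c)$ is the optimal unbounded-communication error, which is $o(1)$ for $c>\sqrt2$ by the nearest-neighbor analysis sketched after Example~\ref{ex:nearest_neighbor_revisted}.

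\emph{Step 1 (Fano).} Since $\widehat J$ is a function of $(M,Z)$ and errs with probability $\epsilon'$, Fano's inequality gives $H(J^*\mid M,Z)\le h(\epsilon')+\epsilon'\log_2 k$. As $J^*$ is independent of $X'$ and hence of $M$, we have $I(J^*;M)=0$, so $I(J^*;Z\mid M)=I(J^*;M,Z)\ge (1-\epsilon')\log_2 k-h(\epsilon')$.

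\emph{Step 2 (reduction to per-cluster information).} Condition on $M=m$. Then $J^*$ is still uniform, and $Z$ is the uniform mixture over $j$ of $q_j^m:=\mathrm{law}\big(BSC_{(1-\rho)/2}(x_j)\mid M=m\big)$, with mixture $\bar q^m$. The mixture identity $I(\Theta;X)=\mathbb{E}_\Theta D(P_{X\mid\Theta}\|P_X)$ gives $I(J^*;Z\mid M=m)=\tfrac1k\sum_j D(q_j^m\|\bar q^m)$. Since the average of the maps $\nu\mapsto D(q_j^m\|\nu)$ is minimized at $\bar q^m$, replacing $\bar q^m$ by the uniform distribution $u$ on $\bit d$ only increases the sum, and $D(q_j^m\|u)=d-H(q_j^m)$; taking expectations over $m$ and setting $Y_j:=BSC_{(1-\rho)/2}(x_j)$ (uniform, so $H(Y_j)=d$) yields
\[ I(J^*;Z\mid M)\ \le\ \tfrac1k\sum_{j=1}^k\big(d-H(Y_j\mid M)\big)\ =\ \tfrac1k\sum_{j=1}^k I(Y_j;M). \]
\emph{Step 3 (SDPI and subadditivity).} Because $Y_j$ is produced from $x_j$ alone by an independent copy of $BSC_{(1-\rho)/2}^{\otimes d}$, we have the Markov chain $M-x_j-Y_j$. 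Using that the KL contraction coefficient of $BSC_p$ equals $(1-2p)^2$ and tensorizes over the $d$ coordinates, the SDPI gives $I(Y_j;M)\le\big(1-2\cdot\tfrac{1-\rho}{2}\big)^2 I(x_j;M)=\rho^2\,I(x_j;M)$. The chain rule together with independence of the $x_j$ gives $\sum_j I(x_j;M)\le I(X';M)$, hence $I(J^*;Z\mid M)\le\tfrac{\rho^2}{k}I(X';M)$. Chaining the three steps and solving, $I(X';M)\ge\tfrac{k}{\rho^2}\big((1-\epsilon')\log_2 k-h(\epsilon')\big)$; substituting $\rho^2=c^2\ln k/d$ turns the leading term into $\tfrac{kd}{c^2\ln2}$, and since $\epsilon'\le\epsilon_k+o(1)$ the lower-order terms are absorbed by crude estimates (e.g.\ $h(\epsilon')\ge\epsilon'$, $h$ increasing) once $\epsilon_k$ is small enough, giving $I(X';M)\ge\tfrac{1-2h(\epsilon_k)}{c^2\ln2}\,kd$.

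\emph{Main obstacle.} Step 1 and the closing algebra are routine; the crux is Step 3 (and its setup in Step 2). A plain data-processing inequality would only give $I(J^*;Z\mid M)\le\tfrac1k\sum_j I(x_j;M)$, i.e.\ $I(X';M)\gtrsim k\log_2 k$, which need not be $\Omega(kd)$; extracting the sharp factor $\rho^2$ — and thereby the constant $1/c^2$ — requires the \emph{strong} data processing inequality with the exact BSC contraction coefficient $(1-2p)^2$, together with its tensorization over the $d$ input coordinates. Care is needed to verify that the relevant Markov chain really is $M-x_j-Y_j$ (it is, because $Y_j$ applies fresh noise to $x_j$ only, independently of $X_{-j}$ and of the protocol's randomness) and that $\sum_j I(x_j;M)\le I(X';M)$ (chain rule plus independence of the $x_j$). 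A secondary subtlety is in Step 2: one must route the ``one-of-$k$'' information $I(J^*;Z\mid M)$ through divergences to the \emph{uniform} reference distribution, so that each term becomes an $I(Y_j;M)$ to which the SDPI applies.
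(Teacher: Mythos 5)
Your proof is correct and follows essentially the same Fano--SDPI--chain-rule route as the paper's Lemma~\ref{lemma:HC_communication_bound}: the only surface difference is that you reach the intermediate quantity $\tfrac1k\sum_j I(Y_j;M)$ by conditioning on $M=m$ and replacing the mixture $\bar q^m$ with the uniform reference, whereas the paper arrives at the equivalent quantity $\E_j[I(M;Z\mid J=j)]$ via the variational (``radius'') characterization of $I(J;M,Z)$ with the product reference $P_M\times P_Z$. Both computations give the same bound, and your SDPI application and the subadditivity step $\sum_j I(x_j;M)\le I(X';M)$ are exactly the paper's.
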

\else
    \begin{lemma}[informal; see Lemma~\ref{cor:HC_singletons_bound}]\label{lemma:informal_HC_sing}
        Set $\rho = \sqrt{\frac{2\ln ak - \ln \ln ak}{d}}$ for $a>1$, a constant.
        For all $k$ sufficiently large, $d\ge k^{0.1}$, and $\epsilon_k$ sufficiently small, the information complexity of (one-way) $\epsilon_k$-suboptimal protocols for Singletons$(k,q_{HC})$ (Example~\ref{ex:nearest_neighbor_revisted}) is $I(X'; M)\ge \Omega(kd)$.
    \end{lemma}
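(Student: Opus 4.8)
Write Alice's input as $X'=(X_1,\dots,X_k)$; each $X_j$ is (marginally) uniform on $\bit{d}$ and the $X_j$ are independent (every coordinate of $X_j$ is either a fixed-then-uniform bit or a fresh fair coin), so $H(X')=kd$. Bob's transcript is $M=M(X')$. I will show that \emph{any} one-way protocol whose success probability exceeds an absolute constant has $I(X';M)=\Omega(kd)$. Combined with the easy observation that the optimal protocol — Alice sends $X'$, Bob runs nearest neighbor, which is Bayes-optimal here — already succeeds with probability at least $\tfrac12-o(1)$ (with probability at least a half $d_H(Z,X_{j^*})\le\tfrac{1-\rho}{2}d$, and conditioned on this the expected number of other $X_\ell$ at least as close to $Z$ is $O(1/a)$), this forces every $\epsilon_k$-suboptimal protocol (for $\epsilon_k$ below a suitable constant) to be such a protocol, which gives the lemma. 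Three ingredients drive the main implication: a relative-entropy (Donsker--Varadhan) tail bound, the strong data processing inequality (SDPI) for the binary symmetric channel, and Kraft's inequality.

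\textbf{A per-column bound.} Fix a transcript value $M=m$. For $j\in[k]$ let $\phi_{m,j}(z)\in[0,1]$ be the probability Bob outputs $j$ on transcript $m$ and query $z$, so $\sum_j\phi_{m,j}(z)=1$ for every $z$; put $\alpha_{m,j}=\E_{z\sim\mathrm{Unif}(\bit{d})}[\phi_{m,j}(z)]$, so that $\sum_j\alpha_{m,j}=1$. When $J^*=j$ Bob receives $Z=\mathrm{BSC}_{(1-\rho)/2}(X_j)$ with $X_j$ drawn from the posterior $P_{X_j\mid m}$, so his conditional success probability is $\E_{Z\sim P_{Z_j\mid m}}[\phi_{m,j}(Z)]$, where $P_{Z_j\mid m}$ is the $\mathrm{BSC}_{(1-\rho)/2}$-smoothing of $P_{X_j\mid m}$. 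First, applying the Donsker--Varadhan bound with reference measure $\mathrm{Unif}(\bit{d})$ to the $[0,1]$-valued function $\phi_{m,j}$,
\[
\E_{Z\sim P_{Z_j\mid m}}\!\big[\phi_{m,j}(Z)\big]\ \le\ \min\!\left(1,\ \frac{D\big(P_{Z_j\mid m}\,\big\|\,\mathrm{Unif}(\bit{d})\big)+O(1)}{\log_2(1/\alpha_{m,j})}\right).
\]
Second, since $\mathrm{Unif}(\bit{d})$ is a fixed point of the product $\mathrm{BSC}_{(1-\rho)/2}$ channel, the SDPI for the binary symmetric channel — whose KL-contraction coefficient is exactly $\rho^2$, and which tensorizes against a uniform reference (a one-line chain-rule argument reduces the $d$-dimensional statement to the scalar one) — gives $D\big(P_{Z_j\mid m}\,\big\|\,\mathrm{Unif}\big)\le\rho^2\,D\big(P_{X_j\mid m}\,\big\|\,\mathrm{Unif}\big)=\rho^2\big(d-H(X_j\mid M=m)\big)$.

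\textbf{Summing up.} For each $m$, at most $\sqrt{k}$ indices $j$ have $\alpha_{m,j}\ge 1/\sqrt{k}$ (as $\sum_j\alpha_{m,j}=1$), and these contribute at most $1/\sqrt{k}$ to the average success; for the remaining indices $\log_2(1/\alpha_{m,j})\ge\tfrac12\log_2 k$. Averaging over $m$, using $\E_m[d-H(X_j\mid M=m)]=I(X_j;M)$ and $\sum_j I(X_j;M)\le I(X';M)$ (the $X_j$ are independent), the protocol's overall success probability $1-\delta$ obeys
\[
1-\delta\ \le\ \frac{1}{\sqrt{k}}\ +\ \frac{O(1)}{\log_2 k}\ +\ \frac{2\rho^2}{k\,\log_2 k}\,I(X';M).
\]
By the choice $\rho^2=\big(2\ln ak-\ln\ln ak\big)/d=\Theta\big((\log k)/d\big)$ we have $\tfrac{2\rho^2}{k\log_2 k}=\Theta\big(\tfrac1{kd}\big)$, so the first two terms are $o(1)$; since $1-\delta=\Omega(1)$ for $\epsilon_k$-suboptimal protocols, rearranging yields $I(X';M)=\Omega(kd)$.

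\textbf{Main obstacle.} The crux is that the plain data processing inequality / Fano would only give $I(J^*;Z\mid M)=\Omega(\log k)$, hence $I(X';M)=\Omega(k\log k)$, which is $o(kd)$ when $d\ge k^{0.1}$. What upgrades the bound to scale with $d$ is the \emph{strong} data processing inequality: the $\rho^2$-contraction is tuned, via the calibrated relation $\rho^2 d=\Theta(\log k)$, to exactly absorb the $1/\log_2 k$ loss in the Donsker--Varadhan step, so that their product is $\Theta(1/d)$ and the average column deficiency $d-H(X_j\mid M)$ is forced to be $\Omega(d)$. Two points demand care: extracting an explicit constant (rather than $\Omega(\cdot)$, as in the sharper form of the lemma) requires optimizing the Donsker--Varadhan exponent and treating the large-$\alpha_{m,j}$ columns tightly; and the regime hypotheses ($d\ge k^{0.1}$, $a>1$, $\epsilon_k$ small) are what keep $\rho$ small and the binomial-tail benchmark estimate clean, so that the optimal error is bounded away from $1$.
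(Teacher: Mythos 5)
Your proof is correct, and it takes a genuinely different route from the paper's. The paper's proof of Lemma~\ref{lemma:HC_communication_bound} (which gives Corollary~\ref{cor:HC_singletons_bound}) is a clean three-step argument: (i) Fano's inequality, using the constant-error estimate of Proposition~\ref{prop:sing_constant_error}, to get a lower bound $I(J;M,Z)\ge(1-c_a-\eps_k-o(1))\log k$; (ii) the ``radius'' (golden-formula) characterization of mutual information plus the marginal independence of $J$ from each of $M$ and $Z$ to write $I(J;M,Z)\le\E_j[I(M;Z\mid J=j)]$; and (iii) the BSC SDPI in its mutual-information form together with $\sum_i I(M;X_i)\le I(M;X')$ to get $I(J;M,Z)\le\frac{\rho^2}{k}I(M;X')$. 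Rearranging yields the explicit leading constant $\frac{1}{2\ln 2}$. You instead fix the transcript $m$, apply a Donsker--Varadhan change of measure to Bob's per-column acceptance functions $\phi_{m,j}$, contract with the KL form of the BSC SDPI against the uniform reference, and then close the argument with a budget count on the acceptance masses $\alpha_{m,j}$ (at most $\sqrt{k}$ of them exceed $1/\sqrt{k}$). Both proofs ultimately rest on the $\rho^2$ contraction for the BSC, and both rely on some version of the Bayes-optimal success bound to place the protocol's success at $\Omega(1)$; but the paper works globally at the level of $I(J;M,Z)$ and pays no per-transcript overhead, while your argument works transcript-by-transcript and is somewhat more elementary (no golden-formula step) at the cost of a factor of $2$ from the $\tfrac12\log_2 k$ truncation and an extra $1/\sqrt{k}+O(1/\log k)$ slack. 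You correctly pinpoint the key obstacle — vanilla data processing on $I(J^*;Z\mid M)$ only yields $\Omega(k\log k)$, which is $o(kd)$ when $d\ge k^{0.1}$, and the SDPI together with the calibrated choice $\rho^2 d=\Theta(\log k)$ is exactly what upgrades the bound to scale with $d$. Two small caveats worth flagging but not fatal: the claim $D(P_{Z_j\mid m}\|\mathrm{Unif})\le\rho^2 D(P_{X_j\mid m}\|\mathrm{Unif})$ is the KL-contraction form of the SDPI for the product BSC channel with a product-uniform reference; it does hold and does tensorize (by the chain rule for KL and per-coordinate scalar SDPI, using data processing to control the cross terms), but this is a slightly different statement from the mutual-information SDPI invoked in the paper's Lemma~\ref{lemma:SDPI}, and would need that one-line justification spelled out. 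Also your inline ``success $\ge\tfrac12-o(1)$'' estimate for nearest neighbor is a looser stand-in for the paper's Proposition~\ref{prop:sing_constant_error}; it suffices for the informal $\Omega(kd)$ claim but wouldn't recover the sharper constant stated in Lemma~\ref{lemma:HC_communication_bound}.
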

\fi

We prove this using the strong data processing inequality for binary symmetric channels, directly analogous to its recent use in bounding the one-way information complexity of the Gap-Hamming problem \citep{hadar2019communication}. 

The proof of this result is subtle, and does not proceed by separately bounding the information complexity of solving each of the $k$ subproblems implicit in
\ifSTOC
    the singletons-only task.
\else
    \Singletons$(k,q_{HC})$. 
\fi
The parameter $\rho$ is large enough that one can reliably detect proximity to any \textit{one} of Alice's inputs with a message of size $\frac{d}{\log k} = o(d)$. 
Our proof crucially uses the fact that Bob must select from among $k$ possibilities. It shows that his optimal strategy is to detect proximity to each of Alice's inputs with failure probability $\approx 1/k$,  controlling his total failure probability by a union bound.

\ifSTOC
\else
The proof of Lemma~\ref{lemma:informal_HC_sing} (later, Lemma~\ref{cor:HC_singletons_bound}) cannot yield a lower bound stronger than $\frac{1}{2\ln 2}\cdot kd$.
We conjecture in Section~\ref{sec:GH_conjecture} that the constant factor of $\frac{1}{2\ln 2}$ can be improved to 1, matching exactly the upper bound from the naive algorithm.
In a related ``one-shot'' case of the Gap-Hamming problem, where the data points are independent and uniform, we are able to prove $I(X;M)\ge (1-o(1))H(X)$.
This result, Theorem~\ref{thm:GHP_oneshot_main} in Appendix~\ref{sec:one_way_GHP}, suggests that no more sophisticated algorithm exists.
The information complexity of Gap-Hamming is known to be $\Omega(n)$ for both one- and two-way communication, but we are not aware of a result showing a leading constant of 1.
\fi

\paragraph{Next-bit Prediction and One-Shot Learning}
Inspired by the empirical results of \cite{carlini2018secret,carlini2020extracting}, we demonstrate a sequence prediction task which requires memorization.
Each subpopulation $j$ is associated with a fixed ``reference string,'' and samples from the subpopulation are noisy prefixes of this string.

\begin{example}[Next-Symbol Prediction, Figure~\ref{fig:generating_data_NSP}]\label{ex:nsp}
    In the next-symbol prediction task the component distribution $q_{NSP}$ draws a reference string $c_j\in\{0,1\}^d$ uniformly at random.
    Samples from $j$ are generated by randomly picking a length $\ell\in\{0,\ldots, d-1\}$, then generating $z\sim BSC_{\SCnoise/2}(c_j(1:\ell))$ for some noise parameter $\SCnoise\in[0,1)$.
    We pair $z$ with a subpopulation identifier, so the example is $(j,z)$.
    The label is a noisy version of the next bit: $y\sim BSC_{\SCnoise/2}(c_j(\ell+1))$.
\terminalbox
\end{example}

Unlike cluster problems, where the label \textit{is} the subpopulation, each subpopulation can be treated independently by the learning algorithm.
The core of our lower bound for this task, then, is to prove a ``one-shot'' lower bound on the setting where both Alice and Bob each receive a single example from the same subpopulation.
\ifSTOC
    \begin{lemma}[Informal; see Lemma \ref{lemma:SC_one_shot}]
\else
    \begin{lemma}[informal; see Lemma \ref{lemma:SC_one_shot_second}]
\fi
    For sufficiently small $\eps$, any algorithm that is $\eps$-suboptimal on 
    \ifSTOC
        the (noiseless) one-shot next-symbol prediction task
    \else
        (noiseless) Singletons$(1,q_{NSP})$ 
    \fi
    satisfies
    \begin{align}
        I(X; M)\ge \frac{d+1}{2} \left(1 - h\left(2 \eps\right)\right).
    \end{align} 
\end{lemma}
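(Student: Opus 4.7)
The plan is to apply the chain rule of entropy together with Fano's inequality bit-by-bit on Alice's prefix. In the noiseless one-shot setting, Alice's labeled example is a prefix $X=c(1:L)$ of the reference string $c\sim\mathrm{Unif}(\{0,1\}^d)$, with $L=\ell+1$ uniform on $\{1,\dots,d\}$; Alice sees $L$ as the length of her string. Bob receives only the bare feature vector $Z=c(1:L'-1)$ for an independent $L'\sim\mathrm{Unif}(\{1,\dots,d\})$ and must predict $Y=c(L')$ via $\hat Y=B(M,Z)$.

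First I would compute the Bayes-optimal error: when $L'>L$, the target $c(L')$ is uniform and independent of $(X,M)$, so any predictor incurs error $1/2$; when $L'\leq L$, the target is the $L'$-th coordinate of $X$, reachable with error $0$ by any algorithm that lets Bob reconstruct $X$. Since $\Pr[L'>L]=\frac{d-1}{2d}$, this gives Bayes-optimal error $\frac{d-1}{4d}$. Defining the per-slot errors
\[
p_{i,l} \;:=\; \Pr[B(M, X(1{:}i{-}1)) \neq X_i \mid L = l], \qquad 1\leq i\leq l\leq d,
\]
$\eps$-suboptimality translates into the single constraint $\sum_{i\leq l} p_{i,l} \leq d^2\eps$, a sum over $\binom{d+1}{2}=\frac{d(d+1)}{2}$ admissible pairs.

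Next I would bound $I(X;M)$ from below via $I(X;M)\geq I(X;M\mid L)$ (which holds because $L$ is a deterministic function of $X$). Conditioned on $L=l$, $X$ is uniform on $\{0,1\}^l$, so the chain rule gives $H(X\mid M,L=l)=\sum_{i=1}^l H(X_i\mid X(1{:}i{-}1),M,L=l)$; applying Fano's inequality to the binary predictor $B(M,X(1{:}i{-}1))$ of $X_i$ yields $H(X_i\mid X(1{:}i{-}1),M,L=l)\leq h(p_{i,l})$. Hence
\[
I(X;M\mid L) \;\geq\; \frac{1}{d}\sum_{i\leq l}\bigl(1-h(p_{i,l})\bigr) \;=\; \frac{d+1}{2} - \frac{1}{d}\sum_{i\leq l} h(p_{i,l}).
\]
A single application of Jensen (concavity of $h$) across the $\frac{d(d+1)}{2}$ summands, together with the $\eps$-suboptimality constraint, gives $\sum h(p_{i,l})\leq \frac{d(d+1)}{2}\, h(\bar p)$ with $\bar p\leq \frac{2d\eps}{d+1}\leq 2\eps$; for $\eps\leq 1/4$, monotonicity of $h$ on $[0,1/2]$ delivers the claimed lower bound $I(X;M)\geq \frac{d+1}{2}(1-h(2\eps))$.

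The main subtlety is that the message $M$ depends on $L$, so per-slot errors $p_{i,l}$ genuinely form a two-parameter family; one must perform the Fano step inside each conditional distribution and only average at the very end, so that Jensen is invoked once across all $\frac{d(d+1)}{2}$ pairs. The numerical coincidence that the number of pairs equals $d \cdot \frac{d+1}{2}$ is exactly what delivers a leading constant of $1$ (rather than, say, $1/2$) in front of $(1-h(2\eps))$; accordingly, I expect the bulk of the write-up to go into careful bookkeeping of these sums rather than into any sharper inequality.
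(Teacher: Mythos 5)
Your proposal is correct and follows essentially the same route as the paper's proof of this lemma: tie correctness on the event $L'\le L$ to predicting Alice's bits, apply the chain rule over Alice's prefix with binary Fano per bit, and use a single Jensen step to convert the $\eps$-suboptimality constraint on the average conditional error (your $\bar p\le 2\eps$, the paper's $\gamma\le 2\eps_1/(1-\SCnoise)^2$) into the $h(2\eps)$ loss. The only differences are cosmetic: you specialize to the noiseless case and bookkeep over pairs $(i,l)$ with $I(X;M)\ge I(X;M\mid L)$, whereas the paper handles general noise via the good event $G$ and a weighted Jensen over Alice's length.
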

Note that $\frac{d+1}{2}$ is the average length of Alice's input and that the $\log d$ term arises from uncertainty about that length, which Alice need not convey.
The proof proceeds by establishing that Bob's correctness is tied to his ability to output Alice's relevant bit.
For any fixed length of Alice's input, the problem is similar to a communication complexity problem called Augmented Indexing.
We adapt the approach of a well-known elementary proof \citep{bar2004information,feldman2014sample}.

\ifSTOC
    \subsection{Related Concepts}
\else
    \subsection{Related Concepts: Representation Complexity, 
               Time-Space Tradeoffs, and Information Bottlenecks} 
\fi

Our results are closely related to a number of other lines of work in machine learning. 
First, as discussed in the introduction, one can view our results as a significant strengthening of recent results on label memorization \citep{feldman2020does} and information-theoretic lower bounds for learning thresholds \citep{bassily2018learners,nachum2018direct,livni2020limitation}.




\newcommand{\RC}{\ensuremath{\text{\sf PRep}_\eps (C)}}
\paragraph{Representation Complexity}
Another closely related concept is \textit{probabilistic representation complexity}~\citep{beimel2019characterizing, feldman2014sample}. For given error parameter $\epsilon$, the representation complexity $\RC$  of a class $C$ of concepts (functions from $\mc{Z}$ to $\mc Y$) is roughly the smallest number of bits needed to represent a hypothesis that  approximates (up to expected error $\eps$) a concept $c\in C$ on an example distribution $P_z\in \Delta(\mc{Z})$, in the worst case over pairs $(c,P_z)$.\footnote{See \citep{feldman2014sample} for an exact definition.}  This complexity measure characterizes the sample complexity of ``pure'' differentially private learners for $C$~\citep{beimel2019characterizing}.

Interpreted in our setting, representation complexity aims to understand the \textit{length} of the message $M$, when the task $q$ is a distribution over pairs $(c,P_z)$ (that is, where the data distribution $P$ consists of examples drawn from $P_z$ and labeled with $c$). By a minimax argument, one can show that $\RC $ lower bounds not only $M$'s length, but also the information it contains about $P$: one can find $q$ such that $I(P;M)$ is at least $\RC$. This does imply that $I(X;M)$ must be large, but it says nothing about the information in $M$ that is specific to a particular sample $X$: in fact, the bound is saturated by learners that get enough data to construct a hypothesis that is just a function of $P$, so that $I(X;M\mid P)$ is  small. 

The bounds we prove here are qualitatively stronger. We give settings where the analogue of representation complexity is small (namely, a learner that knows $P$ can construct a model of size about $n \log (n/\eps)\log d$), but where a learner which only gets a training sample must write down a very large model ($\Omega(nd)$ bits) to do well.

\paragraph{Time-Space Tradeoffs for Learning} 
A recent line of work establishes time-space tradeoffs for learning in the streaming setting: problems where any learning algorithm requires either a large memory or a large number of samples
(see \citep{garg2018extractor} for a summary of results).
The prime example is parity learning over $d$ bits, which is shown to require either $\Omega(d^2)$ bits of memory or exponentially many samples.
The straightforward algorithm for parity learning requires $O(d)$ samples, so this result shows that any feasible algorithm must store, up to constant factors, as many bits as are required to store the dataset~\citep{raz2018fast}.

Our work sets a specific number of samples under which learning is feasible and, for that number of samples, establishes an information lower bound on the \emph{output} of the algorithm.
This implies not only a communication lower bound but also one on memory usage: the algorithm must store the model immediately prior to releasing it.
Some of our tasks exhibit the property that, with additional data, an algorithm can output a substantially smaller model.
These learning tasks might exhibit a time-space tradeoff, although not one as dramatic as the requirement of exponentially many samples.
Intuitively, the underlying concept in parity learning must be learned ``all at once.''
Our problem instances can be learned ``piece-by-piece,'' as the algorithm learns sections independently of the rest of the sample.

\paragraph{Information Bottlenecks}
%
Our work fits into the broad category of information bottleneck results in information theory~\citep{tishby2000information}. An information bottleneck is a compression scheme for extracting from a random variable $V$ all the information relevant for the estimation of another random variable $W$ while discarding all irrelevant information in $V$.
In our setting, one may take $V=X$ to be the data set, and $W$ to be the true distribution $P$ (where the loss of a model is its misclassification error). 
This general form of information bottleneck was recently described in independent work \citep{alemi2020variational}.
Our results lower bound the extent to which nontrivial compression is possible, showing that the Markov chain $P-X-M$ must in particular satisfy $I(X;M)\gg I(M;P)$. 

Information bottlenecks have been put forward as a theory of how implicit feature representations evolve during training~\citep{tishby2015deep}. 
That line of work studies how the prediction process transforms  information from a test datum during prediction (i.e.\ as one moves through layers of a neural network), and is thus distinct from our study of how {learning algorithms} are able to extract information from {training data sets}.

\ifSTOC
\subsection{Organization of This Extended Abstract}
    In Section~\ref{sec:central_reduction}, we state and discuss the main reduction in the paper, connecting the learning task to the associated communication game.
    In Sections~\ref{sec:STOC_NSP} and~\ref{sec:STOC_GHP} we state and prove the information complexity lower bounds for our two types of learning problems.
    Appendix~\ref{sec:phi_definitions} contains additional details related to the reduction lemma.
    
    The full version of the paper \cite{brown2020memorization} describes the general setup (beyond uniform mixtures) and presents detailed theorems and the additional necessary analysis of the learning tasks.
    It presents additional results, including lower bounds for threshold learning and implications for differentially-private algorithms.
\else
    \subsection{Organization of This Paper}
    In Section~\ref{sec:long_tail} we specify our general framework for learning tasks, detailing how mixture coefficients and subpopulations are sampled.
    We also introduce and prove our ``central reduction,'' showing how an algorithm for a learning task provides an algorithm for solving the singletons-only task.
    In Sections~\ref{sec:SC} and~\ref{sec:GHP} we formally define Next-Symbol Prediction and Hypercube Cluster Labeling
    and provide lower bounds on $I(X;M\mid P)$ for both tasks.
    Section~\ref{sec:experiments} presents simple experimental results from attacking neural networks trained on synthetic data generated from the hypercube cluster labeling task.
    
    In Appendix~\ref{sec:technical_details} we group miscellaneous technical results needed elsewhere in the paper.
    In Appendix~\ref{sec:additional_models}, we prove results for related learning tasks, including a lower bound for threshold learning.
    Appendix~\ref{sec:DP_calc} presents a simple connection to differential privacy.
    In Appendix~\ref{sec:minimax}, we argue via minimax that our average-case lower bounds imply similar worst-care guarantees.
    Finally, in Appendix~\ref{sec:one_way_GHP}, we present our $(1-o(1))\cdot d$ proof for the one-way Gap Hamming problem, which we believe provides evidence for Conjecture~\ref{conj:HC_singletons_bound}.
\fi

\section{Subpopulations, Singletons, and Long Tails}
\label{sec:long_tail}

Before analyzing the specific learning tasks, we present the key points of the framework upon which our tasks our built.
We also outline the high-level structure of our task-specific bounds.

Recall that we define a problem instance $P$, which is a random variable drawn from a metadistribution $q$, to be a distribution over labeled data.
In our work $P$ will be a mixture over $\npops$ \emph{subpopulations}, each of which may have its own distribution, label, or classification rules.
We decompose $P=(D,C)$, where $D$ is a list of $\npops$ mixture coefficients and $C$ is a list of $\npops$ distributions over labeled examples, one for each subpopulation.
To sample a data point from a problem instance $p$, we first sample a subpopulation $j\sim D$ and then sample the labeled point $(z,y)\sim C_j$.

The metadistribution $q$ is specified by two generative processes, one for generating $D$ and the other for generating $C$.
(Formally, we will take $C$ to be parameters, not distributions, but ignore the distinction for now.)
The first process, described below in Section \ref{sec:generating_mixtures}, depends only on $\npops$ and a list of frequencies $\pi$, which we refer to as a ``prior.''
The details of the second process will be task-specific, but for each task there will be a ``component distribution'' $q_c\in\Delta(\mc{X})$ from which the entries in $C$ are sampled i.i.d.

The learning task is thus completely determined by the sample size $n$, the number of subpopulations $N$, the (task-specific) component distribution $q_c$, and the prior $\pi$.
We give this standard setting a name.
\begin{definition}\label{def:learn}
    We call our standard learning task \learnlong{q_c}.
    Problem instance $P=(D,C)$ is generated from $q$.
    A data set of $n$ i.i.d.\ samples are drawn from $p$ and given to the learning algorithm.
    One test sample $(z,y)$ is drawn independently from $p$, and the model predicts a label.
    \terminalbox
\end{definition}
When the other terms are clear from context, we will shorten this to \learn{q_c}, since only the component distribution will change from task to task.

Our results rely on the analyzing how algorithms perform on subpopulations for which they receive exactly one data point.
We call these points \emph{singletons}.
To capture this behavior, we define a second type of task.
This is also a learning task, but, unlike in \learnlong{q_c}, the samples are no longer i.i.d. from a mixture.
\begin{definition}
    We denote by \sing{k}{q_c} the singletons-only task on $k$ subpopulations. 
    In this task $k$ subpopulation parameters $C_j$ are sampled i.i.d. from $q_c$, and from each $C_j$ is sampled exactly one labeled data point.
    These $k$ samples form the data set given to the learner.
    There is an index $j^*\in[k]$ sampled uniformly at random; the test sample is drawn from $C_{j^*}$.
    \terminalbox
\end{definition}
We return to \Singletons($k,q_c$), and its relationship to \learnlong{q_c}, in Section \ref{sec:central_reduction}.

\subsection{Generating Mixtures over Subpopulations}\label{sec:generating_mixtures}

We generate a mixture over $\npops$ subpopulations using the process introduced in \cite{feldman2020does}.
Although our central results will hold in the setting where the mixture is uniform (and thus chosen without randomness), 
this process sets up a \emph{qualitatively different type} of ``memorization of useless information,'' an example of which is crystallized in Example \ref{ex:bimodal} and occurs naturally in long-tailed distributions.
We encourage the reader to keep the uniform case in mind for simplicity but remember that the results apply to broad settings exhibiting varied behavior.

Starting with a list $\pi$ of nonnegative values, for each subpopulation $j\in [N]$, we sample $\delta_j \sim \mathrm{Uniform}(\pi)$.
To create distribution $D$, we normalize:
\begin{align*}
    D(j) = \frac{\delta_j}{\sum_{i\in [N]} \delta_i}.
\end{align*}

This quasi-independent sampling facilitates certain computations.
In particular, we will want to quantify the following: given that subpopulation $j$ has exactly one representative in data set $X$, what is the probability the test sample $(z,y)$ comes from the same subpopulation?
The answer can be computed as a function of $n, \npops$, and $\pi$, independently of $q_c$ and, crucially, the rest of the data set.
We call this quantity
\begin{align}
    \tau_1 \defeq \Pr[\text{$(z,y)$ comes from $j$} \mid \text{$\Xone$ contains one sample from $j$}], \label{eq:tau1_def}
\end{align}
defining $\Xone$ to be the data set restricted to those singletons.
By linearity of expectation we have
    $\Pr[\text{$(z,y)$ comes from a singleton subpopulation}] = \tau_1 \times |\Xone|$.

We will also need to refer to the expected size of $|\Xone|$.
Like $\tau_1$, this is a function only of $n, \npops$, and $\pi$.
We have, defining the quantity as a fraction of the data set,
\begin{align}
    \mu_1 \defeq  \frac{\E[|\Xone|]}{n}. \label{eq:mu1_def}
\end{align}
Our memorization results are most striking when $\tau_1=\Omega(1/n)$ and $\mu_1=\Omega(1)$.
We provide two simple examples of when this is so.

\begin{example}[Uniform]\label{ex:uniform}
    If the list of frequencies is a single entry $\pi = (1/\npops)$, then the mixture over subpopulations will be uniform.
    Set $n=\npops$, so the number of examples is equal to the number of bins.
    For every subpopulation the probability a test example comes from it is exactly $\frac{1}{n}$, so $\tau_1=\frac{1}{n}$ as well.
    The expected fraction of singletons is also constant: we have by linearity of expectation that
    \begin{align*}
        \mu_1 &= n \cdot\frac{ \Pr[\text{subpopulation 1 receives 1 sample}]}{n} 
            = \left(1 - \frac{1}{n}\right)^{n-1} \approx \frac{1}{e}.
    \end{align*}
    \terminalbox
\end{example}

Our central results apply cleanly to the uniform setting.
In this setting, however, every example is ``important,'' i.e.\ memorizing it will provide a significant gain in accuracy.
In general this is not the case.

\begin{example}[Bimodal]\label{ex:bimodal}
    We sketch the main points and include a full description in Appendix \ref{sec:long_tail_details}.
    Suppose there are $N=2^n$ subpopulations, and the prior is such that
    \begin{align}
        \delta_j \sim \mathrm{Uniform}(\pi) = \begin{cases} \frac{1}{2n} & \text{w.p. $n2^{-n}$} \\
                        \frac{1}{2\cdot 2^{n}} & \text{w.p. $1 - n2^{-n}$}. \end{cases}.
    \end{align}
    The exact probabilities will depend on the normalization constant $C=\sum_j \delta_j$ that, as a sum of independent random variables, will exhibit tight concentration about its mean.
    Since $\E[C] \approx 1$, the mixture coefficients won't change too much after normalization.
    
    Call the bins with mass around $\frac{1}{2n}$ ``heavy'' and the others ``light.''
    Observe that about half the probability mass will lie in each group.
    Almost all the balls that go into light bins will be singletons; a constant fraction of the balls that go into heavy bins will be singletons.
    Thus $\mu_1=\Omega(1)$ and, given that a subpopulation has a single representative, with constant probability it will be a heavy bin with mass $\Omega(1/n)$, so we have $\tau_1=\Omega(1/n)$.
    But the light subpopulations which received points are unlikely to receive the test sample, and could be ignored with only an exponentially small increase in expected error, if only they were identified as light.
    \terminalbox
\end{example}

The bimodal example is stylized to show an extreme version of the phenomenon, but the concept of ``useless'' singletons arises in natural distributions.
Informally, these ``long-tailed distributions'' have a significant portion of the distribution represented by many rare instances.
A central motivation for \cite{feldman2020does}, these distributions arise in practice and suggest that success on large-scale learning tasks may depend heavily on how the algorithm deals with these atypical instances.
Like with the bimodal distribution, the learning algorithm will be unable to distinguish between examples that are very rare (and can be ignored) and those that represent an $\Omega(1/n)$ probability mass, which must be dealt with to perform near optimally on the whole task.

\subsection{Central Reduction: Singletons Task to General Learning}\label{sec:central_reduction}

We previously defined \learn{q_c} and \Singletons($k,q_c$), two distinct tasks.
The former is the focus of our interest but the latter proves more amenable to analysis.
Informally, an algorithm solving \learn{q_c} to near-optimal error will have to perform well on $X_S$.
Let us quantify ``near optimal error,'' which applies to both \learnlong{q_c} and \sing{k}{q_c}.
\begin{definition}[$\eps$-suboptimality]
    An algorithm $\alg$ is \textit{$\eps$-suboptimal on task $T$}, where $T$ is associated with metadistribution $q$, if
    \begin{align*}
        \err{q}{n}{A} \le \inf_{\alg'} \err{q}{n}{A'} + \eps.
    \end{align*}
    In our shorthand of abbreviating the event ``the model $M$ output by $\alg$ makes an error'' as ``$\alg$ errs,'' this is 
    denoted
    $    
        \Pr[\alg\text{ errs on $T$}] \le \inf_{\alg'} \Pr[\alg'\text{ errs on $T$}] + \eps.
    $
    \terminalbox
\end{definition}

Let random variable $K=|\Xone|$ be the number of singletons in the data set.
If we have an algorithm performing well on $\Xone$ in \learn{q_c} when $K=k$, we can modify it to create an algorithm performing well on \sing{k}{q_c}.
This allows us to apply lower bounds proved for the singletons problem.

\begin{lemma}[Central Reduction, Task-Agnostic]\label{lemma:central_reduction}
    Suppose we have the following lower bound for every $k$: any algorithm
    $\alg^k(X')$ 
    that is $\eps_k$-suboptimal for \sing{\nsing}{q_c} satisfies 
    \begin{align*}
        I(X';\alg^k(X')) \ge f_k(\epsilon_k).
    \end{align*}
    
    Then for any algorithm $\alg(X)$ that is $\eps$-suboptimal on \learn{q_c} there exists a sequence $\{\epsilon_k\}_{k=1}^n$ such that $\E_k[k\eps_k] \le \frac{\eps + \phi_1(q_c) + \phi_2(q_c)}{\tau_1}$ and $I(X_S;M\mid K) \ge \E_k[f_k(\epsilon_k)]$.
    
    Furthermore, if $f_k(\epsilon_k) \ge k \cdot g(\epsilon_k)$ for some convex and nonincreasing $g(\cdot)$, then
    \begin{align*}
        I(\Xone;M\mid K) &\ge \mu_1 n \cdot g\left(\frac{1}{\mu_1n}\cdot \frac{\epsilon + \phi_1(q_c) + \phi_2(q_c)}{\tau_1}\right).
    \end{align*}
    Here $\phi_1(q_c) $ and $ \phi_2(q_c)$ are task-specific terms.
    Letting $E_1$ be the event that the test sample comes from a subpopulation with exactly one representative, they are defined as
    \begin{align*}
        \phi_1(q_c) &\defeq \Pr[\bar{E_1}]\left(\Pr[\text{$\Aopt$ errs on \learn{q_c}}\mid \bar{E_1}] 
             - \Pr[\text{$\alg$ errs on \learn{q_c}}\mid \bar{E_1}]\right), \\
        \phi_2(q_c) &\defeq \sum_{k=1}^{\npops} \Pr[K=k\mid E_1]\Bigl( \Pr[\Aopt \text{ errs on \learn{q_c}}\mid E_1, K=k] \\
        &\hspace{6cm} - \inf_{\alg'} \Pr[\alg' \text{ errs on \sing{k}{q_c}}]\Bigr).
    \end{align*}
\end{lemma}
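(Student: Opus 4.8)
The plan is to isolate the contribution of the singleton subpopulations to the learner's error and then transfer it, via a simulation argument, to the singletons-only task. First I would decompose the error of $\alg$ on $\learn{q_c}$ by conditioning on the event $E_1$ that the test point $(Z,Y)$ is drawn from a subpopulation with exactly one representative in $X$. Writing $\Pr[\alg \text{ errs}] = \Pr[E_1]\Pr[\alg\text{ errs}\mid E_1] + \Pr[\bar E_1]\Pr[\alg\text{ errs}\mid \bar E_1]$ and doing the same for $\Aopt$, the $\eps$-suboptimality of $\alg$ becomes an inequality that, after moving the $\bar E_1$ terms to the right-hand side (this is exactly where $\phi_1(q_c)$ enters as the slack on the non-singleton part), bounds $\Pr[E_1](\Pr[\alg\text{ errs}\mid E_1] - \Pr[\Aopt\text{ errs}\mid E_1])$ by $\eps + \phi_1(q_c)$. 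Since $\Pr[E_1] = \tau_1\cdot\E[|\Xone|] = \tau_1\mu_1 n$ by the linearity-of-expectation identity established in Section~\ref{sec:generating_mixtures}, and since on the event $E_1$ the task $\alg$ faces is — after further conditioning on $K=k$ — essentially $\sing{k}{q_c}$ (here $\phi_2(q_c)$ absorbs the gap between $\Aopt$'s conditional error and the true optimum of the singletons task), I get that the expected, $K$-weighted suboptimality of $\alg$ restricted to singletons is at most $(\eps + \phi_1(q_c) + \phi_2(q_c))/\tau_1$.

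Next I would make the reduction precise: fix $k$ and define an algorithm $\alg^k$ for $\sing{k}{q_c}$ that, on input $X' = (x_1,\dots,x_k)$, samples a full data set $X$ for $\learn{q_c}$ consistent with having $X'$ as its singleton set with $K=k$ (drawing the remaining subpopulations, their mixture weights, and their non-singleton multiplicities from the appropriate conditional distribution, which is possible because the mixture-generation process of Section~\ref{sec:generating_mixtures} makes the singleton structure independent of the rest of the sample given $K$), runs $\alg$ on $X$, and outputs $M = \alg(X)$. Because $\alg^k$'s message is a (possibly randomized) function of $X'$ alone, and because simulating the embedding only post-processes $X'$, the data-processing inequality gives $I(X'; \alg^k(X')) \le I(\Xone; M \mid K=k)$. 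On the accuracy side, the simulated $X$ has the correct conditional-on-$(E_1, K=k)$ law, so the error of $\alg^k$ on $\sing{k}{q_c}$ equals $\Pr[\alg\text{ errs}\mid E_1, K=k]$, which means $\alg^k$ is $\eps_k$-suboptimal with $\eps_k = \Pr[\alg\text{ errs}\mid E_1,K=k] - \inf_{\alg'}\Pr[\alg'\text{ errs on }\sing{k}{q_c}]$; the bookkeeping above gives precisely $\E_k[k\eps_k] \le (\eps+\phi_1+\phi_2)/\tau_1$ once the $\phi_2$ term is defined to absorb the difference between this infimum and $\Aopt$'s conditional error. Applying the hypothesized lower bound $I(X';\alg^k(X'))\ge f_k(\eps_k)$ and taking expectation over $K$ yields $I(\Xone; M\mid K)\ge \E_k[f_k(\eps_k)]$, the first conclusion.

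For the refined statement, suppose $f_k(\eps_k)\ge k\cdot g(\eps_k)$ with $g$ convex and nonincreasing. Then $I(\Xone;M\mid K)\ge \E_k[k\, g(\eps_k)]$. I would introduce the probability measure on $[n]$ that weights $k$ proportionally to $k\Pr[K=k]$; under this measure the expectation of $\eps_k$ is $\E[K\eps_K]/\E[K] = \E[K\eps_K]/(\mu_1 n)$, which is at most $\frac{1}{\mu_1 n}\cdot\frac{\eps+\phi_1(q_c)+\phi_2(q_c)}{\tau_1}$. Jensen's inequality for the convex $g$, applied under this tilted measure, gives $\E[K\, g(\eps_K)]/\E[K] \ge g\!\left(\E[K\eps_K]/\E[K]\right)$, and since $g$ is nonincreasing I may replace the argument by the upper bound just computed, obtaining $I(\Xone;M\mid K)\ge \mu_1 n\cdot g\!\left(\frac{1}{\mu_1 n}\cdot\frac{\eps+\phi_1(q_c)+\phi_2(q_c)}{\tau_1}\right)$, as claimed.

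The main obstacle I anticipate is making the simulation step fully rigorous: one must verify that the conditional distribution of $(X\setminus\Xone, D)$ given $(\Xone, K=k)$ really is samplable without knowledge of $\Xone$'s contents — i.e. that the quasi-independent mixture construction genuinely decouples which subpopulations are singletons from the realized values of those singleton examples — and that the test-point distribution on the event $E_1\cap\{K=k\}$ coincides exactly with a fresh draw from $C_{j^*}$ for uniform $j^*$ over the $k$ singleton indices. Getting the conditioning right so that the error of the simulated algorithm on $\sing{k}{q_c}$ is \emph{exactly} $\Pr[\alg\text{ errs}\mid E_1,K=k]$ — rather than only approximately — is what forces the precise definitions of $\phi_1$ and $\phi_2$, and is the step where the argument is most delicate.
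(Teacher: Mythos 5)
Your proposal follows essentially the same route as the paper's proof: decompose the error over $E_1$ versus $\bar E_1$ (yielding $\phi_1$), further decompose over $K=k$ with $\phi_2$ absorbing the gap to the singletons-task optimum, build $\alg^k$ by sampling $\tilde X \sim X\mid\Xone=X'$ and running $\alg$ on it, match the conditional errors, and finish with the tilted-measure Jensen argument for the second conclusion. One small correction on the information step: the relation $I(X'; \alg^k(X')) \le I(\Xone; M\mid K=k)$ does not follow from data processing in the direction you invoke --- from the Markov chain $X' \to \tilde X \to \tilde M$ one only gets $I(X';\tilde M) \le I(\tilde X;\tilde M)$, and $\tilde X$ is the full simulated data set, not the singletons. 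What actually makes the step go through (and is what the paper uses) is that the joint law of $(X', \tilde M)$ given $K=k$ \emph{equals} the joint law of $(\Xone, M)$ given $K=k$, because $\tilde X$ is drawn from the conditional law of $X$ given $\Xone=X'$ and $\alg$ is then applied identically; this gives $I(X';\tilde M\mid K=k) = I(\Xone; M\mid K=k)$ exactly, which is stronger than the inequality you wrote and makes the subsequent chaining work. Since you correctly flag in your final paragraph that verifying this coupling is the delicate point, the idea is present; only the citation of data processing is a misattribution.
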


\newcommand{\Acomp}{\Aopt}
\begin{proof}
    We first show how to use $\alg$ to construct, for each $k$, a learning algorithm $\alg^k$ solving \sing{k}{q_c}.
    Given a data set $X'$ of size $k$, $\alg^k$ samples a data set $\tilde{X}$ of $n$ entries, where $\tilde{X}\sim X\mid \Xone=X'$, and random variable $X$ is a data set sampled from the process generating data sets for \learnlong{q_c}.
    $\alg^k$ can perform this sampling, since $\tilde{X}\backslash X'$ will contain no samples from the same subpopulations as those that generated $X'$ and the per-subpopulation distributions are sampled i.i.d.\ from $q_c$.
    $\alg^k$ then simulates $\alg$ on $\tilde{X}$ and outputs model $\tilde{M}=\alg(\tilde{X})$.
    We have
    \begin{align}
        \Pr[\alg^k\text{ errs on \sing{k}{q_c}}] = \Pr[\alg\text{ errs on \learn{q_c}}\mid E_1, K=k].
            \label{eq:general_error_same}
    \end{align}
    Observe that, for all $k$, we have equality across the conditional distributions:
    \begin{align*}
        \Pr[X=x, M=m\mid K=k] &= \Pr[\tilde{X}=x, \tilde{M}=m\mid K=k] \\
        \Pr[\Xone=x_S, M=m\mid K=k] &= \Pr[X'=x_S, \tilde{M}=m\mid K=k].
    \end{align*}
    This implies that $I(\Xone;M\mid K) = \E_k \left[I(X'; \tilde{M} \mid K=k)\right]$.
    Since $\alg^k$ solves \sing{k}{q_c}, if we can bound the error of $\alg^k$ using Equation~\eqref{eq:general_error_same} we will be able to apply our lower bound $f_k(\cdot)$.

    To that end, we first bound the error of $\alg$ conditioned on the test sample coming from a singleton subpopulation.
    We have
    \begin{align}
        \epsilon &= \Pr[\text{$\alg$ errs on \learn{q_c}}] - \Pr[\text{$\Aopt$ errs on \learn{q_c}}] \label{eq:Acomp_switch} \\
            &= \Pr[\bar{E_1}]\bigl(\Pr[\text{$\alg$ errs on \learn{q_c}}\mid \bar{E_1}] - \Pr[\text{$\Acomp$ errs on \learn{q_c}}\mid \bar{E_1}] \bigr) \nonumber\\
            &\quad + \Pr[E_1]\left(\Pr[\text{$\alg$ errs on \learn{q_c}}\mid E_1] - \Pr[\text{$\Acomp$ errs on \learn{q_c}}\mid E_1] \right).\nonumber
    \end{align}
    Rearranging and substituting in we get
    \begin{align}
        \Pr[&\text{$\alg$ errs on \learn{q_c}}\mid E_1] - \Pr[\text{$\Acomp$ errs on \learn{q_c}}\mid E_1]\nonumber \\
            &\le \frac{1}{\Pr[E_1]} \Bigl(\epsilon + \Pr[\bar{E_1}]\bigl(\Pr[\text{$\Acomp$ errs on \learn{q_c}}\mid \bar{E_1}] \nonumber \\
            &\hspace{5.5cm} - \Pr[\text{$\alg$ errs on \learn{q_c}}\mid \bar{E_1}] \bigr)\Bigr) \nonumber \\
            &= \frac{\epsilon + \phi_1(q_c)}{\tau_1 \mu_1 n}. \label{eq:general_partial_central}
    \end{align}
    Note that in general $\phi_1(q_c)$ may be positive.\footnote{To see this, consider an algorithm that assigns a high prior probability to $\bar{E_1}$. 
    Conditioned on $\bar{E_1}$, then, this algorithm might have higher accuracy than the optimal algorithm.} 
    
    We now decompose the error over $k$.
    \begin{align*}
        \Pr[\alg &\text{ errs on \learn{q_c}}\mid E_1] - \Pr[\text{$\Acomp$ errs on \learn{q_c}}\mid E_1]\\
            &= \sum_{k=1}^\npops \Pr[K=k\mid E_1]\Bigl(\Pr[\text{$\alg$ errs on \learn{q_c}}\mid E_1, K=k]  \nonumber \\
            &\hspace{4cm}- \Pr[\text{$\Acomp$ errs on \learn{q_c}}\mid E_1, K=k]\Bigr) \\
            &= \sum_{k=1}^\npops \Pr[K=k\mid E_1]\Bigl(\Pr[\alg^k \text{ errs on \sing{k}{q_c}}]\\
            &\hspace{4cm}- \Pr[\text{$\Acomp$ errs on \learn{q_c}}\mid E_1, K=k]\Bigr)
    \end{align*}
    plugging in Equation \eqref{eq:general_error_same}.
    We wish to compare to the optimal error for \sing{k}{q_c}, so we add and subtract a term with $\Aopt^k$ denoting the optimal algorithm for \sing{k}{q_c}. 
    We have
    \begin{align*}
        \Pr[\alg &\text{ errs on \learn{q_c}}\mid E_1] - \Pr[\text{$\Acomp$ errs on \learn{q_c}}\mid E_1]\nonumber\\
            &= \sum_{k=1}^\npops \Pr[K=k\mid E_1]\Bigl(\Pr[\alg^k \text{ errs on \sing{k}{q_c}}]\\
            &\hspace{4cm}- \Pr[\alg_{OPT}^k \text{ errs on \sing{k}{q_c}}]\\
            &\hspace{4cm}+ \Pr[\alg_{OPT}^k \text{ errs on \sing{k}{q_c}}]\\
            &\hspace{4cm}- \Pr[\text{$\Acomp$ errs on \learn{q_c}}\mid E_1, K=k]\Bigr)\\
            &= \sum_{k=1}^\npops \Pr[K=k\mid E_1]\Bigl(\epsilon_k + \Pr[\alg_{OPT}^k \text{ errs on \sing{k}{q_c}}]\\
            &\hspace{4.5cm}- \Pr[\text{$\Acomp$ errs on \learn{q_c}}\mid E_1, K=k]\Bigr)\\
            &= \sum_{k=1}^\npops \Pr[K=k\mid E_1]\epsilon_k - \phi_2(q_c),
    \end{align*}
    defining $\epsilon_k$ as the suboptimality of $\alg^k$ on \sing{k}{q_c}.
    
    We combine with Equation \eqref{eq:general_partial_central} to get
    \begin{align*}
        \sum_{k=1}^\npops \Pr[K=k\mid E_1]\epsilon_k \le\frac{\epsilon + \phi_1(q_c)}{\tau_1 \mu_1 n} + \phi_2(q_c) \le\frac{\epsilon + \phi_1(q_c)+\phi_2(q_c)}{\tau_1 \mu_1 n},
    \end{align*}
    since $1\ge \tau_1\mu_1 n$.
    By Bayes rule, we have that
    \begin{align*}
        \Pr[K=k\mid E_1] = \frac{\Pr[E_1\mid K=k]\Pr[K=k]}{\Pr[E_1]} &= \frac{\tau_1 k \Pr[K=k]}{\tau_1\mu_1 n}\\ 
        &= \frac{ k \Pr[K=k]}{\mu_1 n},
    \end{align*}
    so
    \begin{align*}
        \frac{\E[k \epsilon_k]}{\mu_1 n} \le\frac{\epsilon + \phi_1(q_c)+\phi_2(q_c)}{\tau_1 \mu_1 n}.
    \end{align*}
    This establishes the first part of the lemma.
    For the second, we use the modification of Jensen's inequality in Lemma \ref{lemma:modified_jensen}.
    We have
     \begin{align*}
        I(X;M) \ge \E_k[kd\cdot g(\epsilon_k)] &\ge  \E_k[k]\cdot g\left(\frac{\E_k[k\epsilon_k]}{\E_k[k]}\right) \\
            &= \mu_1 n   \cdot g\left(\frac{\E_k[k\epsilon_k]}{\mu_1 n}\right) \\
            &\ge \mu_1 n  \cdot g\left(\frac{\epsilon + \phi_1(q_c)+\phi_2(q_c)}{\tau_1 \mu_1 n}\right).
    \end{align*}
\end{proof}

\subsection{Blueprint for Task-Specific Lower Bounds}\label{sec:blueprint}

For each task, we ultimately wish to lower bound $I(X;M\mid P)$.
By the chain rule and nonnegativity of mutual information, it suffices to lower bound the mutual information with the singletons:
\begin{align}
    I(X;M\mid P) &= I(\Xone, X\backslash \Xone, \Nsing; M \mid P) \nonumber \\
        &= I(\Nsing; M\mid P) + I(\Xone; M\mid \Nsing, P) + I(X\backslash \Xone ; M\mid \Nsing, \Xone, P) \nonumber \\
        &\ge I(\Xone; M\mid \Nsing, P). \label{eq:XS_is_lbd}
\end{align}
We write out the definition of mutual information and remove $P$ from the second term, using the fact that conditioning never increases entropy:
\begin{align*}
    I(\Xone; M\mid \Nsing, P) &= H(\Xone\mid K, P) - H(\Xone\mid M, K, P) \\
         &\ge H(\Xone\mid K, P) - H(\Xone\mid M, K).
\end{align*}
Adding and subtracting $I(\Xone; M\mid K)$ allows us to reach the lower bound $I(\Xone; M\mid \Nsing, P) \ge I(\Xone; M\mid K) - I(\Xone; P\mid K)$.
To lower bound $I(X;M\mid P)$, then, we will lower bound $I(\Xone; M\mid K)$ and upper bound $I(\Xone; P\mid K)$.

The latter is easily done for our tasks, since the distributions are straightforward.
Lower bounding $I(\Xone; M\mid K)$ requires more effort.
To apply our central reduction in Lemma~\ref{lemma:central_reduction} to a specific task, we need to calculate a number of quantities:
\begin{enumerate}
    \item Upper bounds on the error of optimal algorithms and lower bounds on the error of any algorithm.
    \item Upper bounds on $\phi_1(q_c)$ and $\phi_2(q_c)$.
    \item Lower bounds on mutual information for algorithms solving \sing{k}{q_c} for any $k$.
    This is the core of our task-specific proofs.
\end{enumerate}
Plugging these pieces into Lemma \ref{lemma:central_reduction} finishes the proof

\section{Next-Symbol Prediction}\label{sec:SC}

We present a simple sequence prediction task.
Among other applications, sequence prediction is a standard problem in natural language processing \citep{jurafsky2014speech}.

\subsection{Task Description and Main Result}

In this task, the data samples are binary strings of varying lengths with binary labels.
With each string we also associate a subpopulation identifier, so $\inputspace\times\mc{Y}=\left([N]\times \bigcup_{\ell=0}^{d-1} \{0,1\}^{\ell}\right)\times \{0,1\}$.
To instantiate the task in our framework we need only define the component distribution, from which the subpopulation distributions are drawn i.i.d.

\begin{definition}[$q_{NSP}$ Component Distribution]\label{def:qNSP}
    For each subpopulation $j$, we define its distribution over labeled examples via a reference string $c_j\in\{0,1\}^d$. 
    We draw $c_j \sim q_{NSP} = \mathrm{Uniform}\left(\{0,1\}^d\right)$.
    
    Examples from $j$ are tuples: the subpopulation identifier $j$ and noisy prefixes of $c_j$.
    To generate the prefix, we draw $\ell \in\{0,\ldots,d-1\}$ uniformly at random and produce $z=BSC_{\SCnoise/2}(c_j(1:\ell))$, where $\SCnoise<1$ is a fixed noise parameter.
    The label is a noisy version of the next bit: $y = \mathrm{BSC}_{\SCnoise/2}(c_j(\ell+1))$.
    \terminalbox
\end{definition}

The technical detail of pairing each example from subpopulation $j$ with an identifier ``$j$'' simplifies the analysis but is not crucial; for any noise level $\delta$, if $d$ is sufficiently large then the learning algorithm will be able to correctly distinguish subpopulations with high probability.

\begin{corollary}\label{cor:NSP_corollary_unif}
    Recall that  $\learnlong{q_{NSP}}$ (Definitions~\ref{def:learn} and \ref{def:qNSP}) is the next-symbol prediction task defined by parameters $N, n, \pi, d$ and $\delta$.
    Let $N=n$ and let $\pi$ be the single-item list $(1/n)$, so that the mixture over the $n$ clusters is uniform. 
    Fix $\delta< 1$ and let $d$ grow with $n$, possibly at different rates.
    Then
    \begin{enumerate}
 \item Let $S\subseteq [n]$ be the indices of singleton data points. $H(X_S|P) = \Omega(nd)$.
 
 \item 
 Any algorithm $\alg$ that is $\eps$-suboptimal on \learnlong{q_{NSP}}  for $\eps = o(1)$ satisfies
 \begin{equation}
    I(X;M\mid P) \ge (1-o(1))\cdot  H(X_S \mid P).
 \end{equation}
 \end{enumerate}
\end{corollary}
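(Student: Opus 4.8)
The plan is to combine three ingredients: (i) the one-shot information-complexity lower bound for $\sing{1}{q_{NSP}}$ from the informal Lemma (later Lemma~\ref{lemma:SC_one_shot_second}), which gives $I(X;M)\ge \frac{d+1}{2}(1-h(2\eps))$ for $\eps$-suboptimal protocols; (ii) the Central Reduction (Lemma~\ref{lemma:central_reduction}); and (iii) the blueprint inequality $I(X;M\mid P)\ge I(X_S;M\mid K) - I(X_S;P\mid K)$ from Section~\ref{sec:blueprint}. First I would compute the simple quantities for the uniform instance: by Example~\ref{ex:uniform}, $\tau_1 = 1/n$ and $\mu_1 = (1-1/n)^{n-1}\to 1/e$, so $\mu_1 n = \Omega(n)$. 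For part~1, since each singleton $x_j$ is a BSC$_{\delta/2}$-corrupted prefix of a uniform reference string $c_j$ of a random length $\ell_j$, and $\delta<1$, the conditional entropy $H(x_j\mid c_j,\ell_j)=\ell_j\cdot h(\delta/2)$, and $\E[\ell_j]=(d-1)/2$; conditioning on $P$ fixes the $c_j$'s but not the $\ell_j$'s or the noise. Since the subpopulations are independent, $H(X_S\mid P) = \E[|X_S|]\cdot\Theta(d) = \Omega(nd)$ as long as $\delta>0$ gives $h(\delta/2)>0$ — and for $\delta=0$ one still has $H(X_S\mid P)=\Omega(n\log d)$ from the length, so I would state the bound for $\delta$ bounded away from $0$ (or note the degenerate case separately).

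For part~2, I would instantiate the Central Reduction with $f_k(\eps_k) = \frac{(d+1)}{2}(1-h(2\eps_k))$ — this requires first lifting the one-shot bound for $\sing{1}{q_{NSP}}$ to a bound for $\sing{k}{q_{NSP}}$ of the form $f_k(\eps_k) \ge k\cdot g(\eps_k)$ with $g(\eps_k)=\frac{d+1}{2}(1-h(2\eps_k))$, which is legitimate because in the next-symbol prediction task (unlike the cluster tasks) each subpopulation carries its own identifier $j$, so the $k$ subproblems in $\sing{k}{q_{NSP}}$ decouple completely and the information adds: $I(X';M)\ge \sum_{j}I(x_j;M)\ge k\cdot g(\eps_k)$ after averaging. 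The function $g$ is convex and nonincreasing in a neighborhood of $0$ (since $h$ is concave and increasing on $[0,1/2]$), so the hypotheses of the second half of Lemma~\ref{lemma:central_reduction} apply. Next I would check that $\phi_1(q_{NSP})$ and $\phi_2(q_{NSP})$ are $o(1)$: $\phi_2$ vanishes because, conditioned on $E_1$ and $K=k$, the optimal algorithm for $\learn{q_{NSP}}$ restricted to the singleton test coordinate is exactly an optimal $\sing{k}{q_{NSP}}$ protocol (the identifiers make the non-singleton data useless for predicting a singleton's label, up to the tiny probability of identifier collisions, which is $O(n^2/2^d)=o(1)$); and $\phi_1$ is $o(1)$ because on the complement event $\bar E_1$ (test point from a non-singleton subpopulation) the optimal learner already does nearly as well as anything — here the gap is controlled by the probability that a non-singleton subpopulation's multiple samples still fail to pin down the relevant reference bit, which is exponentially small since with $\ge 2$ noisy copies majority-decoding succeeds except with probability $\le$ some constant less than the single-copy error, and more carefully the whole $\phi_1$ term is bounded by the accuracy slack already absorbed into $\eps$. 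So $\eps + \phi_1 + \phi_2 = o(1)$.

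Plugging into the Central Reduction conclusion gives
\[
I(X_S;M\mid K) \;\ge\; \mu_1 n\cdot g\!\left(\frac{1}{\mu_1 n}\cdot\frac{\eps+\phi_1+\phi_2}{\tau_1}\right) = \mu_1 n\cdot \frac{d+1}{2}\Bigl(1 - h\bigl(2\cdot\tfrac{n(\eps+o(1))}{\mu_1 n}\bigr)\Bigr).
\]
Since $\eps = o(1)$ and $\mu_1\to 1/e$ is a constant, the argument of $h$ is $o(1)$, so $h(\cdot)=o(1)$ and $I(X_S;M\mid K)\ge (1-o(1))\cdot \mu_1 n\cdot\frac{d+1}{2} = (1-o(1)) H(X_S\mid P)$, using that $H(X_S\mid P) = \E[|X_S|]\cdot(\text{avg. per-singleton entropy})$, and matching the constant $\frac{d+1}{2}$ from the one-shot bound to the per-singleton entropy $\frac{d-1}{2}h(\delta/2) + (\text{length term})$ requires care — here I would observe the one-shot bound is actually $\frac{d+1}{2}(1-h(2\eps))$ and the true per-singleton entropy conditioned on $P$ is also $\approx \frac{d}{2}$-ish up to the factor $h(\delta/2)$, so more precisely the informal one-shot lemma should be read as giving the $(1-o(1))H$ conclusion directly when $\eps\to 0$. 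Finally, from the blueprint, $I(X_S;P\mid K) = 0$ here since $X_S$ is a deterministic-given-$P$-and-noise object and $P$ is already conditioned out — actually $I(X_S;P\mid K)\le H(P\mid K)$ is not small, so instead I would use the sharper route: directly bound $I(X;M\mid P)\ge I(X_S;M\mid K,P) = H(X_S\mid K,P) - H(X_S\mid M,K,P) \ge I(X_S;M\mid K) - I(X_S;P\mid K)$ and note $I(X_S;P\mid K)=0$ because, given $K$, the singleton records $X_S$ together with which-subpopulations-are-singletons determine the relevant coordinates of $P$ — wait, that makes $I$ large, not zero. The correct statement, which I would verify, is that conditioning on $P$ only \emph{helps} remove the nuisance: $H(X_S\mid K,P)$ is exactly the per-singleton noise-plus-length entropy $= H(X_S\mid P)$ (since $K$ is a function of the sample's multiplicities, independent of $P$ and of the singleton contents), and $H(X_S\mid M,K,P)\le H(X_S\mid M,K)$, giving $I(X;M\mid P)\ge I(X_S;M\mid K) - [H(X_S\mid K) - H(X_S\mid K,P)]$; the bracketed term is $I(X_S;P\mid K)$, which for this task is $O(n\log(1/\text{something}))$ — a lower-order term compared to $nd$, so it is absorbed into the $(1-o(1))$. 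Hence $I(X;M\mid P)\ge (1-o(1))H(X_S\mid P)$, as claimed.

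\textbf{Main obstacle.} The delicate step is bookkeeping the constant: showing that the one-shot lower bound's leading term genuinely matches $H(X_S\mid P)$ rather than merely being $\Omega(nd)$, and confirming that $I(X_S;P\mid K)$ and the identifier-collision / $\phi_1,\phi_2$ error terms are all $o(H(X_S\mid P)) = o(nd)$, so that the $(1-o(1))$ survives. I expect the $\phi_1$ bound — controlling how much better than optimal $\alg$ could be on the non-singleton part — to require the most care, since naively it is only $O(1)$, not $o(1)$; the resolution is that it enters divided by $\tau_1 = 1/n$ but multiplied against $g$ whose argument must stay $o(1)$, forcing us to show $\phi_1$ is genuinely $o(1)$ via an exponential-decay argument on mis-decoding probabilities for subpopulations with $\ge 2$ samples.
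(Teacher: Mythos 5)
Your proposal has the right skeleton --- one-shot lower bound, central reduction, blueprint inequality --- and you correctly identify the uniform-mixture quantities $\tau_1=1/n$, $\mu_1\to 1/e$ and the $\delta=0$ degeneracy for part~1. But there are two genuine gaps in the argument as written.

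\textbf{The $I(X_S;P\mid K)$ term is not lower-order.} You first assert $I(X_S;P\mid K)=0$, then correctly retract that, but then land on the claim that it is ``$O(n\log(1/\text{something}))$ --- a lower-order term compared to $nd$, so it is absorbed into the $(1-o(1))$.'' This is false, and it is the crux of the constant-matching you yourself flag as delicate. Conditioned on $K$, a singleton of length $\ell$ carries $(\ell+1)(1-h(\delta/2))$ bits of information about its reference string $c_j$, so in fact $I(X_S;P\mid K) = \mu_1 n\cdot\frac{d+1}{2}(1-h(\delta/2)) + O(n\log N)$, which is $\Theta(nd)$. The proof does not work by dismissing this term; it works because the one-shot bound gives $I(X_S;M\mid K)\ge \mu_1 n\cdot\frac{d+1}{2}(1-h(o(1)))$, which is an \emph{overshoot} relative to $H(X_S\mid P)\approx \mu_1 n\cdot\frac{d+1}{2}h(\delta/2)$ (noise entropy plus a $\log d$ length term), and the overshoot $\frac{d+1}{2}(1-h(\delta/2))$ is \emph{exactly} consumed by subtracting $I(X_S;P\mid K)$. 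After the cancellation one is left with $\mu_1 n\cdot\frac{d+1}{2}(h(\delta/2)-h(o(1))) - O(n\log N)$, which is $(1-o(1))H(X_S\mid P)$ precisely because $\delta$ is a fixed constant in $(0,1)$ so $h(\delta/2)$ is bounded away from zero. Without carrying the $\Theta(nd)$ subtraction explicitly your argument does not close.

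\textbf{The $\phi_1$ argument is over-engineered and points at the wrong mechanism.} You propose an exponential-decay argument on mis-decoding probabilities for subpopulations with $\ge 2$ samples, and worry that $\phi_1$ ``naively is only $O(1)$.'' That machinery is what is needed for the hypercube task; for next-symbol prediction it is unnecessary. Because each sample carries its subpopulation identifier $j$ explicitly, the Bayes-optimal learner decomposes across subpopulations and is optimal \emph{conditionally on any event determined by the multiplicities}, in particular on $\bar E_1$. Hence $\Pr[\Aopt\text{ errs}\mid\bar E_1] - \Pr[\alg\text{ errs}\mid\bar E_1]\le 0$, i.e.\ $\phi_1\le 0$ exactly (and $\phi_2=0$ by the same reasoning, with no collision issue --- the identifiers are the indices $j\in[N]$, not random strings, so ``identifier collisions'' do not arise). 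You also dropped the $(1-\delta)^2$ factor inside $h(\cdot)$ in the one-shot bound; this is harmless for the corollary since $\delta<1$ is fixed, but worth carrying to match the theorem statement.
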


Recall that we interpret this result as stating that $\alg$ is forced to \textit{memorize whole samples}; as the error of the algorithm approaches optimal, $\alg$ must reveal almost all of the information about its singletons.

More generally, we prove the following theorem, which applies to any subpopulation/prior setup, as defined in terms of $N, n$, and $\pi$.
This form also exposes low-order terms and how the $\eps$-suboptimality affects the lower bounds.

\begin{theorem}\label{thm:NSP_main}
    Consider the problem \learnlong{q_{NSP}} specified by $d,\delta,\pi,N$ and $n$. 
    \begin{enumerate}
        \item Let $S\subseteq [n]$ be the indices of singleton data points. $H(X_S|P) \geq  \mu_1 \cdot n \cdot \frac{d+1}{2} \cdot h(\SCnoise/2)$.
     \item     Any algorithm $\alg$ that is $\eps$-suboptimal on \learnlong{q_{NSP}} satisfies
    \begin{align*}
        I(X; M\mid P)
        \ge \mu_1 \cdot n \cdot \frac{d+1}{2} \left(h(\SCnoise/2) - h\left(\frac{2 \epsilon}{\tau_1 \cdot \mu_1 \cdot n \cdot(1-\SCnoise)^2}\right)\right) 
        -n\log N.
    \end{align*}
    where $\mu_1, \tau_1$ depend on $\pi$ and $n$ as defined in Equations~\eqref{eq:tau1_def} and \eqref{eq:mu1_def}.
    \end{enumerate}
\end{theorem}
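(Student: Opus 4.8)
I would prove the two parts separately: the entropy bound by a direct computation, and the information bound by feeding an NSP-specific one-shot communication lower bound into the Central Reduction (Lemma~\ref{lemma:central_reduction}), organized as in the blueprint of Section~\ref{sec:blueprint}. For Part~1, condition on the instance $P$ (which fixes all reference strings) and on the set $S$ of singleton indices. Given $(P,S)$, $X_S$ is a tuple of $|S|$ mutually independent examples, one per singleton subpopulation, and the singleton event affects only the per-subpopulation sample count, not the conditional law of the sample there. A sample $x_j=(j,\ell_j,z_j,y_j)$ from subpopulation $j$ satisfies $H(x_j\mid c_j)=H(\ell_j)+\E_{\ell_j}[(\ell_j+1)h(\delta/2)]=\log d+\tfrac{d+1}{2}h(\delta/2)$, since the channel independently corrupts the $\ell_j$ prefix bits and the label with probability $\delta/2$ and $\ell_j$ is uniform on $\{0,\dots,d-1\}$. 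Summing over $j\in S$, discarding the $\log d$ terms, and taking the expectation over $S$ (with $\E|S|=\mu_1 n$) gives $H(X_S\mid P)\ge H(X_S\mid P,S)\ge\mu_1 n\cdot\tfrac{d+1}{2}h(\delta/2)$. Retaining the $\log d$ terms and adding $H(S)=O(n)$ (with $N=n$) produces a matching upper bound, so for fixed $\delta$ and $d\to\infty$ one gets $H(X_S\mid P)=(1+o(1))\mu_1 n\tfrac{d+1}{2}h(\delta/2)$, which underlies the ``whole-sample memorization'' phrasing of Corollary~\ref{cor:NSP_corollary_unif}.

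For Part~2, following Section~\ref{sec:blueprint}, the chain rule and nonnegativity give $I(X;M\mid P)\ge I(X_S;M\mid K,P)$; dropping $P$ from a conditional-entropy term and then adding and subtracting $I(X_S;M\mid K)$ gives $I(X;M\mid P)\ge I(X_S;M\mid K)-I(X_S;P\mid K)$. The subtracted term is the easy one: conditioned on $K=k$ and $S$, the samples $X_S$ inform $P$ only through the prefixes of the $k$ singletons' reference strings that they noisily reveal, so $I(X_S;P\mid K=k)\le I(S;P\mid K=k)+k\cdot\tfrac{d+1}{2}(1-h(\delta/2))$ with $I(S;P\mid K=k)\le\log\binom{N}{k}\le k\log(eN)$ (and $I(S;P\mid K)=0$ in the uniform case, where $P$ is deterministic and $S$ depends only on the multiplicities). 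Averaging over $K$ gives $I(X_S;P\mid K)\le\mu_1 n\tfrac{d+1}{2}(1-h(\delta/2))+n\log N$.

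To lower bound the main term $I(X_S;M\mid K)$ I would apply Lemma~\ref{lemma:central_reduction}, whose inputs are (i) the quantities $\phi_1(q_{NSP}),\phi_2(q_{NSP})$ and (ii) a per-$k$ lower bound $f_k$ for \sing{k}{q_{NSP}}. For (i): since each subpopulation carries an independent reference string whose identifier appears in every example, the Bayes-optimal predictor operates subpopulation-by-subpopulation and is therefore optimal conditioned on \emph{any} event; hence $\Aopt$ is no worse than $\alg$ given $\bar E_1$, so $\phi_1\le0$, and the restriction of \learn{q_{NSP}} to $\{E_1,K=k\}$ is information-theoretically equivalent to \sing{k}{q_{NSP}} (the non-singleton samples come from unrelated subpopulations), so $\phi_2=0$. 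For (ii): I would establish the noisy analogue of the one-shot lemma (Lemma~\ref{lemma:SC_one_shot_second}, stated in the excerpt for the noiseless case), namely that any $\eta$-suboptimal protocol for \sing{1}{q_{NSP}} has $I(X;M)\ge g(\eta)$ with $g(\eta)=\tfrac{d+1}{2}\bigl(1-h(\tfrac{2\eta}{(1-\delta)^2})\bigr)$; because the $k$ subpopulations in \sing{k}{q_{NSP}} are independent and labeled, $I(X';M)\ge\sum_{j=1}^k I(x_j;M)\ge\sum_j g(\eta_j)\ge k\,g\bigl(\tfrac1k\sum_j\eta_j\bigr)=k\,g(\eps_k)$ by convexity of $g$, where $\eta_j$ is the suboptimality on subpopulation $j$ and $\eps_k=\tfrac1k\sum_j\eta_j$; thus $f_k=k\,g$ with $g$ convex and nonincreasing. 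The second part of Lemma~\ref{lemma:central_reduction} then gives $I(X_S;M\mid K)\ge\mu_1 n\,g\bigl(\tfrac{\eps+\phi_1+\phi_2}{\tau_1\mu_1 n}\bigr)\ge\mu_1 n\,g\bigl(\tfrac{\eps}{\tau_1\mu_1 n}\bigr)$, and subtracting the bound on $I(X_S;P\mid K)$ — using $g(\eta)-\tfrac{d+1}{2}(1-h(\delta/2))=\tfrac{d+1}{2}\bigl(h(\delta/2)-h(\tfrac{2\eta}{(1-\delta)^2})\bigr)$ at $\eta=\tfrac{\eps}{\tau_1\mu_1 n}$ — gives exactly the stated inequality. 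Corollary~\ref{cor:NSP_corollary_unif} follows by setting $N=n$, $\pi=(1/n)$, so that $\mu_1=(1-1/n)^{n-1}$, $\tau_1=1/n$, the $n\log N$ term drops, and $\eps=o(1)$ drives the $h(\cdot)$ term to $o(1)$.

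The genuine difficulty is the noisy one-shot bound in (ii): one must show an $\eta$-suboptimal Bob essentially recovers Alice's relevant prefix bit, argue uniformly over the unknown prefix length $\ell$ (which Alice need not transmit — this is why the leading constant is $\tfrac{d+1}{2}$ and not $\tfrac{d+1}{2}+\log d$), and absorb BSC noise on both the prefix and the label. This is an Augmented-Indexing-type statement, provable by adapting the elementary information-complexity argument of \cite{bar2004information,feldman2014sample}. The remaining steps — the blueprint manipulation, the $I(X_S;P\mid K)$ estimate, the convexity-based direct sum, and the verification of $\phi_1\le0$ and $\phi_2=0$ from the product structure of $q_{NSP}$ — are routine given that lemma and Lemma~\ref{lemma:central_reduction}.
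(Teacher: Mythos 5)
Your proposal is correct and follows essentially the same route as the paper: Part~1 amounts to the per-singleton entropy computation underlying Lemma~\ref{lemma:NSP_small_info_P}, and Part~2 is exactly the blueprint decomposition $I(X;M\mid P)\ge I(X_S;M\mid K)-I(X_S;P\mid K)$ fed by Proposition~\ref{prop:NSP_phis}, Lemma~\ref{lemma:SC_singletons_bound} (direct sum over independent singletons), and the noisy one-shot bound, then pushed through Lemma~\ref{lemma:central_reduction}. One small slip: the paper's Lemma~\ref{lemma:SC_one_shot_second} is already stated and proved with the BSC noise parameter $\delta$ built in (the $(1-\delta)^2$ factor appears in its statement), so the ``noisy analogue'' you set out to prove is in fact that lemma verbatim, not an extension of a noiseless version.
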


\begin{remark}\label{remark:NSP_easy_analysis}
    Unlike tasks based on clustering, one can analyze Next-Symbol Prediction by dealing with every subpopulation independently.
    This allows a similar but simpler proof than the one we present here.
    We use the same proof structure across tasks for consistency of presentation.
\end{remark}

In the rest of this section, we prove the main claims via the steps sketched in Section~\ref{sec:blueprint}: we show that the data contains limited information about the problem instance $P$, analyze the optimal error, and provide the central lower bound on the \sing{k}{q_{NSP}} task.

\subsection{Low Information about the Problem Instance}

\begin{lemma}\label{lemma:NSP_small_info_P}
    For \learn{q_{NSP}}, 
    \begin{align*}
        I(\Xone;P\mid K)\le \mu_1 n \cdot \frac{d+1}{2} \cdot (1 - h(\SCnoise/2)) + \mu_1 n \log N.
    \end{align*}
\end{lemma}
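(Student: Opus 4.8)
The plan is to split $I(X_S;P\mid K)$ according to which subpopulations are singletons. Because every example carries its subpopulation identifier, the index set $S\subseteq[N]$ of singleton subpopulations is a deterministic function of $X_S$, and $K=|S|$; hence conditioning on $S$ is free:
\[
I(X_S;P\mid K)=I(X_S,S;P\mid K)=I(S;P\mid K)+I(X_S;P\mid S,K).
\]
The first term I bound crudely by $H(S\mid K)$: given $K=k$ the variable $S$ ranges over $k$-subsets of $[N]$, so $H(S\mid K=k)\le\log\binom{N}{k}\le k\log N$, and averaging over $K$ with $\E[K]=\mu_1 n$ gives $I(S;P\mid K)\le\mu_1 n\log N$ --- exactly the second summand of the claimed bound.

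For the second term, the structural fact I will use is that, conditioned on $S$, the singleton examples $(x_j)_{j\in S}$ are mutually independent, each $x_j$ being a noisy prefix of a fresh uniform reference string $c_j$ and depending on $P$ only through $c_j$. This holds because the reference strings are sampled i.i.d., the set $S$ depends only on the random sample-to-subpopulation assignment (which is independent of the reference strings), and a singleton's lone sample is drawn from its own component alone. Using the conditional independence of the $x_j$'s given $S$ and the fact that $x_j$ sees $P$ only via $c_j$,
\[
I(X_S;P\mid S,K)=I(X_S;P\mid S)\le\sum_{j\in S}I(x_j;P\mid S)=\sum_{j\in S}I(x_j;c_j\mid S).
\]

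It remains to bound $I(x_j;c_j\mid S)$ for a single singleton, which by the above does not depend on $S$ beyond the fact that $j$ is a singleton. The prefix length $\ell_j$ is revealed by $x_j$ (it is the bit-length of the feature string) and is independent of $c_j$, so $I(x_j;c_j\mid S)=\E_{\ell_j}\!\left[I(x_j;c_j\mid S,\ell_j)\right]$ with $\ell_j$ uniform on $\{0,\dots,d-1\}$. Conditioned on $\ell_j=\ell$, the example is precisely the length-$(\ell+1)$ prefix $(c_j(1),\dots,c_j(\ell+1))$ sent coordinate-by-coordinate through $\mathrm{BSC}_{\delta/2}$ (the last coordinate being the label); since the bits of $c_j$ are i.i.d.\ uniform and the noise is independent across coordinates,
\[
I(x_j;c_j\mid S,\ell_j=\ell)=\sum_{m=1}^{\ell+1}I\big(\mathrm{BSC}_{\delta/2}(c_j(m));c_j(m)\big)=(\ell+1)(1-h(\delta/2)).
\]
Averaging over $\ell_j$ gives $I(x_j;c_j\mid S)=\frac{d+1}{2}(1-h(\delta/2))$. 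Summing over $j\in S$ and taking expectation over $S$ (so $|S|\mapsto\E[K]=\mu_1 n$) yields $I(X_S;P\mid S,K)\le\mu_1 n\cdot\frac{d+1}{2}(1-h(\delta/2))$, and adding the two terms completes the proof.

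I expect the only delicate point to be the structural claim in the second paragraph --- that conditioning on $S$ really does make the singleton examples independent noisy prefixes of independent uniform reference strings --- which must be justified carefully from the i.i.d.\ generation of the components, the independence of the assignment from the reference strings, and the fact that a singleton's sample depends on no other component. Every other step is a one-line entropy computation.
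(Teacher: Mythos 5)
Your proof is correct and follows essentially the same route as the paper's: decompose $I(X_S;P\mid K)$ by introducing the singleton identities (you use the set $S$, the paper uses an ordered vector $\vec{J}$, but both contribute at most $K\log N$ in expectation), then exploit independence across subpopulations to reduce to a per-singleton term, and compute $I(x_j;c_j)=\frac{d+1}{2}(1-h(\delta/2))$ by conditioning on the length and treating each coordinate as a BSC. One small remark: the inequality $I(X_S;P\mid S)\le\sum_{j\in S}I(x_j;P\mid S)$ does not follow from the conditional independence of the $x_j$'s alone (which by itself gives the reverse inequality); what makes it hold — in fact with equality — is that the pairs $(x_j,C_j)_{j\in S}$ are mutually independent given $S$ and $x_j$ depends on $P$ only through $C_j$, which you do invoke, so the step is sound but the phrasing slightly understates which independence is doing the work.
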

\begin{proof}
    We can think of $\Xone$ as being generated by first selecting $|\Xone|=k$ and then picking the subpopulations from which the singletons come.
    Write the subpopulation identifiers $\vec{J}\in [N]^{k}$, which needs at most $k\log N$ bits to describe.
    We have 
    \begin{align*}
        I(\Xone; P\mid K) &= I(\Xone, \vec{J}; P\mid K) \\
            &= I(\Xone; P\mid \vec{J}, K) + I(\vec{J}; P\mid K) \\
            &\le \E_k[I(\Xone; P\mid \vec{J}=\vec{j}, K=k)] + \E_k[k\log N],
    \end{align*}
    using the fact that $I(\Xone; P\mid \vec{J}=\vec{j}, K=k)$ does not depend on $\vec{j}$.

    With $\vec{j}$ and $k$ fixed, all subpopulations are independent, so let $X_1$ be a singleton and write
    $I(\Xone; P\mid \vec{J}=\vec{j}, K=k) = k\cdot I(X_1; C_1)$,
    using $C_1\sim q_{NSP}$ to denote the the reference string. 
    Note that $L_1$, the length of $X_1$, is fixed by $X_1$ and independent of $C_1$, so 
    \begin{align*}
        I(X_1; C_1) 
            = I(X_1, L_1 ; C_1) 
            = I(L_1 ; C_1) + I(X_1 ; C_1 \mid L_1) 
            = I(X_1 ; C_1\mid L_1).
    \end{align*}
    For each fixed $L_1=\ell$, $X_1$ (with its label) is just $\ell+1$ bits of $C_1$ run through a binary symmetric channel, 
    \begin{align*}
        I(X_1 ; C_1 \mid, L_1=\ell) = (\ell+1) \cdot( 1 - h(\SCnoise/2)).
    \end{align*}
    Putting together all the expectations and recalling that, by definition, $\E[k]=\mu_1 n$ finishes the proof.
\end{proof}

\subsection{Error Analysis}

\begin{proposition}[Accuracy of Optimal Algorithm]\label{prop:SC_accuracy}
    Let $E_0, E_1$, and $E_{>1}$, respectively be the events that the test sample comes from a subpopulation with zero, one, or multiple representatives in the data set. 
    Learning algorithm $\Aopt$ for \learnlong{q_{NSP}} achieves 
    \begin{enumerate}
        \item $\Pr[\text{$\Aopt$ errs} \mid E_1] = \frac{1}{2} - \frac{(1-\SCnoise)^2}{4} \le \frac{1}{4} + \frac{\SCnoise}{2}$.
        \item $\Pr[\text{$\Aopt$ errs} \mid E_{>1}] \le \Pr[\text{$\Aopt$ errs} \mid E_1]$.
        \item $\Pr[\text{$\Aopt$ errs} \mid E_{0}] \le \frac{1}{2}$.
    \end{enumerate}
\end{proposition}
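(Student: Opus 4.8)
The plan is to directly characterize the Bayes-optimal predictor in each of the three conditioning events $E_1$, $E_{>1}$, $E_0$ and then compute (or bound) its error. In all cases the learner sees the test identifier $j^*$ and a noisy prefix $z$ of length $\ell$, and must predict $y = \mathrm{BSC}_{\delta/2}(c_{j^*}(\ell+1))$, the corrupted $(\ell+1)$-st bit. The key structural fact is that, conditioned on $E_1$, the learner's only information about $c_{j^*}(\ell+1)$ comes from the single singleton sample $x$ drawn from subpopulation $j^*$, together with $z$ — and the relevant bit of information is whether that singleton sample happened to have length at least $\ell+1$, in which case it contains its own noisy copy $x(\ell+1) = \mathrm{BSC}_{\delta/2}(c_{j^*}(\ell+1))$ of the target bit.

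For part (1): conditioned on $E_1$, write $L$ for the length of the singleton sample and $\ell$ for the length of the test prefix; both are uniform on $\{0,\dots,d-1\}$ and independent. If $L \ge \ell+1$ (which, by a symmetry/uniformity computation, happens with probability exactly $1/2$ — more precisely $\Pr[L \ge \ell+1] = \frac{1}{d^2}\sum_{\ell=0}^{d-1}(d-1-\ell) = \frac{d-1}{2d}$, and one should track whether the claimed constant $\tfrac12$ is exact or up to $O(1/d)$), then the learner has two independent $\mathrm{BSC}_{\delta/2}$ observations of the same bit $c_{j^*}(\ell+1)$: namely $x(\ell+1)$ and the label-generating copy. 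Wait — more carefully, the learner must output a \emph{prediction of $y$}, itself a noisy bit; so the optimal strategy is to predict $\hat y = x(\ell+1)$ when available. The error is then $\Pr[\mathrm{BSC}_{\delta/2}(b) \ne \mathrm{BSC}_{\delta/2}(b')]$ for the appropriate bits, which by independence of the two channel applications equals $2\cdot\frac{\delta}{2}\cdot(1-\frac{\delta}{2}) = \frac{\delta(2-\delta)}{2} = \frac{1-(1-\delta)^2}{2}$. When $L < \ell+1$ the learner has no information and errs with probability $\tfrac12$. Averaging, $\Pr[\Aopt \text{ errs}\mid E_1] = \frac12\cdot\frac{1-(1-\delta)^2}{2} + \frac12\cdot\frac12 = \frac12 - \frac{(1-\delta)^2}{4}$, matching the statement; the inequality $\le \frac14 + \frac{\delta}{2}$ follows from $(1-\delta)^2 \ge 1-2\delta$.

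For part (2): conditioned on $E_{>1}$, the test subpopulation has at least two samples in the training set, and the optimal learner has strictly more information about $c_{j^*}(\ell+1)$ — it can take a majority/Bayes vote over all samples whose length reaches $\ell+1$. Formally, one argues that $\Aopt$ given the $E_{>1}$-data can simulate the $E_1$ strategy (ignore all but one sample), so its Bayes error is no larger; this gives part (2) without any new computation. For part (3): conditioned on $E_0$, the test subpopulation's reference string $c_{j^*}$ is independent of everything the learner sees (it never appears in the training data), so $c_{j^*}(\ell+1)$ is a uniform bit independent of $M$, and \emph{any} predictor errs with probability exactly $\tfrac12$ — in particular $\le \tfrac12$.

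The main obstacle is part (1): being careful about the fact that the learner predicts a \emph{noisy} label $y$ rather than the underlying bit $c_{j^*}(\ell+1)$, so that even with perfect knowledge of the reference bit the error is $\frac{\delta}{2}$, not $0$; and, when the learner only has the singleton's noisy copy, the two independent channel applications compound to error $\frac{1-(1-\delta)^2}{2}$. One must also verify that no better use of $z$ (the test prefix itself) is possible — it is independent of $c_{j^*}(\ell+1)$ given $c_{j^*}(1:\ell)$ and carries no information about the next bit since the reference string is uniform — and decide whether the $\frac12$ splitting probability is exact or should carry a $(1+o(1))$ or be stated as $\frac{d-1}{2d}$; I would state it exactly if the length distribution and the "$\ge \ell+1$" event line up to give exactly $1/2$, and otherwise absorb the discrepancy into the already-present low-order terms of Theorem~\ref{thm:NSP_main}.
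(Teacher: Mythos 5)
Your overall approach (condition on $E_1$, reduce to the singleton sample, identify the event where the singleton sample contains a noisy copy of $c_{j^*}(\ell+1)$, and compute the compound channel error) is the same as the paper's, and your treatment of parts (2) and (3) matches. However, there is a genuine gap in part (1), and it is precisely the one you noticed but misdiagnosed.

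You condition on $L \ge \ell + 1$, i.e.\ the singleton's \emph{feature vector} being strictly longer than the test prefix, so that $x(\ell+1)$ exists. But you are forgetting that the dataset contains \emph{labeled} examples: when the singleton has length exactly $L = \ell$, the relevant noisy copy of $c_{j^*}(\ell+1)$ is available as the singleton's \emph{label} $y_A = \mathrm{BSC}_{\SCnoise/2}(c_{j^*}(L+1))$. Thus the correct condition is $L \ge \ell$, with probability $\Pr[L\ge \ell] = \tfrac12 + \tfrac{1}{2d}$, not $L\ge \ell+1$ with probability $\tfrac{d-1}{2d} = \tfrac12 - \tfrac{1}{2d}$. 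This sign matters: with your version the error evaluates to $\tfrac12 - \tfrac{d-1}{2d}\cdot\tfrac{(1-\SCnoise)^2}{2}$, which is \emph{larger} than $\tfrac12 - \tfrac{(1-\SCnoise)^2}{4}$, so the claimed upper bound fails; with the corrected condition the error is $\tfrac12 - \tfrac{d+1}{2d}\cdot\tfrac{(1-\SCnoise)^2}{2}$, which is $\le \tfrac12 - \tfrac{(1-\SCnoise)^2}{4}$ as required. Your instinct to ``track whether the $\tfrac12$ is exact or $\pm O(1/d)$'' was right, but the resolution is not a matter of absorbing lower-order terms — it's that you dropped the $L=\ell$ case, and including it flips the $O(1/d)$ correction to the favorable sign. (The proposition's stated ``$=$'' is itself only an upper bound, as the exact value is $\tfrac12 - \tfrac{d+1}{2d}\cdot\tfrac{(1-\SCnoise)^2}{2}$; only the $\le$ is used downstream.) Once this is fixed, your version of the accounting — averaging a compound-BSC error against a coin-flip error — is arithmetically equivalent to the paper's version, which bundles the ``neither bit flipped'' events into a single good event $G$ with $\Pr[G] = \bigl(\tfrac12 + \tfrac{1}{2d}\bigr)(1-\SCnoise)^2$.
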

\begin{proof}
    Each subpopulation can be dealt with independently.
    We will analyze the error conditioned on $E_1$; the error of the optimal algorithm conditioned on $E_{>1}$ can be no greater (since the algorithm can ignore samples), establishing (2), and we have $\Pr[\text{error}\mid E_0]=\frac{1}{2}$ for all algorithms, establishing (3).
    
    Condition on $E_1$, so Alice has one relevant string.
    Let $\ell_A$ denote the length of Alice's string and $\ell_B$ denote the length of Bob's string; Bob wants to output $c_{\ell_B+1}$.
    If $\ell_A\ge \ell_B$, he should output the $\ell_B+1$-th bit of her input, and otherwise answer randomly.
    Define a ``good event'' $G$ that occurs when Alice's input is longer than Bob's and neither her nor his $\ell_B+1$-th bit was rerandomized.
    (Bob doesn't recieve his $\ell_B+1$-th bit; it's the label his output is compared to.)
    We have
    \begin{align*}
        \Pr[G] = \left(\frac{1}{2} + \frac{1}{2d} \right) (1-\SCnoise)(1-\SCnoise) \ge \frac{1}{2}\cdot (1-\SCnoise)^2.
    \end{align*}
    Conditioned on $G$ this algorithm has error 0, and conditioned on $\bar{G}$ \emph{any} algorithm has accuracy $\frac{1}{2}$.
    So we have $\Pr[\text{$\Aopt$ errs}\mid E_1] = \frac{1}{2} \left(1 - \Pr[G]\right) \le \frac{1}{2} - \frac{(1-\SCnoise)^2}{4}$.
    Furthermore, this fact implies no algorithm can do better when $E_1$ occurs.
\end{proof}

\begin{proposition}\label{prop:NSP_phis}
    $\phi_1(q_{NSP}) \le 0$ and  $\phi_2(q_{NSP}) = 0$.
\end{proposition}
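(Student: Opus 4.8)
The approach is to exploit the decomposition of \learn{q_{NSP}} into mutually independent per-subpopulation problems, already noted in Remark~\ref{remark:NSP_easy_analysis}: the reference strings $c_j$ are i.i.d., and the restriction $M_j$ of a model to inputs carrying identifier $j$ affects the test error only on test examples that also carry identifier $j$ (and, since $c_j$ is independent of all other subpopulations, the Bayes-optimal such $M_j$ depends only on the samples received from $j$). Writing the error of any learner as $\sum_j \Pr[\text{test from }j]\cdot\E\big[\Pr[M_j\text{ errs on a test from }j]\big]$, one sees that the learner $\Aopt$ which outputs, for each $j$, the Bayes-optimal next-bit predictor given the samples from $j$ (and the prior $q_{NSP}$) minimizes this sum. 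The point I would then stress is that every conditional error appearing in $\phi_1$ and $\phi_2$ — conditioned on $E_0$, $E_1$, $E_{>1}$, $\bar{E_1}$, or on ``there are exactly $k$ singletons'' — is merely a re-weighting of the same per-subpopulation quantities, because these events are determined by the per-subpopulation sample counts and the identity of the test subpopulation, not by the learner's choices. Hence this single $\Aopt$ simultaneously minimizes all of them.

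For $\phi_1(q_{NSP})\le 0$: for an arbitrary learner $\alg$ I would write $\Pr[\alg\text{ errs}\mid\bar{E_1}]$ as the convex combination $\big(\Pr[E_0]\Pr[\alg\text{ errs}\mid E_0]+\Pr[E_{>1}]\Pr[\alg\text{ errs}\mid E_{>1}]\big)/\Pr[\bar{E_1}]$. On $E_0$ the test subpopulation is absent from the sample and the next bit is independent of everything the learner saw, so \emph{every} learner errs with probability exactly $\frac12$; on $E_{>1}$, the previous paragraph gives $\Pr[\Aopt\text{ errs}\mid E_{>1}]\le\Pr[\alg\text{ errs}\mid E_{>1}]$. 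Adding the two inequalities gives $\Pr[\Aopt\text{ errs}\mid\bar{E_1}]\le\Pr[\alg\text{ errs}\mid\bar{E_1}]$ for every $\alg$, which is exactly $\phi_1(q_{NSP})\le 0$.

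For $\phi_2(q_{NSP})=0$: I would show each summand vanishes. Conditioned on $E_1$ and $K=k$, the test example comes from one of the $k$ singleton subpopulations — uniformly among them, by exchangeability of the i.i.d.\ subpopulations of Section~\ref{sec:generating_mixtures} and because the quantity $\tau_1$ is the same for every singleton — the learner holds exactly one relevant sample for it, and all remaining data comes from subpopulations independent of it and so is useless for this prediction. After discarding that irrelevant data, the problem $\Aopt$ faces is identical to \sing{k}{q_{NSP}}, on which $\Aopt$ (being per-subpopulation Bayes-optimal) attains the optimal error. Concretely, the ``one relevant sample'' analysis in the proof of Proposition~\ref{prop:SC_accuracy} shows that both $\Pr[\Aopt\text{ errs}\mid E_1,K=k]$ and $\inf_{\alg'}\Pr[\alg'\text{ errs on }\sing{k}{q_{NSP}}]$ equal $\frac12-\frac{(1-\SCnoise)^2}{4}$, independent of $k$, so $\phi_2(q_{NSP})=0$. (One direction is in any case automatic: the reduction of Lemma~\ref{lemma:central_reduction} turns $\Aopt$ into a \sing{k}{q_{NSP}} strategy with conditional error $\Pr[\Aopt\text{ errs}\mid E_1,K=k]$, so $\phi_2\ge 0$ always and it suffices to establish $\phi_2\le 0$.)

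The main obstacle is making the ``simultaneous optimality'' claim of the first paragraph fully rigorous — that one fixed learner minimizes the unconditional error and each conditional error at once. This is exactly where independence of subpopulations is used, and I would spell it out via the explicit per-subpopulation error decomposition above, verifying that each conditioning event is measurable with respect to the sample counts and the test subpopulation's label (so conditioning commutes with the per-subpopulation minimization). The secondary point to record carefully is the exchangeability step in the $\phi_2$ argument, i.e.\ that conditioned on $E_1$ and $K=k$ the test is uniform over the $k$ singletons; this follows immediately from the i.i.d.\ construction of the subpopulations together with the definition of $\tau_1$.
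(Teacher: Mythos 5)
Your proof is correct and takes essentially the same route as the paper: both hinge on the observation that, because the reference strings $c_j$ are i.i.d.\ and the test error decomposes additively over subpopulations, the Bayes-optimal learner is a product of per-subpopulation Bayes-optimal predictors and is therefore simultaneously optimal conditioned on any event determined by sample counts and the test subpopulation. The paper states this in one sentence and plugs in; you spell out the decomposition, additionally split $\bar{E_1}$ into $E_0$ and $E_{>1}$ and observe that every learner has error exactly $\tfrac12$ on $E_0$, and cite Proposition~\ref{prop:SC_accuracy} for the explicit matching values $\tfrac12-\tfrac{(1-\SCnoise)^2}{4}$ in the $\phi_2$ argument — all consistent with, and a useful elaboration of, the paper's terse proof.
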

\begin{proof}
    We use the fact that the optimal strategy treats subpopulations independently, and thus is optimal for all $k$ and no matter which subpopulation the test sample comes from.
    Therefore
    \begin{align*}
        \phi_1(q_{NSP}) &= \Pr[\bar{E_1}]\Bigl(\Pr[\text{$\Aopt$ errs on \learn{q_{NSP}}}\mid \bar{E_1}] \\
             &\hspace{2.5cm}- \Pr[\text{$\alg$ errs on \learn{q_{NSP}}}\mid \bar{E_1}]\Bigr) \\
            &\le 1\times 0.
    \end{align*}
    For every $k$, the probability that $\Aopt$ errs on \learn{q_{NSP}} conditioned on $E_1$ and $K=k$ is exactly the optimal error on \sing{k}{q_{NSP}}, so
    \begin{align*}
        \phi_2(q_{NSP}) &= \sum_{k=1}^{\npops} \Pr[K=k\mid E_1]\Bigl( \Pr[\Aopt \text{ errs on \learn{q_{NSP}}}\mid E_1, K=k] \\
        &\hspace{4cm} - \inf_{\alg'} \Pr[\alg' \text{ errs on \sing{k}{q_{NSP}}}]\Bigr) \\
        &= \sum_{k=1}^{\npops} \Pr[K=k\mid E_1] \times 0.
    \end{align*}
\end{proof}

\subsection{Lower Bound for Singletons Task}

\begin{lemma}\label{lemma:SC_singletons_bound}
    Any algorithm $\alg$ that is $\eps_k$-suboptimal on \sing{\nsing}{q_{NSP}} satifies
    \begin{align*}
        I(X'; M)&\ge k\cdot \frac{d+1}{2} \left(1 - h\left(\frac{2 \epsilon_k}{(1-\SCnoise)^2}\right)\right).
    \end{align*}
\end{lemma}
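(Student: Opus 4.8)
The plan is to reduce the singletons task $\sing{k}{q_{NSP}}$ to $k$ independent copies of the one-shot task $\sing{1}{q_{NSP}}$, and then apply the one-shot lower bound (the informal Lemma referenced as \ref{lemma:SC_one_shot_second}) to each copy. Since each subpopulation in $q_{NSP}$ is drawn i.i.d.\ and, crucially, the label in this task \emph{is} a noisy next bit (not the subpopulation index), a learner that sees the whole list $X'=(x_1,\dots,x_k)$ can treat the $k$ subpopulations entirely separately: Bob's test point $(z,y)\sim C_{j^*}$ only depends on reference string $c_{j^*}$, and the identifier $j^*$ is revealed to Bob, so Alice's information about $c_{j'}$ for $j'\neq j^*$ is irrelevant to predicting $y$.

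First I would set up a product/direct-sum argument. Fix an $\eps_k$-suboptimal protocol $\alg$ for $\sing{k}{q_{NSP}}$ with message $M=\alg(X')$. Write the per-index suboptimality: conditioned on $j^*=j$, $\alg$ errs with some probability $p_j$, and by optimality of the single-index Bayes predictor (Proposition~\ref{prop:SC_accuracy}, item~1, which says the optimal one-shot error is $\frac12-\frac{(1-\SCnoise)^2}{4}$ regardless of $k$), the quantity $\eps_k^{(j)} \defeq p_j - \bigl(\tfrac12 - \tfrac{(1-\SCnoise)^2}{4}\bigr) \ge 0$ is the suboptimality on copy $j$, and $\frac1k\sum_j \eps_k^{(j)} = \eps_k$. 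For each $j$, the message $M$ together with the public randomness and the knowledge $j^*=j$ yields a valid one-shot protocol for $\sing{1}{q_{NSP}}$ on subpopulation $j$, whose input is $x_j$; the one-shot lemma gives $I(x_j; M \mid j^*=j, \text{(other inputs averaged)}) \ge \frac{d+1}{2}\bigl(1 - h(2\eps_k^{(j)}/(1-\SCnoise)^2)\bigr)$. The subtlety is that in the one-shot bound Alice holds exactly one string; here, when we isolate copy $j$, the "extra" strings $x_{j'}$, $j'\neq j$, are independent of $x_j$ and can be simulated by Alice herself, so they do not help — this is exactly the kind of reduction used in Lemma~\ref{lemma:central_reduction}, and I would invoke the same simulation idea.

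Next I would combine across $j$ using the independence of the $x_j$'s and superadditivity of mutual information for independent sources: since $x_1,\dots,x_k$ are mutually independent, $I(X';M) = I(x_1,\dots,x_k; M) \ge \sum_{j=1}^k I(x_j; M \mid x_1,\dots,x_{j-1}) \ge \sum_{j=1}^k I(x_j;M)$ using that conditioning on independent variables only increases this particular information term (equivalently, $I(x_j;M) \le I(x_j; M, x_{<j})= I(x_j;M\mid x_{<j})$ by independence). Then plug in the per-copy bound and apply convexity of $t \mapsto h(2t/(1-\SCnoise)^2)$ (concavity of $h$, so $-h(\cdot)$ is convex; Jensen) to pull the average inside:
\begin{align*}
    I(X';M) \ge \sum_{j=1}^k \frac{d+1}{2}\left(1 - h\!\left(\frac{2\eps_k^{(j)}}{(1-\SCnoise)^2}\right)\right)
    \ge k\cdot\frac{d+1}{2}\left(1 - h\!\left(\frac{2\eps_k}{(1-\SCnoise)^2}\right)\right),
\end{align*}
which is the claimed bound. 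The main obstacle is making the direct-sum reduction fully rigorous: I must be careful that averaging a single message $M$ over the $k$ embedded one-shot instances is legitimate (it is, because the one-shot lower bound holds for \emph{any} protocol, including one whose transcript also encodes information about unrelated independent strings), and that the per-copy suboptimalities genuinely average to $\eps_k$ — this uses that the one-shot Bayes error is $k$-independent, which Proposition~\ref{prop:SC_accuracy} gives. Once those two points are nailed down, the rest is the elementary Jensen step above. (Alternatively, and perhaps more cleanly for a consistent presentation, one can bypass the direct sum and prove the $k$-copy statement directly by the same Augmented-Indexing-style argument used for the one-shot case, applied coordinate-wise; but the reduction route reuses machinery already in the paper.)
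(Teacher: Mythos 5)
Your proposal is correct and matches the paper's own proof of Lemma~\ref{lemma:SC_singletons_bound}: the paper likewise defines per-subpopulation suboptimalities averaging to $\eps_k$, reduces to $k$ one-shot instances via the synthetic-dataset simulation of Lemma~\ref{lemma:central_reduction}, sums the one-shot bounds using superadditivity for independent inputs, and finishes with Jensen applied to the concavity of $h$. Your write-up is a more detailed and careful elaboration of the same argument, including correctly noting why $\eps_k^{(j)}\ge 0$ and why conditioning on $j^*=j$ legitimately collapses to a one-shot protocol with the same information cost.
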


Before proving this lemma, we provide a mutual information lower bound for algorithms solving just one instance of Next-Symbol Prediction.
The proof extends one appearing in \cite{bar2004information,feldman2014sample} for a communication complexity task called Augmented Index.

\begin{lemma}\label{lemma:SC_one_shot_second}
    Any algorithm $\alg$ that is $\eps_1$-suboptimal on ``one-shot'' \sing{1}{q_{NSP}} satisfies
    \begin{align*}
        I(X; M)&\ge \frac{d+1}{2} \left(1 - h\left(\frac{2 \epsilon_1}{(1-\SCnoise)^2}\right)\right).
    \end{align*}
\end{lemma}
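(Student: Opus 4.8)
The plan is to set up the one-shot game explicitly and reduce it to an Augmented-Indexing-style argument. In \sing{1}{q_{NSP}}, Alice holds $X=(c(1{:}L_A),\ell_A)$ --- a noisy prefix of a uniformly random reference string $c\in\bit{d}$ of a uniformly random length $L_A\in\{0,\dots,d-1\}$ --- together with the implicit subpopulation identifier (which Bob also knows, so it carries no information). Bob independently draws a fresh length $L_B\in\{0,\dots,d-1\}$, receives $Z=BSC_{\delta/2}(c(1{:}L_B))$, sends nothing (he is the receiver), and must predict $Y=BSC_{\delta/2}(c(L_B+1))$ from $(Z,M)$ where $M=\alg(X)$. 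By Proposition~\ref{prop:SC_accuracy}(1) the optimal error conditioned on this event is $\tfrac12-\tfrac{(1-\delta)^2}{4}$, achieved by the ``good event'' strategy, so an $\eps_1$-suboptimal learner errs with probability at most $\tfrac12-\tfrac{(1-\delta)^2}{4}+\eps_1$.

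First I would argue that Bob's ability to predict $Y$ forces him to effectively recover bit $c(L_B+1)$ of Alice's reference string whenever $L_A\ge L_B$. Condition on $L_A=a$, $L_B=b$ with $b<a$ (probability roughly $\tfrac12$ over the two independent lengths; the $\tfrac{1}{2d}$ correction is where the $\log d$-type slack enters). The label $Y$ equals $c(b+1)$ except with probability $\delta/2$, independently of everything Bob sees; so any predictor $\hat Y(Z,M)$ that beats random by a margin must correlate with $c(b+1)$. Quantitatively, $\Pr[\hat Y=Y]\le \tfrac12+\tfrac{1-\delta}{2}\cdot(\Pr[\hat Y=c(b+1)]-\tfrac12)\cdot 2$, i.e.\ from the suboptimality bound one extracts $\Pr[\hat Y = c(b+1)\mid L_A=a,L_B=b]\ge 1-\tfrac{2\eps_1}{(1-\delta)^2}$ after averaging out the $(1-\delta)$ noise factors (this is the same bookkeeping as in Proposition~\ref{prop:SC_accuracy}). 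Since $Z$ depends only on $c(1{:}b)$ and $M=\alg(c(1{:}a),a)$, Bob is predicting bit $b+1$ of $c$ from a message about the string plus a noisy copy of the earlier bits --- exactly the Augmented Indexing structure, where the ``index'' $b+1$ is chosen uniformly in $\{1,\dots,a\}$.

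Then I would run the standard information-theoretic Augmented Indexing argument of \cite{bar2004information,feldman2014sample}. Work conditioned on $L_A=a$; write $c(1{:}a)=(c_1,\dots,c_a)$, which is uniform on $\bit{a}$. Let $B$ be uniform on $\{1,\dots,a\}$ playing the role of $L_B+1$. The key step is a chain-rule decomposition: $I(c(1{:}a);M\mid L_A=a)=\sum_{i=1}^{a} I(c_i;M\mid c_{<i}, L_A=a)\ge \sum_{i=1}^a I(c_i;M\mid c_{<i})$, and for each $i$ Fano's inequality (plus the fact that Bob can simulate a copy of $Z$ from $c_{<i}=c(1{:}i-1)$ by himself, since $Z$ is just those bits through a BSC) gives $I(c_i;M\mid c_{<i})\ge 1-h(p_i)$ where $p_i$ is Bob's error probability in recovering $c_i$. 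Averaging over $i=B$ and using the recovery bound above yields, by concavity of $h$, $\tfrac1a\sum_i (1-h(p_i)) \ge 1-h\!\big(\tfrac1a\sum_i p_i\big)\ge 1-h\!\big(\tfrac{2\eps_1}{(1-\delta)^2}\big)$, hence $I(c(1{:}a);M\mid L_A=a)\ge a\big(1-h(\tfrac{2\eps_1}{(1-\delta)^2})\big)$. Finally average over $a\in\{0,\dots,d-1\}$, note $\E[a]=\tfrac{d-1}{2}$ and $X$ also carries its label bit $c(a+1)$ so the relevant length is $\tfrac{d+1}{2}$ in expectation, and use $I(X;M)\ge I(c(1{:}L_A), \text{label}; M\mid L_A)\ge \E_a[\,a+1\,]\big(1-h(\tfrac{2\eps_1}{(1-\delta)^2})\big)=\tfrac{d+1}{2}\big(1-h(\tfrac{2\eps_1}{(1-\delta)^2})\big)$, as claimed.

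The main obstacle I anticipate is making the conditioning on the two independent lengths clean: the reduction to Augmented Indexing needs $L_B+1\le L_A$, which holds only with probability slightly above $\tfrac12$, and I must carefully track how the suboptimality slack $\eps_1$ propagates through this conditioning (and through the independent BSC noise on both Alice's stored bit and Bob's label) so that the final ``recovery error'' is exactly $\tfrac{2\eps_1}{(1-\delta)^2}$ rather than some looser constant. A secondary subtlety is justifying $I(c_i;M\mid c_{<i})\ge 1-h(p_i)$: Bob's predictor uses $(Z,M)$, not $(c_{<i},M)$, so I need him to be able to reconstruct (a sample of) $Z$ from $c_{<i}$ alone --- which is fine here because $Z$ is obtained from $c(1{:}L_B)$ by an independent BSC and $L_B$ is public to the analysis, so conditioning on $c_{<i}$ lets him re-simulate $Z$ and then apply Bob's strategy, making the data-processing/Fano bound go through.
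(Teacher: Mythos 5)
There is a genuine gap, and it is not one of the subtleties you flagged; it is the choice of what to apply the chain rule and Fano to. You run the Augmented-Indexing argument on the \emph{clean reference string} $c$, lower-bounding $I(c(1{:}L_A+1);M\mid L_A)$ and then invoking $I(X;M)\ge I(c;M)$. The DPI direction there is fine, but the claimed lower bound on $I(c;M)$ is impossible once $\SCnoise>0$. Since $M=\alg(X)$ and $X$ is produced from $c$ by a $BSC_{\SCnoise/2}$, we have, conditioned on $L_A=a$, the Markov chain $c(1{:}a+1)-X-M$, hence
\begin{align*}
I\bigl(c(1{:}a+1);M\mid L_A=a\bigr)\le I\bigl(c(1{:}a+1);X\mid L_A=a\bigr)=(a+1)\bigl(1-h(\SCnoise/2)\bigr).
\end{align*}
Averaging over $a$ caps your quantity at $\frac{d+1}{2}\bigl(1-h(\SCnoise/2)\bigr)$, whereas the lemma's target $\frac{d+1}{2}\bigl(1-h\bigl(\frac{2\eps_1}{(1-\SCnoise)^2}\bigr)\bigr)$ exceeds this ceiling whenever $\frac{2\eps_1}{(1-\SCnoise)^2}<\SCnoise/2$, i.e., for all sufficiently small $\eps_1$ when $\SCnoise>0$. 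Concretely, the step that fails is the claim that Bob's error $p_i$ in recovering $c_i$ is at most $\frac{2\eps_1}{(1-\SCnoise)^2}$: even if $M$ equals $X$ exactly, $X_i$ is a $BSC_{\SCnoise/2}$ corruption of $c_i$, so $p_i\ge \SCnoise/2$ no matter how small $\eps_1$ is. The bound $\frac{2\eps_1}{(1-\SCnoise)^2}$ is on the error \emph{conditioned on} the good event $G$ (no rerandomization of the relevant bits); you cannot feed that conditional error into a Fano bound for the unconditional $I(c_i;M\mid c_{<i})$, and you cannot insert $G$ into that mutual information for free, because $G$ is \emph{not} independent of $(c,M)$ given $L_A$ (the rerandomization coins that define $G$ correlate with $c$ once $X$ is fixed).

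The paper avoids all of this by applying the chain rule and Fano to \emph{Alice's noisy input} $X$ rather than to $c$. Conditioned on $G$, the target bit $X_{\ell_B+1}$ equals both $c(\ell_B+1)$ and Bob's label $Y$, so Bob's predictor recovers $X_{\ell_B+1}$ with conditional error $\gamma_\ell$; crucially, $G$ \emph{is} independent of $(X,M)$ given $L_A=\ell$ (the marginal of $X$ is uniform regardless of the coin flips), so $H(X\mid M,L_A=\ell)=H(X\mid M,G,L_A=\ell)$ and the $G$-conditioning is free. Working with $X$ also accounts for the full entropy $H(X\mid L_A=a)=a+1$ rather than the shrunken $I(c;X)=(a+1)(1-h(\SCnoise/2))$, which is exactly what lets the bound approach $\frac{d+1}{2}$ as $\eps_1\to 0$. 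Your argument is correct in the special case $\SCnoise=0$, where $X$ and the relevant prefix of $c$ coincide, but it does not extend to the noisy case the lemma covers.
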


\ifSTOC
    \begin{proof}
\else
    \begin{proof}[Proof of Lemma~\ref{lemma:SC_one_shot_second}]
\fi
    Let random variable $L_A$ be the length of Alice's input.
    Since we know $H(X)=H(X,L_A) = H(L_A)+H(X\mid L_A) = \log d + \frac{d+1}
    {2}$, to lower bound the mutual information we must provide an upper bound on $H(X\mid M)$.
    
    Define $G$ to be the ``good event'' that (i) Alice's input is at least as long as Bob's and (ii) the relevant bits were not rerandomized.
    $G$ happens with probability $\frac{1}{2}\cdot\frac{d+1}{d}(1-\delta)^2$.
    Conditioned on $G$, the optimal algorithm is correct and, conditioned on $\bar{G}$, \textit{any} algorithm has accuracy $\frac{1}{2}$.
    The main idea of the proof is that, conditioned on $G$, ``correctness'' and ``outputting Alice's data'' are the same event.
    
    We change the additive error $\epsilon_1$ into a multiplicative error $\gamma$.
    Let $\gamma \defeq \Pr[\text{$\alg$ errs} \mid G]$.
    We can write
    \begin{align*}
        \Pr[\text{$\alg$ errs}] &= \Pr[\text{$\alg$ errs} \mid G] \Pr[G] + \Pr[\text{$\alg$ errs} \mid \bar{G}] (1 - \Pr[G]) \\
        &= \frac{1}{2} - \frac{\Pr[G]}{2}\left( 1 - 2\gamma\right),
    \end{align*}
    and $\Pr[\text{$\alg_{OPT}$ errs}] = \frac{1}{2} - \frac{\Pr[G]}{2}$.
    By the definition of suboptimality we have $\epsilon_1 = \Pr[\text{$\alg$ errs}] -\Pr[\text{$\alg_{OPT}$ errs}] = \Pr[G] \cdot \gamma$.
    Since $\Pr[G]\ge \frac{1}{2}\cdot(1-\SCnoise)^2$, we have $\gamma \le \frac{2\epsilon_1}{(1-\SCnoise)^2}$, which implies $\gamma\le \frac{1}{2}$.
    
    Let random variables $X$ and $Z$ denote Alice and Bob's inputs, respectively, and write $L_A$ and $L_B$ for the lengths of their inputs.
    Since $L_A$ is fixed given $X$, we can apply the chain rule for entropy and bound
    \begin{align*}
        H(X \mid M) = H(X,L_A\mid M) &= H(X \mid M, L_A) + H(L_A\mid M) \\
            &\le \E_{\ell}[H(X \mid M, L_A=\ell)] + \log d.
    \end{align*}
    We will fix $\ell$ and bound $H(X \mid M, L_A=\ell)$.
    Define $\gamma_\ell$ to be $\Pr[\text{$\alg$ errs} \mid G, L_A=\ell]$.
    Assume $\gamma_{\ell} \le \frac{1}{2}$ without loss of generality; for any algorithm with a $\gamma_{\ell}> \frac{1}{2}$ there is one with the same information cost that achieves lower error by reversing the decision.
    
    Recall that $G$ implies neither Alice nor Bob's $L_B+1$-th bits are rerandomized, so Bob is correct if and only if he outputs Alice's $L_B+1$-th bit.
    We now show that, if Bob can output Alice's bits, her message must contain a lot of  information about her input.
    Crucially, conditioned on $L_A=\ell$, the good event $G$ is independent of Alice's input $X$, since $L_B\perp L_A$ and Alice's string is uniformly random whether or not any bit is flipped.
    So $H(X\mid M, L_A=\ell) = H(X\mid M, G, L_A=\ell)$.
    Below, in \eqref{eq:NSP_pf_1}, we apply the chain rule for entropy over Alice's bits, including her label.
    In \eqref{eq:NSP_pf_3} we replace $X_1^{\ell_B}$ with $Z_1^{\ell_B}$, which is a noisy version and thus can only increase uncertainty about $X_{\ell_B+1}$.
    \begin{align}
            H(X\mid M, G, L_A=\ell) &= \sum_{\ell_B=0}^{\ell} H(X_{\ell_B+1} \mid X_1^{\ell_B}, M, G, L_A=\ell) \label{eq:NSP_pf_1}\\
                &\le \sum_{\ell_B=0}^{\ell} H(X_{\ell_B+1} \mid Z_1^{\ell_B}, M, G, L_A=\ell). \label{eq:NSP_pf_3}
    \end{align}
    Now we relate the index $\ell_B$ to the random variable $L_B$, the length of Bob's input, and observe that $\Pr[L_B=\ell_B\mid G,L_A=\ell] = \frac{1}{\ell+1}$, since event $G$ requires that $L_A\ge L_B$.
    So we have
    \begin{align*}
        H(X\mid M, L_A=\ell)
            &\le (\ell+1) \sum_{\ell_B=0}^{\ell-1} \bigl(\Pr[L_B=\ell_B\mid G,L_A=\ell] \\
            &\hspace{1.7cm} \times  H(X_{\ell_B+1} \mid Z_1^{\ell_B},M,G,L_A=\ell)\bigr)\\
            &= (\ell+1) \cdot H(X_{L_B+1}\mid Z, M, G, L_A=\ell) \\
            &\le (\ell+1) \cdot h(\gamma_{\ell}),
    \end{align*}
    applying Fano's inequality and using the assumption that $\gamma_{\ell}\le \frac{1}{2}$, since this is exactly Bob's task and, conditioned on $G$ and $L_A=\ell$, he fails 
    with probability at most $\gamma_{\ell}$.
    
    \ifSTOC
    By rewriting the expectation, we can apply Jensen's inequality and push the expectation inside the binary entropy function, getting
    \else
    We use the extension of Jensen's inequality in Lemma \ref{lemma:modified_jensen} to push the expectation inside the binary entropy function and get
    \fi
    \begin{align*}
        H(X\mid M, L_A)
            &\le \E[\ell+1] \cdot h\left(\frac{\E[(\ell+1)\gamma_{\ell}]}{\E[\ell+1]} \right) \\
            &= \frac{d+1}{2} \cdot h\left( \frac{2\cdot \E[(\ell+1)\gamma_{\ell}]}{d+1} \right) \\
            & =\frac{d+1}{2}\cdot h(\gamma).
    \end{align*}
    The last equality follows from the facts that $\gamma\cdot \Pr[G]=\E_\ell\left[\gamma_{\ell} \Pr[G\mid L_A=\ell]\right]$,
    $\Pr[G] = \frac{d+1}{2d}\cdot (1-\SCnoise)^2$, and $\Pr[G\mid L_A = \ell] = \frac{\ell+1}{d}\cdot (1-\SCnoise)^2$.
    Using $\gamma\le \frac{2\epsilon_1}{(1-\SCnoise)^2}$ gives us
    \begin{equation*}
        H(X\mid M) \le \frac{d+1}{2}\cdot h\left(\frac{2\eps_1}{(1-\delta)^2}\right) + \log d
    \end{equation*}
    which, combined with $H(X) = \frac{d+1}{2} + \log d$, finishes the proof.
\end{proof}

We can now prove the Next-Symbol Prediction lower bound for \sing{\nsing}{q_{NSP}}.
\begin{proof}[Proof of Lemma \ref{lemma:SC_singletons_bound}]
    Suppose algorithm $\alg$ is $\eps_k$-suboptimal on \sing{\nsing}{q_{NSP}}.
    Then, for each subpopulation $i\in [k]$, define its error above optimal $\epsilon_k^i$ so that $\E[\epsilon_k^i]=\epsilon_k$.
    By the same synthetic-dataset reduction used to prove Lemma \ref{lemma:central_reduction}, $\alg$ can be turned into $k$ algorithms $\{\alg_i\}$, each solving \sing{1}{q_{NSP}} to suboptimality $\{\epsilon_k^i\}$, and with mutual information 
    \begin{align*}
        I(\Xone; \alg(\Xone)) &\ge \sum_{i=1}^k I(X_i; \alg_i(X_i)) \\
            &\ge \sum_{i=1}^k \frac{d+1}{2} \left(1 - h\left(\frac{2\epsilon_k^i}{(1-\SCnoise)^2}\right)\right) \\
            &\ge k\cdot \frac{d+1}{2} \left(1 - h\left(\frac{2 \epsilon}{(1-\SCnoise)^2}\right)\right),
    \end{align*}
    writing the sum as an expectation over uniform $i\in [k]$ and applying Jensen's inequality.
\end{proof}

\subsection{Completing the Proof}\label{sec:NSP_completing}

\begin{proof}[Proof of Theorem \ref{thm:NSP_main}]
    By Lemma~\ref{lemma:NSP_small_info_P}, we get the lower bound on $H(\Xone\mid P)$ claimed in (1).

    For (2) we assume an $\eps$-suboptimal algorithm $\alg$ for \learn{q_{NSP}}.
    Our lower bound on \sing{k}{q_{NSP}} from Lemma \ref{lemma:SC_singletons_bound} gives us a mutual information lower bound of 
    \begin{align}
        f_k(\epsilon_k)  = k\cdot g(\epsilon_k) = k\cdot \frac{d+1}{2} \left(1 - h\left(\frac{2 \epsilon}{(1-\SCnoise)^2}\right)\right).
    \end{align}
    The function $1-h(\cdot)$ is convex and strictly decreasing for arguments less than $\frac{1}{2}$ so we have, upper bounding via Proposition \ref{prop:NSP_phis} both $\phi_1(q_{NSP})$ and $\phi_2(q_{NSP})$ with 0,
    \begin{align}
        I(\Xone; M\mid \Nsing) &\ge \mu_1 n \cdot \frac{d+1}{2} \left(1 - h\left(\frac{2 \epsilon}{\tau_1 \mu_1 n(1-\SCnoise)^2}\right)\right) .
    \end{align}
    In Lemma~\ref{lemma:NSP_small_info_P} we proved 
    $I(\Xone;P\mid K)\le \mu_1 n \cdot \frac{d+1}{2} \cdot (1 - h(\SCnoise/2)) + \mu_1 n \log N$, 
    so using the calculations in Section~\ref{sec:blueprint} we have
    \begin{align}
        I(X;M\mid P) &\ge I(\Xone; M\mid K) - I(\Xone; P\mid K) \\
            &\ge \mu_1 n \cdot \frac{d+1}{2} \left( h(\SCnoise/2) -
            h\left(\frac{2 \epsilon}{\tau_1 \mu_1 n (1-\SCnoise)^2}\right)\right) -\mu_1 n\log N.
    \end{align}
\end{proof}

\section{Hypercube Cluster Labeling}\label{sec:GHP}

\subsection{Task Description and Main Result}

In this task, the data samples are binary strings labeled with their subpopulation index, so $\inputspace\times\mc{Y}= \{0,1\}^d\times [\npops]$.
Unlike Next-Symbol Prediction, this distribution is ``noiseless:'' there is a small set of relevant features which are deterministically set, and only the irrelevant features are picked at random.

\begin{definition}[$\qHC$ Component Distribution]\label{def:qHC}
    For each subpopulation $j$, we define its distribution over labeled examples via a small number of fixed positions.
    For each cluster $j\in [\npops]$, $\qHC$ generates fixed indices as
    \begin{itemize}
        \item For each index $i\in [d]$, independently flip a coin that comes up heads w.p. $\rho$.
        \item If heads:
        \begin{itemize}
            \item Add $i$ to $\mc{I}_j$, the indices of fixed features.
            \item Flip a fair coin to set the value $b_j(i)$, the fixed feature value at that location.
        \end{itemize}
    \end{itemize}
    Samples from the subpopulation are uniform over the hypercube defined by the \emph{unfixed} indices:
    to generate a data point from subpopulation $j$, we let
    \begin{align*}
        z(i) = \begin{cases} b_j(i) & \text{if $i\in \mc{I}_j$} \\
            \mathrm{Bernoulli}(1/2) &\text{otherwise} \end{cases}.
    \end{align*}
    This point's label is $j$.
    \terminalbox
\end{definition}

Observe that, for any index $i\in[d]$ and strings $z_1,z_2$ from the same cluster, we have $\Pr[z_1(i) = z_2(i)] = \frac{1+\rho}{2}$.
This is identical to producing $z_2$ by running $z_1$ through a binary symmetric channel with parameter $\frac{1-\rho}{2}$, a fact which will be crucial in our proofs.
We set $\rho$ carefully to ensure that the task is difficult.

The distribution $q_{HC}$ defines a learning problem \learnlong{q_{HC}} specified by $d$ and $\rho$ (which determine $q_{HC}$), and $N,n$ and $\pi$ (which specify the mixture structure). The simplest instance of our main result is for the case where there are exactly $n$ subpopulations of equal weight.

\begin{corollary}\label{cor:HC_corollary_unif}
    Recall that  $\learnlong{q_{HC}}$ (Definitions~\ref{def:learn} and \ref{def:qHC}) is the hypercube cluster labeling task defined by parameters $N, n, \pi, d$ and $\rho$.
    Let $N=n$ and let $\pi$ be the single-item list $(1/n)$, so that the mixture over the $n$ clusters is uniform. Let $d$ grow with $n$ such that  $d\ge n^{0.1}$, and set $\rho = \sqrt{\frac{2\ln a \mu_1 n - \ln \ln n}{d}}$ for a constant $a>1$.
    Then
    \begin{enumerate}
        \item Let $S\subseteq [n]$ be the indices of singleton data points.     
        $H(X_S\mid P) = \Omega(nd)$.
        \item Any algorithm $\alg$ that is $\eps$-suboptimal on \learnlong{q_{HC}} for $\eps\le 0.1$ satisfies
        \begin{equation*}
            I(X;M\mid P) \ge \Omega( H(X_S\mid P)).
        \end{equation*}
    \end{enumerate}
\end{corollary}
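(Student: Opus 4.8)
The plan is to prove Corollary~\ref{cor:HC_corollary_unif} by instantiating the blueprint of Section~\ref{sec:blueprint}, in close parallel with the proof of Theorem~\ref{thm:NSP_main}, but feeding in the hypercube singletons bound (Lemma~\ref{lemma:informal_HC_sing}/Lemma~\ref{cor:HC_singletons_bound}) in place of the next-symbol one. Part~(1) is a direct entropy computation: conditioned on $P$ (hence on all $(\mc{I}_j,b_j)$), on $K$, and on the list $\vec J$ of subpopulations that produced the singletons, each singleton is fixed on $\mc{I}_j$ and uniform on the remaining $d-|\mc{I}_j|$ coordinates, so $H(\Xone\mid P)\ge H(\Xone\mid K,\vec J,P)=\E[\sum_j (d-|\mc{I}_j|)]=\mu_1 n(1-\rho)d$. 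Since $d\ge n^{0.1}$ forces $\rho^2=\Theta(\log n/d)=o(1)$, and $\mu_1\to 1/e$ in the uniform case (Example~\ref{ex:uniform}), this is $\Omega(nd)$.

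For part~(2), the chain-rule argument of Section~\ref{sec:blueprint} reduces the goal to (a) an upper bound on $I(\Xone;P\mid K)$ and (b) a lower bound on $I(\Xone;M\mid K)$. For (a), I would repeat the computation of Lemma~\ref{lemma:NSP_small_info_P}: given $K$ and $\vec J$, the only dependence of $\Xone$ on $P$ lies in the fixed coordinates, so $I(\Xone;P\mid K)\le \mu_1 n(\rho d+\log N)$; using $d\ge n^{0.1}\Rightarrow \log N\le 10\log d$ and $\rho d=\Theta(\sqrt{d\log d})$, this is $o(nd)$, hence $o(H(\Xone\mid P))$. For (b), I would apply the central reduction (Lemma~\ref{lemma:central_reduction}) with $f_k(\eps_k)=k\cdot g(\eps_k)$, where $g$ is the convex, nonincreasing per-instance lower bound $g(\eps_k)=\Omega(d)$ supplied by the hypercube singletons lemma for $\eps_k$ below an absolute constant. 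Since $\tau_1=1/n$ and $\mu_1=\Theta(1)$ in the uniform case, the quantity fed into $g$ collapses to $(\eps+\phi_1(q_{HC})+\phi_2(q_{HC}))/\mu_1$, a constant; provided the hypotheses keep it below $g$'s threshold, we obtain $I(\Xone;M\mid K)\ge \mu_1 n\cdot g(\cdot)=\Omega(nd)$, and then $I(X;M\mid P)\ge I(\Xone;M\mid K)-I(\Xone;P\mid K)=\Omega(nd)=\Omega(H(\Xone\mid P))$.

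Two items need real work. The first is the mismatch between the fixed $\rho$ of the learning task, calibrated to $\E[K]=\mu_1 n$, and the $k$-dependent value $\sqrt{(2\ln ak-\ln\ln ak)/d}$ that the singletons lemma wants. I would resolve this with concentration: $K$ is a sum of indicators, so $K\in[\tfrac{1}{2}\mu_1 n,2\mu_1 n]$ with all but negligible probability, and over that range the ideal $\rho$ differs from the chosen one only by lower-order terms that the constant $a$ absorbs; for $k$ outside the range one uses $f_k\ge 0$ and restricts the convexity step to the good event. The second, and the main obstacle specific to this task, is bounding $\phi_1(q_{HC})$ and $\phi_2(q_{HC})$: unlike next-symbol prediction (Proposition~\ref{prop:NSP_phis}), here the label \emph{is} the subpopulation index and the Bayes-optimal classifier must reason jointly about all $n$ examples, so neither term is identically zero. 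I would bound both by $o(1)$ via an error analysis of the optimal nearest-neighbor-style classifier: with $\rho=\Theta(\sqrt{\log n/d})$, the Hamming distance from a point to another sample of its own subpopulation is $\frac{1-\rho}{2}d$ while to any other sample it is $\approx d/2$, a gap of $\Theta(\sqrt{d\log n})$ that dwarfs the $\Theta(\sqrt d)$ fluctuations by a $\Theta(\sqrt{\log n})$ factor—enough for a union bound over all $n$ subpopulations. Consequently the optimal learner identifies subpopulations correctly with probability $1-o(1)$, which gives $\phi_2=o(1)$ (conditioning on $E_1,K=k$ turns the full task into $\sing{k}{q_{HC}}$ up to $o(1)$) and $\phi_1=o(1)$ (on $E_0$ no algorithm beats the trivial error $1-\Theta(1/n)$, so the gap between $\Aopt$ and $\alg$ there is $o(1)$, while on $E_{>1}$ the optimal error is already $o(1)$). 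Assembling part~(1), the bound on $I(\Xone;P\mid K)$, the $o(1)$ bounds on $\phi_1,\phi_2$, and the singletons lemma through Lemma~\ref{lemma:central_reduction} completes the proof; as in the next-symbol case, the general statement for arbitrary $\pi$ follows by leaving $\mu_1,\tau_1$ symbolic.
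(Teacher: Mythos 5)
Your proposal follows the paper's blueprint and gets the scaffolding right (part~(1), the chain-rule decomposition, the upper bound on $I(X_S;P\mid K)$, and invoking the central reduction with the hypercube singletons lemma), but there is a substantive gap in your treatment of $\phi_1$ and $\phi_2$, which is exactly where the paper spends most of its effort in this section.

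You assert that with $\rho=\Theta(\sqrt{\log n/d})$ the gap $\Theta(\sqrt{d\log n})$ ``dwarfs'' the $\Theta(\sqrt d)$ fluctuations and hence a union bound over $n$ subpopulations gives classification error $o(1)$ for the optimal learner. This is false for the $\rho$ in the corollary. The parameter $\rho = \sqrt{\tfrac{2\ln a\mu_1 n - \ln\ln n}{d}}$ is deliberately calibrated so that the optimal algorithm for $\sing{k}{q_{HC}}$ has error $c_a + o(1)$ for a constant $c_a$ bounded away from $0$ (Proposition~\ref{prop:sing_constant_error}): the probability that a single spurious sample lands within the expected distance of the true one is $\Theta(1/n)$, so the union bound over $n-1$ rivals yields a constant, not $o(1)$. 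Your premise that ``the optimal learner identifies subpopulations correctly with probability $1-o(1)$'' therefore does not hold in this regime, and the conclusions $\phi_1,\phi_2=o(1)$ do not follow from it. The paper obtains $\phi_1,\phi_2 = (\mu_1 n)^{-\alpha}$ by a different route (Proposition~\ref{prop:HC_phis}): it introduces a baseline $A^*$ that treats multi-representative subpopulations specially, and bounds the probability of two \emph{dependent} samples from a wrong subpopulation both being close to the test point using the Small-Set Expansion theorem (Lemmas~\ref{lemma:two_spurious_samples} and the SSE lemma). That argument controls only the event that $A^*$ falls back to the nearest-singleton rule, not that the final classification is correct; this is the distinction your proposal misses.

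A secondary, lesser issue is your handling of the mismatch between the fixed $\rho$ of the learning task and the $k$-dependent $\rho$ the singletons lemma wants. Absorbing the difference ``into the constant $a$'' is not a precise step: the singletons lemma (Corollary~\ref{cor:HC_singletons_bound}) is stated for $\rho$ exactly calibrated to $k$, and it is not obvious that it degrades gracefully for mis-set $\rho$. The paper instead keeps $\rho$ fixed and proves Lemma~\ref{lemma:other_sizes}, a black-box reduction from $\sing{k\pm t}{q_{HC}}$ to $\sing{k}{q_{HC}}$ at the same $\rho$, paying a $\Theta(t/k)$ error and $|t|d$ information penalty; this is then combined with concentration of $K$ as you suggest. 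If you replace your $\phi$-bound with the paper's baseline-plus-SSE argument and your $\rho$-mismatch handwave with the lemma:other\_sizes reduction, the rest of your outline goes through.
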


More generally, we prove the following theorem, which applies to any subpopulation/prior setup, as defined in terms of $N, n$, and $\pi$.
This form also exposes leading constants, low-order terms, and how the $\eps$ suboptimality affects the lower bound. 

\begin{theorem}\label{thm:HC_main}
    Assume $n$ is sufficiently large and $d\ge n^{0.1}$.
    Set $\rho = \sqrt{\frac{2\ln a\mu_1 n - \ln \ln n}{d}}$ for any constant $a>1$, where $\mu_1$ depends on $N, n$, and $\pi$ as defined in Equation~\eqref{eq:tau1_def}.
    Consider the problem \learnlong{q_{HC}}.
    \begin{enumerate}
        \item Let $S\subseteq [n]$ be the indices of singleton data points. 
            $H(X_S\mid P) \ge \mu_1 (1-\rho) n d$.
            
        \item 
        There exist constant $c_a,\alpha>0$ such that, for any $\eps$-suboptimal algorithm $\alg$ solving \learnlong{\qHC}, we have
        \begin{equation*}
            I(X;M\mid P)\ge   
            \frac{\left(1-c_a - \frac{\eps + 2(\mu_1 n)^{-\alpha}}{\tau_1\mu_1 n} - o(1)\right) }{2\ln 2 + o(1)}\cdot \mu_1 n d -  2\E[|k-\mu_1 n| ]d - n d\rho -  n\log N.
        \end{equation*}
        The $o(1)$ expressions hide terms that are $O\left(\log^{-1}n\right)$ and depend only on $n$ and $a$.

    \end{enumerate}
\end{theorem}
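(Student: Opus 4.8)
The plan is to follow the blueprint of Section~\ref{sec:blueprint}: reduce to the singletons-only task using the central reduction (Lemma~\ref{lemma:central_reduction}) and assemble $I(X;M\mid P)\ge I(X_S;M\mid K)-I(X_S;P\mid K)$, lower bounding the first term through the reduction and upper bounding the second directly. Part~(1) of the theorem is a direct entropy computation: conditioned on $P$ (equivalently on all the fixed-index sets $\mathcal I_j$ with values $b_j$) and on the identities $\vec J$ and number $K$ of singleton subpopulations, each singleton example is uniform on its $d-|\mathcal I_j|$ unfixed coordinates, so $H(X_S\mid P)\ge H(X_S\mid P,\vec J,K)=\E\big[\sum_{j\in S}(d-|\mathcal I_j|)\big]$. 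Since the event ``$j$ is a singleton'' depends only on the mixture assignment and is independent of $C_j=(\mathcal I_j,b_j)$, with $\E[|\mathcal I_j|]=\rho d$ and $\E[|S|]=\mu_1 n$ this equals $\mu_1(1-\rho)nd$. For the second term, following Lemma~\ref{lemma:NSP_small_info_P} and conditioning additionally on the singletons' identities $\vec J$, $I(X_S;P\mid K)\le\E[K]\log N+\E[K]\cdot I(X_1;C_1)$, and for a single singleton $I(X_1;C_1)=H(X_1)-H(X_1\mid C_1)=d-(d-\E|\mathcal I_1|)=\rho d$, so $I(X_S;P\mid K)\le\mu_1 nd\rho+\mu_1 n\log N\le nd\rho+n\log N$.

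To invoke Lemma~\ref{lemma:central_reduction} I still need (i) an analysis of the optimal error and of the task-specific correction terms $\phi_1(q_{HC}),\phi_2(q_{HC})$, and (ii) the singletons lower bound. For (i), the Bayes-optimal learner for \learnlong{q_{HC}} acts on each subpopulation's empirical fixed set $\hat{\mathcal I}_j$ by a regularized nearest-neighbor (MAP) rule that trades the Hamming distance to each $\hat{\mathcal I}_j$ against the prior probability of a fixed set of that size (accounting for the fact that half the unfixed coordinates are expected to agree). From this I would read off the optimal error and, using that $d\ge n^{0.1}$ is large enough for the learner to recover the subpopulation structure --- which samples are singletons --- except with polynomially small probability, bound $\phi_1(q_{HC})+\phi_2(q_{HC})\le 2(\mu_1 n)^{-\alpha}$ for some $\alpha>0$ ($\phi_1$ may be positive but is still of this order).

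The technical heart is (ii): for \sing{k}{q_{HC}} with $\rho=\sqrt{(2\ln a\mu_1 n-\ln\ln n)/d}$, every $\eps_k$-suboptimal one-way protocol satisfies $I(X';M)=\Omega(kd)$, and more precisely $I(X';M)\ge k\cdot\frac{(1-c_a-2h(\eps_k)-o(1))\,d}{2\ln 2+o(1)}$ when $\rho$ is calibrated to $k$ (Lemma~\ref{lemma:informal_HC_sing}). The route: recast \sing{k}{q_{HC}} as the nearest-neighbor game in which Alice's list $X'=(x_1,\dots,x_k)$ is uniform in $(\{0,1\}^d)^k$ and Bob sees $z=BSC_{(1-\rho)/2}(x_{j^*})$ for a uniform $j^*$. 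First show that an $\eps_k$-suboptimal Bob must, for \emph{each} $j$, be able to decide ``is $z$ a noisy copy of $x_j$?'' with failure probability $O(1/k)$ --- the delicate step, which exploits that Bob's $k$-way choice forces him to resolve each of the $k$ coordinate hypotheses to confidence $\approx 1-1/k$ and then union-bound, \emph{rather than} solving $k$ independent decision problems. Then apply the strong data processing inequality for $BSC_{(1-\rho)/2}$ (as in the one-way information-complexity bound for Gap-Hamming of \cite{hadar2019communication}) to conclude that detecting proximity to each of the $k$ independent uniform blocks $x_j$ costs $I(x_j;M)\gtrsim\frac{(1-o(1))d}{2\ln 2}$ per block, so that $I(X';M)=\Omega(kd)$; the precise value of $\rho$ is what makes the SDPI contraction coefficient produce the leading constant $\tfrac1{2\ln 2}$ while keeping the per-block detection error $\approx 1/k$ affordable when $k\approx\mu_1 n$.

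Finally I would assemble everything. Feeding the singletons bound into the central reduction (Lemma~\ref{lemma:central_reduction}) and applying the modified Jensen inequality gives $I(X_S;M\mid K)\gtrsim\mu_1 n d\cdot\tilde g\big(\tfrac{\eps+\phi_1(q_{HC})+\phi_2(q_{HC})}{\tau_1\mu_1 n}\big)$ with $\tilde g(\eps)=\frac{1-c_a-2h(\eps)-o(1)}{2\ln 2+o(1)}$; since the singletons bound's leading factor carries a residual $\frac{\ln k}{\ln(a\mu_1 n)}$ (because $\rho$ is fixed while the realized $K=k$ fluctuates), linearizing around $k=\mu_1 n$ costs an additive $O(\E|K-\mu_1 n|\,d)$, the source of the $-2\E[|k-\mu_1 n|]d$ term. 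Combining with the bound $I(X_S;P\mid K)\le\mu_1 nd\rho+\mu_1 n\log N$ via $I(X;M\mid P)\ge I(X_S;M\mid K)-I(X_S;P\mid K)$ yields the stated inequality. I expect the per-string detection / union-bound step inside the singletons lower bound to be the main obstacle: it is exactly where one must avoid the lossy ``solve $k$ subproblems separately'' reasoning, and extracting the constant $\tfrac1{2\ln 2}$ rather than something weaker is what forces the careful setting of $\rho$; the remaining pieces --- the entropy computations and the optimal-learner error analysis --- are comparatively routine.
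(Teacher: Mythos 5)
Your overall scaffolding matches the paper's: lower-bound $I(X;M\mid P)$ by $I(X_S;M\mid K)-I(X_S;P\mid K)$, feed the singletons lower bound through the central reduction (Lemma~\ref{lemma:central_reduction}), bound the correction terms $\phi_1,\phi_2$, and handle the mismatch between the realized $K$ and $\mu_1 n$ via a ``misspecified $k$'' argument (this is Lemma~\ref{lemma:other_sizes}, which you have correctly identified as the source of the $-2\E[|k-\mu_1 n|]d$ term). Your Part~(1) is a direct entropy computation rather than the paper's route through Lemma~\ref{lemma:HC_small_info_P}; both work and yours is arguably more transparent. Your $I(X_S;P\mid K)$ bound is the paper's Lemma~\ref{lemma:HC_small_info_P} verbatim.

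However, two steps as you describe them are materially incomplete. First, your account of the singletons lower bound is the \emph{intuition} from Section~\ref{sec:techniques}, not the proof. You say you will ``show that Bob must, for each $j$, decide whether $z$ is a noisy copy of $x_j$ with failure probability $O(1/k)$ and then union-bound'' --- but no such per-$j$ statement is ever formalized, and trying to do so would drive you straight back into the lossy subproblem-by-subproblem reasoning you (correctly) say must be avoided. The paper's Lemma~\ref{lemma:HC_communication_bound} instead works entirely with the random index $J$: Fano's inequality on Bob's $k$-way guess gives $I(J;M,Z)\ge(1-c_a-\eps_k-o(1))\log k$; then the ``radius'' characterization of mutual information plus the SDPI for $\mathrm{BSC}_{\frac{1-\rho}{2}}$ and the chain rule (over Alice's independent blocks) give $I(J;M,Z)\le\frac{\rho^2}{k}I(M;X')$. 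Solving for $I(M;X')$ and plugging in $\rho^2=\frac{2\ln ak-\ln\ln k}{d}$ is exactly how the $\log k$ factors cancel and the $\frac{1}{2\ln 2}$ constant appears. Nothing in this chain asserts anything about Bob detecting proximity to any individual $x_j$.

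Second, your treatment of $\phi_1,\phi_2$ glosses over the real difficulty. Bounding $\phi_1$ is not about ``recovering which samples are singletons''; it requires exhibiting a baseline algorithm that performs nearly optimally when the test comes from a subpopulation with \emph{two or more} representatives. The paper's baseline keeps two representatives per such subpopulation and accepts when both fall within distance $\tau=\frac{d}{2}-\frac{3\rho d}{8}$ of the test example. The nontrivial part (Lemma~\ref{lemma:two_spurious_samples}) is bounding the false-positive probability, i.e.\ that \emph{both} representatives of a \emph{wrong} subpopulation land that close: those two points are correlated (both are $\mathrm{BSC}_{\frac{1-\rho}{2}}$-noisy copies of the same cluster pattern), so a naive independence argument fails and one needs the Small-Set Expansion theorem. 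Your MAP heuristic does not engage with this, and without it you cannot obtain $\phi_1,\phi_2\le(\mu_1 n)^{-\alpha}$. With these two pieces filled in --- the Fano/SDPI/chain-rule structure of the singletons bound and the SSE-based baseline analysis --- your assembly of the final inequality would go through as you describe.
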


\begin{remark}
    As we show in Propsition~\ref{prop:sing_constant_error}, setting $\rho$ to $\sqrt{\frac{2\ln a\mu_1 n - \ln \ln n}{d}}$ ensures the optimal algorithm for \sing{\mu_1 n}{q_{HC}} has constant error.
    This is not the only regime that forces memorization. 
    For instance, with $\rho=\sqrt{\frac{c\ln (\mu_1  n)}{d}}$ for a constant $c>2$ (in which case the optimal algorithm for \sing{\mu_1 n}{q_{HC}} has error going to 0 as $n$ grows), one can prove a lower bound almost identical to that in Theorem~\ref{thm:HC_main}.
    We choose to state Theorem~\ref{thm:HC_main} with a specific parameter regime because we conjecture a significantly stronger result in that regime. The conjecture and its implications are discussed in Section~\ref{sec:GH_conjecture}.
\end{remark}




\begin{remark}\label{rem:HC_small_representation}
    With exactly one sample from a subpopulation, our results imply that the learning algorithm is forced to memorize.
    With additional samples, the learner can leak much less information by sending a smaller model.
    If the learner has $\approx \log d$ samples from subpopulation $j$, it can learn $(\mc{I}_j, b_j)$ exactly with high probability, since unfixed indices have feature values selected independently and uniformly.
    Given knowledge of $(\mc{I}_j, b_j)$, the learning algorithm can send a subset $T_j\subseteq \mc{I}_j$ of size $O(\log (n/\eps))$ and still achieve high accuracy, since samples from other clusters will match all the features in $T_j$ with probability exponentially small in $|T_j|$.
\end{remark}

\subsubsection{A Stronger Result under a Communication Complexity Conjecture}\label{sec:GH_conjecture}

Our lower bound for this task relies on proving a one-way information complexity lower bound for a task that, on its face, is not clearly related to the learning task at hand.    
\begin{definition}[Nearest of $k$ Neighbors]
    Alice receives $k$ strings $x_1,\ldots, x_k \in \bit{d}$, drawn i.i.d.~from the uniform distribution.
    Bob receives a string $y\sim BSC_{\frac{1-\rho}{2}}(x_{j^*})$ for some index $j^*\in [k]$, also chosen uniformly at random.
    Alice and Bob succeed if Bob outputs $j^*$.
    \terminalbox
\end{definition}

For this task, our approach only yields a lower bound of $I(X;M)\ge \alpha \cdot kd$ for a constant $\alpha < 1$, even letting the error $\eps$ vanish.
This result, stated in Lemma~\ref{lemma:HC_communication_bound}, does not admit the interpretation of ``memorizing whole samples.''
We conjecture that $\alpha$ can be replaced by $1-o(1)$.
As evidence for this conjecture, in Appendix~\ref{sec:one_way_GHP} we prove such a one-way information complexity bound for a similar communication task, the Gap-Hamming problem.
The information complexity of Gap-Hamming is known to be $\Omega(d)$, but to the best of our knowledge no previous proofs provided the correct (at least for one-way communication) leading factor.

\begin{conjecture}\label{conj:HC_singletons_bound}
    There exists a concave function $\psi:[0,1]\to[0,1]$ with $\psi(\eps)\xrightarrow{\eps\to 0} 0$ such that if $k$ is sufficiently large, $d\ge k^{0.1}$, and $\rho=\sqrt{\frac{2\ln ak-\ln \ln k}{d}}$ for any $a>1$, then any $\eps$-suboptimal protocol for Nearest of $k$ Neighbors satisfies
    \begin{align*}
        I(X';M) &\ge (1- \psi(\eps) - o(1))\cdot kd.
    \end{align*}
    The $o(1)$ term holds for any $a>1$ and is $o(1)$ in $k$ and $d$.
\end{conjecture}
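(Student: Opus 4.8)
The route I would take keeps the union-bound architecture behind Lemma~\ref{lemma:HC_communication_bound} and tries to push its single-input estimate from leading constant $\tfrac{1}{2\ln 2}$ up to $1$, in the spirit of the tight analysis of one-way Gap-Hamming in Theorem~\ref{thm:GHP_oneshot_main}. View $x_1,\dots,x_k$ as independent and uniform in $\{\pm1\}^d$, write $\rho'\defeq\frac{1-\rho}{2}$ for the channel flip probability and $h$ for binary entropy, and note that $\rho^2 d = 2\ln ak-\ln\ln ak$ is tuned so that $d\,(1-h(\rho')) = (1+o(1))\log_2 k$.

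\emph{Step 1: reduce to $k$ proximity detectors.} Given an $\eps$-suboptimal protocol, set $b_j\defeq\mathbf 1[\hat\jmath=j]$, which is a function of $(M,y)$. Since $j^*$ is uniform and $\sum_j\Pr[\hat\jmath=j]=1$, the counting argument sketched after Lemma~\ref{lemma:informal_HC_sing} shows that, for all but an $O(\sqrt\eps)$-fraction of the indices $j$ (the \emph{good} ones), $b_j$ distinguishes $H_1$ (``$y = BSC_{\rho'}(x_j)$''; conditioned on $M=m$ the string $y$ then has law $BSC_{\rho'}\!\circ\nu_m$, where $\nu_m$ is the posterior of $x_j$) from $H_0$ (``$y$ independent of $x_j$''; $y$ uniform) with false-alarm probability $O(1)/k$ and miss probability $\beta_j$, where $\tfrac1k\sum_j\beta_j$ is at most the suboptimality parameter plus the (constant) optimal error for the game.

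\emph{Step 2: a good detector forces $H(x_j\mid M)$ to be small.} Fix a good $j$. Neyman--Pearson (data processing for binary divergence) and the false-alarm bound force $\E_m[D(BSC_{\rho'}\!\circ\nu_m\,\|\,\mathrm{Unif})]\ge(1-\beta_j-o(1))\log_2 k$. Since $D(BSC_{\rho'}\!\circ\nu\,\|\,\mathrm{Unif}) = d\,(1-h(\rho')) - I(x;y)$ for $x\sim\nu$ and $y = BSC_{\rho'}(x)$, this reads $\E_m[I(x_j;y\mid M=m)]\le (\beta_j+o(1))\log_2 k$. I would then want a \emph{lower} bound $I(x;y)\ge\Phi(H(x))$ with $\Phi$ increasing and $\Phi(t) = \omega(\log k)$ for $t=\Omega(d)$; combined with convexity of $\Phi$ and Markov this gives $H(x_j\mid M)=o(d)$, hence $I(x_j;M)=d-H(x_j\mid M)=(1-o(1))d$. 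Mrs.\ Gerber's Lemma supplies such a $\Phi$, namely $\Phi(t) = d\big(h(\rho'\star h^{-1}(t/d))-h(\rho')\big)$, which in this regime satisfies $\Phi(t) = 4\,h^{-1}(t/d)\big(1-h^{-1}(t/d)\big)(1+o(1))\log_2 k$; feeding it through the above yields only $I(x_j;M)\ge\big(1 - c(\beta_j) - o(1)\big)d$ with $c(\beta)\defeq h\!\big(\tfrac{1-\sqrt{1-\beta}}{2}\big)$ — enough to recover (and slightly improve) Lemma~\ref{lemma:HC_communication_bound}, but not the conjecture, since $c(\beta)\to 0$ only as $\beta\to 0$ whereas $\beta_j$ is a positive constant on average. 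The reason Mrs.-Gerber is lossy here is that its extremal input is an i.i.d.\ $\mathrm{Bernoulli}(p)$ string with $p$ bounded away from $\tfrac12$ — which is \emph{not} a legitimate posterior of the a-priori-uniform $x_j$. The improvement I would pursue is a strengthening of Mrs.\ Gerber's Lemma that exploits this realizability constraint (the posterior's coordinate marginals must average to $\tfrac12$): for realizable posteriors the likelihood-ratio test collapses to the matched filter $\langle\E[x_j\mid M],\,y\rangle$, and a constant-miss detector then needs $\|\E[x_j\mid M]\|_2^2\ge(1-o(1))d$, which by the sharp estimate $h\big(\tfrac{1+\mu}{2}\big)\le\tfrac{1-|\mu|}{2}\log\tfrac{8}{1-|\mu|}$ and two applications of Jensen forces $H(x_j\mid M)=o(d)$ directly, bypassing the $\tfrac1{2\ln 2}$ bottleneck.

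\emph{Step 3: aggregate, and the obstacle.} Granting Step~2 for every good $j$, independence of $x_1,\dots,x_k$ gives $I(X';M)=\sum_j I(x_j;M\mid x_{<j})\ge\sum_j I(x_j;M)\ge(1-\psi(\eps)-o(1))\,kd$ with $\psi(\eps)=O(\sqrt\eps)$ (replaced by its concave envelope if needed). The genuine difficulty, and the reason this is a conjecture, is the strengthening of Mrs.\ Gerber's Lemma requested in Step~2: one must rule out, for \emph{every} posterior $\nu$ of an a-priori-uniform string with $H(\nu)=\Omega(d)$, that $BSC_{\rho'}\!\circ\nu$ can remain $\Omega(\log k)$-far from uniform in KL. Theorem~\ref{thm:GHP_oneshot_main} establishes exactly this phenomenon (leading constant $1$) for the single-input Gap-Hamming problem; the task is to transport that argument to the ``best of $k$'' setting while carrying the union bound with the $-\ln\ln k$ correction exactly as in Lemma~\ref{lemma:HC_communication_bound}.
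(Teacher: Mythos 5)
This statement is an open conjecture in the paper, so there is no proof of it to compare against; the paper offers only supporting evidence, namely Theorem~\ref{thm:GHP_oneshot_main} (a $(1-o(1))d$ bound for the single-input one-way Gap-Hamming problem, proved in Appendix~\ref{sec:one_way_GHP}). You correctly recognize this, and your Step~3 honestly identifies that the reduction bottoms out in an unproved strengthening of Mrs.~Gerber's Lemma. The bookkeeping in Steps~1--2 is sound as far as it goes: the identity $D(BSC_{\rho'}\circ\nu\,\|\,\mathrm{Unif}) = d(1-h(\rho')) - I(x;y)$, the tuning $d(1-h(\rho')) = (1+o(1))\log_2 k$, and the resulting $\Phi(t)\approx 4\,h^{-1}(t/d)\bigl(1-h^{-1}(t/d)\bigr)\log_2 k$ are all correct, and you correctly compute that they recover only the $\tfrac{1}{2\ln 2}$ constant of Lemma~\ref{lemma:HC_communication_bound}.

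Two concrete problems with the proposed improvement in Step~2. First, the claim that ``for realizable posteriors the likelihood-ratio test collapses to the matched filter $\langle\E[x_j\mid M],y\rangle$'' is only valid when $\nu_m$ is a product distribution: for a general posterior, $\log\E_{x\sim\nu_m}\bigl[e^{\lambda\langle x,y\rangle}\bigr]$ is not monotone in $\langle\E[x],y\rangle$ (e.g.\ a mixture of two point masses $c_1,c_2$ yields something closer to $\max(\langle c_1,y\rangle,\langle c_2,y\rangle)$, not $\langle(c_1+c_2)/2,y\rangle$). Second, the ``realizability constraint'' you invoke — coordinate marginals averaging to $\tfrac12$ — constrains the ensemble over $m$, not any individual posterior $\nu_m$, so it cannot by itself rule out bad posteriors at fixed $m$. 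A more promising route is the one the paper actually uses for the $k=1$ evidence: Lemma~\ref{lemma:product_bound} (sample two inputs from the posterior, show incompatibility is detected, bound $H(X\mid M=m)$ by the volume of a Hamming ball), which sidesteps KL/SDPI entirely and already achieves leading constant $1$. Transporting that compatibility-volume argument to the best-of-$k$ setting — while keeping the per-index false-alarm budget of order $1/k$ that the $-\ln\ln k$ correction in $\rho$ is designed to support — seems closer in spirit to what a proof of Conjecture~\ref{conj:HC_singletons_bound} would need than a stand-alone strengthening of Mrs.~Gerber.
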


A proof of this conjecture would immediately demonstrate the same ``memorization of whole samples'' behavior that we proved inherent in Next-Symbol Prediction.
For simplicity, we state the following implication in the same form as Corollary~\ref{cor:HC_corollary_unif}, a lower bound for the uniform-mixture setting.

\begin{proposition}\label{prop:HC_conjecture_corollary}
    Assume Conjecture~\ref{conj:HC_singletons_bound} holds. 
    Consider the setup of Corollary~\ref{cor:HC_corollary_unif}, i.e.\ let $N=n$ and take the uniform mixture over the $n$ clusters. 
    Let $d$ grow with $n$ such that  $d\ge n^{0.1}$, and set $\rho = \sqrt{\frac{2\ln a \mu_1 n - \ln \ln n}{d}}$ for a constant $a>1$.
    Then
    \begin{itemize}
        \item $H(X_S\mid P) = \Omega(nd)$.
        \item Any algorithm $\alg$ that is $\eps$-suboptimal on \learnlong{q_{HC}} for $\eps=o(1)$ satisfies
        \begin{equation*}
            I(X;M\mid P) \ge (1 - o(1)) \cdot H(X_S\mid P).
        \end{equation*}
    \end{itemize}
\end{proposition}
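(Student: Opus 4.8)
The plan is to derive Proposition~\ref{prop:HC_conjecture_corollary} as a near-immediate corollary of Conjecture~\ref{conj:HC_singletons_bound} combined with the machinery already in place: the central reduction (Lemma~\ref{lemma:central_reduction}), the low-information-about-$P$ computation (the Hypercube analogue of Lemma~\ref{lemma:NSP_small_info_P}), and the error/$\phi$ analysis for $q_{HC}$. The structure mirrors the proof of Theorem~\ref{thm:HC_main} but uses the conjectured leading constant $1-\psi(\eps)$ in place of the constant $\frac{1-c_a}{2\ln 2}$ that the unconditional proof obtains via the strong data processing inequality.

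First I would assemble the pieces for the uniform-mixture instance: with $N=n$ and $\pi=(1/n)$, Example~\ref{ex:uniform} gives $\mu_1 = (1-1/n)^{n-1}\to 1/e$ and $\tau_1 = 1/n$, so $\tau_1\mu_1 n = \mu_1 = \Theta(1)$. The entropy statement $H(X_S\mid P)=\Omega(nd)$ follows exactly as in Theorem~\ref{thm:HC_main}(1): conditioned on $P$ and on $K=k$, each singleton has $d - |\mc I_j| \approx (1-\rho)d$ uniformly random unfixed coordinates, and since $\rho = \sqrt{(2\ln a\mu_1 n - \ln\ln n)/d} = o(1)$ (using $d\ge n^{0.1}$), we get $H(X_S\mid P) \ge \mu_1(1-\rho)nd = (1-o(1))nd = \Omega(nd)$. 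For the second bullet, I would identify $f_k(\eps_k) = (1-\psi(\eps_k)-o(1))\,kd$ from the conjecture applied at the singleton-count $k$ (noting $d \ge n^{0.1} \ge k^{0.1}$ since $k\le n$, and that $\rho$ set for $\mu_1 n$ is within the conjecture's permissible range up to the $o(1)$ slack — this is a point to check carefully, since the conjecture is stated with $\rho$ calibrated to $k$, not to $\mu_1 n$, but the relevant $k$ concentrate around $\mu_1 n$ and the $o(1)$ absorbs the discrepancy). Writing $g(\eps_k) = d(1-\psi(\eps_k)-o(1))$, which is convex and nonincreasing in $\eps_k$ because $\psi$ is concave and $\to 0$, the second half of Lemma~\ref{lemma:central_reduction} yields
\begin{equation*}
    I(X_S;M\mid K) \;\ge\; \mu_1 n \cdot d\left(1 - \psi\!\left(\frac{\eps + \phi_1(q_{HC}) + \phi_2(q_{HC})}{\tau_1\mu_1 n}\right) - o(1)\right).
\end{equation*}

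Next I would bound the overhead terms. The $\phi_1, \phi_2$ terms for $q_{HC}$ are controlled exactly as in the proof of Theorem~\ref{thm:HC_main}: $\phi_1(q_{HC}) \le 0$ by the same argument as Proposition~\ref{prop:NSP_phis} (an algorithm conditioned on $\bar E_1$ can only help the comparison), and $\phi_2(q_{HC})$ is bounded by the gap $O((\mu_1 n)^{-\alpha})$ between the optimal error of \sing{k}{q_{HC}} and the error of $\Aopt$ on \learn{q_{HC}} conditioned on $E_1, K=k$ — this is where the Hypercube task differs from NSP, because clusters are not independent, but the relevant bound is already established en route to Theorem~\ref{thm:HC_main}. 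Since $\eps = o(1)$ and $\tau_1\mu_1 n = \mu_1 = \Theta(1)$, the argument to $\psi$ is $o(1)$, so by $\psi(\eps)\xrightarrow{\eps\to0}0$ the whole factor becomes $1-o(1)$, giving $I(X_S;M\mid K) \ge (1-o(1))\mu_1 n d$. Then I would run the blueprint of Section~\ref{sec:blueprint}: $I(X;M\mid P) \ge I(X_S;M\mid K) - I(X_S;P\mid K)$, and bound $I(X_S;P\mid K) \le \mu_1 n\log N + \mu_1 n d\rho$ (the $\log N$ for the subpopulation identities, the $d\rho$ for the $\approx\rho d$ fixed coordinates revealed by $P$), exactly as in Lemma~\ref{lemma:NSP_small_info_P}'s analogue. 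Since $\rho = o(1)$ and $\log N = \log n = o(d)$ when $d\ge n^{0.1}$, this correction is $o(nd) = o(H(X_S\mid P))$. Combining, $I(X;M\mid P) \ge (1-o(1))\mu_1 nd - o(nd) = (1-o(1))H(X_S\mid P)$, which is the claim.

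The main obstacle is reconciling the parameter calibration: Conjecture~\ref{conj:HC_singletons_bound} is stated for Nearest of $k$ Neighbors with $\rho$ tuned to $k$ (namely $\rho=\sqrt{(2\ln ak - \ln\ln k)/d}$), whereas in the reduction we apply it at the random singleton count $K$ while $\rho$ is fixed at the value tuned to $\mu_1 n$. I would handle this by standard concentration of $K$ around $\mu_1 n$ (a balls-in-bins / Chernoff argument showing $\Pr[|K - \mu_1 n| \ge t\sqrt n]$ is small, so $\E[|K-\mu_1 n|] = O(\sqrt n)$, contributing only an $O(\sqrt n\, d) = o(nd)$ term exactly as the $2\E[|k-\mu_1 n|]d$ term in Theorem~\ref{thm:HC_main}), together with the observation that replacing $k$ by $\mu_1 n$ inside the logarithm changes $\rho$ only multiplicatively by $1+o(1)$, which is precisely the slack the conjecture's $o(1)$ term is stated to absorb ("holds for any $a>1$"). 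A careful statement would note that if $K$ is within a constant factor of $\mu_1 n$ then the $a$ in the conjecture simply shifts to a different constant $a' > 1$, still admissible. Everything else is bookkeeping already performed for the unconditional Theorem~\ref{thm:HC_main}.
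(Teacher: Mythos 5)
Your proof follows exactly the route the paper intends: substitute Conjecture~\ref{conj:HC_singletons_bound}'s bound for Corollary~\ref{cor:HC_singletons_bound} in the proof of Theorem~\ref{thm:HC_main}, extend to nearby singleton counts via concentration of $K$ and Lemma~\ref{lemma:other_sizes}-style misspecification slack, apply the central reduction, and subtract the $I(X_S;P\mid K)$ correction from Lemma~\ref{lemma:HC_small_info_P}. Two small slips that do not affect the argument: (i) you write $\mu_1(1-\rho)nd=(1-o(1))nd$, which should read $(1-o(1))\mu_1 nd$ since $\mu_1\approx 1/e\neq 1$; (ii) you assert $\phi_1(q_{HC})\le 0$ by the argument of Proposition~\ref{prop:NSP_phis}, but that argument relies on $\Aopt$ being simultaneously optimal across subpopulations, which fails for hypercube cluster labeling because clusters interact through the shared label space — the paper's Proposition~\ref{prop:HC_phis} instead establishes only $\phi_1(q_{HC})\le (\mu_1 n)^{-\alpha}$ via a non-optimal baseline $A^*$, and the proof of Theorem~\ref{thm:HC_main} implicitly invokes the baseline variant Lemma~\ref{lemma:central_reduction2} rather than Lemma~\ref{lemma:central_reduction}. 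Both corrected quantities remain $\Omega(nd)$ and $o(1)$ respectively, so your conclusion $I(X;M\mid P)\ge(1-o(1))H(X_S\mid P)$ stands.
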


\subsubsection{Organization of this Section}

In the rest of this section, we prove the main claims via the steps outlined in Section~\ref{sec:blueprint}, first showing that the data contains limited information about the problem instance $P$.
We then analyze the optimal error, ultimately bounding the $\phi_1(q_{HC})$ and $\phi_2(q_{HC})$ terms that appear in Lemma~\ref{lemma:central_reduction}, the central reduction.
We provide the central lower bound on the \sing{k}{q_{HC}} task and show that it implies similar lower bounds on \sing{k'}{q_{HC}} for any $k' = k+o(k)$.
These pieces allow us to immediately prove Theorem~\ref{thm:HC_main}.

\subsection{Low Information about the Problem Instance}

\begin{lemma}\label{lemma:HC_small_info_P}
    For \learn{q_{HC}}, 
        $I(\Xone;P\mid K)\le \mu_1 n \cdot \rho d + \mu_1 n \log N$.
\end{lemma}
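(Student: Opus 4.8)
The plan is to follow the same template used in Lemma~\ref{lemma:NSP_small_info_P}: decompose $\Xone$ into the identities of the singleton subpopulations together with the feature vectors, peel off the subpopulation-identity part cheaply, and then bound the information a single singleton example carries about its own component parameters. Concretely, write $\vec{J}\in[N]^K$ for the (ordered) list of subpopulation indices of the singletons; describing $\vec J$ costs at most $K\log N$ bits. By the chain rule,
\[
    I(\Xone;P\mid K) = I(\Xone,\vec J;P\mid K) = I(\vec J;P\mid K) + I(\Xone;P\mid \vec J,K) \le \E_k[k\log N] + \E_k\bigl[I(\Xone;P\mid \vec J=\vec j,K=k)\bigr],
\]
and the first term is $\mu_1 n\log N$ since $\E[K]=\mu_1 n$.

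Next I would condition on $\vec J=\vec j$ and $K=k$. Given these, the $k$ singletons come from $k$ distinct subpopulations whose parameters $(\mc I_j,b_j)$ are drawn i.i.d.\ from $q_{HC}$, so the per-subpopulation contributions are independent and $I(\Xone;P\mid \vec J=\vec j,K=k) = k\cdot I(X_1;C_1)$, where $X_1\in\bit d$ is one singleton's feature vector and $C_1=(\mc I_1,b_1)$ its component parameters. It then suffices to show $I(X_1;C_1)\le \rho d$. For this, note the feature vector is generated coordinate-by-coordinate independently: for $i\in\mc I_1$, $X_1(i)=b_1(i)$ is determined by $C_1$; for $i\notin\mc I_1$, $X_1(i)$ is an independent fair coin. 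Since $\Pr[i\in\mc I_1]=\rho$ and $I(X_1;C_1)\le H(X_1\mid \text{unfixed}) - H(X_1\mid C_1)$, the cleanest route is $I(X_1;C_1) = H(X_1) - H(X_1\mid C_1) = d - \E[d-|\mc I_1|] = \E[|\mc I_1|] = \rho d$, using that $H(X_1)=d$ (marginally uniform) and $H(X_1\mid C_1)=\E[d-|\mc I_1|]$ because, conditioned on $C_1$, exactly the $|\mc I_1|$ fixed coordinates are deterministic and the remaining $d-|\mc I_1|$ are independent fair bits. Putting the pieces together, $I(\Xone;P\mid K)\le \mu_1 n\cdot \rho d + \mu_1 n\log N$, as claimed.

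The only subtlety, and the main thing to get right, is the independence bookkeeping: one must check that conditioning on $\vec J$ and $K$ genuinely renders the $k$ singleton blocks i.i.d.\ (which follows from the i.i.d.\ draw of component parameters and the fact that, within the singleton set, no two examples share a subpopulation), and that $I(X_1;C_1\mid \vec J,K)$ does not actually depend on $\vec j$ (true by exchangeability of the component-parameter generation). Everything else is the elementary per-coordinate entropy computation above; there is no real obstacle, and this lemma is considerably simpler than the companion lower bound on $I(\Xone;M\mid K)$.
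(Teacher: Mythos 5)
Your proof is correct and follows essentially the same approach as the paper: peel off the singleton subpopulation identities at cost $\E[K]\log N=\mu_1 n\log N$ via the chain rule, reduce to a single singleton's mutual information with its own component parameters using i.i.d.\ structure, and compute $I(X_1;C_1)=H(X_1)-H(X_1\mid C_1)=d-(d-\E[|\mc{I}_1|])=\rho d$. Your per-coordinate entropy computation is even a touch more direct than the paper's (which routes through conditioning on $|\mc{I}_1|$), but the argument is the same.
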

The proof is similar to that of Lemma \ref{lemma:NSP_small_info_P}, the analogous bound for $q_{NSP}$.
\begin{proof}
    We can think of $\Xone$ as being generated by first picking $|\Xone|=k$ and then picking the subpopulations from which the singletons come.
    Write the labels as a vector $\vec{Y}\in [N]^{k}$, which needs at most $k\log N$ bits to describe.
    We have $I(\Xone; P\mid K) \le \E_k[k\cdot I(X_1; \mc{I}_1, b_1)] + \E_k[k\log N]$, using $X_1$ to denote a singleton sample.
    Since the unfixed bits are uniform and $|\mc{I}_1|$ is a random variable,
    \begin{align*}
        I(X_1; \mc{I}_1, b_1)  = I(X_1; |\mc{I}_1|, \mc{I}_1, B_1) 
            &= I(X_1;|\mc{I}_1|) + I(X_1; \mc{I}_1 B_1\mid|\mc{I}_1|) \\
            &= 0 +  H(X_1\mid |\mc{I}_1|) - H(X_1\mid B_1, \mc{I}_1, |\mc{I}_y|) \\
            &= d - (d - \E[|\mc{I}_1|]).
    \end{align*}
    Since $|\mc{I}_1|\sim\mathrm{Bin}(d, \rho)$, we have $I(X_1; \mc{I}_1, b_1) = \rho d$.
    
    Putting together the expectations and recalling that $\E[k]=\mu_1 n$ by definition, we are done.
\end{proof}



\subsection{Error Analysis}

Before analyzing the baseline error for $\mathrm{Learn}(q_{HC})$, we show that the $\rho$ we have selected causes the optimal algorithm for $\sing{k}{q_{HC}}$ to have constant error.
We first point out the optimal strategy for $\sing{k}{q_{HC}}$ is to select the nearest point; the proof requires simply writing out the posterior probability, which can be expressed in terms of Hamming distances.
\begin{proposition}\label{prop:bayes_for_sing}
    For any $k,d$, and $\rho$, the Bayes-optimal strategy for $\sing{k}{q_{HC}}$ has Alice send Bob all of her data and has Bob output the label of the example closest (in Hamming distance) to his test example.
\end{proposition}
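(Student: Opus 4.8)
The plan is to prove the two halves of the statement in turn: first that forwarding all of $X'$ is an optimal choice for Alice, and then that, given $(X',z)$, Bob's Bayes-optimal (MAP) decision rule is exactly the nearest-neighbor rule.

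\textbf{Alice should send everything.} In any one-way protocol Alice's message $M$ is a (possibly randomized) function of $X'$ alone, so $M$ is independent of the test point $z$ given $X'$. Hence, for every realization $z$ and every $j\in[k]$,
\[
    \Pr[j^*=j \mid M, z] \;=\; \E\!\left[\,\Pr[j^*=j \mid X', z]\;\middle|\; M, z\,\right].
\]
Bob's best attainable success probability from view $(M,z)$ is $\E_{M,z}\big[\max_{j\in[k]}\Pr[j^*=j\mid M,z]\big]$. Applying convexity of the map $v\mapsto\max_j v_j$ (Jensen's inequality) to the displayed identity for each fixed $z$, and then taking expectations, shows this is at most $\E_{X',z}\big[\max_{j}\Pr[j^*=j\mid X',z]\big]$ — which is precisely the success probability of the protocol in which Alice sends $X'$ and Bob outputs $\arg\max_j \Pr[j^*=j\mid X',z]$. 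So that protocol is Bayes-optimal.

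\textbf{Bob's MAP rule is nearest neighbor.} Condition on $j^*=j$. Then $x_j$ and the test point $z$ are two i.i.d.\ draws from the common component $C_j$, while each $x_\ell$ with $\ell\neq j$ is generated from the independent component $C_\ell$ and is therefore uniform on $\{0,1\}^d$ and independent of $(z,x_j)$; the $x_\ell$'s are also mutually independent. Thus $\Pr[X',z\mid j^*=j] = 2^{-(k-1)d}\cdot\Pr\!\big[x_j,z \text{ are i.i.d.\ from a common } C\sim q_{HC}\big]$. Marginalizing over the component parameters coordinate by coordinate, a pair of i.i.d.\ samples agrees at a given coordinate with probability $\rho + (1-\rho)/2 = (1+\rho)/2$ and disagrees with probability $(1-\rho)/2$, independently across coordinates, so
\[
    \Pr\!\big[x_j,z \text{ i.i.d.\ from common } C\big] \;=\; 4^{-d}(1+\rho)^{d}\left(\tfrac{1-\rho}{1+\rho}\right)^{\mathrm{Ham}(x_j,z)}.
\]
Since $j^*$ is uniform on $[k]$, $\Pr[j^*=j\mid X',z]\propto\Pr[X',z\mid j^*=j]$, and every factor above is independent of $j$ except $\left(\tfrac{1-\rho}{1+\rho}\right)^{\mathrm{Ham}(x_j,z)}$, which is non-increasing in $\mathrm{Ham}(x_j,z)$ because $\tfrac{1-\rho}{1+\rho}\in[0,1]$. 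Therefore the MAP rule outputs $\arg\min_{j}\mathrm{Ham}(x_j,z)$, i.e.\ the label of the example nearest to $z$ in Hamming distance (ties broken arbitrarily), which combined with the first part proves the proposition.

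There is no substantial obstacle here; the only points needing care are the independence bookkeeping in the first step (that $M$ is a function of $X'$ only, so the tower property and Jensen apply per fixed $z$) and, in the second step, the coordinate-wise marginalization that converts the component-parameter randomness into an i.i.d.\ per-coordinate agreement probability $(1+\rho)/2$. Tie-breaking is immaterial, since any tie-breaking convention yields the same success probability.
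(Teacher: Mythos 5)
Your proof is correct. The paper does not actually include a proof of this proposition—only the one-line remark that ``the proof requires simply writing out the posterior probability, which can be expressed in terms of Hamming distances''—and your argument fills this in exactly as intended: the data-processing/Jensen step justifies that Alice can send all of $X'$ without loss, and the coordinate-wise marginalization yields a posterior proportional to $\left(\frac{1-\rho}{1+\rho}\right)^{\mathrm{Ham}(x_j,z)}$, which is monotone decreasing in Hamming distance and hence gives the nearest-neighbor MAP rule.
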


\begin{proposition}\label{prop:sing_constant_error}
    For any $a>1$,  $k$ sufficiently large, and $d\ge k^{0.1}$, let $\rho = \sqrt{\frac{2\ln ak - \ln \ln k}{d}}$.
    The optimal algorithm for $\sing{k}{q_{HC}}$ has error $c_a+o(1)$, where constant $c_a$ depends only on $a$ and the second term is $o(1)$ in $k$ and $d$.
    In particular, $c_a$ is bounded away from $0$ and $\frac{1}{2}$.
\end{proposition}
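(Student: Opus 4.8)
The plan is to identify the optimal algorithm, reduce its error to a purely combinatorial statement about Hamming distances, and then estimate that quantity with sharp binomial tail bounds. \textbf{Step 1: reduce to a distance comparison.} By Proposition~\ref{prop:bayes_for_sing}, the Bayes-optimal strategy for $\sing{k}{q_{HC}}$ has Alice forward all $k$ points and has Bob output the label of the point nearest (in Hamming distance) to his test point $z$, so the optimal error is $\epsilon_{\mathrm{opt}} = \Pr[\,\exists j\ne j^*:\ \|z-x_j\|\le\|z-x_{j^*}\|\,]$, up to a tie-breaking convention that I will argue contributes only to the $o(1)$ term. Next I pin down the two relevant distance laws. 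Write $D^* := \|z-x_{j^*}\|$. Because a sample from subpopulation $j^*$ is distributed as $\mathrm{BSC}_{(1-\rho)/2}(x_{j^*})$ (Example~\ref{ex:nearest_neighbor_revisted}), we have $D^*\sim\mathrm{Bin}(d,\tfrac{1-\rho}{2})$, with mean $\tfrac d2-\tfrac{d\rho}{2}$ and variance $\tfrac{d(1-\rho^2)}{4}\approx\tfrac d4$. For $j\ne j^*$ the strings $z$ and $x_j$ are independent and uniform on $\bit{d}$, so $D_j:=\|z-x_j\|\sim\mathrm{Bin}(d,\tfrac12)$, and the $D_j$ are mutually independent and independent of $D^*$. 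Hence
\[
  1-\epsilon_{\mathrm{opt}} \;=\; \E_{D^*}\!\Bigl[\bigl(\Pr[\mathrm{Bin}(d,\tfrac12)>D^*]\bigr)^{k-1}\Bigr].
\]

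\textbf{Step 2: the binomial tail and the calibration of $\rho$.} Let $t(u):=\Pr[\mathrm{Bin}(d,\tfrac12)\le\tfrac d2-u]$. The hypothesis $d\ge k^{0.1}$ pushes the values of $u$ that matter, $u\asymp\sqrt{d\ln k}$, into the moderate-deviation window $\sqrt d\ll u\ll d^{2/3}$, where a Stirling estimate gives $t(u)=(1+o(1))\,\frac{\sqrt d}{2u\sqrt{2\pi}}\,e^{-2u^2/d}$. Plugging $u=\tfrac{d\rho}{2}$ into this together with $\rho^2 d=2\ln(ak)-\ln\ln k$, one finds that when $D^*$ equals its mean, $(k-1)\,t(\tfrac{d\rho}{2})=(1+o(1))\cdot\tfrac{1}{2a\sqrt{\pi}}$ — a constant depending only on $a$ — and more generally $t(\tfrac{d\rho}{2}-W)$ scales with the centered deviation $W=D^*-\E[D^*]$ essentially through the factor $e^{2\rho W}$. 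In particular, already at the typical value of $D^*$ the probability that some noise point is at least as close as $x_{j^*}$ tends to $1-e^{-1/(2a\sqrt\pi)}$.

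\textbf{Step 3: integrate over $D^*$ and conclude.} Describing the density of $D^*\sim\mathrm{Bin}(d,\tfrac{1-\rho}{2})$ near its mean by a local central limit theorem and combining it with the tail estimate, $\epsilon_{\mathrm{opt}}$ converges, as $k,d\to\infty$ along any admissible sequence, to a constant $c_a$ that depends only on $a$. That $c_a$ is bounded away from $0$ is immediate, since the probability of a nearer noise point is already a positive constant at (and above) the mean of $D^*$. That $c_a<\tfrac12$ comes from the fact that $\rho$ is calibrated a hair above the naive threshold $\sqrt{2\ln k/d}$: the $2\ln a$ surplus in $\rho^2 d$ over $2\ln k-\ln\ln k$ is precisely what forces the typical value of $D^*$ to sit strictly below the typical minimum of the $k$ noise distances, so the nearest neighbor of $z$ is $x_{j^*}$ with probability bounded above $\tfrac12$; monotonicity of the limit in $a$ then shows $c_a$ is a decreasing function of $a$ lying strictly inside $(0,\tfrac12)$. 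Finally I absorb into the $o(1)$ term the tie convention, the fluctuation of $|\mc{I}_{j^*}|$ about $\rho d$, and the $o(1)$ slack in the tail estimate.

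\textbf{Main obstacle.} The difficulty is that $\rho$ is deliberately set essentially at the phase transition between ``nearest neighbor succeeds with high probability'' and ``nearest neighbor fails with high probability,'' so crude Chernoff bounds do not suffice: one must retain the polynomial prefactor of the binomial tail and carefully control how the error reacts to the $\Theta(\sqrt d)$-scale fluctuations of $D^*$. Pinning down both directions of the bound on $c_a$ — away from $0$ and away from $\tfrac12$ — is exactly where this second-order bookkeeping bites, and the condition $d\ge k^{0.1}$ is what keeps the relevant deviations in the clean moderate-deviation regime.
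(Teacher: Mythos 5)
Your Step 1 and Step 2 are sound and reuse the same ingredients as the paper (Proposition~\ref{prop:bayes_for_sing}, the Littlewood-type moderate-deviation tail with polynomial prefactor, and the observation that $(k-1)t(\tfrac{d\rho}{2})\to\tfrac{1}{2a\sqrt\pi}$), but your Step 3 takes a genuinely different and more ambitious route than the paper, and it contains a real error. The paper does not attempt to compute the limiting value of the error at all: it simply lower-bounds both the error and the success probability by products of two independent constant-probability events --- ``$D^* > \E[D^*]$'' together with ``$\min_{j\neq j^*} D_j \le \E[D^*]$'' for error, and the reversed pair for success --- and invokes Littlewood only to pin down $\Pr[\min_j D_j \le \E[D^*]]$. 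This yields the two-sided boundedness that is actually needed downstream (the proof's own stated goal is boundedness ``away from $0$ and $1-1/k$''), without claiming convergence of the error to a specific constant, and without any integration over the law of $D^*$.

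Your attempt to perform the full integral $\E_{D^*}\bigl[(1-t(d/2-D^*))^{k-1}\bigr]$ via a local CLT would, if carried out, produce the limit $1/2$, not a constant strictly below $1/2$. The reason is a scale mismatch that your Step 3 glosses over: write $t^* := (d/2-D^*)/\sqrt{d/4}$, so $t^*$ is asymptotically $\mathcal N(\rho\sqrt d,\,1)$ with $\Theta(1)$ fluctuations, and let $t_m$ be the value at which $k\cdot t(t_m\sqrt{d/4}) = \Theta(1)$, i.e.\ $t_m^2 = 2\ln k - \ln\ln k - \ln 4\pi + O(1)$. The ``$2\ln a$ surplus'' gives $(\rho\sqrt d)^2 - t_m^2 = 2\ln a + O(1)$, a constant, so the gap in the coordinate that matters is $\rho\sqrt d - t_m = O(1/\sqrt{\ln k}) = o(1)$. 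That is negligible against the $\Theta(1)$ standard deviation of $t^*$: the integrand $\exp\bigl(-\frac{1}{2a\sqrt\pi}e^{\rho\sqrt d\,\cdot(t^*-\rho\sqrt d)+O(1)}\bigr)$ tends pointwise to $\mathbf 1\{t^*>\rho\sqrt d\}$ because $\rho\sqrt d\to\infty$, and the integral tends to $\Pr[t^*>\rho\sqrt d]\to 1/2$. So your claim that the nearest neighbor of $z$ is $x_{j^*}$ ``with probability bounded above $1/2$'' and that $c_a$ lies strictly inside $(0,1/2)$ does not follow from the calibration; the surplus in $\rho^2 d$ only moves the mean of $D^*$ by $o(\sqrt d)$, which is invisible at the scale of $D^*$'s own fluctuations. (The downstream Lemma~\ref{lemma:HC_communication_bound} only needs the error bounded away from $1$, so this does not affect the paper's main results, but the paper's simple two-witness argument is exactly what one should use here; a precise limit is both harder and, as your own integral reveals, not less than $1/2$.)
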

The proof relies on a theorem of Littlewood~\cite{littlewood1969probability} giving both upper and lower bounds on the tails of binomial random variables (see \cite{ahle2017asymptotic} for exposition and the form we present).
\begin{lemma}[\cite{littlewood1969probability, ahle2017asymptotic}]\label{lemma:binomial_tail}
    Let $X\sim \mathrm{Bin}(d,1/2)$.
    For any $1\ll x \ll d^{1/4}$,
    \begin{equation}
        \Pr\left[X\le \frac{d}{2} - x\sqrt{\frac{d}{4}}\right] = \frac{1 + o(1)}{\sqrt{2\pi} x} \exp\left\{ -x^2/2\right\}.
    \end{equation}
\end{lemma}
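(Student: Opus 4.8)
The plan is to prove this sharp left‑tail estimate for the symmetric binomial directly, by reducing it to a single Stirling estimate for the largest term together with a geometric‑series comparison of neighbouring binomial coefficients. It is precisely the statement that the discrete tail at $d/2-x\sqrt d/2$ is asymptotic to the Gaussian tail $\bar\Phi(x)$, combined with the Mills‑ratio asymptotic $\bar\Phi(x)=\frac{1+o(1)}{\sqrt{2\pi}\,x}e^{-x^2/2}$.

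First I would set $m$ to be the integer nearest to $d/2-x\sqrt d/2$ and write $\Pr[X\le m]=2^{-d}\sum_{j\le m}\binom dj$. Rounding and (if $d$ is odd) parity cost only a $(1+o(1))$ factor, since replacing $x$ by the ``true'' normalized deviation $(d-2m)/\sqrt d$ shifts $x$ by $O(1/\sqrt d)$, hence shifts $x^2/2$ by $O(x/\sqrt d)=o(1)$ (using $x\ll d^{1/4}\ll\sqrt d$) and shifts $1/x$ by a negligible amount. Then I would estimate the single term $\binom dm 2^{-d}$ by Stirling: with $p=m/d=1/2-\epsilon$, $\epsilon=x/(2\sqrt d)\to 0$, one gets $\binom dm 2^{-d}=\frac{1+o(1)}{\sqrt{\pi d/2}}\exp\{-d\,D(p\|1/2)\}$, and a Taylor expansion of the KL divergence gives $d\,D(p\|1/2)=2d\epsilon^2+O(d\epsilon^4)=x^2/2+O(x^4/d)=x^2/2+o(1)$ — this is the first and most important place the hypothesis $x\ll d^{1/4}$ enters, killing the $O(x^4/d)$ correction. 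Hence $\binom dm 2^{-d}=\frac{1+o(1)}{\sqrt{\pi d/2}}e^{-x^2/2}$.

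Next I would sum the tail. The ratio $r_j:=\binom{d}{j-1}/\binom dj=j/(d-j+1)$ is increasing in $j$, with $1-r_m=(1+o(1))\frac{4t}{d}=(1+o(1))\frac{2x}{\sqrt d}$ where $t=d/2-m$. The upper bound $\sum_{j\le m}\binom dj\le\binom dm/(1-r_m)$ is immediate by comparison with a geometric series of ratio $r_m$. For the matching lower bound, the point is that $\frac{d}{dj}\ln r_j=\frac1j+\frac1{d-j+1}=O(1/d)$ near $j\approx d/2$, so over a window of the $O(1/(1-r_m))=O(\sqrt d/x)$ indices just below $m$ — a window of length $o(d)$, again by $x\ll d^{1/4}$ — all the ratios equal $(1+o(1))r_m$; truncating the geometric sum there shows $\sum_{j\le m}\binom dj\ge(1-o(1))\binom dm/(1-r_m)$, and the contribution of $j$ below the window is negligible. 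Multiplying $\binom dm 2^{-d}$ by $\frac{1+o(1)}{1-r_m}=\frac{(1+o(1))\sqrt d}{2x}$ and using $\tfrac12\sqrt{2/\pi}=1/\sqrt{2\pi}$ gives $\Pr[X\le m]=\frac{1+o(1)}{\sqrt{2\pi}\,x}e^{-x^2/2}$, as claimed.

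I expect the geometric‑series step to be the main obstacle, specifically its two‑sidedness: the upper bound is a one‑line comparison, but the lower bound must certify that the $O(\sqrt d/x)$‑long window carrying the bulk of the mass is simultaneously long enough that the geometric series is essentially summed and short enough that $\ln r_j$ drifts by only $o(1)$ across it — a balance that holds precisely because $1\ll x\ll d^{1/4}$. A shorter alternative, which I would mention as a remark rather than adopt, is to invoke Cramér's moderate‑deviation theorem: since $\mathrm{Bin}(d,1/2)$ is symmetric the leading coefficient of its Cramér series vanishes, so $\Pr[X\le d/2-x\sqrt d/2]=\bar\Phi(x)\,e^{O(x^4/d)}=(1+o(1))\bar\Phi(x)$ for $x=o(d^{1/4})$, and the Mills‑ratio asymptotic closes the argument. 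The same reasoning also explains why the stated range of $x$ is natural: for a non‑symmetric summand the skewness term would restrict validity to $x\ll d^{1/6}$.
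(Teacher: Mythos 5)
The paper itself gives no proof of this lemma: it is imported from Littlewood via Ahle's exposition, so you were supplying a self-contained argument, and the route you chose --- Stirling for the extreme term $\binom{d}{m}2^{-d}$ plus a geometric-series comparison of consecutive binomial coefficients, with $x\ll d^{1/4}$ entering to kill the $O(x^4/d)$ correction to $d\,D(p\Vert 1/2)$ --- is sound and is essentially the classical argument behind the cited result. The Stirling step, the ratio identity $r_j=\binom{d}{j-1}/\binom{d}{j}=j/(d-j+1)$, the estimate $1-r_m=(1+o(1))\,2x/\sqrt d$, the one-line upper bound $\sum_{j\le m}\binom dj\le \binom dm/(1-r_m)$, the rounding discussion, and the constant bookkeeping $\frac12\sqrt{2/\pi}=1/\sqrt{2\pi}$ are all correct, and your closing remark about symmetry extending the moderate-deviation range from $d^{1/6}$ to $d^{1/4}$ is accurate.

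One step, as literally stated, does not deliver what you claim: truncating the geometric sum after $O(1/(1-r_m))=O(\sqrt d/x)$ terms captures only a constant fraction of $\binom dm/(1-r_m)$, since the truncated sum is $\frac{1-r^{W+1}}{1-r}$ and with $W=\Theta(1/(1-r_m))$ one has $r_m^{W+1}=e^{-\Theta(1)}$, not $o(1)$; so that window gives a lower bound off by a constant factor, which is exactly what the lemma must avoid. The repair is immediate and stays inside your framework: take any window $W$ with $\sqrt d/x\ll W\ll x\sqrt d$, for instance $W=\sqrt d$. Then $W(1-r_m)=\Theta(x)\to\infty$, so the geometric series is essentially complete, while $W=o(t)$ gives $1-r_{m-W}=\frac{2t+2W+1}{d-m+W+1}=(1+o(1))(1-r_m)$, so bounding every ratio in the window below by $r_{m-W}$ loses only a $(1+o(1))$ factor. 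Note also that the existence of such a window needs only $x\to\infty$ (so that $\sqrt d/x\ll x\sqrt d$), not $x\ll d^{1/4}$; the upper restriction on $x$ is used where you first invoked it --- in the exponent expansion and in making $x/\sqrt d=o(1)$ for the rounding step --- rather than in this balance. With that one adjustment your proof is complete.
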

\begin{proof}[Proof of Proposition~\ref{prop:sing_constant_error}]
    Let random variable $B = d_H(X_{j^*}, Z)$ be the Hamming distance to the correct answer's point, and let $B_1,\ldots, B_{k-1}$ be the distances to the other points.
    By Proposition~\ref{prop:bayes_for_sing}, the optimal algorithm for $\sing{k}{q_{HC}}$ will be correct if, for all $i$, $B< B_i$.
    It will be incorrect if $\exists i$ such that $ B_i< B$.
    To show that $A_{\mathrm{OPT}}$ has constant probability of error (namely, bounded away from $0$ and $1-1/k$), we will show that both of these events happen with constant probability.
    
    Note that $B> \E[B]=d\cdot\frac{1-\rho}{2}$ happens with constant probability (approximately $1/2$), and is independent of the event that $\min_i B_i \le \E[B]$.
    Therefore it suffices to show that $\Pr[\min_i B_i \le \E[B]]$ is neither too large nor too small.
    
    Lemma~\ref{lemma:binomial_tail} applies when $1\ll \sqrt{2\ln ak -\ln \ln k }\ll d^{1/4}$, which is clearly satisfied whenever $k$ is sufficiently large and $d\ge k^{0.1}$.
    Thus, for any $B_i$,
    \begin{align*}
        \Pr[B_i\le \E[B]] =\Pr\left[B_i \le  \frac{d}{2} - \frac{\rho d}{2} \right]
        &= \Pr\left[B_i \le  \frac{d}{2} - \sqrt{2\ln ak -\ln \ln k }\cdot \sqrt{\frac{d}{4}} \right]\\
        &= \frac{1+o(1)}{\sqrt{2\pi}} \cdot \frac{e^{-\left(\sqrt{2\ln ak -\ln \ln k }\right)^2/2}}{\sqrt{2\ln ak -\ln \ln k }} \\
        &= \frac{1+o(1)}{\sqrt{2\pi}} \cdot \frac{1}{\sqrt{2 - o(1)}}\cdot \frac{1}{ak}.
    \end{align*}
    Since the random variables are independent, we have 
    \begin{align*}
        \Pr[\min_i B_i \le \E[B]] &= 1 - \left(1 - \Pr[B_i \le\E[B]]\right)^{k-1} \\
            &= 1 - \left( 1 - \frac{1+o(1)}{a'k}\right)^{k-1}
    \end{align*}
    (for some constant $a'>1$), which is $c_a + o(1)$ where $c_a$ depends only on $a$ and $o(1)$ is in $k$ and $d$.
    In particular, we can see that that $c_a$ is bounded away from 0 and $\frac{1}{2}$ when $a>1$ and is constant.
    %
\end{proof}

We now show that, when the test sample comes from a subpopulation with no representatives in the data set, no algorithm can do better than random guessing.
This is trivial except for the fact that the number of such subpopulations is a random variable.
Analogous to $\tau_1$ and $\mu_1$, we define the following terms
\begin{align}
    \tau_0 \defeq \Pr[\text{$(z,y)$ comes from $j$} \mid \text{$X$ contains no samples from $j$}],
         \text{ and }
    \mu_0 \defeq  \frac{\E[K_0]}{n}.\label{eq:tau0_mu0_def}
\end{align}

\begin{proposition}[Error Lower Bound]\label{prop:HC_error_lb}
    Let $E_0$ be the event that the test sample comes from a subpopulation with no representatives in the data set.
    Every algorithm $\alg$ satisfies
    \begin{align*}
        \Pr[\alg\text{ errs on \learn{q_c}}\mid E_0] \ge 1 - \frac{1}{\mu_0 n}.
    \end{align*}
\end{proposition}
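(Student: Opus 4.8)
The plan is to exploit the fact that, for this task, the label $y$ of the test point \emph{is} the index $j^*$ of the subpopulation that generated it, so ``$\alg$ correct'' conditioned on $E_0$ means ``$\alg$ correctly names a subpopulation that never appeared in $X$.'' I would show that, from the learner's point of view, $j^*$ is an essentially uniform choice among the absent subpopulations, so no model can succeed with probability better than $1/K_0$, and then average this over the (random) value of $K_0$.

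\textbf{Step 1: reduce to a per-dataset bound.} First I would condition on $X=x$ and let $S_0=S_0(x)\subseteq[\npops]$ be the set of subpopulations with no representative in $x$, so $|S_0|=K_0(x)$ and, under $E_0$, we have $j^*\in S_0$. The key claim is that, conditioned on $E_0$ and on the learner's entire view $(X,z)$, the posterior on $j^*$ is \emph{uniform} over $S_0$. Granting this, since the model $M=\alg(X)$ is a function of $x$ and internal coins and its prediction $M(z)$ is a function of $(x,z)$ and those coins, we get $\Pr[M(z)=j^*\mid X=x,z,E_0]\le 1/K_0(x)$, hence $\Pr[\alg\text{ correct}\mid E_0]\le \E[1/K_0\mid E_0]$.

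\textbf{Step 2: the uniformity claim.} This follows from two symmetry facts coming from the product structure of the metadistribution (the $\npops$ pairs $(\delta_j,C_j)$ are i.i.d., hence exchangeable). (i) Conditioning on $\{X=x\}$ keeps the collection $\{(\delta_j,C_j)\}_{j\in S_0}$ exchangeable, because no sample came from an index in $S_0$, so these coordinates enter the likelihood of $\{X=x\}$ only through the symmetric normalizer $\sum_{j\in S_0}\delta_j$; therefore $\E[D(j)\mid X=x]$ is the same for every $j\in S_0$, i.e.\ $j^*\mid X=x,E_0$ is uniform on $S_0$. (ii) For $j\in S_0$, the component $C_j$ is independent of $X$ and still a fresh draw from $q_{HC}$, so the likelihood $\Pr[z=\zeta\mid j^*=j,X=x]$ equals the same marginal $\E_{C\sim q_{HC}}[\Pr_{C}(\zeta)]$ for every $j\in S_0$; combining (i) and (ii) via Bayes' rule, the posterior on $j^*$ given $(X,z)$ stays uniform on $S_0$. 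The main obstacle of the proof is getting this likelihood-factorization argument airtight: the mixture weights $D(j)$ depend on \emph{all} the $\delta$'s through the normalizer, so one must check carefully that conditioning on $X$ neither breaks the symmetry among absent subpopulations nor leaks anything about their feature structure — intuitively clear because absent subpopulations contributed nothing to $X$, but it needs spelling out.

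\textbf{Step 3: average over $K_0$.} Finally I would evaluate $\E[1/K_0\mid E_0]$ using the same Bayes-rule identity used for $\tau_1$ in the proof of Lemma~\ref{lemma:central_reduction}. Since $\tau_0$ depends only on $n,\npops,\pi$ and not on which subpopulations are empty, we have $\Pr[E_0\mid K_0=k_0]=\tau_0 k_0$ and, by linearity of expectation, $\Pr[E_0]=\tau_0\mu_0 n$; hence $\Pr[K_0=k_0\mid E_0]=k_0\Pr[K_0=k_0]/(\mu_0 n)$. Therefore $\E[1/K_0\mid E_0]=\sum_{k_0\ge 1}\frac{1}{k_0}\cdot\frac{k_0\Pr[K_0=k_0]}{\mu_0 n}=\frac{\Pr[K_0\ge 1]}{\mu_0 n}\le\frac{1}{\mu_0 n}$, so $\Pr[\alg\text{ correct}\mid E_0]\le\frac{1}{\mu_0 n}$ and $\Pr[\alg\text{ errs}\mid E_0]\ge 1-\frac{1}{\mu_0 n}$, as claimed.
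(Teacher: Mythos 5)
Your proof is correct and follows essentially the same route as the paper's: condition on $K_0$, argue that the conditional error is at least $1-1/k_0$, then use the Bayes-rule identity $\Pr[K_0=k_0\mid E_0]=k_0\Pr[K_0=k_0]/(\mu_0 n)$ to collapse the resulting sum. The one place you add value is Step 2: the paper disposes of the per-$K_0$ bound in a single sentence (``all unrepresented subpopulations are equally likely''), whereas you spell out the exchangeability argument — that conditioning on $X=x$ touches the $\delta_j$'s for $j\in S_0$ only through the symmetric normalizer and leaves the $C_j$'s for $j\in S_0$ as fresh i.i.d.\ draws, so the posterior on $j^*$ given $(X,z,E_0)$ is genuinely uniform over $S_0$. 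That is precisely the fact the paper implicitly relies on and is worth having in writing; everything else matches.
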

\begin{proof}
    For any fixed $K_0=k_0$, no algorithm can achieve error below $1 - \frac{1}{k_0}$, since all unrepresented subpopulations are equally likely.
    Decompose $\alg$'s  error over $K_0$:
    \begin{align*}
        \Pr[\alg\text{ errs on \learn{q_c}}\mid E_0] &= \sum_{k=0}^{n} \Pr[K_0=k_0\mid E_0]  Pr[\alg\text{ errs on \learn{q_c}}\mid E_0, K_0=k_0] \\
            &\ge \sum_{k=0}^{n} \Pr[K_0=k_0\mid E_0]  \left(1 - \frac{1}{k_0}\right).
    \end{align*}
    By Bayes' rule,
    \begin{align}
        \Pr[K_0=k_0\mid E_0] &= \frac{\Pr[E_0\mid K_0=k_0]\Pr[K_0=k_0]}{\Pr[E_0]} 
            = \frac{\tau_0 k_0 \Pr[K_0=k_0]}{\tau_0 \mu_0 n}.
    \end{align}
    Thus, since the probabilities sum to 1, 
    \begin{align}
        \Pr[\alg\text{ errs on \learn{q_c}}\mid E_0] &\ge  \sum_{k=0}^{n} \Pr[K_0=k_0\mid E_0] -  
            \sum_{k=0}^{n} \Pr[K_0=k_0] \frac{k_0}{\mu_0 n k_0} \\
            &= 1 - \frac{1}{\mu_0 n}.
    \end{align}
\end{proof}

In Proposition~\ref{prop:HC_phis} below, our analysis of the baseline algorithm requires that we control the probability of two examples from an incorrect subpopulation both falling close to the test example.
This is non-trivial, since the examples are not independent. 
We bound this probability with the following Small-Set Expansion Theorem:
\begin{lemma}[SSE Theorem, see \cite{odonnell2014analysis}]
    Let $X\in_{R}\{0,1\}^d$ and $X'\sim \mathrm{BSC}_{\frac{1-\rho}{2}}(X)$ for any $\rho$.
    For any set $A\subseteq \{0,1\}^d$, we have
    $\Pr[X,X'\in A] \le \left(|A| \cdot 2^{-d}\right)^{\frac{2}{1+\rho}}$.
\end{lemma}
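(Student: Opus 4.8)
The plan is to obtain this estimate as an immediate consequence of the two-function hypercontractive inequality (the Bonami--Beckner / Nelson inequality in its two-function form), as stated in \cite{odonnell2014analysis}, applied to a single indicator function.

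First I would rewrite the quantity spectrally. Let $f = \mathbf{1}_A \colon \{0,1\}^d \to \{0,1\}$ be the indicator of $A$. The pair $(X, X')$ with $X$ uniform on $\{0,1\}^d$ and $X' \sim \mathrm{BSC}_{\frac{1-\rho}{2}}(X)$ is exactly a $\rho$-correlated pair: the coordinates are independent and, for each $i$, $\mathbb{E}[(-1)^{X_i + X'_i}] = 1 - 2\cdot\tfrac{1-\rho}{2} = \rho$ (this is the same identity already noted after Definition~\ref{def:qHC}). Hence $\Pr[X \in A,\ X' \in A] = \mathbb{E}\big[f(X)\,f(X')\big] = \langle f,\, T_\rho f\rangle$, where $T_\rho$ is the noise operator.

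Next I would invoke the two-function hypercontractive inequality for nonnegative functions: if $(X, X')$ is a $\rho$-correlated pair and $r, s \ge 0$ satisfy $rs \ge \rho^2$, then $\mathbb{E}[f(X)\,g(X')] \le \|f\|_{1+r}\,\|g\|_{1+s}$ for all nonnegative $f, g$. (This is the form one gets from H\"older's inequality, positivity of $T_\rho$, and the standard $(p,q)$-hypercontractivity bound; it tensorizes over coordinates.) I would apply it with $g = f = \mathbf{1}_A$ and the symmetric choice $r = s = \rho$, so that $rs = \rho^2$ holds with equality. This gives $\Pr[X \in A,\ X' \in A] \le \|\mathbf{1}_A\|_{1+\rho}^{\,2}$, and since $\|\mathbf{1}_A\|_{1+\rho} = \big(\mathbb{E}[\mathbf{1}_A]\big)^{1/(1+\rho)} = \big(|A|\cdot 2^{-d}\big)^{1/(1+\rho)}$, the claimed bound $\big(|A|\cdot 2^{-d}\big)^{2/(1+\rho)}$ follows at once.

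There is no genuine obstacle here beyond invoking hypercontractivity, which is standard and citable; the only points requiring care are (i) confirming that $\mathrm{BSC}_{\frac{1-\rho}{2}}$ produces correlation exactly $\rho$, so that the inequality applies with this same $\rho$, and (ii) using the \emph{two-function} form with the symmetric exponents $r = s = \rho$ --- the weaker route via $\|T_\rho f\|_2 \le \|f\|_{1+\rho^2}$ together with Cauchy--Schwarz would only yield the exponent $\tfrac12 + \tfrac{1}{1+\rho^2}$, which is strictly worse than the sharp $\tfrac{2}{1+\rho}$. A fully self-contained version is possible by reducing via tensorization to the one-variable two-point case, but for this paper citing \cite{odonnell2014analysis} is the natural route.
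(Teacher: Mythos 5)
The paper does not prove this lemma; it is stated with a citation to \cite{odonnell2014analysis} and used as a black box, so there is no internal proof to compare against. Your derivation is correct: identifying $(X,X')$ as a $\rho$-correlated pair, rewriting the probability as $\langle \mathbf{1}_A, T_\rho \mathbf{1}_A\rangle$, and invoking the two-function hypercontractive inequality with the symmetric exponents $r=s=\rho$ gives exactly $\|\mathbf{1}_A\|_{1+\rho}^2 = (|A|2^{-d})^{2/(1+\rho)}$. One small refinement to your aside: the loss in the Cauchy--Schwarz route comes from applying Cauchy--Schwarz to $T_\rho$ directly rather than splitting it. Writing $T_\rho = T_{\sqrt{\rho}}\circ T_{\sqrt{\rho}}$ and using self-adjointness gives $\langle f, T_\rho f\rangle = \|T_{\sqrt{\rho}} f\|_2^2 \le \|f\|_{1+\rho}^2$, which recovers the sharp exponent $\tfrac{2}{1+\rho}$ from the single-function $(2,q)$-hypercontractivity alone; this is in fact the route O'Donnell takes to the Small-Set Expansion Theorem. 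So the single-function form is not inherently weaker, and the two routes are equivalent in strength here.
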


\begin{lemma}\label{lemma:two_spurious_samples}
    Let $n$ be sufficiently large and let $d\ge n$.
    Let $X, Y\in \{0,1\}^d$ be independent uniform random variables and let $X'\sim \mathrm{BSC}_{\frac{1-\rho}{2}}(X)$ for $\rho=\sqrt{\frac{2\ln a\mu_1 n - \ln \ln n}{d}}$.
    There exists a constant $\alpha_1>0$ such that
    \begin{equation}
        \Pr\left[ \max \{d_H(X,Y), d_H(X',Y)\}\le \frac{d}{2} - \frac{3\rho d}{8} \right]\le  (\mu_1 n)^{-(1+\alpha_1)}.
    \end{equation}
\end{lemma}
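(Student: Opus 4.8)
The plan is to condition on $Y$, reduce the event to a small-set expansion (hypercontractivity) estimate for a Hamming ball, and then control the size of that ball with the sharp binomial tail bound of Lemma~\ref{lemma:binomial_tail}. Note first that $d_H(X,Y)$ and $d_H(X',Y)$ are \emph{not} independent — $X'$ is a noisy copy of $X$ — so a naive product or union argument is unavailable; this correlated event is exactly what the Small-Set Expansion Theorem stated above handles. Fix $y\in\{0,1\}^d$, let $r \defeq \big\lfloor \tfrac{d}{2} - \tfrac{3\rho d}{8}\big\rfloor$ and $A_y \defeq \{z : d_H(z,y)\le r\}$. Conditioned on $Y=y$, the event in the lemma is exactly $\{X\in A_y\}\cap\{X'\in A_y\}$; since $|A_y|$ is independent of $y$ by symmetry of the cube and $Y\perp(X,X')$, it suffices to bound $\Pr[X\in A_y,\ X'\in A_y]$ for one fixed $y$, and this quantity depends only on $q \defeq |A_y|\,2^{-d} = \Pr[\mathrm{Bin}(d,1/2)\le r]$.

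Next I would apply the Small-Set Expansion Theorem: since $X$ is uniform and $X'\sim \mathrm{BSC}_{\frac{1-\rho}{2}}(X)$, it gives $\Pr[X,X'\in A_y]\le q^{2/(1+\rho)}$. I would bound $q$ using Lemma~\ref{lemma:binomial_tail}: writing the gap as $\tfrac{3\rho d}{8} = x\sqrt{d/4}$ yields $x = \tfrac34\sqrt{\rho^2 d} = \tfrac34\sqrt{2\ln(a\mu_1 n)-\ln\ln n} = \Theta(\sqrt{\ln n})$, which satisfies $1\ll x\ll d^{1/4}$ for $n$ large and $d\ge n$. Hence $q = \tfrac{1+o(1)}{\sqrt{2\pi}\,x}\exp(-x^2/2)$, and since $x^2/2 = \tfrac{9}{16}\ln(a\mu_1 n) - \tfrac{9}{32}\ln\ln n$ we get $q \le (a\mu_1 n)^{-9/16}\cdot\mathrm{polylog}(n)$. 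Writing $q^{2/(1+\rho)} = q^{2}\cdot q^{-2\rho/(1+\rho)}$, the main factor $q^2 \le (a\mu_1 n)^{-9/8}\cdot\mathrm{polylog}(n)$, and since $9/8 = 1 + \tfrac18$ and (after including the $1/x^2$) the polylogarithmic factor is a negative power of $\ln n$, this is at most $(\mu_1 n)^{-(1+\alpha_1)}$ for, say, $\alpha_1 = \tfrac18$ and all sufficiently large $n$ — provided the correction factor $q^{-2\rho/(1+\rho)}$ is negligible.

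The last point, and the only slightly delicate bookkeeping, is twofold. First the correction: from the two-sided estimate of Lemma~\ref{lemma:binomial_tail} we have $\ln(1/q) \le \tfrac{9}{16}\ln(a\mu_1 n) + O(\ln\ln n)$, so $q^{-2\rho/(1+\rho)} = \exp\!\big(\tfrac{2\rho}{1+\rho}\ln(1/q)\big) \le \exp\!\big(\tfrac98\rho\ln(a\mu_1 n) + o(1)\big)$, and $\rho\ln(a\mu_1 n)\le \sqrt{2(\ln an)^3/d}\le\sqrt{2(\ln an)^3/n}\to 0$ when $d\ge n$, so this factor is $1+o(1)$. Second, one should verify that the constant $\tfrac38$ in the statement is chosen precisely so that hypercontractive squaring pushes the exponent strictly past $1$: indeed $8\cdot(\tfrac38)^2 = \tfrac98 > 1$, with enough room that polylog and $o(1)$ factors cannot spoil it (any fraction in $(\tfrac{1}{2\sqrt2},\,\tfrac12)$ would work, and $\tfrac38\approx 0.375$ lies in this window). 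I do not expect a genuine obstacle here: the argument is a routine combination of the small-set expansion/hypercontractivity inequality with the sharp Littlewood binomial tail estimate, and the only care needed is in tracking the $\rho$-dependent exponent and the logarithmic prefactors.
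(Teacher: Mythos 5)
Your proposal is correct and follows the same two-step structure as the paper's own proof: condition on $Y=y$, identify the event as $\{X\in A_y\}\cap\{X'\in A_y\}$ where $A_y$ is the Hamming ball of radius $d/2 - 3\rho d/8$, and invoke the Small-Set Expansion theorem (exactly the lemma the paper states just before this one) to handle the correlation between $X$ and $X'$. The single point of divergence is how you estimate the ball density $q = |A_y|2^{-d}$: you use the sharp Littlewood tail bound (Lemma~\ref{lemma:binomial_tail}, already imported elsewhere in the section), whereas the paper uses the cruder volume estimate $|A_y|\le 2^{d\,h\left(\frac{1-\gamma\rho}{2}\right)} \le 2^{d\left(1-\frac{(\gamma\rho)^2}{2\ln 2}\right)}$ with $\gamma=3/4$. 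Both routes produce the exponent $\approx \frac{9}{8}\ln(a\mu_1 n)$ after the hypercontractive squaring by $2/(1+\rho)$; your Littlewood route yields the same leading term with a cleaner $(\ln n)^{-\Theta(1)}$ polylog prefactor, while the paper's entropy bound absorbs the slack by observing $\ln n \le (a\mu_1 n)^{0.1}$ and $\rho\le 0.01$ for $n,d$ large to land at $(a\mu_1 n)^{-1.05}$. Your bookkeeping of the correction $q^{-2\rho/(1+\rho)} = 1+o(1)$ is correct under the stated hypothesis $d\ge n$ (and in fact under the weaker $d\ge n^{0.1}$ used in Theorem~\ref{thm:HC_main}), and your choice $\alpha_1 = 1/8$ is consistent with what the paper obtains. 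No gaps.
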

\begin{proof}
    Set $\gamma=\frac{3}{4}$ and write $\tau=d\cdot \frac{(1-\gamma\rho)}{2}=\frac{d}{2}-\frac{3\rho d}{8}$.
    Once $Y=y$ is fixed, there is some ball $B(\tau)$ of points within distance $\tau$.
    We bound $\Pr[X, X'\in B(\tau)]$.
    (Note that this probability is independent of the value of the test example).
    Since $X$ is uniform and $X'\sim\mathrm{BSC}_{\frac{1-\rho}{2}}(X)$, we apply the Small Set Expansion theorem.
    First, note that by standard upper bounds on the volume of Hamming balls and the binary entropy function we have
    \begin{align*}
        |B(\tau)| = \left|B\left(d\cdot \frac{1- \gamma \rho }{2}\right)\right| 
            \le 2^{d\cdot h\left(\frac{1-\gamma \rho}{2}\right)} 
            \le 2^{d\left(1 - \frac{(\gamma\rho)^2}{2\ln 2}\right)}.
    \end{align*}
    Thus, applying the SSE,
    \begin{align}
        \Pr[X, X'\in B(\tau)] &\le \left(|B(\tau)|\cdot 2^{-d}\right)^{\frac{2}{1+\rho}}  \\
        &\le 2^{ -\frac{(\gamma\rho)^2 d}{2\ln 2} \cdot \frac{2}{1+\rho} } = e^{-\frac{\gamma^2 \rho^2 d}{1+\rho}}.
    \end{align}
    Plugging in the value of $\rho$ in the numerator, we have 
    \begin{align}
        \Pr[X, X'\in B(\tau)] &\le \exp\left\{-\frac{\gamma^2 d}{1+\rho}
        \left(\sqrt{\frac{2\ln a\mu_1 n - \ln \ln n}{d}} \right)^2  \right\} 
        = \left( \frac{\ln n}{(a\mu_1 n)^2} \right)^{\frac{\gamma^2}{1+\rho}}
    \end{align}
    For sufficiently large $n$ and $d$ we have both $\ln n \le (a\mu_1 n)^{0.1}$ and $\rho\le 0.01$, so this term is upper bounded by $(a\mu_1 n)^{-1.9 \gamma^2 / 1.01} \le (a\mu_1 n)^{-1.05}$.
    Since $a>1$, for a simpler upper bound we omit it in the final statement.
\end{proof}

\begin{proposition}\label{prop:HC_phis}
    For any $a>1$, sufficiently large $n$, and $d\ge n^{0.1}$, set $\rho = \sqrt{\frac{2\ln a\mu_1 n - \ln \ln n}{d}}$.
    There is a baseline algorithm $A^*$ such that, for some constant $\alpha>0$, $\phi_1(\qHC) \le (\mu_1 n)^{-\alpha}$ and  $\phi_2(\qHC) = (\mu_1 n)^{-\alpha}$.
\end{proposition}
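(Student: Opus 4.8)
The plan is to apply the central reduction, Lemma~\ref{lemma:central_reduction}, with a hand-crafted \emph{baseline} learner $A^*$ in place of $\Aopt$ as the comparison algorithm. This substitution is harmless: for any fixed $A^*$ we have $\Pr[\alg\text{ errs}]-\Pr[A^*\text{ errs}]\le \Pr[\alg\text{ errs}]-\Pr[\Aopt\text{ errs}]\le\eps$ (since $A^*$ is no better than $\Aopt$), so the proof of Lemma~\ref{lemma:central_reduction} goes through verbatim with $A^*$ instead of $\Aopt$, giving the same conclusion with $\phi_1,\phi_2$ now defined relative to $A^*$. So it suffices to exhibit $A^*$ and bound its $\phi_1,\phi_2$. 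I would take $A^*$ to do the following: it reads the labeled sample (hence knows how many points each subpopulation contributed), and on a test point $z$ it calls a subpopulation $j$ with at least two sample points a \emph{strong candidate} if at least two of those points lie within Hamming distance $d/2-\tfrac38\rho d$ of $z$; if some subpopulation is a strong candidate it outputs (one of) them, and otherwise it outputs the label of the singleton whose point is nearest to $z$. Note that on the event ``no strong candidate,'' $A^*$ restricted to the singleton subpopulations is exactly the Bayes-optimal rule for $\sing{k}{q_{HC}}$ from Proposition~\ref{prop:bayes_for_sing}.

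Two probabilistic facts drive the analysis, both conditioned on the relevant event. (i) Conditioned on $E_{>1}$, the test's own subpopulation $j^*$ is flagged as a strong candidate except with probability $(\mu_1 n)^{-\Omega(1)}$: each of its (at least two) points $x$ has $d_H(x,z)\sim\mathrm{Bin}(d-|\mc{I}_{j^*}|,\tfrac12)$, and since $|\mc{I}_{j^*}|$ concentrates around $\rho d$, the threshold $d/2-\tfrac38\rho d$ lies $\Omega(\rho\sqrt d)=\Omega(\sqrt{\log n})$ standard deviations above the mean of $d_H(x,z)$; a Chernoff bound and a union bound over the (constant-in-expectation) number of sample points of $j^*$ give the claim. (ii) Except with probability $(\mu_1 n)^{-\Omega(1)}$, no subpopulation \emph{not} containing the test point is flagged: two points of one subpopulation are distributed as $(X,X')$ with $X$ uniform and $X'\sim\mathrm{BSC}_{(1-\rho)/2}(X)$, and for a subpopulation that did not generate $z$ the test point is an independent uniform string, so Lemma~\ref{lemma:two_spurious_samples} bounds the probability that a fixed same-subpopulation pair \emph{both} lands within $d/2-\tfrac38\rho d$ of $z$ by $(\mu_1 n)^{-(1+\alpha_1)}$; since the number of same-subpopulation pairs is $O(n)$ in expectation (for the uniform mixture it is about $n/2$, and in general it is controlled by the prior), a union bound yields failure probability $(\mu_1 n)^{-\alpha_1}$.

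Given (i) and (ii), both $\phi$'s follow quickly. For $\phi_2$: conditioned on $E_1$ every multi-point subpopulation is incorrect, so by (ii) no strong candidate exists except with probability $(\mu_1 n)^{-\alpha_1}$, on which event $A^*$ runs the optimal $\sing{k}{q_{HC}}$ predictor; hence $\Pr[A^*\text{ errs}\mid E_1,K=k]\le \inf_{\alg'}\Pr[\alg'\text{ errs on }\sing{k}{q_{HC}}]+(\mu_1 n)^{-\alpha_1}$, while $\Pr[A^*\text{ errs}\mid E_1,K=k]\ge\inf_{\alg'}\Pr[\alg'\text{ errs on }\sing{k}{q_{HC}}]$ because $A^*$ on that slice is itself a feasible $\sing{k}{q_{HC}}$ solver; averaging over $k$ gives $0\le\phi_2\le(\mu_1 n)^{-\alpha_1}$. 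For $\phi_1$, split $\bar E_1=E_0\cup E_{>1}$. On $E_0$, Proposition~\ref{prop:HC_error_lb} forces \emph{every} algorithm --- $A^*$ and $\alg$ alike --- to err with probability at least $1-1/(\mu_0 n)$, so the $E_0$ contribution to $\phi_1$ is at most $\Pr[E_0]/(\mu_0 n)=\tau_0=O(1/n)$; on $E_{>1}$, facts (i) and (ii) show $A^*$ names the correct subpopulation except with probability $(\mu_1 n)^{-\Omega(1)}$, so the $E_{>1}$ contribution is at most $\Pr[E_{>1}]\cdot(\mu_1 n)^{-\Omega(1)}$. Choosing $\alpha$ to be any positive constant below $\alpha_1$ then gives $\phi_1,\phi_2\le(\mu_1 n)^{-\alpha}$.

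The genuinely delicate point is fact (ii): because sample points within a subpopulation are correlated, one cannot treat them as independent uniform strings, and the estimate needs the Small-Set Expansion theorem exactly as packaged in Lemma~\ref{lemma:two_spurious_samples}. The one thing to be careful about is that the per-pair bound there has exponent only a little above $1$, so the union bound must run over same-subpopulation pairs (of which there are $O(n)$) rather than over all $\binom n2$ pairs; everything else is routine concentration.
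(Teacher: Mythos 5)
Your proposal is correct and follows essentially the same approach as the paper's proof: a hand-crafted baseline $A^*$, the Small-Set Expansion bound packaged as Lemma~\ref{lemma:two_spurious_samples} plus a union bound to rule out spurious multi-representative subpopulations, a Chernoff/Hoeffding bound to confirm the true one is flagged, Proposition~\ref{prop:HC_error_lb} for the $E_0$ contribution to $\phi_1$, and Proposition~\ref{prop:bayes_for_sing} to match $\phi_2$ to the optimal singletons error. The only meaningful difference is that the paper's $A^*$ first discards all but two representatives from each multi-point subpopulation before running the threshold test, so the union bound in your fact~(ii) runs over at most $N$ pairs deterministically; your version keeps every representative, so you must additionally argue that the number of same-subpopulation pairs has $O(n)$ expectation (and then apply Markov). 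That is fine for the uniform prior, but it is a side computation the paper's truncation avoids, and for general priors your one-line remark that it is ``controlled by the prior'' would need to be made precise.
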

\begin{proof}
    $A^*$ operates as follows: for any subpopulation which received more than 2 representatives, Alice randomly throws away all but 2 of them.
    Alice then sends this (possibly smaller) data set to Bob.
    Bob then performs the following steps.
    First, he checks if, among the subpopulations with two representatives,
    there are any with both examples within Hamming distance $\tau$ of the test example, for threshold $\tau=\frac{d}{2}-\frac{3\rho d}{8}$.
    If there is exactly one such subpopulation, Bob outputs its label.
    If there are multiple such subpopulations, he picks one arbitrarily.
    If there is no such subpopulation, Bob outputs the label of the singleton which is nearest to his test example.
    
    Before working directly with the $\phi$ terms, let us analyze the error of this algorithm.
    By Lemma~\ref{lemma:two_spurious_samples}, two samples from an incorrect subpopulation have probability $(\mu_1 n)^{-(1+\alpha_1)}$ of both being within distance $\tau$.
    By a union bound over the (at most) $n$ such subpopulations, this probability is $(\mu_1 n)^{-\alpha_1}$.
    
    Suppose the correct subpopulation has two representatives in the data.
    Let random variables $A$ and $A'$ be the Hamming distances from the test sample to these points.
    Via a Hoeffding bound~\cite{mitzenmacher2017probability}, we show that with high probability both these random variables are less than $\tau$.
    Recall that $A \sim \mathrm{Bin}(d,\frac{1-\rho}{2})$.
    \begin{align*}
        \Pr[(A > \tau) \cup (A' > \tau)] \le 2 \Pr[A > \tau]
        &=2\Pr\left[ A>\frac{d}{2}-\frac{3\rho d}{8} \right] \\
        &=2\Pr\left[ A-\frac{d}{2}+\frac{\rho d}{2}>\frac{\rho d}{8} \right] \\
        &= 2\Pr\left[ \frac{A}{d}-\mu > \frac{\rho}{8} \right] \\
        &\le 2\exp\left\{ - d \rho^2 / 32 \right\} \\
        &= 2\exp\left\{ -  \frac{d}{32}\left(\sqrt{\frac{2\ln an - \ln \ln n}{d}} \right)^2 \right\} \\
        &= 2\exp\left\{ - \left(2\ln a\mu_1 n - \ln \ln n \right)/32 \right\} \\
        &= 2\left( \frac{\ln n}{(an)^2} \right)^{1/32} \le (\mu_1 n)^{-\alpha_2}
    \end{align*}
    for some constant $\alpha_2$ and sufficiently large $n$.

    We now analyze $\phi_1$.
    Break up the terms across mutually exclusive events $\bar{E_1}=E_0\cup E_{>1}$.
    \begin{align*}
        \phi_1(\qHC) &= \Pr[\bar{E_1}]\Bigl(\Pr[\text{$A^*$ errs on \learn{\qHC}}\mid \bar{E_1}] \nonumber \\
             &\hspace{2.5cm}- \Pr[\text{$\alg$ errs on \learn{\qHC}}\mid \bar{E_1}]\Bigr) \nonumber \\
             &= \Pr[E_0]\Bigl(\Pr[\text{$A^*$ errs on \learn{\qHC}}\mid E_0]\\
             &\hspace{2.5cm}- \Pr[\text{$\alg$ errs on \learn{\qHC}}\mid E_0]\Bigr) \\
             &\hspace{0.5cm}+ \Pr[E_{>1}]\Bigl(\Pr[\text{$A^*$ errs on \learn{\qHC}}\mid E_{>1}] \\
             &\hspace{2.5cm}- \Pr[\text{$\alg$ errs on \learn{\qHC}}\mid E_{>1}]\Bigr) \\
             &\le 1\cdot \left(1 - \Pr[\text{$\alg$ errs on \learn{\qHC}}\mid E_0]\right)  \\
             &\hspace{0.5cm}+ 1\cdot\left(\Pr[\text{$A^*$ errs on \learn{\qHC}}\mid E_{>1}] - 0\right).
    \end{align*}
    By the error lower bound in Proposition~\ref{prop:HC_error_lb}, $\Pr[\text{$\alg$ errs on \learn{\qHC}}\mid E_0]\ge 1 - \frac{1}{\mu_0 n}$, where $\mu_0$, as defined in Equation~\eqref{eq:tau0_mu0_def}, is the expected number of subpopulations with no representative.
    And, as we calculated above, there exists a constant $\alpha>0$ such that $\Pr[\text{$A^*$ errs on \learn{\qHC}}\mid E_{>1}] \le (\mu_1 n)^{-\alpha}$.
    Thus, overall, we have $\phi_1(\qHC) \le O(n^{-1}) + (\mu_1 n)^{-\alpha}$. 
    The latter term dominates for sufficiently large $n$.
    
    
    For $\phi_2$, observe that, conditioned on $E_1$, with probability at least $1-(\mu_1 n)^{-\alpha}$, $A^*$ outputs the label of the singleton that is closest to the test example.
    By Proposition~\ref{prop:bayes_for_sing}, this is the optimal strategy, regardless of the number of singletons $k$ (when conditioning on $E_1$).
    Thus
    \begin{align*}
        \phi_2(\qHC) &= \sum_{k=1}^{\npops} \Pr[K=k\mid E_1]\Bigl( \Pr[A^* \text{ errs on \learn{\qHC}}\mid E_1, K=k] \\
        &\hspace{4cm} - \inf_{\alg'} \Pr[\alg' \text{ errs on \sing{k}{\qHC}}]\Bigr) \\
        &\le \sum_{k=1}^{\npops} \Pr[K=k\mid E_1]\times (\mu_1 n)^{-\alpha} \le (\mu_1 n)^{-\alpha},
    \end{align*}
    since the probabilities sum to one.
\end{proof}

\subsection{Lower Bound for Singletons Task}

\begin{figure*}
    \includegraphics[width=0.99\textwidth]{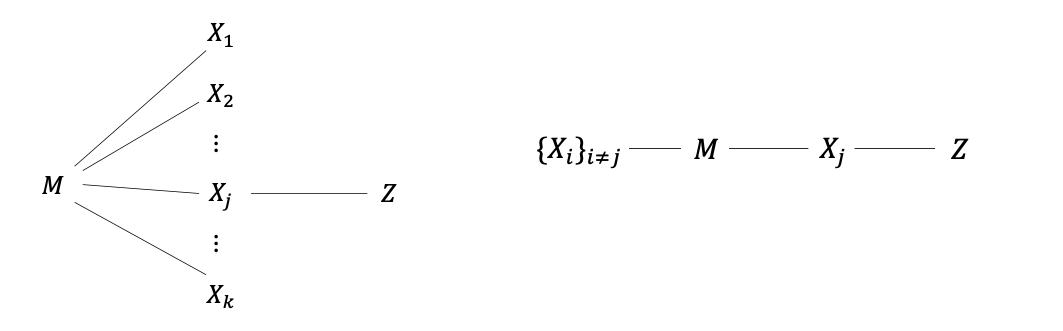}
    \caption{On the left, the dependency graph for \sing{k}{q_{HC}}, conditioned on $J=j$. On the right, the same graph rearranged to highlight the fact that $Z$ depends on $M$ only through $X_j$. This allows us to apply the strong data-processing inequality in our lower bound for the singletons task.}
    \label{fig:dependency_graph}
\end{figure*}

Our lower bound for \sing{k}{q_{HC}} is an immediate corollary of an identical lower bound on the external information complexity of Nearest of $k$ Neighbors.
To observe this, recall two properties of the task \sing{k}{q_{HC}}: 
(1) the joint distribution of any two samples from different subpopulations is the uniform product distribution, and
(2) the joint distribution of any two samples $z_1, z_2$ from the same subpopulation is uniform over $z_1$ with conditional distribution $BSC_{\frac{1-\rho}{2}}(z_1)$.

\begin{lemma}\label{lemma:HC_communication_bound}
    Set $\rho=\sqrt{\frac{2\ln ak - \ln \ln k}{d}}$ for constant $a>1$.
    Assume $k$ is sufficiently large and $d\ge k^{0.1}$.
    Any one-way communication protocol for Nearest of $k$ Neighbors with error at most $\eps_k$ satisfies
    \begin{align*}
        I(X' ;M) &\ge \frac{1 - c_a - \eps_k - o(1)}{2\ln 2 + o(1)}\cdot kd.
    \end{align*}
    The $o(1)$ expressions hide terms that are all $O\left(\log^{-1}n\right)$ and depend only on $k$ and $a$.
\end{lemma}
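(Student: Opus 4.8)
The plan is to derive the bound from the tensorized strong data processing inequality (SDPI) for binary symmetric channels, following the template used for the one-way information complexity of Gap-Hamming \citep{hadar2019communication}. Write $X'=(X_1,\dots,X_k)$ for Alice's inputs (i.i.d.\ uniform on $\bit{d}$), $j^*$ for the uniformly random target, $Y\sim BSC_{(1-\rho)/2}(X_{j^*})$ for Bob's input, $M$ for Alice's message, and $\hat\jmath$ for Bob's guess. The key structural observation is that, conditioned on the event $\{j^*=j\}$, the triple $(M,X_j,Y)$ is a Markov chain $M-X_j-Y$: given $X_j$, the message $M$ is a function of $(X_{-j},\text{Alice's coins})$ only, while $Y$ is a function of $X_j$ and fresh channel noise, and these are independent. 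Moreover, since neither $M$ nor the $X_i$'s depend on $j^*$, the conditional law of $(M,Y)$ given $\{j^*=j\}$ equals the \emph{unconditional} law of $(M,Y_j)$ where $Y_j:=BSC_{(1-\rho)/2}(X_j)$.

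The first step is an SDPI contraction, done for each $j$ separately and then summed. The $d$-fold product of $BSC_{(1-\rho)/2}$ has KL-contraction coefficient $\bigl(1-2\cdot\tfrac{1-\rho}{2}\bigr)^2=\rho^2$, and this coefficient tensorizes, so the Markov chain above yields $I(M;Y_j)\le\rho^2\,I(M;X_j)$ for every $j$. Since the $X_j$ are independent, the chain rule together with ``conditioning reduces entropy'' gives $I(X';M)\ge\sum_j I(M;X_j)$; combining with $\sum_j I(M;Y_j)=\sum_j I(M;Y\mid j^*=j)=k\,I(M;Y\mid j^*)$ we obtain
\[
   I(X';M)\ \ge\ \frac{k}{\rho^2}\,I(M;Y\mid j^*).
\]
The second step converts correctness into information. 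Because $j^*$ is uniform and $Y\mid\{j^*=j\}$ is uniform for every $j$, we have $I(j^*;Y)=0$ and $I(j^*;M)=0$; expanding $I(M,j^*;Y)$ in the two orders then gives $I(M;Y\mid j^*)=I(M;Y)+I(j^*;M,Y)\ge I(j^*;M,Y)$. Bob reconstructs $j^*$ from $(M,Y)$ with error at most $\eps_k$, so Fano's inequality gives $I(j^*;M,Y)\ge(1-\eps_k)\log k-h(\eps_k)$. Plugging in $\rho^2=\frac{2\ln ak-\ln\ln k}{d}$, one has $\frac{\log k}{\rho^2}=\frac{d}{2\ln 2}\cdot\frac{\ln k}{\ln ak-\frac12\ln\ln k}=\frac{d}{2\ln 2}\bigl(1-o(1)\bigr)$ and $\frac{k\,h(\eps_k)}{\rho^2}=O\!\bigl(\frac{kd}{\log k}\bigr)=o(kd)$, so
\[
   I(X';M)\ \ge\ \frac{1-\eps_k-o(1)}{2\ln 2+o(1)}\cdot kd,
\]
which a fortiori implies the claimed bound with $1-c_a-\eps_k$ in the numerator (here $c_a>0$ is the optimal error from Proposition~\ref{prop:sing_constant_error}, which also fixes the regime $d\ge k^{0.1}$, $k$ large needed for the Littlewood binomial-tail estimate). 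All hidden terms are $O(\log^{-1}k)$ and depend only on $k$ and $a$.

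The main obstacle is getting the first step right: one must resist the natural but lossy temptation to bound separately the $k$ subproblems ``is $Y$ close to $X_j$?'' (each of which can be solved with only $o(d)$ communication) and instead sum the $k$ SDPI inequalities and close the loop with a \emph{single} Fano bound on $j^*$ — it is precisely the fact that Bob must commit to one of $k$ labels, forcing him to resolve about $\log k$ bits about the target in each scenario, that multiplies through to the linear-in-$d$ bound. A secondary point of care is verifying that conditioning on $\{j^*=j\}$ genuinely produces the clean chain $M-X_j-Y$ despite $M$ depending on all of $X'$, and tracking the low-order terms introduced by the $\rho$-calibration; the latter is entirely encapsulated in Proposition~\ref{prop:sing_constant_error}.
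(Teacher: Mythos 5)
Your proof is correct and follows the paper's argument essentially step for step: per-index SDPI contraction $I(M;Y_j)\le\rho^2 I(M;X_j)$, superadditivity $I(X';M)\ge\sum_j I(M;X_j)$ from independence plus the chain rule, and a single Fano bound on recovering $j^*$ from $(M,Y)$. The one cosmetic difference is how you link $I(j^*;M,Y)$ to $I(M;Y\mid j^*)$: you expand $I(M,j^*;Y)$ two ways using $j^*\perp Y$ and $j^*\perp M$, whereas the paper invokes the ``radius'' (golden-formula) characterization of mutual information and then drops the suboptimal choice of reference measure — both yield the same inequality $I(j^*;M,Y)\le I(M;Y\mid j^*)$. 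One small bookkeeping note: in the paper's proof, ``error at most $\eps_k$'' is used to mean error $\eps_k$ above the optimal $c_a+o(1)$ (consistent with Corollary~\ref{cor:HC_singletons_bound}), so the term fed into Fano is $c_a+\eps_k+o(1)$ rather than $\eps_k$; your ``a fortiori'' remark and explicit invocation of Proposition~\ref{prop:sing_constant_error} handle this correctly, just in a slightly different place.
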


\begin{corollary}\label{cor:HC_singletons_bound}
    Set $\rho=\sqrt{\frac{2\ln ak - \ln \ln k}{d}}$ for constant $a>1$.
    Assume $k$ is sufficiently large and $d\ge k^{0.1}$.
    Any algorithm $\alg$ that is $\eps_k$-suboptimal on \sing{\nsing}{\qHC} is $\eps_k$-suboptimal on Nearest of $k$ Neighbors with the same information cost $I(X;M)$.
    Thus,  $\alg$ satisfies
    \begin{align*}
        I(X' ;M) &\ge\frac{1 - c_a - \eps_k - o(1)}{2\ln 2 + o(1)}\cdot kd.
    \end{align*}
\end{corollary}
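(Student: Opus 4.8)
The plan is to derive Corollary~\ref{cor:HC_singletons_bound} immediately from Lemma~\ref{lemma:HC_communication_bound} by observing that $\sing{k}{\qHC}$ \emph{is} Nearest of $k$ Neighbors after integrating out the (random) fixed/unfixed split: Alice's $k$ singletons are i.i.d.\ uniform on $\bit d$, the test point drawn from the same subpopulation as $x_{j^*}$ agrees with $x_{j^*}$ in each coordinate with probability $\tfrac{1+\rho}{2}$ and is independent of the other $x_j$'s, i.e.\ it is exactly $BSC_{(1-\rho)/2}(x_{j^*})$, and by Proposition~\ref{prop:bayes_for_sing} a Bayes-optimal Bob outputs the label of the nearest example --- which is precisely the winning condition of Nearest of $k$ Neighbors. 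Hence an $\eps_k$-suboptimal protocol for $\sing{k}{\qHC}$ is an $\eps_k$-suboptimal protocol for Nearest of $k$ Neighbors with the same $I(X';M)$, and Lemma~\ref{lemma:HC_communication_bound} applies verbatim. So the real content is Lemma~\ref{lemma:HC_communication_bound}.

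\textbf{SDPI across the $k$ coordinate blocks.}
For Lemma~\ref{lemma:HC_communication_bound} I would first fix $J=j$ and record the one structural fact we need (Figure~\ref{fig:dependency_graph}): $Z$ is generated from $X_j$ alone through the channel $BSC_{(1-\rho)/2}$, while $M$ is a randomized function of $X'=(X_1,\dots,X_k)$ and independent coins, so $M - X_j - Z$ is a Markov chain conditioned on $\{J=j\}$. The KL contraction coefficient of $BSC_{(1-\rho)/2}$ is $(1-2\cdot\tfrac{1-\rho}{2})^2=\rho^2$, so the strong data processing inequality gives $I(M;Z\mid J=j)\le \rho^2\, I(M;X_j\mid J=j)=\rho^2\, I(M;X_j)$, the last step because $J\perp(X',M)$. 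Averaging over the uniform $J$, and using the chain rule plus independence of the $X_j$'s --- for each $j$, $I(X_j;M\mid X_{<j})=H(X_j)-H(X_j\mid M,X_{<j})\ge H(X_j)-H(X_j\mid M)=I(X_j;M)$ --- yields
\[
 I(X';M)\;=\;\sum_{j=1}^k I(X_j;M\mid X_{<j})\;\ge\;\sum_{j=1}^k I(X_j;M)\;\ge\;\frac{1}{\rho^2}\sum_{j=1}^k I(M;Z\mid J=j)\;=\;\frac{k}{\rho^2}\, I(M;Z\mid J).
\]

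\textbf{Extracting $\log_2 k$ once, via Fano.}
It remains to show $I(M;Z\mid J)\gtrsim\log_2 k$. The key observation is that $Z$ is marginally uniform on $\bit d$ and stays uniform conditioned on $J$ (a BSC applied to a uniform string is uniform), so $I(Z;J)=0$; combined with $I(J;M)=0$, the chain rule gives $I(M;Z\mid J)=I(M,J;Z)-I(J;Z)=I(M;Z)+I(J;Z\mid M)\ge I(J;Z\mid M)=I(J;M,Z)$. Since an $\eps_k$-suboptimal Bob recovers $J$ from $(M,Z)$ with error at most $c_a+\eps_k+o(1)$ (Proposition~\ref{prop:sing_constant_error} supplies the optimal error $c_a+o(1)$, using $d\ge k^{0.1}$ for the binomial-tail asymptotics), Fano's inequality gives $I(J;M,Z)\ge(1-c_a-\eps_k-o(1))\log_2 k-1$. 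Substituting $\rho^2=(2\ln ak-\ln\ln k)/d$ and simplifying $\log_2 k/(2\ln ak-\ln\ln k)=\tfrac{1}{2\ln 2}(1-o(1))$ turns the display into the claimed $\tfrac{1-c_a-\eps_k-o(1)}{2\ln 2+o(1)}\cdot kd$; the $\ln 2$ is exactly the conversion between the natural log in the definition of $\rho$ and the bits in $\log_2 k$.

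\textbf{Where the difficulty lies.}
Each step is individually short, so the point is conceptual. One must resist bounding the information cost of the $k$ embedded detection subproblems separately --- each is solvable with a $o(d)$-bit message, so that route gives nothing --- and instead sum the $k$ per-block SDPI inequalities and charge the $\log_2 k$ only once, to the fact that Bob must single out $J$ among $k$ options. The one slightly delicate manipulation is the bridge $I(M;Z\mid J)\ge I(J;M,Z)$, which is what lets Fano (which naturally controls $I(J;M,Z)$) feed the SDPI bound; it hinges on $Z$ being marginally independent of $J$. Everything else --- the contraction-coefficient computation, the constant $c_a$ from the binomial tail, and the task equivalence --- is already supplied by the stated propositions, and Theorem~\ref{thm:HC_main} then follows by plugging Corollary~\ref{cor:HC_singletons_bound} (and its easy extension to $k'=k+o(k)$) into the central reduction Lemma~\ref{lemma:central_reduction} together with Lemmas~\ref{lemma:HC_small_info_P} and the $\phi$-bounds of Proposition~\ref{prop:HC_phis}.
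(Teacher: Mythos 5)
Your proposal is correct and matches the paper's structure: the corollary itself is a one-line task equivalence observation (which you state correctly, noting the two distributional facts and Proposition~\ref{prop:bayes_for_sing}), and the substance is Lemma~\ref{lemma:HC_communication_bound}, whose proof you reproduce faithfully, including the key insight to charge the $\log k$ once rather than bounding the $k$ subproblems separately. The only stylistic departure from the paper is the bridge from the Fano bound to the SDPI bound: the paper establishes $I(J;M,Z)\le I(M;Z\mid J)$ via the variational ``radius'' characterization of mutual information ($I(J;M,Z)=\inf_Q \E_j[D_{KL}(P_{M,Z\mid J=j}\Vert Q)]$, instantiated at $Q=P_M\times P_Z$ and then using $M\perp J$, $Z\perp J$), while you derive the same inequality from the same two independence facts via chain-rule manipulations ($I(M;Z\mid J)=I(M;Z)+I(J;Z\mid M)\ge I(J;Z\mid M)=I(J;M,Z)$). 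These are equivalent in content; your route is arguably slightly more elementary.
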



The proof of Lemma \ref{lemma:HC_communication_bound} is adapted from one by Hadar et al. \cite{hadar2019communication} and relies on the following strong data processing inequality for binary symmetric channels.
\begin{lemma}[SDPI]\label{lemma:SDPI}
    Suppose we have a Markov chain $M - X - Y$ where $X\sim \mathrm{Uniform}(\bit{d})$ and $Y \sim \mathrm{BSC}_{\frac{1-\rho}{2}}(X)$.
    Then $I(M;Y) \le \rho^2 I(M;X)$.
\end{lemma}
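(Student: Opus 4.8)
The plan is to prove the equivalent statement that the KL-contraction coefficient of $\mathrm{BSC}_{(1-\rho)/2}$ is at most $\rho^2$: first reduce to a single bit by a tensorization argument, then settle the single-bit case by an elementary convexity estimate on the binary entropy function. (One could alternatively just invoke the classical fact $\eta_{\mathrm{KL}}(\mathrm{BSC}_p)=(1-2p)^2$ with $p=\tfrac{1-\rho}{2}$, but the self-contained argument is short.)

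\medskip

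\noindent\textbf{Step 1: reduction to $d=1$.} Write $X=(X_1,\dots,X_d)$ with the $X_i$ i.i.d.\ uniform on $\{0,1\}$, and $Y_i = X_i\oplus N_i$ with noise bits $N_i$ i.i.d.\ $\mathrm{Ber}\!\big(\tfrac{1-\rho}{2}\big)$, jointly independent of $(X,M)$. By the chain rule, $I(M;Y)=\sum_{i=1}^d I(M;Y_i\mid Y_{<i})$ where $Y_{<i}=(Y_1,\dots,Y_{i-1})$. For each fixed $Y_{<i}=y_{<i}$: (i) $X_i$ is still uniform on $\{0,1\}$, since $X_i\perp(X_{<i},N_{<i})$ hence $X_i\perp Y_{<i}$; (ii) the conditional law of $Y_i$ given $X_i$ is still $\mathrm{BSC}_{(1-\rho)/2}$; and (iii) $M-X_i-Y_i$ is a Markov chain, since $N_i\perp(M,X_{<i},N_{<i})$ gives $Y_i\perp(M,Y_{<i})\mid X_i$. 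Granting the single-bit bound of Step 2, this yields $I(M;Y_i\mid Y_{<i}=y_{<i})\le\rho^2\,I(M;X_i\mid Y_{<i}=y_{<i})$; averaging over $y_{<i}$ and summing gives $I(M;Y)\le\rho^2\sum_i I(M;X_i\mid Y_{<i})$. To convert the right side into $I(M;X)$, note that $Y_{<i}$ is a noisy function of $X_{<i}$ whose noise is independent of $(X_i,M)$, so $X_i\perp Y_{<i}\mid(M,X_{<i})$; combined with the fact that conditioning cannot increase entropy, $H(X_i\mid M,Y_{<i})\ge H(X_i\mid M,X_{<i},Y_{<i})=H(X_i\mid M,X_{<i})$, hence $I(M;X_i\mid Y_{<i})\le I(M;X_i\mid X_{<i})$, and $\sum_i I(M;X_i\mid X_{<i})=I(M;X)$ by the chain rule. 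This reduces everything to the single-bit inequality.

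\medskip

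\noindent\textbf{Step 2: the single-bit case.} Let $X\sim\mathrm{Uniform}(\{0,1\})$, $Y=\mathrm{BSC}_{(1-\rho)/2}(X)$, and $M-X-Y$. Put $p(m)=\Pr[X=1\mid M=m]$; the Markov property gives $\Pr[Y=1\mid M=m]=\tfrac12+\rho\big(p(m)-\tfrac12\big)$. Since the marginals of $X$ and $Y$ are both uniform, $I(M;X)=1-\E_M[h(p(M))]$ and $I(M;Y)=1-\E_M\big[h\big(\tfrac12+\rho(p(M)-\tfrac12)\big)\big]$, where $h$ is the binary entropy. It therefore suffices to prove, for $t\in[-\tfrac12,\tfrac12]$, the pointwise inequality $\phi(t)\ge 0$, where $\phi(t)\defeq h\big(\tfrac12+\rho t\big)-\rho^2 h\big(\tfrac12+t\big)-(1-\rho^2)$, and then take $t=p(M)-\tfrac12$ and expectations. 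One checks $\phi(0)=0$ and $\phi'(0)=0$ (using $h'(\tfrac12)=0$), while $\phi''(t)=\rho^2\big(h''(\tfrac12+\rho t)-h''(\tfrac12+t)\big)\ge 0$ because $h''(u)=-\big((\ln 2)\,u(1-u)\big)^{-1}$ is a decreasing function of $|u-\tfrac12|$ and $|\rho t|\le|t|$. A convex function vanishing to first order at the origin is nonnegative, so $\phi\ge 0$, which is exactly $I(M;Y)\le\rho^2 I(M;X)$.

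\medskip

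\noindent The only real subtlety is the bookkeeping in Step 1: justifying that conditioning on $Y_{<i}$ preserves both the uniformity of $X_i$ and the Markov chain $M-X_i-Y_i$, and that $Y_{<i}$ is degraded relative to $X_{<i}$ so that $\sum_i I(M;X_i\mid Y_{<i})$ collapses to $I(M;X)$. The single-bit estimate itself is a one-line calculus fact.
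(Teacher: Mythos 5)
Your proof is correct. Note that the paper does not prove Lemma~\ref{lemma:SDPI} at all: it is invoked as a known strong data processing inequality (the contraction coefficient of $\mathrm{BSC}_p$ equals $(1-2p)^2$, which with $p=\tfrac{1-\rho}{2}$ gives $\rho^2$), following its use in the Gap-Hamming argument of Hadar et al. So your contribution is a self-contained derivation rather than an alternative to an in-paper argument. Your route is sound on both counts: the tensorization step is handled carefully — you verify that conditioning on $Y_{<i}$ preserves the uniformity of $X_i$, the BSC law of $Y_i\mid X_i$, and the Markov structure $M-X_i-Y_i$, and you correctly collapse $\sum_i I(M;X_i\mid Y_{<i})$ to at most $I(M;X)$ via $X_i\perp Y_{<i}\mid(M,X_{<i})$ (implicitly also using $H(X_i\mid Y_{<i})=H(X_i\mid X_{<i})=1$, which holds by independence and is worth stating); and the single-bit inequality via $\phi(0)=\phi'(0)=0$ together with $\phi''\ge 0$, using that $h''(u)=-\bigl((\ln 2)\,u(1-u)\bigr)^{-1}$ decreases in $|u-\tfrac12|$, is a valid elementary proof for the uniform-input case, which is all the lemma claims. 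Compared with the standard literature proofs (maximal correlation/hypercontractivity, or Mrs.~Gerber's lemma for uniform inputs), your argument is more elementary but proves only the uniform-input statement; that is exactly what the reduction in Lemma~\ref{lemma:HC_communication_bound} needs, since there $X_j$ is uniform on $\bit{d}$ and $Z$ is its BSC corruption.
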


\begin{proof}[Proof of Lemma~\ref{lemma:HC_communication_bound}]
    By Proposition~\ref{prop:sing_constant_error}, this value of $\rho$ results in the optimal algorithm having $c_a+o(1)$ error for some constant $c_a$ that depends only on $a$.
    Since Bob, with access to $M$ and test sample $Z$, can guess the index $J\in_R[k]$ with error at most $c_a + \epsilon_k+o(1)$, we have via Fano's inequality that
    \begin{align}
        I(J;M,Z) &= H(J) - H(J\mid M,Z) \nonumber \\
                &\ge \log k - ((c_a+\eps_k+o(1)) \log k + h(c_a + \eps_k)) \nonumber\\
                &\ge \left(1- c_a - \eps_k - o(1)\right)\log k, \label{eq:GHP_MI_lower_bd}
    \end{align}
    since $h(p)\le 1$ for all $p$.
    We now upper bound $I(J;M,Z)$.
    Let $P$ refer to the joint and marginal distributions defined by the learning task.
    We apply the ``radius'' property of mutual information and take $Q$ to be the product of marginals over $M$ and $Z$: $Q_{M,Z}=P_M\times P_Z$:
    \begin{align*}
        I(J;M,Z) &= \inf_{Q_{M,Z}\in \Delta((M,Z))}  \E_j\left[ D_{KL}\left( P_{M,Z\mid J=j} \Vert Q_{M,Z}\right)\right] \\
                &\le  \E_j\left[ D_{KL}\left( P_{M,Z\mid J=j} \Vert P_M\times P_Z\right)\right].
    \end{align*}
    Next, note that $M\perp J$ and $Z\perp J$, so we have 
    \begin{align*}
        \E_j\Bigl[ D_{KL}\bigl( &P_{M,Z\mid J=j} \Vert P_M\times P_Z\bigr)\Bigr] \\
            &= \E_j\left[ D_{KL}\left( P_{M,Z\mid J=j} \Vert P_{M\mid J=j}\times P_{Z\mid J=j}\right)\right] \\
            &=  \E_j\left[ I(M;Z\mid J=j) \right].
    \end{align*}
    
    Now we apply the SDPI. 
    \ifSTOC
        For any fixed $j$, $M$ depends on the test sample $Z$ only through data point $X_j$, since $Z$ is a noisy version of $X_j$.
    \else
        For any fixed $j$, $M$ depends on the test sample $Z$ only through data point $X_j$.
        To illustrate this, observe that the left dependency graph drawn in Figure \ref{fig:dependency_graph} is equivalent to the Markov chain on the right.
    \fi
    We can marginalize out $\{X_i\}_{i\neq j}$ and apply Lemma \ref{lemma:SDPI}:
    \begin{align*}
        \E_j\left[ I(M;Z\mid J=j) \right] \le  \E_j\left[ \rho^2 I(M;X_j \mid J=j) \right].
    \end{align*}
    But for any index $i$, the mutual information between $M$ and $X_i$ is independent of $J$; it depends only on Alice's protocol.
    So 
    \ifSTOC
        \begin{equation*}
            \E_j\left[ \rho^2 I(M;X_j \mid J=j) \right] = \E_i\left[ \rho^2 I(M;X_i) \right]
        \end{equation*}
    \else
        $\E_j\left[ \rho^2 I(M;X_j \mid J=j) \right] = \E_i\left[ \rho^2 I(M;X_i) \right]$
    \fi
    and, combining these steps and writing out the expectation, we have
    \begin{align}
        I(J;M,Z) \le \E_i\left[ \rho^2 I(M;X_i) \right] =  \frac{\rho^2}{k} \sum_{i=1}^k  I(M;X_i). \label{eq:GHP_MI_upper_bound}
    \end{align}

    Applying the chain rule for mutual information and the independence of the $\{X_i\}$, we get
    \begin{align*}
        \sum_i I(M;X_i) &= \sum_i H(X_i) - H(X_i \mid M) \\
                        &= \sum_i H(X_i \mid X_1^{i-1}) - H(X_i \mid M) \\
                        &\le \sum_i H(X_i \mid X_1^{i-1}) - H(X_i \mid M, X_1^{i-1}) \\
                        &= I(M;X).
    \end{align*}    
    Therefore, combining Equations \eqref{eq:GHP_MI_lower_bd} and \eqref{eq:GHP_MI_upper_bound}, 
    \begin{equation*}
        \left(1 - c_a - \eps_k -o(1)\right) \log k \le  I(J;M,Z) \le  \frac{\rho^2}{k} I(M;X).
    \end{equation*}
    Plugging in $\rho=\sqrt{\frac{2\ln ak - \ln \ln k}{d}}$ and changing the natural log to base 2, we see that
        $\frac{\rho^2}{k} = \frac{2 \ln 2 \cdot \log ak - \ln \ln k}{k d}$.
    Rearranging, we get a lower bound on $I(X;M)$.
\end{proof}

Our lower bound is for \sing{\nsing}{\qHC}, where $\rho$ is set in terms of $k$.
In learning task, though, the number of singletons is a random variable while $\rho$ remains fixed.
Here, we show that \textit{any} lower bound can be extended, with a slight loss in parameters, to ``misspecified'' tasks.

\begin{lemma}\label{lemma:other_sizes}
    Fix $\rho$.
    Suppose we have the following lower bound: there exists a function $g$ such that any algorithm $A$ that is $\eps$-suboptimal on \sing{k}{q_{HC}}
    satisfies $I(A(X);X) \ge g(\eps)\cdot kd$.
    Then, for any integer $t\in \mathbb{Z}$ and any algorithm $A'$, if $A'$ is 
    $\eps$-suboptimal on $\sing{k+t}{q_{HC}}$ then $A'$ satisfies
    \begin{equation*}
        I(A'(X);X) \ge g(\eps+2|t|/k)\cdot (k+t)d - |t|d.
    \end{equation*}
\end{lemma}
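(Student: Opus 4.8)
The plan is to argue by reduction on both sides of $k$: from an algorithm $A'$ that is $\eps$-suboptimal for $\sing{k+t}{\qHC}$ I would build an algorithm $B$ for $\sing{k}{\qHC}$, control its suboptimality, apply the hypothesized lower bound to $B$, and transfer the resulting information bound back to $A'$. Write $m=k+t$ and $\mathrm{opt}_\ell\defeq\inf_{\alg'}\Pr[\alg'\text{ errs on }\sing{\ell}{\qHC}]$. Two preliminary remarks: (i) $g(\cdot)\le 1$, since any algorithm satisfies $I(A(X);X)\le H(X)=kd$ (Alice's input in $\sing{k}{\qHC}$ is uniform on $(\bit d)^k$), so the hypothesis forces $g\le 1$; (ii) the hypothesis, applied with tolerance $\eps'$, bounds $I$ below by $g(\eps')\,kd$ for \emph{every} algorithm on $\sing{k}{\qHC}$ whose suboptimality is at most $\eps'$. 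It will in fact be convenient to prove the stronger bound $I(A'(X);X)\ge g(\eps+2|t|/k)\,kd$; since $g\le 1$ and $I\ge 0$, this implies the stated bound $g(\eps+2|t|/k)(k+t)d-|t|d$ in all cases.

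\emph{Case $t\ge 0$ (padding).} Algorithm $B$ appends to its $k$-point input $X$ a block of $t$ freshly sampled pairs $(C_j,x_j)$ with $C_j\sim q_c$, producing $\tilde X$, which is distributed exactly as a $\sing{m}{\qHC}$ input and contains $X$ as its first $k$ coordinates; $B$ runs $A'$ on $\tilde X$, and Bob mimics $A'$'s receiver, outputting an arbitrary label in $[k]$ if it names a padded index. Conditioning on which subpopulation produced the test point shows that $B$'s conditional error given a test from subpopulation $j\in[k]$ equals $A'$'s conditional error given a test from subpopulation $j$; averaging over $j$ gives $\Pr[B\text{ errs}]\le\frac{m}{k}\Pr[A'\text{ errs on }\sing{m}{\qHC}]\le\frac{m}{k}(\mathrm{opt}_m+\eps)$. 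The elementary inequality $\mathrm{opt}_m\le\frac{k}{m}\mathrm{opt}_k+\frac{t}{m}$ (run the optimal $k$-subpopulation rule, which errs outright when the test comes from a new subpopulation) then yields $\Pr[B\text{ errs}]\le\mathrm{opt}_k+\frac{t}{k}+\frac{m}{k}\eps$, so $B$ has suboptimality at most $\frac{t}{k}+\frac{m}{k}\eps\le\eps+\frac{2t}{k}$. Hence $I(M;X)\ge g(\eps+2t/k)\,kd$ for the model $M=A'(\tilde X)$; as $X$ is the first $k$ coordinates of $\tilde X$, the chain rule gives $I(A'(\tilde X);\tilde X)\ge I(A'(\tilde X);X)= I(M;X)\ge g(\eps+2t/k)\,kd$, which is the desired bound since $\tilde X$ is a $\sing{m}{\qHC}$ input.

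\emph{Case $t<0$ (subsampling).} Write $s=-t$, so $m=k-s<k$. Now $B$ feeds $A'$ only the first $m$ of its $k$ points and has Bob run $A'$'s receiver verbatim, so $B$ errs whenever the test comes from one of the $s$ discarded subpopulations (probability $s/k$). Conditioning on the test subpopulation, $\Pr[B\text{ errs}]\le\frac{m}{k}(\mathrm{opt}_m+\eps)+\frac{s}{k}\le\mathrm{opt}_m+\eps+\frac{s}{k}$. The one nontrivial ingredient is the monotonicity $\mathrm{opt}_m\le\mathrm{opt}_k$. I would establish $\mathrm{opt}_\ell\le\mathrm{opt}_{\ell+1}$ for every $\ell$ using that the Bayes-optimal rule is nearest-neighbour, hence invariant under relabelling the subpopulations (Proposition~\ref{prop:bayes_for_sing}): in a $\sing{\ell+1}{\qHC}$ instance, conditioning on the test coming from one of the first $\ell$ subpopulations makes $(x_1,\dots,x_\ell)$ with the test point a genuine $\sing{\ell}{\qHC}$ instance, and adding $x_{\ell+1}$ can only convert a correct nearest-neighbour prediction into an incorrect one, so $\Pr[\text{NN correct on }\ell{+}1\mid\text{test in }[\ell]]\le 1-\mathrm{opt}_\ell$; by subpopulation symmetry the left side equals $1-\mathrm{opt}_{\ell+1}$, giving $\mathrm{opt}_\ell\le\mathrm{opt}_{\ell+1}$, and iterating yields $\mathrm{opt}_m\le\mathrm{opt}_k$. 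Consequently $B$ has suboptimality at most $\eps+\frac{s}{k}\le\eps+\frac{2|t|}{k}$, so $I(M;X)\ge g(\eps+2|t|/k)\,kd$ for $M=A'(x_1,\dots,x_m)$; since $M$ depends on $X$ only through $(x_1,\dots,x_m)$, the data-processing inequality gives $I(M;X)=I(A'(x_{1:m});x_{1:m})$, and $(x_1,\dots,x_m)$ is a $\sing{m}{\qHC}$ input, which is the desired bound.

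\emph{Main obstacle.} Everything above is routine bookkeeping except for the monotonicity $\mathrm{opt}_m\le\mathrm{opt}_k$ invoked when $t<0$. Its only subtlety is that a general Bayes-optimal algorithm need not respect the symmetry among subpopulations, so the argument must be phrased through the explicit nearest-neighbour characterisation, which does; for $t>0$ only the trivial padding inequality $\mathrm{opt}_{k+t}\le\frac{k}{k+t}\mathrm{opt}_k+\frac{t}{k+t}$ is needed.
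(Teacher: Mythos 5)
Your proof is correct and follows essentially the same two-sided reduction as the paper: pad with fresh dummy subpopulations when $t>0$, subsample when $t<0$, track the resulting suboptimality, and transfer the information bound via the data-processing inequality. The one place you go beyond the paper is in explicitly establishing the monotonicity $\mathrm{opt}_m\le\mathrm{opt}_k$ via the nearest-neighbour characterisation and subpopulation symmetry; the paper simply asserts that the optimal error increases with the number of subpopulations, so your proposal is a welcome bit of extra rigour but not a different route.
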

\begin{proof}
    In this proof we always take $t>0$ for legibility.
    We analyze the cases $k+t$ and $k-t$ separately.
    Use a superscript $X^{(k)}$ to denote data set size.
    In this proof, abbreviate \sing{k}{q_{HC}} as $\mathrm{Sing}(k)$.
    
    Take an algorithm $A'$ for $\mathrm{Sing}(k-t)$ with error $\mathrm{OPT}_{k-t}+\eps$.
    Construct an algorithm $A$ for $\mathrm{Sing}(k)$ as follows: $A$ gets an input $X^{(k)}$ and removes the last $t$ examples, generating a smaller data set $X^{(k-t)}$.
    $A$ then simulates $A'$ on this smaller data set.
    Let $E$ be the event that Bob's test example comes from one of the $k-t$ subpopulations represented in the smaller data set.
    Conditioned on this, by construction the error of $A$ is the same as that of $A'$.
    We thus have
    \begin{align*}
        \Pr[\text{$A$ errs on $\mathrm{Sing}(k)$}]
            &= \Pr[\text{$A$ errs on $\mathrm{Sing}(k)$}\mid E]\Pr[E]
            + \Pr[\text{$A$ errs on $\mathrm{Sing}(k)$}\mid \bar{E}]\Pr[\bar{E}] \\
            &\le \Pr[\text{$A$ errs on $\mathrm{Sing}(k)$}\mid E]\cdot 1 + 1\cdot \frac{t}{k} \\
            &= \Pr[\text{$A'$ errs on $\mathrm{Sing}(k-t)$}] + \frac{t}{k}\\
            &\le \mathrm{OPT}_{k-t} + \eps+ \frac{t}{k} \\
            &\le \mathrm{OPT}_{k} + \eps+ \frac{t}{k},
    \end{align*}
    where the last inequality follows from the fact that the error of the optimal algorithm, for a fixed $\rho$, increases with the number of subpopulations.
    By construction, $I(A'(X^{(k-t)});X^{(k-t)})=I(A(X^{(k)});X^{(k)})$.
    Since we have an upper bound on the error of $A$, by assumption we have a lower bound on its information cost.
    Thus we get
    \begin{align*}
        I(A'(X^{(k-t)});X^{(k-t)}) &= I(A(X^{(k)});X^{(k)}) \\
            &\ge g(\eps + t/k)\cdot kd \\
            &\ge g(\eps + t/k)\cdot (k-t)d.
    \end{align*}
    
    Now let $A'$ be an algorithm for $\mathrm{Sing}(k+t)$ (recall that $t>0$) with error at most $\mathrm{OPT}_{k+t}+\eps$.
    We construct an algorithm $A$ for $\mathrm{Sing}(k)$ as follows: $A$ receives the data set $X^{(k)}$, samples a dummy set $\tilde{X}$ of $t$ independent and uniform points, and runs $A'$ on $X^{(k)}\cup\tilde{X}$.
    Let $E'$ denote the event that the test example coming from one of the $k$ subpopulations that are in $X^{(k)}$.
    \begin{align*}
        \Pr[\text{$A'$ errs on $\mathrm{Sing}(k+t)$}]&= \Pr[\text{$A'$ errs on $\mathrm{Sing}(k+t)$}\mid E_2]\Pr[E_2] \\
        &\quad + \Pr[\text{$A'$ errs on $\mathrm{Sing}(k+t)$} \cap \bar{E}_2] \\
        &\ge  \Pr[\text{$A'$ errs on $\mathrm{Sing}(k+t)$}\mid E_2]\frac{k}{k+t}.
    \end{align*}
    Observe that the probability $A$ errs is exactly the probability $A'$ errs conditioned on $E'$, that is
        $\Pr[\text{$A$ errs on $\mathrm{Sing}(k)$}] = \Pr[\text{$A'$ errs on $\mathrm{Sing}(k+t)$}\mid E']$, so we have
    \begin{align*}
        \Pr[\text{$A$ errs on $\mathrm{Sing}(k)$}] &\le \frac{k+t}{k} \Pr[\text{$A'$ errs on $\mathrm{Sing}(k+t)$}]\\
        &\le \Pr[\text{$A'$ errs on $\mathrm{Sing}(k+t)$}] + \frac{t}{k} \\
        &\le \mathrm{OPT}_{k+t} + \eps + \frac{t}{k} \\
        &\le \mathrm{OPT}_{k} + \eps + \frac{2t}{k},
    \end{align*}
    where the last line follows from the fact that, for fixed $\rho$, increasing the data set size by $t$ can increase the optimal error by no more than $\frac{t}{k}$, as our calculations for the $k-t$ case show.
    So we have an upper bound on the error of $A$, and thus a lower bound on its information cost. 
    We want to turn this into a lower bound on the information cost of $A'$.
    \begin{align*}
        I(A'(X^{(k+t)});X^{(k+t)}) &= I(A'(X^{(k)}\cup \tilde{X});X^{(k)}, \tilde{X}) \\
        &= I(A'(X^{(k)}\cup \tilde{X});X^{(k)}) +  I(A'(X^{(k)}\cup \tilde{X});  \tilde{X} \mid X^{(k)})\\
        &\ge I(A'(X^{(k)}\cup \tilde{X});X^{(k)})\\
        &\ge I(A;X^{(k)}) \\
        &\ge g(\eps + 2t/k) \cdot kd \\
        &= g(\eps + 2t/k) \cdot kd \\
            &\quad + g(\eps + 2t/k) \cdot td \\
            &\quad - g(\eps + 2t/k) \cdot td \\
        &\ge g(\eps + 2t/k) \cdot (k+t)d - td,
    \end{align*}
    using the fact that $g(\cdot)\le 1$, which is without loss of generality since $1$ is the largest possible coefficient.
\end{proof}

\subsection{Completing the Proof}\label{sec:HC_completing}

\begin{proof}[Proof of Theorem \ref{thm:HC_main}]
    Part (1) follows from Lemma~\ref{lemma:HC_small_info_P}'s upper bound on $I(X;P\mid K)$.

    For part (2), we assume an algorithm $\alg$ for \learn{\qHC} with excess error $\eps$.
    Corollary~\ref{cor:HC_singletons_bound} gives a lower bound for \sing{\mu_1 n}{\qHC}, letting the number of singletons in that game be exactly the expected number over all.
    Lemma~\ref{lemma:other_sizes} extends this lower bound to other numbers of singletons. 
    Together, these steps give us
    \begin{equation*}
        f_k(\eps_k) = \frac{1 - c_a - \eps_k - |k - \mu_1 n|/\mu_1 n - o(1)}{2\ln 2 + o(1)}
        \cdot kd- |k- \mu_1 n| d,
    \end{equation*}
    where the $o(1)$ expressions both hide terms that are all $O\left(\log^{-1} n\right)$ and, crucially, do not depend on $k$.
    Our central reduction of Lemma~\ref{lemma:central_reduction} tells us that there exists a sequence of errors $\eps_k$ such that $I(\Xone; M\mid K)\ge \E_k[f_k(\eps_k)]$ and $\E_k[k\eps_k ]\le \frac{\eps + \phi_1 + \phi_2}{\tau_1}$. 
    Plugging these in yields, with a little bit of manipulation,
    \begin{align*}
        I(\Xone; M\mid K) &\ge \frac{(1-c_a -o(1))\E[k] d - \E[k\eps_k]d }{2\ln 2 + o(1)} - \frac{\E[|k-\mu_1 n| k]d}{(2\ln 2+o(1)) \mu_1 n} -  \E[|k-\mu_1 n| ]d \\
        &\ge \frac{(1-c_a -o(1))\mu_1 n d - \left(\frac{\eps + \phi_1 +\phi_2)}{\tau_1}\right)d }{2\ln 2 + o(1)} -  2\E[|k-\mu_1 n| ]d \\
        &\ge \frac{\left(1-c_a - \frac{\eps + 2(\mu_1 n)^{-\alpha}}{\tau_1\mu_1 n} - o(1)\right) }{2\ln 2 + o(1)}\cdot \mu_1 n d -  2\E[|k-\mu_1 n| ]d,
    \end{align*}
    using Lemma~\ref{prop:HC_phis} to upper bound $\phi_1$ and $\phi_2$.
    With Lemma~\ref{lemma:HC_small_info_P} to upper bound $I(\Xone;P\mid K)$, we prove part (2):
    \begin{align*}
        I(\Xone; M\mid P)
        &\ge I(\Xone; M\mid P,K)\\ 
        &\ge I(\Xone; M\mid K) - I(\Xone; P\mid K) \\
        &\ge \E_k[f(\eps_k)] -  \mu_1 n \cdot \rho d + \mu_1 n \log N. \\
            &= \frac{\left(1-c_a - \frac{\eps + 2(\mu_1 n)^{-\alpha}}{\tau_1\mu_1 n} - o(1)\right) }{2\ln 2 + o(1)}\cdot \mu_1 n d -  2\E[|k-\mu_1 n| ]d - n d\rho -  n\log N.
    \end{align*}
\end{proof} 

\begin{proof}[Proof Sketch for Corollary~\ref{cor:HC_corollary_unif}]
    We highlight the relevant details from the uniform mixture, many of which are also covered in Example~\ref{ex:uniform}.
    In the uniform setting with $N=n$, $\tau_1 = \frac{1}{n}$ and 
    \begin{equation*}
        \mu_1 = \left(1 - \frac{1}{n}\right)^{n-1} \approx \frac{1}{e}.
    \end{equation*}
    In this setting the number of singletons will concentrate (as can be shown via the Poisson approximation~\citep{mitzenmacher2017probability}), so $\E[|k-\mu_1 n|] = o(n)$.
    This implies that the negative terms in the lower bound are all $o(nd)$.
\end{proof}

\section{Experiments}\label{sec:experiments}

Our theorems are stated in terms of mutual information and do not explicitly address the question of efficient data reconstruction.
In this section, as a proof of concept, we present experiments exploring memorization and efficient black-box recovery.\footnote{Code available at \url{https://github.com/gavinrbrown1/training-data-memorization}.}
We generate synthetic data according to the Hypercube Cluster Labeling task and train multiclass logistic regression classifiers and single-hidden-layer feedforward neural networks to high accuracy.
We then attack the models: an adversary is given query access to the trained model and told the label of a singleton that appeared in the training data.

\paragraph{Data Generation}
We generate (synthetic) data sets for the hypercube cluster labeling task $\learnlong{q_{HC}}$ (Definitions~\ref{def:learn} and \ref{def:qHC}). 
We take $n=500$ examples and set $d=1000$ for the dimension.
We use the uniform-mixture setting considered in Corollary~\ref{cor:HC_corollary_unif}, setting $N=n$ and $\pi=(1/n)$, so that the data comes from the uniform mixture over the $N=500$ subpopulations. 
Recall that, in this task, each subpopulation is associated with a unique label, and the per-subpopulation distribution is specified by a set of fixed features, with the value of the feature fixed uniformly at random.
For each subpopulation, each feature is selected to be fixed independently with probability $\rho$.
We set $\rho \approx 0.17$, corresponding to $\rho=\sqrt{\frac{2\ln a\mu_1 n - \ln\ln n}{d}}$ with $a=50000$, a level at which the Bayes-optimal algorithm succeeds with almost perfect accuracy when the test example comes from a subpopulation which has a representative in the data.\footnote{We also ran experiments (not reported here) with $a=100$ and $\rho\approx 0.13$, which also result in near-perfect Bayes-optimal error. The larger value facilitates quicker training.}

\paragraph{Training and Hyperparameters}
Models were trained with PyTorch~\cite{pytorch}; all training algorithms referenced use that library's standard implementation.
We present results for (a) multiclass logistic regression (logit) classifiers and (b) single-hidden-layer feedforward neural networks (multilayer perceptron, or MLP) with 1500 hidden nodes and sigmoid activations.
Both models are trained via full-batch gradient descent with Nesterov momentum: logits for 50 gradient updates and MLPs for 2000 updates.
The training loss is standard cross-entropy.

The hyperparameters used were selected via a random search across a number of possible values.
The goal of the search was to locate high-accuracy (as measured by test-set classification error) settings; attacks were conducted after hyperparameters were selected.
The dimensions of the grid search included the optimization algorithm (among gradient descent with and without momentum, Adam, and Adagrad), learning rate, learning rate decay schedule, number of gradient updates, and width of the MLP. 

\paragraph{Model Evaluation}
We present various misclassification rates, collectively called ``classification error.''
The first two, ``train set'' and ``test set,'' are the standard misclassification rates on the training data set and a testing set of fresh samples, respectively.
Additionally, we define two metrics which make explicit use of the subpopulation structure:
\begin{itemize}
    \item \textit{Represented} error reports the classifier's misclassification rate on fresh examples drawn from subpopulations with at least one representative in the data.
    \item \textit{Singletons} error reports the classifier's misclassification rate on fresh examples drawn from subpopulations with exactly one representative in the data. 
\end{itemize}
We use these four metrics only to aid interpretion of the results.
In particular, they are not used to train models.
We include these metrics in Table~\ref{table:results} and use them in Figure~\ref{fig:over_training}.

\begin{algorithm}[H]\caption{Coordinate Ascent Attack}\label{alg:ascentattack}
\begin{algorithmic}[1]
\Require{classifier $f:\{0,1\}^d \to \Delta([N])$, target class $j^*$, number of iterations $T$}
\State $x \sim \mathrm{Uniform}(\{0,1\}^d)$
\For{$t = 1, \ldots, T$}
    \State $i \gets t \mod d$ 
    \If{$f(x^{i\mapsto 0})_{j^*} \ge f(x^{i\mapsto 1})_{j^*}$} \Comment{$x^{i\mapsto 0}$ sets the $i$-th bit of $x$ to $0$}
        \State $x_i \gets 0$
    \Else
        \State $x_i \gets 1$
    \EndIf
\EndFor
\State \Return $x$
\end{algorithmic}
\end{algorithm}

\begin{algorithm}[H]\caption{Gradient Sign Attack}\label{alg:signattack}
\begin{algorithmic}[1]
\Require{classifier $f:\{0,1\}^d \to \Delta([N])$, target class $j^*$, number of trials $k$}
\State $x\gets 0^d$ \Comment{store results}
\For{$i = 1, \ldots, d$}
    \State $\mathrm{count} \gets 0$ \Comment{count votes for ``0''}
    \For{$\ell = 1,\ldots, k$}
        \State $y \sim \mathrm{Uniform}(\{0,1\}^d)$
        \If{$f(y^{i\mapsto 0})_{j^*} \ge f(y^{i\mapsto 1})_{j^*}$} 
            \State $\mathrm{count} \gets \mathrm{count}+1$
        \EndIf
    \EndFor
    \If{$\mathrm{count} <  k/2$} 
        \State $x_i \gets 1$ \Comment{else keep $x_i=0$, as initialized}
    \EndIf
\EndFor
\State \Return $x$
\end{algorithmic}
\end{algorithm}

\paragraph{Attacks}
We present two simple attacks. 
Both require only black-box access to the classifier $f$ (returning a probability distribution over classes) and a target class $j^*$.
The computations are straightforward; in particular, no explicit inference is required.
Although more sophisticated algorithms might improve the results, we found these attacks sufficient for near-complete recovery in our settings.

The first attack, Algorithm~\ref{alg:ascentattack}, attempts to solve the problem $\max_x f(x)_{j^*}$, i.e. maximizing the probability of the target class.
The coordinate ascent algorithm picks a random starting location and iterates over indices, checking whether setting that index to 0 or 1 maximizes the objective.

Some classifiers (including the Bayes-optimal classifier) are not nicely behaved in a neighborhood of the singleton, and in these settings Algorithm~\ref{alg:ascentattack} often settles at an estimate that differs quite a bit from the true singleton.
As an alternative, we present Algorithm~\ref{alg:signattack}, inspired by the ``Fast Gradient Sign Attack'' introduced by Goodfellow et al.~\cite{goodfellow2014explaining} as a method to produce adversarial examples.
For each index $i\in [d]$, the attack randomly chooses $k$ strings and, on each, checks whether setting index $i$ to ``0'' or ``1'' maximizes the target-class probability.
As its guess for bit $i$, the attack outputs the majority vote from among the $k$ trials. 

\paragraph{Experiments and Results}
We run 20 independent trials, each consisting of generating a fresh problem instance and data set, training both logit and MLP classifiers, and executing both attacks on a randomly-chosen subset of singletons in the training data.
For each attack, the adversary receives a list of 20 labels, corresponding to 20 singletons in the data, and produces an estimate for each.
The adversary is evaluated on the percentage of bits they guess correctly, which we call ``recovery error.''
The results of the attacks, in addition to the measures of classification error, are summarized in Table~\ref{table:results}.

\begin{table}[h!]
\centering
\caption{Averages for classification error and recovery error. 
``Coordinate'' and ``Gradient'' report recovery error results for Algorithms~\ref{alg:ascentattack} and~\ref{alg:signattack}, respectively.}
\label{table:results}
\begin{tabular}{ l | c  c  c c | c c |}
    \cline{2-7}
    & \multicolumn{4}{|c|}{Classification Error (\%)} & \multicolumn{2}{|c|}{Recovery Error (\%)} \\ 
    \cline{2-7}
    & Train Set & Test Set &  Represented & Singletons & Coordinate & Gradient \\
    \hline
    \multicolumn{1}{|c|}{Logit} & 0.0 & 36.9 & 1.3 & 2.7  & 0.0 & 33.1 \\
    \multicolumn{1}{|c|}{MLP} & 0.0 & 38.4 & 3.4 & 6.0 & 6.0 & 2.3 \\
    \hline
\end{tabular}
\end{table}

Since our models are trained with iterative algorithms, it is natural to track how the adversary's success evolves during the training process.
Figure~\ref{fig:over_training} shows this for (single training runs of) the classifiers we consider, presented with measures of error.
As we can see, the attacks continually become more successful over time, even when (as in the case for the MLP) the classification errors are extremely non-monotonic.

One striking feature of the MLP results in Figure~\ref{fig:over_training} deserves further discussion.
Observe that, for the first roughly 750 gradient updates, the classification error on fresh samples from the singletons subpopulations (blue dotted line) is almost 100\%.
However, the attack recovers almost 90\% of the singleton bits (red solid line) after 750 gradient updates.
Even though the model has not yet ``learned what to do with the singletons'' in terms of classification, it has memorized a substantial amount of information about them.

\begin{figure}
    \centering
    \begin{subfigure}{0.49\textwidth}
        \includegraphics[width=\textwidth]{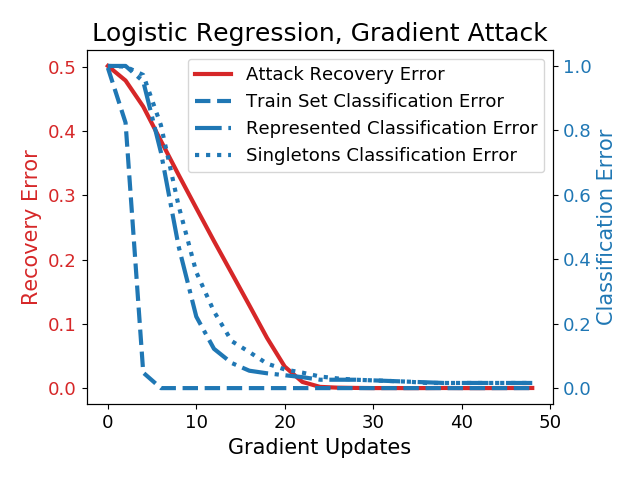}
    \end{subfigure}
    \begin{subfigure}{0.49\textwidth}
        \includegraphics[width=\textwidth]{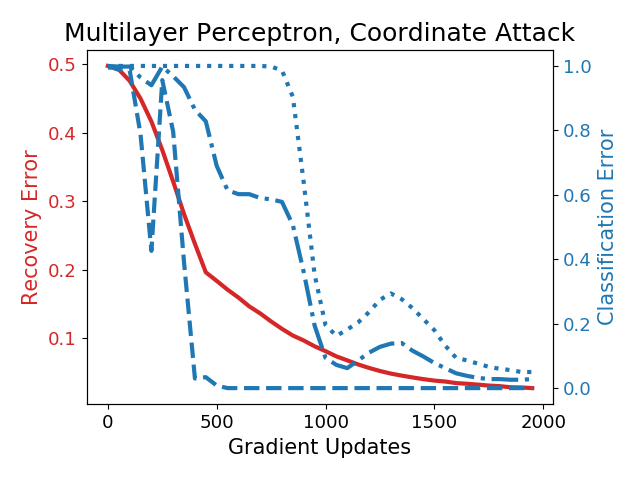}
    \end{subfigure}
    \caption{(a) Gradient attack recovery error over training iterations for logistic regression, plotted with classification error on the train set, fresh examples from represented subpopulations, and fresh examples from singleton subpopulation. 
    (b) The same plot for the multilayer perceptron and coordinate attack.
    The legend is shared across plots.}
    \label{fig:over_training}
\end{figure}

\paragraph{Discussion of Experiments}
Our lower bounds are in terms of mutual information, and do not guarantee that an efficient adversary can conduct data-reconstruction attacks.
Thus, as a proof of concept, our experiments complement the theoretical results: not only is the mutual information large, but simple attacks can succeed against popular learning algorithms.
The classifiers we evaluate, multiclass logistic regression and multilayer perceptron, are not designed to explicitly memorize whole training points, but do exactly that when trained to high accuracy on our hypercube cluster labeling task.
Our experiments suggest that avoiding such natural attacks requires, at the very least, intentional care on the part of the algorithm designer.

\appendix
\section*{Appendix}\addcontentsline{toc}{section}{Appendix}
\section{Additional Technical Details and Proofs}\label{sec:technical_details}

In this section, we present additional statements used in the paper.
We discuss the exact process for generating subpopulation mixtures, as in \citep{feldman2020does}.
We then prove the intuition of the bimodal prior in Example~\ref{ex:bimodal}.
We provide a worst-case version of our lower bound.
We prove an extension of Jensen's inequality which is used several times in the paper, and finally provide a version of our ``central reduction'' from the learning task to the singletons task which allows us to make use of a near-optimal but simpler-to-analyze baseline algorithm. 

\subsection{Generating Subpopulation Mixture Coefficients}\label{sec:long_tail_details}

We generate a mixture over $N$ subpopulations using the process introduced in \citep{feldman2020does}.
We begin with a list $\pi$ of nonnegative values.
For each subpopulation $j$, we sample a value $\delta_j \sim \mathrm{Uniform}(\pi)$.
To create a probability distribution $D$, we normalize:
\begin{align*}
    D(j) = \frac{\delta_j}{\sum_{i\in [N]} \delta_i}.
\end{align*}
This process is identical for all $j$, so we define $\bar{\pi}^N$ as the resulting marginal distribution over the mixture coefficient for any single subpopulation.

The quantity $\tau_1$ is defined in \citep{feldman2020does} as
\begin{align*}
    \tau_1 = \frac{\E_{\alpha \sim \bar{\pi}^N}\left[ \alpha^2 (1 - \alpha)^{n-1}\right]}{\E_{\alpha \sim \bar{\pi}^N}\left[ \alpha (1 - \alpha)^{n-1}\right]}.
\end{align*}
Lemma 2.1 of \citep{feldman2020does} proves the equality
\begin{align*}
    \E_{\substack{D\sim \mc{D}_{\pi}^N, \\ ID\sim D}} \left[D(j) \mid ID=id \right] = \tau_1.
\end{align*}
Observe that, for any set of cluster identifiers $ID=id$,
\begin{align*}
    \E_{\substack{D\sim \mc{D}_{\pi}^N, \\ ID\sim D}} \left[D(j) \mid ID=id \right] 
        &= \sum_{\alpha} \alpha \cdot \Pr[D(j) = \alpha\mid ID=id] \\
        &= \sum_{\alpha} \Pr[\iota(y) = j \mid D(j)=\alpha] \cdot \Pr[D(j) = \alpha\mid ID=id] \\
        &= \Pr[\iota(y) = j \mid ID=id].
\end{align*}

\subsection{Details for Bimodal Prior}

Here we provide the details necessary for Example \ref{ex:bimodal}.
Recall that we set $N=2^n$.
To build $\pi$ we add $1$ copy of $\frac{1}{2n}$ and $n2^{n} - 1$ copies of $\frac{1}{2\cdot 2^n}$.
This yields
\begin{align}
    \mathrm{Uniform}(\pi) = \begin{cases} \frac{1}{2n} & \text{w.p. $n2^{-n}$} \\
                    \frac{1}{2\cdot 2^{n}} & \text{w.p. $1 - n2^{-n}$}. \end{cases}.
\end{align}
We now show that the normalizing constant $C$ will concentrate about its mean.
Let $H$ be the number of heavy bins that are drawn.
We have, as a lower bound,
\begin{align}
    C = \frac{H}{2n} + \frac{1}{2\cdot 2^n}\left(2^n - H\right) = \frac{1}{2} + \frac{H}{2}\left(\frac{1}{n} - \frac{1}{2^n}\right) \ge \frac{1}{2}.
\end{align}
If $H\le 2n$ then $C\le \frac{3}{2}$, so by a Chernoff bound we have
\begin{align*}
    \Pr[C\ge 3/2] \le \Pr[H \ge 2\E[H]] \le e^{-n/3}.
\end{align*}

We can now lower bound $\mu_1$ and $\tau_1$ with results from \citep{feldman2020does}.
Define the weight after normalization.
\begin{align}
    \texttt{weight}\left(\bar{\pi}^N,\left[\beta_1,\beta_2\right]\right) \defeq 
        N\cdot \E_{\alpha\sim\bar{\pi}^N}\left[\alpha \cdot \mathbf{1}_{\alpha\in [\beta_1,\beta_2]}\right].
\end{align}
Equation 5 from \citep{feldman2020does} gives us that
\begin{align}
    \mu_1 n \ge \frac{n}{3} \texttt{weight}\left(\bar{\pi}^N,\left[0,\frac{1}{n}\right]\right) = \frac{n}{3},
\end{align}
since $C\ge \frac{1}{2}$ implies $\max \alpha \le \frac{1}{n}$.
To bound $\tau_1$ we use Lemma 2.5 from \citep{feldman2020does}:
\begin{align}
    \tau_1 \ge \frac{1}{5n} \texttt{weight}\left(\bar{\pi}^N,\left[\frac{1}{3n},\frac{2}{n}\right]\right).
\end{align}
Observe that we always draw $\alpha\ge \frac{1}{3n}$ when (i) we draw a heavy bin and (ii) $C \le \frac{3}{2}$, which always happens when $H\le 2n$.
By another Chernoff bound,
\begin{align*}
    \Pr[H\ge 2n \text{ or } H\le n/2] \le \Pr[H\ge 2n] + \Pr[H\le n/2] \le e^{-n/3} + e^{-n/8}.
\end{align*}
Since the probability of drawing a heavy bin is $n2^{-n}$, we have
\begin{align}
    \Pr_{\alpha\sim\bar{\pi}^N}\left[\alpha \ge \frac{1}{3n}\right]\ge n2^{-n} \left(1 - 2e^{-\Omega(n)}\right).
\end{align}
Thus, again applying $\max \alpha \le \frac{1}{n}$,
\begin{align}
    \tau_1 \ge \frac{1}{5n} \texttt{weight}\left(\bar{\pi}^N,\left[\frac{1}{3n},\frac{2}{n}\right]\right)
        &=  \frac{1}{5n}\cdot 2^n \cdot \E_{\alpha\sim\bar{\pi}^N}\left[\alpha \cdot \mathbf{1}_{\alpha\in [1/3n,2/n]}\right] \\
        &\ge \frac{2^{n}}{5n} \cdot \frac{1}{3n} \cdot \Pr_{\alpha\sim\bar{\pi}^N}\left[\alpha \ge \frac{1}{3n}\right] \\
        &\ge \frac{2^{n}}{5n} \cdot \frac{1}{3n} \cdot n2^{-n}\left( 1- e^{-\Omega(n)}\right).
\end{align}
Simplifying, we see that $\tau_1=\Omega(1/n)$.

\subsection{From Average-Case to Worst-Case via Minimax}\label{sec:minimax}

We provide metadistributions (dependent on the sample size $n$ and dimension $d$) upon which any near-optimal algorithm has high information cost.
A metadistribution is over problem instances, each of which is itself a distribution over labeled examples.
The ``easy'' direction of von Neumann's Minimax Theorem allows us to turn this into a worst-case guarantee. 

\begin{proposition}\label{prop:worst_case}
    Suppose $q^*$ is a metadistribution such that any algorithm that is $\eps$-suboptimal for Learn($n,N,q^*,\pi$) satisfies $I(X;M\mid P)=\Omega(nd)$. Then, for any $\alg$ that is $\eps$-suboptimal, there exists a problem instance $p^*\in \Delta(\mc{X})$ such that
    \begin{align*}
        I(X;M \mid P=p^*) \geq I(X;M|P) = \Omega(nd).
    \end{align*}
\end{proposition}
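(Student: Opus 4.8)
The plan is to observe that Proposition~\ref{prop:worst_case} is, once unwound, a restatement of the definition of conditional mutual information combined with the elementary fact that an average cannot exceed a maximum — the ``easy'' half of the von Neumann minimax inequality. The first step is to write
\[
    I(X;M\mid P) \;=\; \E_{p\sim q^*}\bigl[\, I(X;M\mid P=p)\,\bigr].
\]
This holds simply because $X$ ranges over a finite set (the $nd$ feature bits together with the finitely many subpopulation identifiers and labels), so every entropy term is finite and the conditional mutual information is literally the $q^*$-average of the instance-wise quantities $I(X;M\mid P=p)$, where for a fixed instance $p$ the model is $M=\alg(X)$ with $X\sim p^{\otimes n}$.

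The second step is to invoke the hypothesis: since $\alg$ is $\eps$-suboptimal for $\mathrm{Learn}(n,N,q^*,\pi)$, the assumed lower bound gives $I(X;M\mid P)=\Omega(nd)$. The third and final step is an averaging (pigeonhole) argument: writing $g(p)\defeq I(X;M\mid P=p)$, a bounded measurable function of $p$, the identity $\E_{p\sim q^*}[g(p)]=I(X;M\mid P)$ forces $\Pr_{p\sim q^*}\bigl[g(p)\ge I(X;M\mid P)\bigr]>0$, so there is a positive-probability set of instances $p^*$ in the support of $q^*$ with $g(p^*)\ge I(X;M\mid P)$. Fixing any such $p^*$ yields
\[
    I(X;M\mid P=p^*)\;\ge\;\E_{p\sim q^*}\bigl[\,I(X;M\mid P=p)\,\bigr]\;=\;I(X;M\mid P)\;=\;\Omega(nd),
\]
which is exactly the claimed chain.

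There is no genuine technical obstacle here; the only points requiring care are interpretive bookkeeping. First, the guarantee is \emph{per-algorithm}: for each fixed $\eps$-suboptimal $\alg$ one extracts a hard instance $p^*$ that may depend on $\alg$, rather than a single $p^*$ that is simultaneously hard for all learners. Obtaining the latter, uniform statement would require a genuine minimax argument together with compactness on the space of randomized learning algorithms, which is why the proposition is phrased in the weaker per-algorithm form. Second, one should note explicitly that $\alg$ need \emph{not} be $\eps$-suboptimal on the particular instance $p^*$ — the hypothesis constrains only the $q^*$-averaged error, and the conclusion concerns only the information cost $I(X;M\mid P=p^*)$, so there is no tension. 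I would add a one-sentence remark after the proof making these two caveats explicit.
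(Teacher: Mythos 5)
Your proof is correct and is essentially identical to the paper's: both unfold $I(X;M\mid P)$ as the $q^*$-expectation of $I(X;M\mid P=p)$ and then invoke the averaging/pigeonhole fact that some realization must meet or exceed the mean. The additional caveats you flag (per-algorithm extraction of $p^*$, and that $\alg$ need not be $\eps$-suboptimal on $p^*$ itself) are sound and align with the discussion the paper places after the proposition.
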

\begin{proof}
    Since $I(X;M|P)=\E_{p\sim q^*}[I(X;M\mid P=p)]$ is an expectation, there always exists a problem instance whose value is at least that of the expectation.
\end{proof}

The assumption of Propostion~\ref{prop:worst_case} deserves some discussion.
Specifically, recall that a learning algorithm $\alg$ is $\eps$-suboptimal for Learn($n,N,q_{n,d},\pi$) if it competes with the best possible learner $\Aopt^{(n,d)}$ for  $q_{n,d}$ when receiving a sample of size $n$. An equivalent requirement is that the error of $\alg$ on $p$ be close that that of $\Aopt^{(n,d)}$ on average over instances $p\sim q_{n,d}$. On one hand,  this assumption is much weaker than assuming that $\alg$ is near-optimal for each instance $p$ (since we compare with the learner $\Aopt^{(n,d)}$ whose average error over $p\sim q_{n,d}$ is lowest, not a learner that is tailored to $p$). On the other hand, it is not obviously comparable to the requirements of properness and consistency made by \citet{bassily2018learners,nachum2018direct}. 
For one thing, not all of our learning tasks fit neatly into the framework of PAC learning.
For those that do, the sample size $n$ that we consider is generally less than (or comparable to) the VC-dimension of the underlying concept class---too low to guarantee that every proper and consistent learner has high accuracy. Understanding the full relationship between these different kinds of assumptions is a subject for future work.

\subsection{Expectation Trick for Jensen's Inequality}

We will several times make use of the following application of Jensen's inequality.
Note that both inequalities apply in the other direction to convex functions.
\begin{lemma}\label{lemma:modified_jensen}
    Suppose $f$ is a concave function, $g$ is a function, and $X$ is a nonnegative random variable with distribution $p(x)$.
    Then
    \begin{align*}
        \E\left[X\cdot  f(g(X))\right] \le \E[X] f\left(\frac{\E[X\cdot g(X)]}{\E[X]}\right).
    \end{align*}
\end{lemma}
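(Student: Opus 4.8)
The plan is to prove this by passing to the \emph{size-biased} distribution associated with $X$ and then invoking the ordinary Jensen inequality. Throughout, assume $\E[X] > 0$; the degenerate case $\E[X] = 0$ forces $X = 0$ almost surely, which makes both sides of the inequality trivial or ill-defined, and we regard it as excluded by the statement. First I would define a new probability distribution $q$ on the same underlying space by $q(x) = \frac{x\, p(x)}{\E[X]}$. This is a bona fide probability distribution: the weights are nonnegative because $X \ge 0$, and they sum (or integrate) to $\E_p[X]/\E[X] = 1$ by the very definition of $\E[X]$.

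The key observation is the change-of-measure identity: for any function $h$ for which the relevant expectations exist,
\[
\E_q[h(X)] = \frac{\E_p[X\, h(X)]}{\E[X]}.
\]
Applying this with $h = f \circ g$ gives $\E_q[f(g(X))] = \frac{\E_p[X\, f(g(X))]}{\E[X]}$, and applying it with $h = g$ gives $\E_q[g(X)] = \frac{\E_p[X\, g(X)]}{\E[X]}$.

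Next I would apply Jensen's inequality to the concave function $f$ under the distribution $q$, namely $\E_q[f(g(X))] \le f\big(\E_q[g(X)]\big)$, which is valid provided $g(X)$ takes values in the domain of $f$ (implicit in the statement). Substituting the two identities above and then multiplying through by $\E[X] > 0$ yields exactly
\[
\E_p\big[X\, f(g(X))\big] \;\le\; \E[X]\, f\!\left(\frac{\E_p[X\, g(X)]}{\E[X]}\right),
\]
which is the claim. The convex version, used elsewhere in the paper, follows by applying the concave case to $-f$.

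There is no real obstacle here; the argument is a one-line reduction to Jensen once the size-biased measure is introduced. The only points requiring a sentence of care are (i) checking that $q$ is genuinely a probability distribution (nonnegativity from $X \ge 0$, normalization from the definition of $\E[X]$) and (ii) noting the edge case $\E[X] = 0$. I would state these explicitly but not belabor them.
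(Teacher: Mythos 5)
Your proof is correct and follows essentially the same approach as the paper: both construct the size-biased distribution $q(x) = x\,p(x)/\E[X]$ and then apply ordinary Jensen's inequality under $q$. Your presentation is slightly more systematic (naming the change-of-measure identity and noting the degenerate case $\E[X]=0$), but the argument is the same.
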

\begin{proof}
    We cannot directly apply Jensen's inequality, since $x\cdot f(x)$ will not in general be concave.
    Instead we introduce a distribution $q\propto x\cdot  p(x)$.
    Note that $\sum_x x \cdot p(x) = \E_p[X]$, so we have
    \begin{align*}
        q(x) = \frac{x \cdot p(x)}{\E_p[X]}.
    \end{align*}
    We now rewrite so that the expectation is over $q$:
    \begin{align*}
        \E_p[X\cdot f(g(X))] &= \sum_x p(x) \cdot x \cdot f(g(x)) \\
            &= \frac{\E_p[X]}{\E_p[X]} \sum_x p(x) \cdot x \cdot f(g(x)) \\
            &= \E_p[X] \sum_x q(x) \cdot f(g(x)) \\
            &= \E_p[X] \cdot \E_q[f(g(X))].
    \end{align*}
    We now have the expectation of a concave function and can apply Jensen's inequality and finish the proof:
    \begin{align*}
        \E_p[X\cdot f(X)] &\le \E_p[X] \cdot f\left(\E_q[g(X)]\right) \\
            &= \E_p[X] \cdot f\left(\sum_x \frac{ p(x)\cdot c \cdot g(x)}{\E_p[X]} \right) \\
            &= \E_p[X] \cdot f\left(\frac{\E_p[X\cdot g(X)]}{\E_p[X]}\right).
    \end{align*}
\end{proof}

\subsection{Central Reduction via Non-Optimal Baseline}

In some tasks the exactly-optimal algorithm $\Aopt$ may be difficult to analyze, while a naive ``baseline'' algorithm $\alg^*$ exists with error approaching optimal.
To deal with this, we will use the following modification of Lemma \ref{lemma:central_reduction}, where the error terms $\phi_1$ and $\phi_2$ now depend on $\alg^*$.
The proof is almost line-by-line identical to that of Lemma \ref{lemma:central_reduction}, observing in Equation \eqref{eq:Acomp_switch} that
\begin{align}
    \epsilon &= \Pr[\text{$\alg$ errs on \learn{q_c}}] - \Pr[\text{$\Aopt$ errs on \learn{q_c}}] \\
        &\ge \Pr[\text{$\alg$ errs on \learn{q_c}}] - \Pr[\text{$\alg^*$ errs on \learn{q_c}}]
\end{align}
and proceeding with $\alg^*$ in place of $\Aopt$.

\begin{lemma}[Central Reduction]\label{lemma:central_reduction2}
    Suppose we have the following lower bound for every $k$: any algorithm
    $\alg^k(X')$ 
    that is $\eps_k$-suboptimal for \sing{\nsing}{q_c} satisfies 
    \begin{align*}
        I(X';\alg^k(X')\mid K=k) \ge f_k(\epsilon_k).
    \end{align*}
    
    For any algorithm $\alg(X)$ that is $\eps$-suboptimal on \learn{q_c}, there exists a sequence $\{\epsilon_k\}_{k=1}^n$ such that $\E_k[k\eps_k] = \eps$ and $I(X_S;M\mid K) \ge \E_k[f_k(\epsilon_k)]$.
    
    Furthermore, if $f_k(\epsilon_k) \ge k \cdot g(\epsilon_k)$ for convex and nonincreasing $g(\cdot)$, then
    \begin{align*}
        I(\Xone;M\mid K) &\ge \mu_1 n \cdot g\left(\frac{\epsilon + \phi_1(q_c) + \phi_2(q_c)}{\tau_1 \mu_1 n}\right).
    \end{align*}
    Here $\phi_1(q_c) $ and $ \phi_2(q_c)$ are task-specific terms, defined by
    \begin{align}
        \phi_1(q_c) &= \Pr[\bar{E_1}]\left(\Pr[\text{$\alg^*$ errs on \learn{q_c}}\mid \bar{E_1}] 
             - \Pr[\text{$\alg$ errs on \learn{q_c}}\mid \bar{E_1}]\right) \, , \\
        \phi_2(q_c) &= \sum_{k=1}^{\npops} \Pr[K=k\mid E_1]\Bigl( \Pr[\alg^* \text{ errs on \learn{q_c}}\mid E_1, K=k] \\
        &\hspace{4cm} - \inf_{\alg'} \Pr[\alg' \text{ errs on \sing{k}{q_c}}]\Bigr).
    \end{align}
    $E_1$ is the event that the test sample comes from a subpopulation with exactly one representative.
\end{lemma}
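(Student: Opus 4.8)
The plan is to reuse the proof of Lemma~\ref{lemma:central_reduction} essentially verbatim, changing only the reference point of the suboptimality accounting from $\Aopt$ to the baseline $\alg^*$. First I would repeat the synthetic-dataset construction: given an $\eps$-suboptimal $\alg$ for $\learn{q_c}$ and a target size $k$, define $\alg^k$ to take its $k$-point singletons input $X'$, sample $\tilde X \sim (X \mid \Xone = X')$ (legitimate because the padding examples come from fresh i.i.d.\ subpopulations, disjoint from those generating $X'$), run $\alg$ on $\tilde X$, and output $\tilde M = \alg(\tilde X)$. Exactly as in Lemma~\ref{lemma:central_reduction}, the joint law of $(X',\tilde M)$ given $K=k$ equals that of $(\Xone,M)$ given $K=k$, so $I(\Xone;M\mid K) = \E_k[I(X';\tilde M\mid K=k)]$, and $\Pr[\alg^k\text{ errs on }\sing{k}{q_c}] = \Pr[\alg\text{ errs on }\learn{q_c}\mid E_1, K=k]$; these identities are unchanged since they never mention the reference algorithm.

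The one new ingredient is that, because $\Aopt$ minimizes error on $\learn{q_c}$, we have $\Pr[\Aopt\text{ errs}] \le \Pr[\alg^*\text{ errs}]$, and therefore
\[
\Pr[\alg\text{ errs}] - \Pr[\alg^*\text{ errs}] \;\le\; \Pr[\alg\text{ errs}] - \Pr[\Aopt\text{ errs}] \;=\; \eps .
\]
Hence, starting from Equation~\eqref{eq:Acomp_switch}, I may substitute $\alg^*$ for $\Aopt$ throughout: every inequality derived there stays valid, and the resulting bound on $\E_k[k\eps_k]$ is, if anything, only weakened, which is exactly the direction we want. Carrying this through — splitting the excess error over $\bar{E_1}$ versus $E_1$ to peel off $\phi_1(q_c)$ (now defined via $\alg^*$), then splitting the $E_1$ contribution over $K=k$ and adding and subtracting the singletons-optimal error $\inf_{\alg'}\Pr[\alg'\text{ errs on }\sing{k}{q_c}]$ to peel off $\phi_2(q_c)$ (now defined via $\alg^*$ erring on $\learn{q_c}$) — gives, with $\eps_k$ the suboptimality of $\alg^k$ on $\sing{k}{q_c}$, the bound $\sum_k \Pr[K=k\mid E_1]\,\eps_k \le (\eps+\phi_1+\phi_2)/(\tau_1\mu_1 n)$. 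Applying Bayes' rule, $\Pr[K=k\mid E_1] = k\Pr[K=k]/(\mu_1 n)$, converts this into $\E[k\eps_k] \le (\eps+\phi_1+\phi_2)/\tau_1$.

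Finally, I would feed the hypothesized per-$k$ lower bound $I(X';\alg^k(X')\mid K=k)\ge f_k(\eps_k)\ge k\,g(\eps_k)$ into $I(\Xone;M\mid K)=\E_k[I(X';\tilde M\mid K=k)]$ and invoke the extension of Jensen's inequality in Lemma~\ref{lemma:modified_jensen}, taking the nonnegative weight to be $K$, the concave function to be $g$ (equivalently, bounding below a convex nonincreasing $g$), and using the control just obtained on $\E[K\eps_K]$; this yields $I(\Xone;M\mid K)\ge \mu_1 n\, g\!\left((\eps+\phi_1+\phi_2)/(\tau_1\mu_1 n)\right)$. The only genuinely delicate point — and the place a sign slip would be fatal — is verifying that the optimality of $\Aopt$ is used in precisely the direction above, so that replacing it by the (possibly worse) baseline $\alg^*$ never reverses an inequality or strengthens a hypothesis; everything else is the bookkeeping already carried out in Lemma~\ref{lemma:central_reduction}, so I would simply remark that the proof is line-for-line identical after that substitution.
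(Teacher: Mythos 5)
Your proposal is correct and matches the paper's own (very terse) proof, which likewise says to replace $\Aopt$ by $\alg^*$ in Equation~\eqref{eq:Acomp_switch}, observe that the resulting inequality $\eps \ge \Pr[\alg\text{ errs}] - \Pr[\alg^*\text{ errs}]$ still holds by optimality of $\Aopt$, and then repeat the proof of Lemma~\ref{lemma:central_reduction} line for line. You have correctly identified the one substantive point --- that the substitution only weakens the bound in the direction that is harmless --- and the remaining bookkeeping (synthetic-dataset construction, distributional identities, Bayes' rule, and the Jensen step via Lemma~\ref{lemma:modified_jensen}) is unchanged.
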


\section{Tasks Related to Next-Symbol Prediction}\label{sec:additional_models}

\subsection{Lower Bound for Threshold Learning}

\newcommand{\T}{\mc{T}_d}

Threshold learning is a simple and well-studied binary classification task.
Data $z\in\bit{d}$ receives a label according to threshold $c\in\bit{d}$, so $y=0$ if $c\ge z$ and $y=1$ otherwise. 

Via a reduction from Next-Symbol Prediction, we demonstrate memorization in this setting.
This substantially strengthens results in \citep{bassily2018learners,nachum2018direct}, which contained bounds of $\Omega(\log d)$ for any proper, consistent learner; we prove a lower bound of $\Omega(d)$ while allowing any $\eps$-suboptimal learner.
As in Next-Symbol Prediction, this one-shot lower bound can be extended to the $n$-sample, $N$-subpopulation setting.

\begin{proposition}\label{claim:thresholds}
    There exists a metadistribution $q_{\T}$ over problem instances of (realizable) $d$-bit threshold learning such that any $\eps$-suboptimal algorithm receiving exactly one sample $X$ satisfies 
    \begin{align*}
        I(X;M\mid P) \ge (1- f(\eps) - o(1))\cdot H(X\mid P),
    \end{align*}
    where $f(\eps)\xrightarrow{\eps\to 0}0$, so the algorithm must memorize the whole sample as the error vanishes.
\end{proposition}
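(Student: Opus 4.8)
The plan is to prove this by reducing from the one-shot next-symbol prediction task \sing{1}{q_{NSP}}, for which Lemma~\ref{lemma:SC_one_shot_second} already supplies a whole-sample memorization bound, $I(X;M)\ge \frac{d+1}{2}\bigl(1-h(\tfrac{2\eps}{(1-\SCnoise)^2})\bigr)$, at a suitably chosen constant noise level $\SCnoise\in(0,1)$. Concretely, I would define $q_{\T}$ by an embedding $\Phi$ that turns each $q_{NSP}$ instance into a realizable threshold instance: the reference string $c\in\bit{d}$ is taken to be the threshold itself, and an NSP example (a noisy prefix of some length $\ell$ together with the noisy next bit) is mapped to a threshold example $\Phi$ whose feature vector is built from the NSP data padded in the low-order coordinates so that the single order comparison against $c$ returns exactly the NSP label. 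The design goals for $\Phi$ are that it be (i) injective \emph{given the instance}, so the NSP sample and the threshold sample carry the same information; (ii) label-preserving on both training and test points, so error is preserved exactly; and (iii) a pushforward of $q_{NSP}$ onto a distribution over genuinely realizable $d$-bit threshold instances.

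Granting such a $\Phi$, the reduction is routine. Any threshold learner $\alg$ that is $\eps$-suboptimal for $q_{\T}$ yields, by precomposing with $\Phi$ (on both the training point and the query point), an NSP learner that is $\eps$-suboptimal for \sing{1}{q_{NSP}} and outputs the \emph{identical} model $M$; since $\Phi$ is a bijection between the two samples once the instance is fixed, $I(X_{\T};M)=I(X_{NSP};M)$, and Lemma~\ref{lemma:SC_one_shot_second} gives $I(X_{\T};M)\ge \frac{d+1}{2}\bigl(1-h(\tfrac{2\eps}{(1-\SCnoise)^2})\bigr)$. It then remains to convert this unconditional bound into the stated conditional one. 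Using $I(X;M\mid P)\ge H(X\mid P)-H(X\mid M)=I(X;M)-I(X;P)$, together with a direct computation of $H(X\mid P)$ and $I(X;P)$ for $q_{\T}$ (both are governed by a binary symmetric channel acting on $\Theta(d)$ coordinates, so $H(X\mid P)=\Theta(d)$ while $I(X;P)=\tfrac{d+1}{2}(1-h(\SCnoise/2))+O(\log d)$), one obtains $I(X;M\mid P)\ge \bigl(1-f(\eps)-o(1)\bigr)H(X\mid P)$ with $f(\eps)=h\!\bigl(\tfrac{2\eps}{(1-\SCnoise)^2}\bigr)/h(\SCnoise/2)\xrightarrow{\eps\to0}0$; choosing $\SCnoise$ close enough to $1$ keeps $I(X;P)$ a vanishing fraction of $H(X\mid P)$, which is exactly what makes a loss of only $f(\eps)+o(1)$ (rather than a constant) affordable. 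The extension to the $n$-sample, $N$-subpopulation setting then goes through exactly as for next-symbol prediction (cf.\ Theorem~\ref{thm:NSP_main} and Lemma~\ref{lemma:central_reduction}), since $\Phi$ acts subpopulation by subpopulation.

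The main obstacle is constructing $\Phi$ so that realizability and forced entropy coexist: the threshold label must be a \emph{deterministic} function of the instance (otherwise $q_{\T}$ is not a distribution over realizable instances), yet the noisy coordinates of the NSP example must survive into the feature vector as information that Lemma~\ref{lemma:SC_one_shot_second} still forces into $M$ — and in a threshold problem the ``free'' low-order coordinates are precisely the ones a learner is normally free to ignore. Making the padding so that the low-order coordinates never flip the comparison except on an event of probability exponentially small in $d$ (absorbable into the $o(1)$ term), and checking that the Bayes-optimal threshold error under $\Phi$ equals the Bayes-optimal NSP error so that $\eps$-suboptimality transfers verbatim, is the delicate part; the remaining steps are bookkeeping with binary entropies and the chain rule.
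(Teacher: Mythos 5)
Your plan hinges on an embedding $\Phi$ that pushes forward the \emph{noisy} one-shot NSP distribution (with BSC parameter $\delta$ bounded away from $0$) onto a \emph{realizable} threshold distribution while preserving labels, and this is exactly where the approach breaks. In the noisy NSP task the label $y=\mathrm{BSC}_{\delta/2}(c(\ell+1))$ is, by design, \emph{not} a deterministic function of the noisy prefix $z$ given $c$; but a realizable threshold label must be the deterministic function $\mathbb{1}[c<\Phi(z,y)]$ of the feature vector and the instance. Moreover, $\Phi$ cannot secretly use $c$ (the NSP player who applies $\Phi$ doesn't know $c$), so for a fixed pair $\Phi(z,0),\Phi(z,1)\in\bit{d}$ there will always exist thresholds $c$ on the wrong side of both. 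The obstacle you flag is real, but your proposed repair (engineering the low-order padding so it ``never flips the comparison'') misidentifies the source of the trouble: the problem is not the padding, it is that BSC noise on the prefix itself moves the first position of disagreement with $c$ to an arbitrary location, so the comparison no longer reads off $c(\ell+1)$ at all. Your bookkeeping also inherits this: the estimate $I(X;P)=\tfrac{d+1}{2}(1-h(\delta/2))+O(\log d)$ is a computation for the NSP input, not for whatever threshold distribution $\Phi$ produces, and ``take $\delta$ close to $1$'' fights the $(1-\delta)^2$ factor in Lemma~\ref{lemma:SC_one_shot_second}; if $\delta$ must be sent to $1$ with $d$ to make $I(X;P)/H(X\mid P)$ an $o(1)$, then the resulting $f$ depends on $d$ and is no longer a universal function of $\eps$ as the proposition requires.

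The paper's construction avoids all of this by using the \emph{noiseless} NSP task ($\delta=0$) and manufacturing the entropy in $H(X\mid P)$ from a different source: with probability $1/2$ the length-$\ell$ prefix of $z$ equals $c(1:\ell)$ exactly (so, after appending a $1$ and zero-padding, the threshold comparison returns precisely $c(\ell+1)$), and with probability $1/2$ the prefix is a fresh uniform string. Both branches are trivially realizable because the label is always $\mathbb{1}[c\ge z]$; the random-prefix branch contributes $\Theta(d)$ bits of conditional entropy; and $I(X;P)=\tfrac{d+1}{4}+O(1)$, which is exactly half of $H(X\mid P)$ rather than a tunable fraction of it. The reduction then conditions on the constant-probability good event that both samples lie in the exact-match branch, yielding a $(4\eps+o(1))$-suboptimal protocol for noiseless $\mathrm{Singletons}(1,q_{NSP})$, and Lemma~\ref{lemma:SC_one_shot_second} with $\delta=0$ gives $I(X;M)\ge\tfrac{d+1}{2}(1-h(8\eps+o(1)))$, hence $I(X;M\mid P)\ge\tfrac{d+1}{4}(1-2h(8\eps+o(1)))$. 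If you want a working proof along your lines, you need to replace ``BSC-noise the prefix'' with ``make the prefix either exactly correct or entirely fresh,'' and drop $\delta$ from the NSP side entirely.
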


\begin{definition}[$q_{\T}$ Component Distribution]
    Draw threshold $c\in \bit{d}$ uniformly at random.
    To generate labeled data $(z,y)$, first pick a prefix length $\ell \in \{0,\ldots, d-1\}$ uniformly at random.
    Set
    \begin{align*}
        z(1:\ell) &= \begin{cases} c(1:\ell) & \text{w.p. $1/2$} \\
            \mathrm{Uniform}(\bit{\ell}) & \text{otherwise} \end{cases} \\
        z(\ell+1) &= 1 \\
        z(\ell+2: d) &= \vec{0}.
    \end{align*}
    Label according to the threshold: $y=0$ if $c\ge z$ and $y=1$ otherwise.
    \terminalbox
\end{definition}

The crucial feature of this distribution is, for data points which contain a length-$\ell$ prefix of $c$, the label is $1$ if and only if $c(\ell+1)=1$. 
This is clear with a small example, where $d=7$ and $\ell=4$:
\begin{align*}
    c &= \underline{1~0~0~1}~0~1~1 \\
    z &= \underline{1~0~0~1}~1~0~0.
\end{align*}
We have $c< z$ and thus $c(z)=c(\ell+1)=0$, capturing the ``next-bit'' behavior.

\begin{lemma}
    For any algorithm $A$ that is $\eps$-suboptimal for $q_{\T}$, there is an algorithm $A'$ for Singletons($1,q_{NSP}$), under noise parameter $\SCnoise=0$, that is $(4\eps+o(1))$-suboptimal.
    Furthermore, $I(X;A(X))=I(X;A'(X))$.
\end{lemma}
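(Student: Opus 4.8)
The plan is to reduce noiseless $\sing{1}{q_{NSP}}$ to one-shot $\mc{T}_d$-learning by identifying the reference string $c$ with the threshold and \emph{padding}: a length-$\ell$ prefix $z=c(1{:}\ell)$ is mapped to the $d$-bit point $\tilde z := \big(z,\,1,\,\vec 0^{\,d-\ell-1}\big)$. The structural fact we exploit is exactly the one isolated by the worked example above: when $z=c(1{:}\ell)$,
\[
    \mathbf 1\big[\,c<\tilde z\,\big]\;=\;\mathbf 1\big[\,c(\ell{+}1)=0\,\big],
\]
so the threshold label of $\tilde z$ is a fixed bijective function of the ``next bit'' $c(\ell{+}1)$. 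Concretely, $A'$ works as follows: given the NSP training sample $\big(c(1{:}\ell),\,c(\ell{+}1)\big)$, Alice forms the $\mc{T}_d$ example $\big(\tilde z,\,1-c(\ell{+}1)\big)$ — a genuine draw from the component distribution of $q_{\T}$ \emph{conditioned on the prefix matching} $c$ — runs $A$ on it, and outputs the resulting model $M$; at test time, Bob, who holds $c(1{:}\ell')$, evaluates $M$ at $\big(c(1{:}\ell'),\,1,\,\vec 0^{\,d-\ell'-1}\big)$ and outputs the complement of $M$'s predicted label as his guess for $c(\ell'{+}1)$. All transformations are deterministic, the input map is injective (one recovers $\ell$ as the position of the last $1$), and $A'$'s model is $A$'s model post-composed only with a query-time relabeling, so the joint law of $(X',A'(X'))$ is a deterministic relabeling of that of $(\tilde X,A(\tilde X))$; this gives the identity $I(X;A(X))=I(X';A'(X'))$ of the ``Furthermore'' clause.

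For the accuracy guarantee, the displayed identity shows $A'$ errs on noiseless $\sing{1}{q_{NSP}}$ precisely when $M$ misclassifies the (matching) test point, and the training example fed to $A$ is exactly a $q_{\T}$ sample conditioned on the matching-prefix event; hence $\Pr[A'\text{ errs on }\sing{1}{q_{NSP}}]=\Pr[A\text{ errs on }\sing{1}{q_{\T}}\mid E]$, where $E$ is the probability-$\tfrac14$ event that \emph{both} Alice's and Bob's prefixes coincide with $c$. Two further observations close the argument: (i) conditioned on $E$, the $q_{\T}$ problem is literally noiseless $\sing{1}{q_{NSP}}$ up to the fixed relabeling, so $\mathrm{OPT}\big(\sing{1}{q_{\T}}\mid E\big)=\mathrm{OPT}\big(\sing{1}{q_{NSP}}\big)$; and (ii) $\mathrm{OPT}\big(\sing{1}{q_{\T}}\big)$ equals, up to an $o(1)$ term depending only on the length distribution on $\{0,\dots,d-1\}$, the average of the Bayes errors of the four matching/non-matching conditional problems — there being a single protocol (Alice sends her input, Bob maximum-likelihood decodes after inferring which branch he is in) that is simultaneously $o(1)$-near-optimal on all four branches. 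Since each branch's excess error is nonnegative, (ii) forces $A$'s excess error conditioned on $E$ to be at most $4$ times its excess error $\eps$ on all of $q_{\T}$, up to $o(1)$; with (i) this says $A'$ is $(4\eps+o(1))$-suboptimal. Feeding this into the one-shot next-symbol lower bound (Lemma~\ref{lemma:SC_one_shot_second} with $\SCnoise=0$) applied to $A'$, together with $I(X;A(X))=I(X';A'(X'))$, then yields Proposition~\ref{claim:thresholds}.

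The main obstacle is observation (ii): establishing that the Bayes-optimal $q_{\T}$ protocol is, up to $o(1)$, simultaneously optimal on all four branches, i.e.\ that the ``with probability $\tfrac12$, fresh uniform prefix'' cases are handled by the same decoder without a constant-factor loss relative to tailoring to each case. This is exactly what prevents conditioning on the probability-$\tfrac14$ event $E$ from inflating the \emph{additive} suboptimality (as opposed to merely the error), and hence what pins the constant down to $4$. The padding identity, the injectivity of the input transform, and the final invocation of the next-symbol bound are all routine.
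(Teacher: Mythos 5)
Your proposal follows the paper's approach closely: the same padding reduction, the same observation that on the matching event the threshold label is the complement of the next bit, the same mutual-information identity from the injectivity of the padding, and the same strategy of comparing $A$'s conditional excess error on the all-matching event against $\Aopt$'s. The factor $4$ arises the same way (the $\tfrac14$ probability of the all-matching event), and the final invocation of Lemma~\ref{lemma:SC_one_shot_second} matches the paper's route.

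There is, however, one place where your argument as stated overclaims. In observation~(ii) you assert that a single protocol (``Alice sends her input, Bob maximum-likelihood decodes after inferring which branch he is in'') is simultaneously $o(1)$-near-optimal on \emph{all four} matching/non-matching branches. That inference step is not available to Bob: from the transcript $(z_A,y_A,z_B)$ he can detect, with probability $1-o(1)$, whether he is in the ``both match'' event $G$ (via the common-prefix test — the paper's observation that failures require either both lengths below $\log d$ or a $\geq\log d$-bit coincidence), but within $\bar G$ he generally cannot tell which of his or Alice's strings is the random one, because the transcript distributions of the three sub-branches of $\bar G$ essentially coincide. Indeed the per-branch Bayes errors differ by a constant (e.g.\ $\inf_2\approx 0$ while $\inf_4$ is a constant), and no transcript-measurable decoder can match all three simultaneously. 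So the equality $\mathrm{OPT}=\tfrac14\sum_i\inf_i+o(1)$ that you rely on is stronger than what holds. The good news is that your deduction only needs the coarser two-event version the paper uses, $\mathrm{OPT}\le\tfrac14\inf_G+\tfrac34\inf_{\bar G}+o(1)$, which \emph{is} achievable by the common-prefix-detecting protocol; from that one obtains $\Pr[\Aopt\text{ errs}\mid G]\le\inf_G+o(1)$ and $\Pr[\Aopt\text{ errs}\mid\bar G]\le\inf_{\bar G}+o(1)$, and then the identity $4\eps=(\Pr[A\mid G]-\Pr[\Aopt\mid G])+3(\Pr[A\mid\bar G]-\Pr[\Aopt\mid\bar G])$ together with $\Pr[A\mid\bar G]\ge\inf_{\bar G}$ gives exactly $\Pr[A\mid G]-\inf_G\le 4\eps+o(1)$, closing the argument. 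Replace the four-branch claim with the $G/\bar G$ split and your proof lines up with the paper's.
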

\begin{proof}
    Alice and Bob get two (noiseless) prefixes of $c$.
    Call them $z_A$ and $z_B$ and denote their lengths $\ell_A$ and $\ell_B$.
    Under algorithm $A'$, they create $\tilde{z}_A$ and $\tilde{z}_B$ by padding to length $d$ with ``$10\cdots 0$.''
    They then run $A$ on these padded inputs and return $A$'s output.
    The algorithms' (identical) outputs have the same mutual information with their inputs.

    To bound the error of $A'$, let $G$ be the good event that both samples (in the threshold problem) contain prefixes of $c$ (as opposed to randomly-drawn prefixes).
    Then, by construction,
    \begin{equation*}
        \Pr[A'\text{ errs on Singletons}(1,q_{NSP})]
            = \Pr[A\text{ errs on Singletons}(1,q_{\T})\mid G].
    \end{equation*}
    We expand over $G$ and $\bar{G}$:
    \begin{align*}
        \eps &= \Pr[A\text{ errs on Singletons}(1,q_{\T})] - \Pr[\Aopt\text{ errs on Singletons}(1,q_{\T})] \\
        &= \Pr[G]\left(\Pr[A\text{ errs on Singletons}(1,q_{\T})\mid G] - \Pr[\Aopt\text{ errs on Singletons}(1,q_{\T})\mid G]\right) \\
        &\quad + \Pr[\bar{G}]\left(\Pr[A\text{ errs on Singletons}(1,q_{\T})\mid \bar{G}] - \Pr[\Aopt\text{ errs on Singletons}(1,q_{\T})\mid \bar{G}]\right) \\
        &\ge \frac{1}{4} \left(\Pr[A\text{ errs on Singletons}(1,q_{\T})\mid G] - \Pr[\Aopt\text{ errs on Singletons}(1,q_{\T})\mid G]\right) - o(1) \\
        &\ge \frac{1}{4} \left(\Pr[A'\text{ errs on Singletons}(1,q_{NSP})] - \inf_{A''} \Pr[A''\text{ errs on Singletons}(1,q_{NSP})]\right) - o(1).
    \end{align*}
    To establish the inequalities, observe that $\Aopt$, with access to both inputs, may fail to learn if $G$ occurred only when the strings match by accident.
    The probability of this is no more than the probability that (1) both strings have length less than $\log d$, or (2) at least $\log d$ random bits match, both of which have probability $o(1)$.
    Rearranging finishes the proof.
\end{proof}

\begin{lemma}
    For Singletons($1,q_{\T}$), we have $I(X;P) = \frac{d+1}{4} + O(1)$.
\end{lemma}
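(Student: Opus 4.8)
The plan is to identify the problem instance $P$ with the random threshold $c\sim\mathrm{Uniform}(\bit d)$ (the two are in bijection), so that $I(X;P)=I(X;c)$ for the single labeled example $X=(z,y)$, and then to compute this by conditioning on two auxiliary variables: the prefix length $L=\ell$ and the indicator $B$ of the ``copy'' event $\{z(1{:}\ell)=c(1{:}\ell)\}$. By construction $L$ is uniform on $\{0,\dots,d-1\}$, $B$ is an unbiased coin, and $(L,B)$ is independent of $c$, so $I(L,B;c)=0$. Expanding $I(X,L,B;c)$ via the chain rule in two ways then gives
\[
 I(X;c) \;=\; I(X;c\mid L,B)\;-\;I(L,B;c\mid X)
 \;=\; \E_{\ell,b}\!\big[I(z,y;c\mid L=\ell,B=b)\big]\;-\;I(L,B;c\mid X).
\]
Since the rightmost nonzero coordinate of $z$ is exactly $\ell+1$, the variable $L$ is a deterministic function of $X$, hence $I(L,B;c\mid X)=I(B;c\mid X,L)\le H(B)\le 1$, an $O(1)$ contribution.

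It remains to evaluate the conditional mutual informations. When $b=1$ and $L=\ell$, we have $z=(c(1{:}\ell),1,0^{d-\ell-1})$: the first $\ell$ coordinates reveal exactly $c(1{:}\ell)$ and the rest are deterministic. The threshold comparison then collapses — with the first $\ell$ bits known to agree, $c<z$ holds iff $c(\ell+1)=0$ — so the label $y$ is a bijective function of the single bit $c(\ell+1)$ and carries no further information about $c$. Thus $(z,y)$ pins down $c(1{:}\ell+1)$ while $c(\ell+2{:}d)$ remains uniform and independent, giving $I(z,y;c\mid L=\ell,B=1)=\ell+1$. When $b=0$ and $L=\ell$, the prefix $z(1{:}\ell)$ is an independent uniform string, so $z\perp c$ and $I(z,y;c\mid L=\ell,B=0)=H(y\mid z,L=\ell,B=0)$, which lies in $[0,1]$ since it is a single bit.

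Combining, and using $\E_\ell[\ell+1]=\tfrac{d+1}{2}$ for $\ell$ uniform on $\{0,\dots,d-1\}$,
\[
\E_{\ell,b}\big[I(z,y;c\mid L=\ell,B=b)\big]
 \;=\;\tfrac12\,\E_\ell[\ell+1]\;+\;O(1)
 \;=\;\tfrac12\cdot\frac{d+1}{2}\;+\;O(1)
 \;=\;\frac{d+1}{4}+O(1),
\]
and subtracting the $O(1)$ term $I(L,B;c\mid X)$ yields $I(X;P)=\tfrac{d+1}{4}+O(1)$. The only delicate step is the $b=1$ label analysis: one must verify that the comparison between $c$ and the padded string $z$ depends on $c$ only through $c(\ell+1)$ once the length-$\ell$ prefixes are known to coincide — this is precisely the ``next-bit'' feature of the construction, and it is where a sign or ordering convention could easily be mishandled. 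Everything else is routine bookkeeping of entropies.
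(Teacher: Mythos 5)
Your proof is correct and follows essentially the same route as the paper: condition on the prefix length $L$ and the copy/independent indicator $B$ (the paper uses the opposite convention for $B$, but this is immaterial), observe that $(L,B)\perp P$, compute $I(X;P\mid L=\ell,B)$ as $\ell+1$ in the copy branch and $O(1)$ in the independent branch, and average. Your version is in fact a bit more careful than the paper's — you note that $L$ is a deterministic function of $X$ (so only the $I(B;P\mid X,L)\le 1$ correction is needed) and that in the independent branch $I(z,y;c\mid L,B)=H(y\mid z,L,B)\le 1$ rather than exactly $0$ (since the label still depends on $c$) — but both of these are absorbed into the $O(1)$ term either way.
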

\begin{proof}
    Let $L$ be the length of the data point and $B$ the indicator for ``$X$ was drawn independently of $c$.''
    We have
    \begin{align*}
        I(X;P) = I(X, L, B; P) + I(B;P\mid X) &= I(B;P) + I(L; P\mid B) + I(X;P\mid B,L) + O(1)\\
            &= 0 + 0 + I(X;P\mid B,L) + O(1)\\
            &= \frac{1}{2} I(X;P\mid B=1,L) + \frac{1}{2}I(X;P\mid B=0,L) + O(1)\\
            &= 0 + \frac{1}{2} \cdot \frac{d+1}{2}+ O(1),
    \end{align*}
    since, when $B=1$, the bits of $X$ are independent of the threshold and, when $B=1$ and $L=\ell$ for any $\ell$, the mutual information is exactly $\ell+1$.
\end{proof}

\begin{proof}[Proof of Claim~\ref{claim:thresholds}]
    Via the one-sample lower bound for Next-Symbol Prediction in Lemma~\ref{lemma:SC_one_shot_second} 
    and the reduction above, any $\eps$-suboptimal algorithm $A$ for Singletons($1,q_{\T}$) has
    \begin{align*}
        I(X; M) \ge \frac{d+1}{2} \cdot (1 - h(8\eps + o(1))).
    \end{align*}
    Reusing a calculation from Section~\ref{sec:blueprint} that uses the fact that $P\perp M\mid X$, we have
    \begin{align*}
        I(X; M\mid P) \ge I(X;M) - I(X;P) \ge \frac{d+1}{4}\cdot(1 - 2\cdot h(8\eps+o(1))).
    \end{align*}
\end{proof}

\subsection{Two-Length Next-Symbol Prediction}\label{sec:2_NSP}

We present a sequence prediction learning task which is less natural than Next-Symbol Prediction but, in contrast, allows Alice to send a significantly smaller message to Bob after she has received multiple samples.
We focus on the one-shot case.

\begin{definition}[$q_2$ Component Distribution]
    In Two-Length Next-Symbol Prediction, a problem instance is described by a reference string $c\in\{0,1\}^d$ and two indices $j,k\in\{0,\ldots,d-1\}$.
    The metadistribution $q_2$ samples $x, j$, and $k$ uniformly and independently.
    Given problem instance $P=(x,j,k)$, labeled data is generated i.i.d. in the following way.
    Flip a fair coin:
    \begin{itemize}
        \item If heads, return $(x_{1:j}, x_{j+1})$.
        \item If tails, return $(x_{1:k}, x_{k+1})$.
    \end{itemize}
    \terminalbox
\end{definition}

Although the problem instance needs $d + 2\log d$ bits to describe, Bob only needs to know $(j,k,x_{j+1},x_{k+1})$ to answer correctly.
This requires only $2(\log d + 1)$ bits.
If Alice recieves $n>1$ samples, she learns $(j,k,x_{j+1},x_{k+1})$ exactly as soon as she receives inputs of different lengths.
She may fail to learn the instance if $j=k$ or if all of her inputs have the same length, so this failure probability is bounded above by
\begin{align*}
    \Pr[\text{Alice fails to learn $(j,k,x_{j+1},x_{k+1})$}] \le \frac{1}{d} + \frac{1}{2^{n-1}}.
\end{align*}
Nevertheless, in the $n=1$ case, Alice will have to send almost all her input to compete with the optimal protocol.

\begin{theorem}
    Any learning algorithm for one-shot Two-Length Next-Symbol Prediction with error $\epsilon$ above optimal succeeds on (standard) Next-Symbol Prediction with error at most $2\epsilon$ above optimal.
\end{theorem}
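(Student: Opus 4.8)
Here $q_2$ denotes the Two-Length component distribution of Definition with reference string $x$ and index pair $(j,k)$, and $q_{NSP}$ the (noiseless, $\delta=0$) component distribution of Definition~\ref{def:qNSP}. The plan is to show that simply running the given one-shot protocol $A$ for $q_2$ \emph{on standard Next-Symbol Prediction inputs} already gives a $(2\eps+o(1))$-suboptimal protocol $A'$ for $q_{NSP}$: Alice feeds her NSP sample $(z_A,y_A)$ to $A$, Bob feeds his prefix to $A$ and returns $A$'s answer. Since the input formats coincide and, in fact, Alice's input has the same marginal law in $q_2$ and $q_{NSP}$ (a uniform prefix length, that many uniform bits, and a uniform label), $A'$ and $A$ are literally the same computation on the same random object, so $I(X;A'(X))=I(X;A(X))$; combined with Lemma~\ref{lemma:SC_one_shot_second} (at $\delta=0$) this will upgrade the error statement into the information bound $I(X;M)\ge\frac{d+1}{2}\bigl(1-h(4\eps+o(1))\bigr)$ advertised for one-shot $q_2$. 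So the entire content is an error-to-error reduction.

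To carry it out, work inside the $q_2$ game and let $G$ be the event that Alice's and Bob's samples come from \emph{different} ones of the two lengths $j,k$ (equivalently $|z_A|\neq|z_B|$). Because $j,k$ are i.i.d.\ uniform and each player independently picks one of them, $\Pr[G]=\tfrac12-O(1/d)$, and conditioned on $G$ the triple (Alice's sample, Bob's prefix, target bit) is distributed exactly as in $q_{NSP}$, except that the two prefix lengths are forced distinct — a conditioning that perturbs the distribution by total variation $O(1/d)$. Hence $\bigl|\Pr[A'\text{ errs on }q_{NSP}]-\Pr[A\text{ errs on }q_2\mid G]\bigr|=O(1/d)$, and likewise the Bayes-optimal (unbounded-communication) $q_2$ protocol $\Aopt$ — which is just ``Alice sends her whole input, Bob reads off the needed bit when he can'' — has conditional error $\Pr[\Aopt\text{ errs on }q_2\mid G]\le\mathrm{OPT}_{q_{NSP}}+O(1/d)$, since conditioned on $G$ it coincides with the $q_{NSP}$-optimal algorithm up to the probability-$1/d$ slice $\ell_A=\ell_B$.

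The crucial observation drives the factor of two: conditioned on $\bar G$, i.e.\ $|z_A|=|z_B|$, the bit Bob must predict, $x_{|z_B|+1}$, is exactly Alice's own label $x_{|z_A|+1}$, so the Bayes-optimal $q_2$ protocol has $\Pr[\Aopt\text{ errs on }q_2\mid\bar G]=0$. Writing suboptimality of $A$ on $q_2$ as $\eps=\Pr[G]\bigl(\Pr[A\text{ errs}\mid G]-\Pr[\Aopt\text{ errs}\mid G]\bigr)+\Pr[\bar G]\bigl(\Pr[A\text{ errs}\mid\bar G]-\Pr[\Aopt\text{ errs}\mid\bar G]\bigr)$, the second term is $\Pr[\bar G]\Pr[A\text{ errs}\mid\bar G]\ge0$ and may be dropped, giving $\eps\ge\Pr[G]\bigl(\Pr[A'\text{ errs on }q_{NSP}]-\mathrm{OPT}_{q_{NSP}}-O(1/d)\bigr)$. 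Since $\Pr[G]=\tfrac12-O(1/d)$ and error probabilities lie in $[0,1]$ (so $\eps\le1$), this rearranges to $\Pr[A'\text{ errs on }q_{NSP}]\le\mathrm{OPT}_{q_{NSP}}+2\eps+O(1/d)$, i.e.\ $A'$ is $(2\eps+o(1))$-suboptimal on standard Next-Symbol Prediction, which is the claim.

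The main obstacle is identifying the right good event so that its complement contributes \emph{nothing}: the point is not that the equal-length case is rare (it is not — it has probability $\approx\tfrac12$) but that it is solved \emph{perfectly} by the optimal $q_2$ protocol, which is precisely why the reduction loses only a factor of $2$ rather than blowing up by $1/\Pr[G]$ times an uncontrolled conditional optimal error. Everything else is routine: the $O(1/d)$ bookkeeping from the probability-$1/d$ event $j=k$ and from the exact value $\mathrm{OPT}_{q_{NSP}}=\tfrac14-\tfrac1{4d}$, and the final substitution into Lemma~\ref{lemma:SC_one_shot_second} with noise parameter $\delta=0$.
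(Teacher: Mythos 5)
Your proof is correct in spirit and follows essentially the same decomposition as the paper: condition on a ``good event'' under which the problem looks like noiseless one-shot Next-Symbol Prediction, and drop the complement term because the Bayes-optimal protocol is perfect there (Alice's label \emph{is} Bob's target bit whenever the lengths coincide). You in fact make explicit a step the paper glosses over—the paper simply asserts the $C=0$ contribution is nonnegative ``because the error of $\alg$ is at least that of $\alg_{OPT}$,'' which is not automatic for \emph{conditional} error; your observation that the conditional optimal error is exactly $0$ is the right justification.

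The one real difference is your choice of conditioning event, and it costs you the exact constant. The paper conditions on $C := \mathrm{XOR}(C_A,C_B) = 1$, where $C_A,C_B$ are the two players' coin flips—and notes explicitly that ``this coin is flipped even when $j=k$.'' That event has probability \emph{exactly} $1/2$, and conditioned on $C=1$ the two lengths are i.i.d.\ uniform (not merely distinct!), so the conditional distribution is \emph{exactly} $q_{NSP}$, with no total-variation correction. This yields $\eps \ge \tfrac12\eps'$ on the nose, i.e.\ suboptimality at most $2\eps$, matching the theorem statement. Your event $G = \{|z_A|\neq|z_B|\}$ differs from $\{C=1\}$ precisely when $j=k$, which is why you pick up $\Pr[G]=\tfrac12 - O(1/d)$, the $O(1/d)$ total-variation correction, and ultimately the weaker conclusion $2\eps + O(1/d)$ rather than $2\eps$. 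That slop is harmless for the downstream corollary (where it disappears into $h(4\eps + o(1))$), but the theorem as stated claims the clean constant $2\eps$, and switching your conditioning event from ``observed lengths differ'' to ``coin flips differ'' is exactly what removes it.
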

\begin{proof}
    Let indicator random variables $C_A$ and $C_B$ denote, for Alice and Bob respectively, the result of the coin flip deciding if their inputs were to be of length $j$ or $k$.
    (Note that this coin is flipped even when $j=k$).
    Let $C=\mathrm{XOR}(C_A,C_B)$, so $C=1$ means one player received a length-$j$ string and the other received a length-$k$ string.
    The key observation is that conditioning on $C=1$ results in a distribution identical to that of noiseless one-shot Next-Symbol Prediction: Alice and Bob's inputs are prefixes of the same string with lengths chosen uniformly at random and Bob needs to answer the next bit.
    This any protocol with $\Pr[\text{$\alg$ correct}\mid C=1] = p$ also has accuracy $p$ on Next-Symbol Prediction.
    
    For protocol $\alg$ for Modified Next-Symbol Prediction, define
    \begin{align*}
        \epsilon' = \Pr[\text{$\alg$ errs}\mid C=1] - \Pr[\text{$\alg_{OPT}$ errs}\mid C=1],
    \end{align*}
    the error gap when applied to inputs from Next-Symbol Prediction.
    (Note that the optimal protocol for the modified task is also optimal for the standard task.)
    Then we have
    \begin{align*}
        \epsilon &= \Pr[\text{$\alg$ errs}] - \Pr[\text{$\alg_{OPT}$ errs}] \\ 
            &= \Pr[C=1] \left( \Pr[\text{$\alg$ errs}\mid C=1] - \Pr[\text{$\alg_{OPT}$ errs}]\mid C=1]\right) \\
            &\quad + \Pr[C=0] \left( \Pr[\text{$\alg$ errs}\mid C=0] - \Pr[\text{$\alg_{OPT}$ errs}]\mid C=0]\right) \\
            &\ge \frac{1}{2} \cdot \epsilon',
    \end{align*}
    where the inequality follows because the error of $\alg$ is at least that of $\alg_{OPT}$.
    This completes the proof.
\end{proof}

\begin{corollary}
    Any learning algorithm for one-shot Modified Next-Symbol Prediction with error $\epsilon$ above optimal satisfies
    \begin{align*}
        I(M;X) \ge \frac{d+1}{2}\left(1 - h(4\epsilon)\right).
    \end{align*}
\end{corollary}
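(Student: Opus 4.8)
The plan is to chain the two facts already proved in this subsection: the reduction theorem above (stating that any $\epsilon$-suboptimal one-shot protocol for Two-Length / Modified Next-Symbol Prediction is $2\epsilon$-suboptimal for noiseless one-shot next-symbol prediction), and the one-shot information lower bound of Lemma~\ref{lemma:SC_one_shot_second}. Let $A$ be an $\epsilon$-suboptimal algorithm for one-shot Modified Next-Symbol Prediction, and let $M = A(X)$ where $X$ is Alice's input.

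First I would check that the reduction is information-preserving. In the proof of the reduction theorem, $A$ is reinterpreted as a protocol for noiseless $\sing{1}{q_{NSP}}$ simply by conditioning on the event $C = \mathrm{XOR}(C_A,C_B) = 1$, under which the joint law of Alice's and Bob's inputs is exactly that of noiseless one-shot next-symbol prediction. The key observation is that $C$ is independent of Alice's input $X$: for any fixed value of Bob's coin $C_B$, flipping $C_A$ leaves the distribution of $X$ — a uniform-length prefix of a uniform reference string together with its next bit — unchanged, so $X$ given $C=0$ and $X$ given $C=1$ have the same distribution. Hence conditioning on $C=1$ does not alter the marginal of $X$ and does not change the channel $X \mapsto M$, so the information cost $I(X;M)$ of $A$ is the same whether $X$ is drawn from the Modified-NSP input distribution or from that distribution conditioned on $C=1$ (equivalently, from the noiseless $\sing{1}{q_{NSP}}$ distribution).

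Second, by the reduction theorem $A$, viewed as a protocol for noiseless $\sing{1}{q_{NSP}}$, is $2\epsilon$-suboptimal. I would then apply Lemma~\ref{lemma:SC_one_shot_second} with noise parameter $\SCnoise = 0$ and suboptimality $\epsilon_1 = 2\epsilon$, which gives
\[
    I(X;M) \;\ge\; \frac{d+1}{2}\left(1 - h\!\left(\frac{2\epsilon_1}{(1-\SCnoise)^2}\right)\right) \;=\; \frac{d+1}{2}\left(1 - h(4\epsilon)\right).
\]
Combining with the first step — that this quantity equals the information cost of $A$ on the Modified-NSP task — completes the argument. Since this is a corollary that assembles two already-proved results, there is no genuine obstacle; the only point requiring care is the information-preservation claim in the reduction, i.e.\ verifying $X \perp C$ so that restricting to $\{C=1\}$ neither loses nor adds information, which follows from the symmetric construction of $q_2$.
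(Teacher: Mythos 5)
Your proof is correct and takes essentially the same route as the paper's — chain the preceding reduction theorem with Lemma~\ref{lemma:SC_one_shot_second} at $\delta=0$ and $\eps_1=2\eps$ — though the paper instead re-opens the lemma's proof, working with the internal multiplicative error $\gamma$ and the bounds $H(X\mid M)\le\frac{d+1}{2}h(\gamma/2)$ and $H(X)\ge\frac{d+1}{2}$ to reach the same constant. Your explicit check that $X\perp C$, so that conditioning on $C=1$ leaves the joint law of $(X,M)$ and hence $I(X;M)$ unchanged, is a point the paper relies on implicitly but does not spell out.
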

\begin{proof}
    Lemma~\ref{lemma:SC_one_shot_second} asks for $\gamma$ such that
    \begin{align*}
        \Pr[\text{$\alg$ correct}] = \frac{1}{2} + OPT_{\mathrm{sing}} (1- \gamma).
    \end{align*}
    This implies $H(X\mid M) \le \frac{d+1}{2} h(\gamma/2)$.
    By the above theorem, on Next-Symbol Prediction $\alg$ has, converting from accuracy to error,
    \begin{align*}
        2\epsilon &\ge \Pr[\text{$\alg$ errs}] - \Pr[\text{$\alg_{OPT}$ errs}] \\
            &= \left(1 - \left(\frac{1}{2} + OPT_{\mathrm{sing}} (1-\gamma) \right)\right)
                - \left(1 - \left(\frac{1}{2} + OPT_{\mathrm{sing}} \right)\right) \\
            &= OPT_{\mathrm{sing}} \gamma \\
            &\ge \frac{\gamma}{4}.
    \end{align*}
    Letting $L$ be the length of Alice's sample (counting the label), we can compute the entropy:
    \begin{align*}
        H(X) = H(X, L) = H(L) + H(X\mid L) \ge \E_\ell[H(X\mid L=\ell)] = \frac{d+1}{2}.
    \end{align*}
    Combining the entropy lower bound and conditional entropy upper bound establishes the claim.
\end{proof}

\section{Differentially Private Algorithms Have High Error}\label{sec:DP_calc}

By definition, the output of a differentially private algorithm is not very sensitive to changes in the input data set.
In our setting, with data sets drawn i.i.d. from a fixed distribution, existing results from the differential privacy literature allow us to formalize this with an upper bound on mutual information:
\begin{proposition}\label{claim:DP_info_upbd}
    Fix distribution $P=p$ over examples in $\bit{d}$ and suppose the data $X$ is drawn from the product distribution $p^{\otimes n}$.
    Then, for any $(\alpha,\beta)$-differentially private algorithm $A$, 
    \begin{align*}
        I(X;A(X)\mid P=p) \le n \left(2\alpha\cdot  \frac{e^{2\alpha} -1}{e^{2\alpha}+1} + \beta d + h(\beta)\right).
    \end{align*}
\end{proposition}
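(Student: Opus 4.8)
The plan is to condition on $P=p$ throughout, split the information across the $n$ records, and bound each record's contribution by $2\alpha\cdot\frac{e^{2\alpha}-1}{e^{2\alpha}+1}+\beta d+h(\beta)$. Since $X=(X_1,\dots,X_n)$ is a product distribution given $P=p$, the chain rule gives $I(X;A(X)\mid P=p)=\sum_{i=1}^n I(X_i;M\mid X_{<i},P=p)$, and because $X_i$ is independent of $X_{>i}$ given $(X_{<i},P=p)$ one has $I(X_i;M\mid X_{<i},P=p)\le I(X_i;M,X_{>i}\mid X_{<i},P=p)=I(X_i;M\mid X_{-i},P=p)$. So it suffices, for each fixed $x_{-i}$, to bound $I(X_i;M\mid X_{-i}=x_{-i})$, the information leaked about a single record when the others are frozen. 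Note that $x_i\mapsto A(x_i,x_{-i})$ is itself an $(\alpha,\beta)$-DP mechanism on inputs in $\bit{d}$, since any two values of the $i$-th record form a pair of neighboring datasets.

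Next I would invoke the standard structural description of approximate differential privacy, under which $(\alpha,\beta)$-indistinguishability is realized as a mixture of a ``clean'' part with bounded likelihood ratio and a ``garbage'' part of weight at most $\beta$. Concretely, I would couple the single-record mechanism $x_i\mapsto A(x_i,x_{-i})$ through a latent indicator $B\sim\mathrm{Bernoulli}(\beta)$ (independent of $X_i$) so that, conditioned on $B=0$, the conditional laws of $M$ for different values of $x_i$ are pairwise $e^{2\alpha}$-close in likelihood ratio; this doubling of the exponent is the price of absorbing the $\beta$-slack into a clean part. Then $I(X_i;M\mid x_{-i})\le I(X_i;M,B\mid x_{-i})=I(X_i;B\mid x_{-i})+I(X_i;M\mid B,x_{-i})$. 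The first term is at most $H(B\mid x_{-i})\le h(\beta)$; in the second, the $B=1$ branch contributes at most $\Pr[B=1]\cdot H(X_i)\le\beta d$ (as $X_i\in\bit{d}$), leaving only $I(X_i;M\mid B=0,x_{-i})$.

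For the $B=0$ branch I would write $I(X_i;M\mid B=0,x_{-i})=\E_{X_i}\!\left[D_{KL}\!\left(P_{M\mid X_i,B=0,x_{-i}}\,\big\|\,\bar P_{M\mid B=0,x_{-i}}\right)\right]$, with $\bar P$ the mixture over a fresh $X_i'$, and then use joint convexity of $D_{KL}$ to pass to $\E_{X_i,X_i'}\!\left[D_{KL}\!\left(P_{M\mid X_i,B=0,x_{-i}}\,\big\|\,P_{M\mid X_i',B=0,x_{-i}}\right)\right]$, an average of KL divergences between two laws whose likelihood ratio lies in $[e^{-2\alpha},e^{2\alpha}]$. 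Each such divergence equals $\E_Q[f\log f]$ with $f=dP/dQ$ satisfying $\E_Q f=1$ and $f\in[e^{-2\alpha},e^{2\alpha}]$; since $t\log t$ is convex the maximum over such $f$ is attained at a two-atom distribution, and a short computation yields $2\alpha\cdot\frac{e^{2\alpha}-1}{e^{2\alpha}+1}$. Combining the three pieces bounds $I(X_i;M\mid x_{-i})$ by $2\alpha\cdot\frac{e^{2\alpha}-1}{e^{2\alpha}+1}+\beta d+h(\beta)$, and summing over $i=1,\dots,n$ gives the stated bound. The main obstacle is the middle step: deriving from $(\alpha,\beta)$-DP a single latent-variable decomposition with an $e^{2\alpha}$-bounded likelihood ratio off a probability-exactly-$\beta$ event, and checking the garbage mass is $\beta$ rather than a constant multiple of it. This is the only place where the interplay of the two privacy parameters requires care; the chain rule, convexity, and the $\E_Q[f\log f]$ optimization are all routine.
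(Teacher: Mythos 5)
Your proof is correct and follows the same route the paper sketches: reduce to a single-record bound by exploiting the product structure (the paper phrases this as fixing the other $n-1$ coordinates plus linearity of expectation; you do it via the chain rule and conditional independence), then bound the per-record information by decomposing the local $(\alpha,\beta)$-DP channel into a $(2\alpha,0)$-DP ``clean'' part plus a $\beta$-mass ``garbage'' part and optimizing $\E_Q[f\log f]$ over likelihood ratios in $[e^{-2\alpha},e^{2\alpha}]$. The decomposition lemma you flag as the one nontrivial ingredient is exactly what the paper delegates to a citation (Lemma~3.6 of Cheu~et~al.), so no gap remains; if anything, by constructing $B$ independent of $X_i$ your bound's $h(\beta)$ term is pure slack, but the statement claims only an upper bound so this is harmless.
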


Our lower bounds imply that, for small constant $\eps$, any $\eps$-suboptimal algorithm on the tasks we consider must have $I(X;A(X)\mid P)=\Omega(nd)$. 
This is inconsistent with the above bound, even if $\alpha = O(\log n)$ and $\beta$ is a sufficiently small constant.
Recall that an informal standard for ``meaningful privacy'' requires $\alpha=O(1)$ and $\delta \ll 1/n$.
Thus, even for unacceptably large values of the privacy parameters, differential privacy is incompatbile with low suboptimality.

The claim follows from well-known statements in the privacy literature, so we will only provide the high-level ideas.
We emphasize that such a statement holds only in the case of data from  product distributions.
The first step is to provide an upper bound on the mutual information of any $(\alpha,\beta)$-DP algorithm accepting a single input, i.e. a noninteractive LDP algorithm.
See e.g. Lemma 3.6 in \citep{cheu2020limits} for such a proof; one first shows that any one-sample $(\alpha,\beta)$-DP algorithm can be converted into a one-sample $(2\alpha,0)$-DP algorithm that is close in total variation distance.

The second step is to show that $n$ multiplied by this bound is itself an upper bound on the mutual information of any $n$-sample $(\alpha,\beta)$-DP algorithm.
This follows from a simple argument applying linearity of expectation and the fact that fixing the other $n-1$ inputs allows us to treat any $n$-sample algorithm as one-sample.


\section{One-Way Information Complexity of Gap-Hamming}\label{sec:one_way_GHP}

We provide a lower bound for the one-way information complexity of the Gap-Hamming problem.
This bound achieves the ``right'' constant, establishing that Alice must send $(1-o(1))\cdot d$ bits of information (out of the $d$ she receives) as her error vanishes.
This offers evidence for Conjecture~\ref{conj:HC_singletons_bound}, since the Singletons($k,q_{HC}$) task is a natural generalization of Gap-Hamming to multiple samples.
The two-way communication complexity lower bound of $\Omega(n)$ was first established in \citep{chakrabarti2012optimal}, with later papers providing simplified proofs (see~\citep{rao2020communication} for additional background.)
Notably for this work, \cite{hadar2019communication} offered an information-theoretic lower bound using a strong data-processing inequality.
A modified version of their proof yielded our proof of Lemma~\ref{lemma:HC_communication_bound}, but it is not clear that this approach can yield avoid losing a constant factor.
We introduce a simple and general technique for proving lower bounds on one-way communication over product distributions.
This technique generalizes and extends a proof in \citep{saglam2013communication}, allowing it to be applied to information complexity of general one-way communication problems, and showing that it can be used to avoid losing constant factors.
To the best of our knowledge, this is the first result achieving the $(1-o(1))$ factor for Gap-Hamming.

In the one-way Gap-Hamming problem, Alice gets $x\in\{0,1\}^d$ and Bob gets $y\in\{0,1\}^d$.
We will take these inputs to be uniform and independent distribution.\footnote{Since the inputs are independent and the communication is one-way, for any algorithm we have $I(M;X,Y) = I(Y;M) + I(X;M\mid Y) = I(X;M\mid Y)$, so the ``internal'' and ``external'' information complexities coincide.}
Bob's goal is to output, with probability at least $1-\epsilon$, the following partial function (using Hamming distance)
\begin{align*}
    GHP(x,y) = \begin{cases}
        1 & \text{if $d(x,y) \le \frac{d}{2} - c\sqrt{d}$} \\
        0 & \text{if $d(x,y) \ge \frac{d}{2} + c\sqrt{d}$} \\
        * & \text{otherwise} 
    \end{cases}
\end{align*}
This is a promise problem controlled by parameter $c$.
For any fixed $c$, with constant probability the promise holds.
We study what happens when Alice and Bob succeed with probability at least $1-\eps$ for a sufficiently small constant $\eps$.

\begin{theorem}\label{thm:GHP_oneshot_main}
    Let $\eps$ denote the probability that Alice and Bob make an error.
    For any fixed $c>0$, there exists a function $g_c(\eps,d)$ such that
    \begin{align*}
        IC_{\eps}^{\to}(GH_c) \ge (1-g_c(\eps,d))\cdot d,
    \end{align*}
    where $g_c(\eps,d)\to 0$ for any sequence of pairs $\{\eps_i,d_i\}_{i=1}^\infty$ such that $\eps_i\to 0$ and $d_i\to \infty$.
\end{theorem}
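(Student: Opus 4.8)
The plan is to show that a protocol with error $\eps$ must satisfy $H(X\mid M)=o(d)$ as $\eps\to 0$ and $d\to\infty$. Since Alice's and Bob's inputs are independent and the communication is one-way, the message $M$ is independent of $Y$, so $IC_{\eps}^{\to}(GH_c)=I(M;X,Y)=I(X;M)=d-H(X\mid M)$; thus an upper bound $H(X\mid M)\le g_c(\eps,d)\,d$ with $g_c(\eps,d)\to 0$ is exactly the claim. The engine of the proof is a \emph{two-point lemma}: any two Alice-inputs that can produce the same transcript $m$ must be close in Hamming distance, because otherwise a uniformly random $Y$ lands strictly in the ``close'' region for one of them and strictly in the ``far'' region for the other with constant probability, and Bob cannot be correct for both. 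This is a clean reformulation — upgraded to information complexity and with a sharp leading constant — of the rectangle/corruption-style reasoning behind earlier one-way Gap-Hamming bounds such as \citep{saglam2013communication}.

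The first step is to set up the conditioning with two Markov truncations. For a transcript $m$ write $\eps_m=\Pr[\text{error}\mid M=m]$, so that $\E_m[\eps_m]=\eps$; the transcripts with $\eps_m>\sqrt{\eps}$ carry total probability at most $\sqrt{\eps}$, and for those I use only the trivial bound $H(X\mid M=m)\le d$. Fix a ``good'' transcript $m$ with $\eps_m\le\sqrt{\eps}$, set $\delta_m=\sqrt{\eps_m}$, and call an input $x$ \emph{$m$-good} if $\Pr_Y[\text{error}\mid X=x,M=m]\le\delta_m$; by Markov, conditioned on $M=m$ the input $X$ is $m$-good with probability at least $1-\delta_m$.

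Now the two-point lemma. Let $x,x'$ be $m$-good and put $t=d_H(x,x')$. Over a uniform $Y$ consider the event $\mathcal E(x,x')=\{\,d_H(x,Y)\le \tfrac{d}{2}-c\sqrt{d}\ \text{and}\ d_H(x',Y)\ge \tfrac{d}{2}+c\sqrt{d}\,\}$. On $\mathcal E$ the promise holds for both inputs with \emph{opposite} correct answers, so for every value of $Y$ and Bob's coins his output errs on at least one of $x,x'$; hence $\Pr_Y[\mathcal E(x,x')]\le 2\delta_m$. Conversely, $(d_H(x,Y),d_H(x',Y))$ is a sum of $d$ independent coordinate contributions, and its multivariate-CLT limit is bivariate normal with equal marginals $N\!\left(\tfrac{d}{2},\tfrac{d}{4}\right)$ and correlation $\rho=1-2t/d$ (which follows from $\mathrm{Var}\!\left(d_H(x,Y)-d_H(x',Y)\right)=t$). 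A short Gaussian estimate of this orthant-type probability, combined with a multivariate Berry--Esseen bound (additive error $O(d^{-1/2})$, negligible in the regime that matters), gives $\Pr_Y[\mathcal E(x,x')]\ge \Omega\!\left(e^{-2c^2 d/t}\right)$ up to polynomial factors, with the hidden constant depending only on $c$. Comparing the two bounds forces $t\le r(c,\delta_m)\,d$ where $r(c,\delta_m)=O\!\left(c^2/\log(1/\delta_m)\right)\to 0$ as $\delta_m\to 0$.

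Consequently, for a good transcript $m$ all $m$-good inputs lie in one Hamming ball of radius $r(c,\delta_m)\,d$, so $H(X\mid M=m)\le \log\binom{d}{\le r(c,\delta_m)d}+\delta_m d+h(\delta_m)\le d\,h\!\left(r(c,\delta_m)\right)+\delta_m d+h(\delta_m)$, which is $o(d)$ as $\eps_m\to 0$. Averaging over $m$, and splitting off the $\le\sqrt{\eps}$ probability mass of bad transcripts, yields $H(X\mid M)\le d\left(\sqrt{\eps}+h\!\left(r(c,\eps^{1/4})\right)+\eps^{1/4}\right)+o(d)=:g_c(\eps,d)\,d$, and $g_c(\eps,d)\to 0$ whenever $\eps\to 0$ and $d\to\infty$; this gives $IC_{\eps}^{\to}(GH_c)\ge (1-g_c(\eps,d))\,d$. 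I expect the main obstacle to be the quantitative anti-concentration estimate: lower-bounding $\Pr_Y[\mathcal E(x,x')]$ as a function of $t=d_H(x,x')$ with a clean enough dependence that the resulting radius $r(c,\delta_m)$ genuinely vanishes with $\delta_m$, and verifying that the CLT/Berry--Esseen error terms stay dominated by this probability throughout the relevant range of $t$. The remaining pieces — the two Markov truncations, the ball-counting bound on $H(X\mid M=m)$, and the final averaging — are routine.
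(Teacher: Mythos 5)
Your argument is correct in outline and, interestingly, takes a genuinely different route than the paper's. The paper's proof runs through a general lemma (Lemma~\ref{lemma:product_bound}) for one-way information complexity over product distributions: conditioned on a message $m$, one samples \emph{two independent} inputs $x,x'$ from the posterior $X\mid M=m$ and argues that they are ``compatible'' (close in Hamming distance) with high probability, because otherwise a random $Y$ distinguishes them and forces an error; the probability of \emph{incompatibility} is then lower-bounded directly via $H(X\mid M=m)$, with no Markov truncations anywhere. You instead run two Markov truncations (over transcripts, then over inputs given a good transcript) and prove a hard \emph{ball-containment} statement: all $m$-good inputs have pairwise distance at most $r(c,\delta_m)d$, hence lie in a single Hamming ball of that radius, and the conditional entropy is bounded by the log-volume plus truncation corrections. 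Both are corruption/rectangle arguments in spirit, and both face the same delicate point — verifying that the Berry--Ess\'een error $O(d^{-1/2})$ is dominated by the anti-concentration probability $e^{-\Theta(c^2 d/t)}$ in the relevant range of $t$, which is why a joint limit $\eps\to 0$, $d\to\infty$ is needed — and both papers handle it similarly (the paper pushes this into the choice of $p$ in the final step). What the paper's formulation buys is generality (Lemma~\ref{lemma:product_bound} is function-agnostic and immediately reusable) and slightly cleaner bookkeeping, since it avoids the $\sqrt{\eps}$ and $\eps^{1/4}$ losses from the Markov steps; what your version buys is a more geometric picture (the support of the posterior is essentially one small ball). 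One small quantitative caveat: your exponent $e^{-2c^2 d/t}$ should carry a slack constant (the proof actually needs $(c+c')^2$ for some $c'>0$ so that the ``$A$ near its median'' event has constant probability rather than probability $\Theta(1/\sqrt{d-t})$), but that constant is absorbed into $r(c,\delta_m)$ and does not affect the conclusion $r(c,\delta_m)\to 0$.
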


Theorem~\ref{thm:GHP_oneshot_main} follows, with a bit of calculation, from the following lemma.
To prove Lemma~\ref{lemma:one_shot_GHP}, we prove a function-agnostic statement about one-way information complexity over product distributions, then apply it to Gap-Hamming.

\begin{lemma}\label{lemma:one_shot_GHP}
    If Alice and Bob succeed with probability at least $1-\epsilon$ then, for sufficiently large $d$ and all $p\in (0,1)$,
    \begin{align*}
        I(M;X) \ge H(X) \cdot \left(1 - h(p) \right)\cdot \left( 1 - 2\epsilon/\alpha\right) - 1,
    \end{align*}
    where $\alpha$ can be bounded as follows: 
    \begin{align*}
        \alpha \ge \left(1 - 2 F_{\mc{N}(0,1)}\left(\frac{-c}{\sqrt{1-p}}\right) - \mc{O}\left(\frac{1}{\sqrt{d(1-p)}}\right)\right)
         \left(2 F_{\mc{N}(0,1)} \left(\frac{-3c}{\sqrt{p}}\right) - \mc{O}\left(\frac{1}{\sqrt{dp}}\right)\right).
    \end{align*}
    Here $F_{\mc{N}(0,1)}(\cdot)$ is the CDF of the standard normal distribution.
\end{lemma}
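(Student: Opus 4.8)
Because Alice's input $X$ is uniform, $H(X)=d$ and $I(M;X)=d-H(X\mid M)$, so it suffices to show that a protocol of error $\epsilon$ forces $H(X\mid M)$ down to roughly $h(p)d$. I would first extract the combinatorial core as a \emph{function-agnostic} statement about one-way protocols over a product distribution $X\perp Y$ computing a partial function $f(X,Y)\in\{0,1,\ast\}$ with error $\epsilon$. Fix $\beta\in(0,1)$ and, for each message $m$, set $G_m=\{x:\Pr_Y[\text{Bob errs}\mid X=x,M=m]<\beta/2\}$. Two facts: (i) Markov applied to $\mathbb{E}_{X,M}\Pr_Y[\text{err}\mid X,M]\le\epsilon$ gives $\Pr_{X,M}[X\notin G_M]\le 2\epsilon/\beta$; (ii) $G_m$ is ``$\beta$-conflict-free'': if $x,x'\in G_m$ then $\Pr_Y[\{f(x,Y),f(x',Y)\}=\{0,1\}]<\beta$, since on that event Bob — who sees only $(m,Y)$ — errs on at least one of $x,x'$, so the conflict probability is at most $\Pr_Y[\text{err}\mid x,m]+\Pr_Y[\text{err}\mid x',m]<\beta$. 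Writing $q_m=\Pr[X\notin G_m\mid M=m]$ and $\eta'=\mathbb{E}_m q_m\le 2\epsilon/\beta$, the standard grouping bound $H(X\mid M=m)\le (1-q_m)\log|G_m|+q_m\log|\mathcal{X}|+h(q_m)$, averaged over $m$ using concavity of $h$ and $\mathbb{E}_m[(1-q_m)\log|G_m|]\le(1-\eta')\log\max_m|G_m|$, yields $H(X\mid M)\le (1-\eta')\log\!\big(\max_m|G_m|\big)+\eta'\log|\mathcal{X}|+h(\eta')$, where the maximum ranges over $\beta$-conflict-free sets.

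\textbf{Specializing to Gap-Hamming.} Now take $f=GH_c$, $\mathcal{X}=\{0,1\}^d$, $Y$ uniform, and choose $\beta=\alpha$. The geometric heart of the argument is: any $x\neq x'$ with $d_H(x,x')\ge pd$ is \emph{separated} by a uniform $Y$ — meaning $d_H(x,Y)\le\tfrac d2-c\sqrt d$ while $d_H(x',Y)\ge\tfrac d2+c\sqrt d$ (or vice versa) — with probability at least $\alpha$. To see this, let $k=d_H(x,x')\ge pd$ and split $[d]$ into the set $S$ where $x,x'$ differ ($|S|=k$) and its complement: on $\bar S$ the two distances coincide, with common value $W\sim\mathrm{Bin}(d-k,\tfrac12)$, and on $S$ they equal $a$ and $k-a$ with $a\sim\mathrm{Bin}(k,\tfrac12)$. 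Forcing $a$ to deviate from $k/2$ by $\Theta(\sqrt d)$ (probability $\gtrsim 2F_{\mc{N}(0,1)}(-3c/\sqrt p)$ since $\mathrm{Var}(a)=k/4\le d/4$ and $k\ge pd$) and $W$ to fall in a $\Theta(\sqrt d)$-window about its mean (probability $\gtrsim 1-2F_{\mc{N}(0,1)}(-c/\sqrt{1-p})$ since $\mathrm{Var}(W)\le(1-p)d/4$) puts the two distances on opposite sides of the gap; multiplying these two (nearly) Gaussian estimates — legitimized by a Berry--Esseen / local-CLT bound with the stated $\mc{O}(1/\sqrt{dp})$ and $\mc{O}(1/\sqrt{d(1-p)})$ corrections, the worst case being $k=pd$ — is exactly $\alpha$. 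Hence an $\alpha$-conflict-free subset of $\{0,1\}^d$ has Hamming diameter strictly below $pd$, so it sits inside a radius-$pd$ ball and has size at most $2^{h(p)d}$ (taking $p<\tfrac12$).

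\textbf{Putting it together, and the obstacle.} Plugging $\max_m|G_m|\le 2^{h(p)d}$, $|\mathcal{X}|=2^d$, and $\eta'\le 2\epsilon/\alpha$ into the function-agnostic bound gives $H(X\mid M)\le(1-\eta')h(p)d+\eta' d+h(\eta')$, so $I(M;X)=d-H(X\mid M)\ge d(1-h(p))(1-\eta')-h(\eta')\ge H(X)(1-h(p))(1-2\epsilon/\alpha)-1$, which is the claim; Theorem~\ref{thm:GHP_oneshot_main} then follows by letting $p=p(\epsilon,d)\to 0$ slowly enough that $2\epsilon/\alpha\to0$ while $h(p)\to0$. \textbf{The main obstacle} is the separation estimate for $\alpha$: one needs honest \emph{two-sided} control of Binomial tails — the ``not too concentrated'' direction for $W$ together with the ``moderate deviation'' direction for $a$ — with explicit constants and Berry--Esseen error terms, and one must verify that the \emph{conjunction} of the $S$-part and $\bar S$-part events really has probability bounded below by the stated product, uniformly over $k\in[pd,d]$. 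Everything else (Markov, the conflict observation, the Hamming-ball volume bound, and the entropy arithmetic) is routine.
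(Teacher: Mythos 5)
Your proof is correct, gives exactly the stated bound, and the geometric/probabilistic core (the estimate for $\alpha$ via splitting into the agreement set $\bar S$ and disagreement set $S$, with a Berry--Esseen correction, and the Hamming-ball volume bound $2^{h(p)d}$) is essentially identical to the paper's. Where you genuinely diverge is in the function-agnostic lemma. The paper's Lemma~B.4 proceeds by a ``two-copy'' thought experiment: draw two independent samples $X, X' \sim X \mid M = m$, bound $\Pr[D\mid M=m]\le 2\Pr[\text{err}\mid M=m]$ because $Y$ distinguishing $X,X'$ forces an error on one of them, and then extract $\Pr[\bar C\mid M=m]$ from an entropy chain-rule calculation $H(X,C\mid X'=x',M=m)\le(\cdots)$. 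You instead define a per-message ``good set'' $G_m = \{x : \Pr_Y[\text{err}\mid x,m] < \beta/2\}$, apply Markov to bound $\Pr[X\notin G_M]\le 2\eps/\beta$, show $G_m$ is $\beta$-conflict-free by a union bound on two error events, and then apply the grouping inequality $H(X\mid M=m)\le (1-q_m)\log|G_m| + q_m\log|\mathcal X| + h(q_m)$. Both routes deliver the same final inequality (after averaging with Jensen and plugging in $N_{\max}=2^{h(p)d}$), but your version is more explicit about \emph{which} set is small and avoids the paper's slightly delicate assertion that $\Pr[D]$ is independent of $M$ conditioned on (in)compatibility; you also handle the case $k=d_H(x,x')\ge pd$ uniformly and verify that $k=pd$ is the worst case, which the paper leaves implicit by setting $A\sim\mathrm{Bin}(d(1-p),1/2)$, $B\sim\mathrm{Bin}(pd,1/2)$ directly. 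Net effect: same decomposition and same bound, but a cleaner, more self-contained intermediate lemma.
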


\paragraph{A General Approach to One-Way Information Complexity}
We will first prove the following lemma for lower bounding the one-way external information complexity of any communication task $f:\mc{X}\times\mc{X}\to\mc{Z}$ where Alice gets $X$, Bob gets $Y$, and $X\perp Y$.
To ``instantiate'' the lemma we need a notion of \textit{incompatibility} which applies to two of Alice' inputs.
For any definition of incompatibility, say $y$ \textit{distinguishes} $x, x'$ if $f(x,y)\neq f(x',y)$.
We will need a lower bound on distinguishing
\begin{align*}
    \Pr[\text{$y$ distinguishes $x,x'$}\mid \text{$x,x'$ incompatible}] \ge \alpha
\end{align*}
and an upper bound on the number of compatible inputs
\begin{align*}
     \max_x \left\lvert \left\{ x' : \text{$x,x'$ compatible}\right\}\right\rvert \le N_{\max}
\end{align*}

\begin{lemma}\label{lemma:product_bound}
    Suppose $X\perp Y$ and Alice and Bob succeed in computing $f(x,y)$ with probability $1-\epsilon$. 
    Fix a definition of incompatibility (which can depend on $\epsilon$).
    Then
    \begin{align*}
        I(X; M) \ge H(X) - \log N_{max} - 1 - \frac{2 \epsilon }{\alpha}(\log |\mathcal{X}| - \log N_{max}).
    \end{align*}
    If Alice's input is uniform and we can bound $\alpha \ge \Omega(\sqrt{\eps})$, this simplifies to 
    \begin{align*}
        I(M;X) \ge H(X)\cdot \left(1 - \frac{\log N_{\max}}{H(X)}\right) \cdot \left(1 - O\left(\sqrt{\eps}\right)\right) -  1.
    \end{align*}
\end{lemma}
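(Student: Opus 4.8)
The plan is to prove the bound by a symmetrization over Alice's message, followed by a short entropy-counting identity. A pleasant feature of this route is that it never requires $M$ to be a deterministic function of $X$, so it applies verbatim to randomized one-way protocols with randomized outputs, and it uses the two hypotheses on the incompatibility relation (the distinguishing lower bound $\alpha$ and the neighborhood bound $N_{\max}$) only as black boxes.

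\emph{Step 1: symmetrize over the message.} Draw $(X,M)$ from the protocol and, independently, $Y$ from its marginal; then draw an auxiliary $X'$ from the posterior $P_{X\mid M}$, conditionally independent of $(X,Y)$ given $M$. Since $Y$ is independent of $(X,M)$ and of the draw of $X'$, the triples $(X,M,Y)$ and $(X',M,Y)$ are identically distributed, so Bob's output agrees with $f(X,Y)$ and with $f(X',Y)$ each with probability at least $1-\epsilon$; a union bound gives $\Pr[f(X,Y)\neq f(X',Y)]\le 2\epsilon$, i.e.\ $\Pr[Y\text{ distinguishes }X,X']\le 2\epsilon$. As $Y$ is independent of $(X,X')$ and $\Pr_Y[Y\text{ distinguishes }x,x']\ge\alpha$ for every incompatible pair $(x,x')$, averaging yields $\alpha\cdot\Pr[X,X'\text{ incompatible}]\le 2\epsilon$, hence $\Pr[X,X'\text{ incompatible}]\le 2\epsilon/\alpha$. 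Note $X$ and $X'$ are conditionally i.i.d.\ given $M$, so for each $m$ two independent samples of $P_m:=P_{X\mid M=m}$ are incompatible with probability $\beta_m$, where $\E_m[\beta_m]\le 2\epsilon/\alpha$.

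\emph{Step 2: turn ``two i.i.d.\ draws are usually compatible'' into an entropy bound.} Fix $m$; let $E$ be the event that the two samples $X,X'\sim P_m$ are compatible, and observe $\mathbf 1_E$ is a function of $(X,X')$, so $2H(X\mid M=m)=H(X,X'\mid M=m)=H(X,X',\mathbf 1_E\mid M=m)$. Decompose the last term as $H(X',\mathbf 1_E\mid M=m)+H(X\mid X',\mathbf 1_E,M=m)$; bound the first by $H(X'\mid M=m)+1$ and the second by $(1-\beta_m)\log N_{\max}+\beta_m\log|\mathcal X|$ (conditioned on $E$ and on $X'=x'$, $X$ lies in the size-$\le N_{\max}$ compatibility neighborhood of $x'$; conditioned on $\bar E$ use the trivial $\log|\mathcal X|$). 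This gives $H(X\mid M=m)\le\log N_{\max}+1+\beta_m(\log|\mathcal X|-\log N_{\max})$. Averaging over $m$, using $\E_m[\beta_m]\le 2\epsilon/\alpha$ and monotonicity in $\beta_m$ (the coefficient $\log|\mathcal X|-\log N_{\max}$ is nonnegative), $H(X\mid M)\le\log N_{\max}+1+\tfrac{2\epsilon}{\alpha}(\log|\mathcal X|-\log N_{\max})$; subtracting from $H(X)$ is the stated bound. The uniform-$X$ simplification is then pure algebra: set $H(X)=\log|\mathcal X|$, factor out $H(X)$, and use $2\epsilon/\alpha=O(\sqrt\epsilon)$ when $\alpha=\Omega(\sqrt\epsilon)$.

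\emph{Main obstacle.} There is no heavy machinery here; the two delicate points are (i) verifying the distributional identity $(X,M,Y)\overset{d}{=}(X',M,Y)$ carefully, since this is exactly what allows the argument to tolerate randomized protocols and randomized outputs; and (ii) getting Step 2 right — one must peel $X'$ off \emph{together with} $\mathbf 1_E$ from $2H(X\mid M=m)=H(X,X',\mathbf 1_E\mid M=m)$, rather than trying to use $X'$ to ``pin down'' $X$ directly, which fails because $X\perp X'\mid M$.
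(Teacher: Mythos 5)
Your proposal is correct and follows essentially the same route as the paper: both draw two i.i.d.\ samples from the posterior $P_{X\mid M=m}$, use a union bound over the protocol's two induced errors to cap the probability that $Y$ distinguishes the pair, translate that via the $\alpha$-bound into a cap on the incompatibility probability, and convert compatibility into an entropy bound using the chain rule together with the $N_{\max}$ neighborhood size. Your bookkeeping differs in two inessential ways — you run the distinguishing/union-bound step globally via the distributional identity $(X,M,Y)\overset{d}{=}(X',M,Y)$ rather than per message $m$, and in the entropy step you start from $2H(X\mid M=m)=H(X,X',\mathbf 1_E\mid M=m)$ and peel off $X'$ together with $\mathbf 1_E$, whereas the paper starts from $H(X\mid M=m)=H(X,C\mid X',M=m)$ using $X\perp X'\mid M$ — but these are cosmetic reshufflings of the same chain-rule decomposition and yield the identical per-message inequality $H(X\mid M=m)\le \log N_{\max}+1+\beta_m(\log|\mathcal X|-\log N_{\max})$.
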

\begin{proof}
    We perform the following thought experiment.
    Given Alice's message $m$, samples $x,x'\sim X\mid M=m$ independently.
    Let $D$ be the event that $y$ \textit{distinguishes} these two inputs: that $f(x,y)\neq f(x',y)$.
    Let $\Pi(m,y)$ denote Bob's portion of the protocol.\footnote{
    We assume $\Pi(m,y)$ is deterministic. This is without loss of generality, since given any $m,y$ pair there is a Bayes-optimal answer.
    Converting a randomized protocol to a Bayes-optimal one will not decrease the error and leaves $I(M;X)$ unaffected.}
    Observe that
    \begin{align}
        \Pr[D\mid M=m] &\le \Pr[\Pi(M,Y) \neq f(X,Y)\mid M=m] \nonumber \\
                    &\qquad + \Pr[\Pi(M,Y) \neq f(X',Y) \mid M=m] \nonumber \\        
            &= 2\Pr[\text{error}\mid M=m], \label{eq:distinguish_and_error}
    \end{align}
    since at least of one of $x, x'$ correspond to the wrong answer. 
    So the thought experiment allows us, with a lower bound on $\Pr[D\mid M=m]$, to show the protocol will have high error.
    
    Letting $C$ be the event that these two inputs are compatible, we have
    \begin{equation}
        \Pr[D\mid M=m] \ge \Pr[D\mid \bar{C}] \cdot\Pr[\bar{C}\mid M=m]. \label{eq:distinguish_compatible}
    \end{equation}
    The first term we bound using the problem itself and the definiton of incompatibility.
    Note that we've assumed $\Pr[D]$ is independent of $M$ conditioned on $C$. 
    The second term we bound via the entropy $H(X\mid M=m)$.
    
    Fix Alice's message $M=m$ to Bob.
    Using the fact that $X\perp X'\mid M$, we have (conflating events and indicator random variables)
    \begin{align*}
        H(X\mid M=m) = H(X\mid X', M=m) = H(X, C\mid X', M=m).
    \end{align*}
    Then, for any $x'$, we have via the chain rule for entropy that
    \begin{align*}
        H(X,C\mid X'=x',& M=m) \\
            &= H(X\mid C, X'=x', M=m) + H(C\mid X=x', M=m) \\
            &= \left(1 - \Pr[\bar{C}\mid X'=x', M=m'] \right)\cdot H(X\mid C, X'=x', M=m) \\
                &\qquad + \Pr[\bar{C}\mid X'=x', M=m'] \cdot H(X\mid C, X'=x', M=m) \\
                &\qquad + H(C\mid X=x', M=m) \\
            &\le \left(1 - \Pr[\bar{C}\mid X'=x', M=m'] \right)\cdot \log N_{\mathrm{max}} \\
                &\qquad + \Pr[\bar{C}\mid X'=x', M=m'] \cdot \log |\mathcal{X}| \\
                &\qquad + 1.
    \end{align*}    
    Take the expectation with respect to $X'\mid M=m$ and rearrange, getting 
    \begin{align*}
        \Pr[\bar{C}\mid M=m] \ge \frac{H(X\mid M=m) - \log N_{\mathrm{max}} - 1}{\log |\mathcal{X}| - \log N_{\mathrm{max}}}.
    \end{align*}
    Combining with \eqref{eq:distinguish_and_error} and \eqref{eq:distinguish_compatible}, we get 
    \begin{align*}
        2\Pr[\text{error}\mid M=m]\ge \Pr[D\mid \bar{C}] \cdot 
            \left(\frac{H(X\mid M=m) - \log N_{\mathrm{max}} - 1}{\log |\mathcal{X}| - \log N_{\mathrm{max}}}\right).
    \end{align*}
    Then, since both sides are linear, we take the expectation over $M$ and rearrange to bound the conditional entropy.
    \begin{align*}
        H(X\mid M) \le \log N_{max} + 1 + \frac{2 \epsilon }{\Pr[D\mid\bar{C}]}(\log |\mathcal{X}| - \log N_{max}).
    \end{align*}
    With $I(M;X) = H(X)-H(X\mid M)$, this turns into a lower bound on mutual information.
\end{proof}

\paragraph{Application to Gap Hamming}

We now return to Gap-Hamming. 
We define a notion of compatibility, upper bound $N_{\max}$, and lower bound $\alpha$.

\begin{proof}[Proof of Lemma \ref{lemma:one_shot_GHP}]
    To apply Lemma \ref{lemma:product_bound}, we must define a notion of compatibility for Alice's inputs, compute $N_{\max}$ to bound the number of compatible inputs, and lower bound the probability that a given $y$ distinguishes incompatible inputs.
    We will say 
    \begin{center}
        $x$ and $x'$ are compatible if $d_{H}(x,x') \le pd$.
    \end{center}
    Here. $p\in (0,1)$ is a parameter to be set later.
    For each $x$, then, the number of compatible $x'$ is the volume of the Hamming ball of radius $pd$, so we can bound $N_{\max} \le 2^{d \cdot h(p)}$, yielding
    \begin{align*}
        \frac{\log N_{\max}}{H(X)} \le h(p).
    \end{align*}

    We now lower-bound $\alpha$, the probability that $Y$ distinguishes $X$ and $X'$, assuming $X,X'$ agree in at most $d(1-p)$ locations.
    We need $Y$ to be close to one of $X,X'$ and far from the other.
    These Hamming distances are the result of independent fair coin flips, so we define the following random variables,
    \begin{align*}
        A\sim\mathrm{Bin}(d(1-p),1/2),\quad B\sim\mathrm{Bin}(pd,1/2)
    \end{align*}
    and note that
    \begin{align*}
        d(X,Y) &= A + B \\
        d(X',Y) &= A + pd - B.
    \end{align*}
    Define the following independent good events, introducing parameter $c'>0$: 
    \begin{gather*}
        G_A = \left\{\left|A - \frac{(1-p)d}{2}\right| \le c'\sqrt{d}\right\} \text{\ \ and\ \ }
        G_B = \left\{\left|B - \frac{pd}{2}\right| \ge (c+c')\sqrt{d}\right\}.
    \end{gather*}
    When both occur, $Y$ distinguishes $X, X'$.
    Assuming $B$ takes a value in its lower tail,
    \begin{align*}
        d(X,Y) &= A+B \le \left(\frac{(1-p)d}{2} + c'\sqrt{d}\right) + \left(\frac{pd}{2} - (c+c')\sqrt{d}\right) &= \frac{d}{2}-c\sqrt{d} \\
        d(X',Y) &= A+ pd-B \ge \left(\frac{(1-p)d}{2} - c'\sqrt{d}\right) + pd - \left(\frac{pd}{2} - (c+c')\sqrt{d} \right)&= \frac{d}{2}+c\sqrt{d}
    \end{align*}

    We lower bound $\Pr[G_A\cap G_B] = \Pr[G_A]\Pr[G_B]$ using the Berry-Ess{\'e}en Theorem, stated below.
    Define scaled
    \begin{align*}
        Z_A = \frac{A - d(1-p)/2}{\sqrt{d(1-p)/4}} \quad \text{ and }\quad Z_B = \frac{B - pd/2}{\sqrt{pd/4}}.
    \end{align*}
    Then
    \begin{align*}
        G_A &\Leftrightarrow \left|A - \frac{d(1-p)}{2}\right| \le c'\sqrt{d} \\
            &\Leftrightarrow  \left|Z_A \sqrt{d(1-p)/4} + \frac{d(1-p)}{2} - \frac{d(1-p)}{2}\right| \le c'\sqrt{d} \\
            &\Leftrightarrow |Z_A| \le \frac{2c'}{\sqrt{1-p}}.
    \end{align*}
    So
    \begin{align*}
        Pr[G_A] \ge 1 - 2 F_{\mc{N}(0,1)} \left(\frac{-2c'}{\sqrt{1-p}}\right) - \mc{O}\left(\frac{1}{\sqrt{d(1-p)}}\right)
    \end{align*}
    and, similarly,
    \begin{align*}
        Pr[G_B] \ge 2 F_{\mc{N}(0,1)} \left(\frac{-2(c+c')}{\sqrt{p}}\right) - \mc{O}\left(\frac{1}{\sqrt{dp}}\right).
    \end{align*}
    Taking $c'=\frac{c}{2}$ yields the statement.
\end{proof}

\begin{theorem}[Berry-Ess{\'e}en]\label{thm:BE}
    If $X\sim \mathrm{Bin}(n,1/2)$ and we have the scaled version $Z = \frac{X - n/2}{\sqrt{n/4}}$,
    then for all $a\in\mathbb{R}$
    \begin{align*}
        \left\lvert \Pr\left[Z \le a\right] - F_{\mc{N}(0,1)}(a) \right\rvert = \mathcal{O}\left(\frac{1}{\sqrt{n}}\right),
    \end{align*}
    where $F_{\mc{N}(0,1)}(a)$ is the CDF of the unit Gaussian evaluated at $a$.
\end{theorem}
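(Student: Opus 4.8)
The plan is to obtain this as the special case of the classical Berry--Ess\'{e}en bound for sums of i.i.d.\ lattice variables, via characteristic functions and Esseen's smoothing inequality; the symmetric Bernoulli structure makes every estimate explicit. First I would center the sum: writing $X=\sum_{i=1}^n B_i$ with $B_i$ i.i.d.\ $\mathrm{Bernoulli}(1/2)$ and $Y_i=B_i-\tfrac12\in\{\pm\tfrac12\}$, we have $Z=\tfrac{2}{\sqrt n}\sum_{i=1}^n Y_i$, so the characteristic function of $Z$ is $\varphi(t)=\E[e^{itZ}]=\big(\cos(t/\sqrt n)\big)^n$, using $\E[e^{iuY_i}]=\cos(u/2)$. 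Then I would invoke Esseen's smoothing inequality: there is an absolute constant $C_0$ such that for every $T>0$,
\[
    \sup_{a\in\mathbb R}\big|\Pr[Z\le a]-F_{\mc{N}(0,1)}(a)\big|
    \ \le\ \frac{1}{\pi}\int_{-T}^{T}\frac{\big|\varphi(t)-e^{-t^2/2}\big|}{|t|}\,dt\ +\ \frac{C_0}{T}.
\]

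I would take $T=\epsilon_0\sqrt n$ for a small absolute constant $\epsilon_0<\pi/2$; this immediately makes the second term $C_0/(\epsilon_0\sqrt n)=O(1/\sqrt n)$, which is the term responsible for the stated rate (and which cannot be beaten, since $Z$ lives on a lattice of spacing $2/\sqrt n$). It then remains to show the integral is of strictly lower order, $O(1/n)$. For $|t|\le\epsilon_0\sqrt n$ set $u=t/\sqrt n\in[-\epsilon_0,\epsilon_0]$, so $\cos u\ge\cos\epsilon_0>0$ and, by the Taylor expansion $\log\cos u=-\tfrac{u^2}{2}-\tfrac{u^4}{12}-O(u^6)$, we get $n\log\cos(t/\sqrt n)=-\tfrac{t^2}{2}-\tfrac{t^4}{12n}+O(t^6/n^2)$. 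Choosing $\epsilon_0$ small enough that $n\log\cos(t/\sqrt n)\le -t^2/4$ on this range, the elementary bound $|e^a-e^b|\le\max(e^a,e^b)\,|a-b|$ yields $|\varphi(t)-e^{-t^2/2}|\le e^{-t^2/4}\cdot C_1 t^4/n$ for an absolute constant $C_1$, whence $\tfrac1\pi\int_{-T}^{T}\tfrac{|\varphi(t)-e^{-t^2/2}|}{|t|}\,dt\le\tfrac{C_1}{\pi n}\int_{-\infty}^{\infty}|t|^3 e^{-t^2/4}\,dt=O(1/n)$. Adding the two contributions gives $\sup_a|\Pr[Z\le a]-F_{\mc{N}(0,1)}(a)|=O(1/n)+O(1/\sqrt n)=O(1/\sqrt n)$, uniformly in $a$, as claimed.

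The main obstacle is making the remainder estimate for $\cos^n(t/\sqrt n)$ uniform and quantitative over the full range $|t|\le\epsilon_0\sqrt n$ — in particular pinning down $\epsilon_0$ so the exponent stays below $-t^2/4$ and controlling the cubic-order term of $\log\cos$ with an explicit constant; everything else (Esseen's smoothing lemma, the Gaussian moment $\int|t|^3e^{-t^2/4}\,dt$) is textbook and can simply be cited. As an alternative route one could avoid characteristic functions entirely: since for $p=1/2$ the pmf is explicit, Stirling's formula gives the local limit law $\Pr[X=k]=\tfrac{2}{\sqrt n}\cdot\tfrac{1}{\sqrt{2\pi}}e^{-x^2/2}\big(1+O((1+|x|^3)/\sqrt n)\big)$ for $x=(2k-n)/\sqrt n$ with $|x|\le n^{1/6}$ (with superpolynomially small tails), and then one compares the Riemann sum for $\Pr[Z\le a]$ to $\int_{-\infty}^{a}\tfrac{1}{\sqrt{2\pi}}e^{-x^2/2}\,dx=F_{\mc{N}(0,1)}(a)$; the Riemann-sum error is $O(\Delta)=O(1/\sqrt n)$ with spacing $\Delta=2/\sqrt n$, recovering the same bound, though one must treat the half-lattice offset at the endpoint $a$ with some care.
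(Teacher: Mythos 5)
Your proof is correct, but note that the paper does not prove this statement at all: Theorem~\ref{thm:BE} is the classical Berry--Ess\'{e}en bound, stated only so it can be invoked in the proof of Lemma~\ref{lemma:one_shot_GHP}, and the paper treats it as a standard citable fact. Your argument is the textbook characteristic-function route, correctly specialized to the symmetric Bernoulli case: the computation $\varphi(t)=\cos^n(t/\sqrt n)$ is right, Esseen's smoothing inequality with $T=\epsilon_0\sqrt n$ gives the dominant $O(1/\sqrt n)$ term, and your handling of the integral is sound --- indeed you are slightly conservative, since $\log\cos u\le -u^2/2$ for all $|u|<\pi/2$ (every term of the series is negative), so the envelope $e^{-t^2/4}$ and the bound $\left\lvert n\log\cos(t/\sqrt n)+t^2/2\right\rvert\le C_1 t^4/n$ on $|t|\le\epsilon_0\sqrt n$ follow with no delicate choice of $\epsilon_0$ beyond $\epsilon_0<\pi/2$ and absorbing the $t^6/n^2$ term via $t^2/n\le\epsilon_0^2$. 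The observation that the $O(1/\sqrt n)$ rate is forced by the lattice spacing $2/\sqrt n$ of $Z$ is also correct, and your alternative Stirling/local-limit-plus-Riemann-sum route would work as well for $p=1/2$. What your self-contained proof buys over the paper's citation is explicit constants and independence from the general i.i.d.\ Berry--Ess\'{e}en machinery; what it costs is length for a fact the paper only needs as a black box.
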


\paragraph{Completing the Proof}
Lemma~\ref{lemma:one_shot_GHP} holds for all values of $p$.
We show how to set $p$ (as a function of $c, d$, and $\eps$) so that the information complexity is $(1-o(1))\cdot d$.

\begin{proof}[Proof of Lemma~\ref{lemma:one_shot_GHP}]
    Recall that we have
    \begin{align*}
        I(M;X) \ge H(X) \cdot \left(1 - h(p) \right)\cdot \left( 1 - 2\epsilon/\alpha\right) - 1,
    \end{align*}
    with $\alpha$: 
    \begin{align*}
        \alpha \ge \left(1 - 2 F_{\mc{N}(0,1)}\left(\frac{-c}{\sqrt{1-p}}\right) - \mc{O}\left(\frac{1}{\sqrt{d(1-p)}}\right)\right)
         \left(2 F_{\mc{N}(0,1)} \left(\frac{-3c}{\sqrt{p}}\right) - \mc{O}\left(\frac{1}{\sqrt{dp}}\right)\right).
    \end{align*}
    We wish to lower bound $I(X;M)$ as $\eps\to 0$ and $d\to \infty$.
    We will set $p= \frac{c_1}{\ln (c_2/\eps)}$ for some constants $c_1, c_2$ that will depend on $c$ but not $\eps$ or $d$.
    Thus $p\xrightarrow{\eps\to 0}0$, so the binary entropy function will go to zero.
    Furthermore, if $p\to 0$, then for any $c$ the first term $(1 - 2F(\cdot) - O(\cdot))$ will be $\Omega(1)$.
    Thus it remains to deal with the right term. 
    We will show
    \begin{align*}
        2 F_{\mc{N}(0,1)} \left(\frac{-3c}{\sqrt{p}}\right) - \mc{O}\left(\frac{1}{\sqrt{dp}}\right) = \Omega(\sqrt{\eps}).
    \end{align*}
    We can assume without loss of generality that $dp\to \infty$; if $\eps$ gets too small, we can set $p$ larger until $d$ ``catches up,'' since any algorithm erring with small probability also satisfies a looser probability bound.
    To finish, then, we just need a simple lower bound on the CDF.
    For sufficiently small $p$, we can use the rectangle of width $2$ whose left side sits at $\frac{-6c}{\sqrt{p}}$, so set
    \begin{align*}
        2 F_{\mc{N}(0,1)} \left(\frac{-3c}{\sqrt{p}}\right) \ge 2\sqrt{2}{\pi} e^{-9c^2/p}.
    \end{align*}
    There exist constants $c_1, c_2$ such that setting $p= \frac{c_1}{\ln (c_2/\eps)}$ will allow us to lower bound this term with $\alpha = \Omega(\sqrt{\eps})$, causing the $\frac{2\eps}{\alpha}$ term above above to vanish.
\end{proof}

\addcontentsline{toc}{section}{Acknowledgments}
\section*{Acknowledgments} We thank Ankit Garg for helpful conversations about the information complexity of the Gap-Hamming problem.


\addcontentsline{toc}{section}{References}
\bibliography{bibliography}

\end{document}